\documentclass[11pt]{article}
\usepackage{acl}

\usepackage{times}
\usepackage{latexsym}
\usepackage[T1]{fontenc}
\usepackage[utf8]{inputenc}
\usepackage{microtype}
\usepackage{inconsolata}
\usepackage{graphicx}
\usepackage{float}
\usepackage{enumitem}
\usepackage{subcaption}
\usepackage{tcolorbox}
\tcbuselibrary{breakable}
\usepackage{xcolor}
\usepackage{colortbl}
\usepackage{fontawesome}
\usepackage{multirow}
\usepackage{longtable}
\usepackage{tabularx}
\usepackage{amsmath}
\usepackage{amssymb}
\usepackage{pifont}
\usepackage{makecell}
\usepackage{amsthm}
\usepackage{algorithm}
\usepackage{algpseudocode}
\usepackage{listings}
\usepackage{booktabs}
\usepackage{siunitx}
\usepackage{threeparttable}
\usepackage{nicematrix}
\usepackage{tikz}
\usepackage{mathtools}

\definecolor{successbg}{RGB}{230,236,250}  
\definecolor{stepsbg}{RGB}{236,242,250}    
\definecolor{callbg}{RGB}{243,248,254}     
\definecolor{sfrbg}{RGB}{249,252,255}      

\definecolor{ds}{RGB}{255,244,224}   
\definecolor{gpt}{RGB}{255,248,236}  
\definecolor{cld}{RGB}{255,251,243}  

\newcolumntype{Y}{>{\centering\arraybackslash}X}
\newcommand{\cmark}{\ding{51}}
\newcommand{\xmark}{\ding{55}}
\newcommand{\tmark}{$\triangle$}

\newtheorem{theorem}{Theorem}[section]
\newtheorem{lemma}[theorem]{Lemma}
\newtheorem{corollary}[theorem]{Corollary}
\newtheorem{proposition}[theorem]{Proposition}

\theoremstyle{definition}

\theoremstyle{remark}
\newtheorem{remark}{Remark}[section]

\lstdefinelanguage{JavaScript}{
  keywords={typeof, new, true, false, catch, function, return, null, catch, switch, var, if, in, while, do, else, case, break},
  keywordstyle=\color{blue}\bfseries,
  ndkeywords={class, export, boolean, throw, implements, import, this},
  ndkeywordstyle=\color{darkgray}\bfseries,
  identifierstyle=\color{black},
  sensitive=false,
  comment=[l]{//},
  morecomment=[s]{/*}{*/},
  commentstyle=\color{purple}\ttfamily,
  stringstyle=\color{red}\ttfamily,
  morestring=[b]',
  morestring=[b]"
}

\definecolor{myborder}{RGB}{73, 86, 102}
\definecolor{myRed}{RGB}{240, 48, 159}
\definecolor{mylightblue}{RGB}{235, 245, 255}
\tcbuselibrary{skins}

\definecolor{back_color}{RGB}{255, 245, 245}
\definecolor{frame_color}{RGB}{240, 128, 128}

\title{SkillShift-Bench: Benchmarking Skill Learning\\ under Environment Evolution for Web Agents}

\author{
  Bo-Yu Chen$^{1,2}$ \quad
  Zhi Zhou$^{1,2}$\thanks{Corresponding authors.} \quad
  Yu-Feng Li$^{1,2}$\footnotemark[1] \\
  \normalsize
  $^1$School of Artificial Intelligence, Nanjing University \\
  $^2$National Key Laboratory for Novel Software Technology, Nanjing University \\
  \texttt{\{chenby, zhouz, liyf\}@lamda.nju.edu.cn}
}

\begin{document}
\maketitle

\begin{abstract}
Skill learning enables web agents to reuse prior experience for long-horizon interactive tasks. However, learned skills are typically evaluated in closed environments, overlooking the fact that real-world web deployment involves continuous shifts in invocation context, perceptual grounding, and execution feasibility. To this end, we introduce SkillShift-Bench, the first benchmark for evaluating skill learning with controlled open-environment shifts. SkillShift-Bench simulates Contextual Shift, Perceptual Shift, and Execution Shift across three self-hosted web systems, and evaluates agents with diagnostics covering task success, interaction cost, skill use, and skill failure. Comprehensive experiments reveal systematic robustness degradation under controlled shifts, characterized by lower success rates, higher interaction costs, and more frequent skill failures, with Execution Shift exerting the strongest pressure. We further formulate shifted skill execution as a nonstationary local executability problem and provide both a dynamic-regret analysis and a practical test-time adaptation method, demonstrating the feasibility of improving learned-skill execution without relearning the skill library. Code, tasks, environments, documentation, and the leaderboard are available at \url{https://skillshift-bench.github.io/}.
\end{abstract}

\section{Introduction}
\label{sec:sections-intro:1}

Web agents increasingly complete long-horizon tasks by reusing learned skills rather than planning each step from scratch~\cite{wang2023voyager,wang2025agentworkflowmemory,wang2025inducingprogrammaticskills}. This reuse improves efficiency, but it assumes local compatibility between the current web environment and the context in which the skill was learned. When interface structure, grounding evidence, or runtime conditions change~\cite{cheng2024seeclick,yang2025guirobust,kara2025warex}, a skill may remain semantically relevant while its next operation is no longer locally executable. We study this gap as learned skill executability under same-site environment evolution.

Existing evaluations provide important but incomplete views of this problem. Offline web datasets and simulated web tasks offer broad coverage, but do not test executable skill transfer in interactive browsers~\cite{yao2022webshop,deng2023mind2web}. Self-hosted benchmarks support reproducible execution, but usually instantiate tasks in fixed website snapshots~\cite{zhou2024webarena,koh2024visualwebarena,lesellier2025browsergym}. Remote and open web benchmarks expose agents to realistic change~\cite{drouin2024workarena,he2024webvoyager,pan2024webcanvas,yoran2024assistantbench}, while task content, interface state, and runtime conditions often vary together, limiting fine-grained attribution~\cite{xue2025illusionofprogress}. Recent robustness and evolution benchmarks further show the importance of non-static evaluation~\cite{ishmam2026timewarp,bai2026stressweb,li2026proevolve,kara2025warex}. However, they mainly target general agent robustness rather than whether previously learned reusable skills remain executable when task semantics are preserved. \looseness=-1

We introduce SkillShift-Bench, a controlled benchmark for diagnosing learned skill executability under environment evolution. It defines three mismatch modes aligned with the stages of reusable skill execution: Contextual Shift targets skill invocation, Perceptual Shift targets grounding, and Execution Shift targets runtime feasibility. Across these modes, task intent, task-critical entities, required functionality, and grading criteria are preserved. This design turns website evolution into controlled interventions on learned skill reuse rather than task redefinition.

Using SkillShift-Bench, we evaluate representative learned skill methods~\cite{wang2025agentworkflowmemory,wang2025inducingprogrammaticskills,zheng2025skillweaver,prabhu2026walt} across clean, evolved, and shifted environments. The results show consistent transfer degradation under controlled same-site evolution, with the strongest pressure under Execution Shift. This pattern suggests that many errors arise from local executability mismatch: the learned library can still contain task-aligned skills, while the executor must select an operation that is feasible in the current-page state.

Motivated by this finding, we formulate shifted skill execution as a nonstationary local executability problem, separating skill availability from current action feasibility. We analyze this setting with SWOP (Sliding Window Optimistic Prior-Guided Planning), an idealized prior-guided adaptation model with a variation-dependent dynamic regret guarantee, following the nonstationary online decision making perspective~\cite{besbes2014nonstationarybandits,cheung2020nonstationarymdp,mao2021nonstationaryepisodic}. We then instantiate the same principle as MPCR (Masked Prior-Guided Candidate Reranking), a fixed-library executor-side method that reranks candidate actions using current-page feasibility evidence~\cite{sun2023chatgptsearch,qin2024prp,wu2025pairwisereranking}. Since MPCR keeps the learned skill library fixed, its gains reflect test-time action selection rather than additional skill acquisition.

In summary, we make three contributions:
\begin{itemize}
    \item We introduce SkillShift-Bench, a controlled benchmark for evaluating learned skill executability under Contextual, Perceptual, and Execution Shifts.

    \item We formulate shifted skill execution as nonstationary local executability and analyze SWOP with a variation-dependent dynamic regret guarantee. \looseness=-1

    \item We propose MPCR, a test-time, fixed-library executor-side reranker that improves transfer through feasibility-aware action selection.
\end{itemize}
\section{Related Work}
\label{sec:related_work}

\subsection{Skill learning and reusable web procedures}

Web agents increasingly reuse prior experience, workflows, tools, and executable procedures to support long-horizon interaction~\cite{shinn2023reflexion,wang2023voyager}. 
Recent work studies such reuse through workflow memory~\cite{wang2025agentworkflowmemory}, programmatic skill induction~\cite{wang2025inducingprogrammaticskills}, autonomous browser skill discovery~\cite{zheng2025skillweaver}, learned tools~\cite{prabhu2026walt}, API-based interaction~\cite{song2025beyondbrowsing}, routing over large skill libraries~\cite{zheng2026skillrouter}, and neuro-symbolic induction of logic-grounded skill programs from interaction traces~\cite{shao2026nsi}. 
These methods show that reusable procedures can reduce repeated planning and improve web agent efficiency~\cite{wang2025agentworkflowmemory,wang2025inducingprogrammaticskills,zheng2025skillweaver,prabhu2026walt}. 
Our work studies a complementary question: whether a previously acquired procedure remains usable when the same website changes the context, interface evidence, or runtime conditions needed for execution.

\subsection{Web agent benchmarks and environment realism}

Web agent benchmarks vary in coverage, realism, reproducibility, and attribution. 
Offline or nonbrowser settings such as WebShop, Mind2Web, and ChinaTravel provide broad task coverage and, in constraint-rich planning domains, evaluation against explicitly specified requirements rather than only final outcomes~\cite{yao2022webshop,deng2023mind2web,shao2026chinatravel}. 
Self-hosted environments such as World of Bits, WebArena, VisualWebArena, and BrowserGym support reproducible execution in controlled website snapshots~\cite{shi2017worldofbits,zhou2024webarena,koh2024visualwebarena,lesellier2025browsergym}. 
Remote hosted benchmarks such as WorkArena and WorkArena++ evaluate agents in realistic workplace environments~\cite{drouin2024workarena,boisvert2024workarenapp}. 
Live or online web benchmarks such as WebVoyager, WebLinx, WebCanvas, Online Mind2Web, and AssistantBench improve ecological validity by evaluating agents in more realistic interaction settings~\cite{he2024webvoyager,lu2024weblinx,pan2024webcanvas,xue2025illusionofprogress,yoran2024assistantbench}. 
Together, these benchmarks have advanced general web agent evaluation. 
We focus on an aligned transfer setting for reusable procedures, where task semantics are preserved while local execution conditions vary.

\subsection{Environment variation and diagnostic robustness}

Recent work further studies how web agents behave under environment variation. 
REAL evaluates agents in deterministic simulations of real websites, and WebCanvas emphasizes realistic workflows with controllable benchmark construction~\cite{garg2025real,pan2024webcanvas}. 
TimeWarp studies temporal interface change, StressWeb introduces structured interaction variability, ProEvolve supports programmable benchmark evolution, and WAREX injects website and network failures into existing benchmarks~\cite{ishmam2026timewarp,bai2026stressweb,li2026proevolve,kara2025warex}. 
These studies show that static evaluation can obscure important robustness behavior~\cite{kara2025warex,bai2026stressweb}, a concern that also holds beyond the browser, where statically trained tool-use agents degrade once tool interfaces and specifications depart from their training distribution~\cite{lv2026openworldtooluse}, in line with the open-environment view that training-time assumptions need not hold at deployment~\cite{guo2025robustssl}. 
We build on this motivation, but use environment variation as an intervention on learned skill transfer. 
The focus is not only whether an agent remains successful, but whether a previously useful procedure remains invokable, groundable, and executable under aligned task semantics. Appendix~\ref{app:benchmark-comparison} provides a criterion-level comparison with existing web-agent benchmarks. \looseness=-1

\subsection{Breakage and repair of recorded web procedures}

Software engineering studies an analogous problem as web-test maintenance: recorded end-to-end scripts break when an application evolves, mostly through changed element locators and reorganized page structure rather than removed functionality~\cite{imtiaz2019testbreakage,hammoudi2016recordreplay}. 
Remedies either harden locators against structural change, as in robust XPath generation~\cite{leotta2016robula}, or self-heal a broken script against the evolved interface using structural~\cite{choudhary2011water} or visual~\cite{stocco2018visualrepair} evidence of the intended target, while a separate line stabilizes asynchronous waits~\cite{pei2025asyncwait}. 
Perceptual Shift instantiates these breakage causes at the level of a learned procedure, perturbing the same locator and DOM-structure signals that these methods defend~\cite{leotta2016robula,stocco2018visualrepair}. 
Two differences shape the remedy: a skill is selected at run time rather than executed unconditionally, so a still-valid procedure may be invoked where it no longer applies; and Execution Shift breaks execution through runtime state rather than interface change~\cite{kara2025warex}, which no repair of a stored script can anticipate. 
We therefore adapt the executor instead of repairing the library.

\subsection{Test-time executor-side adaptation for shifted skill execution}

Shifted skill execution is related to nonstationary decision making, GUI grounding, candidate reranking, and reliability analysis. 
Nonstationary bandits and reinforcement learning study decision making under changing rewards or transitions~\cite{besbes2014nonstationarybandits,cheung2020nonstationarymdp,mao2021nonstationaryepisodic,feng2023nonstationarygfa}, and test-time adaptation adjusts a deployed model from unlabeled test-time evidence, including under open-world data shift~\cite{zhou2023ods}, at the prompt level for zero-shot models~\cite{zhang2024adaprompt}, and with recent analyses of when such adaptation is learnable~\cite{zhou2026ttalearnability}. 
GUI grounding methods improve target identification under visual and structural variation~\cite{zheng2024gpt4vgrounded,cheng2024seeclick,gou2025uground}. 
Reranking methods support selection among candidate outputs~\cite{sun2023chatgptsearch,qin2024prp,wu2025pairwisereranking}, and reliability studies show that aggregate success may hide unstable execution behavior~\cite{kara2025warex}. 
These directions provide useful context for test-time executor adaptation. 
Our setting connects them through fixed-library skill reuse, where the executor selects among prior-guided skill and primitive candidates using current-page feasibility evidence.
\section{SkillShift-Bench: Benchmark Design}
\label{sec:benchmark}

SkillShift-Bench is designed to diagnose learned-skill executability under controlled same-site environment evolution. It preserves task semantics across aligned variants and varies only the local conditions that affect skill reuse. The following sections describe the transfer setting, shift modes, benchmark construction, and diagnostic metrics.

\begin{figure*}[t]
    \centering
    \includegraphics[width=\linewidth]{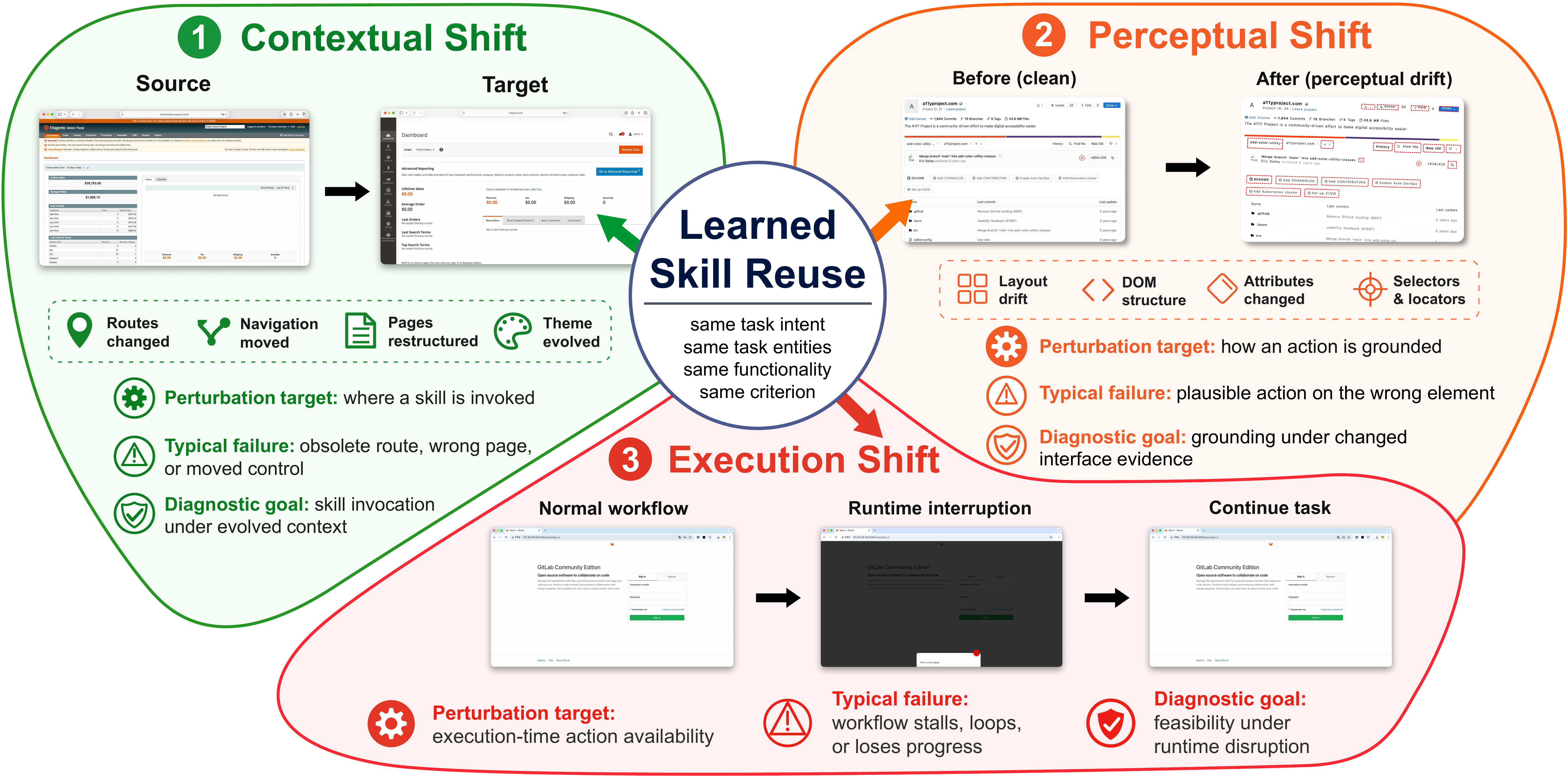}
    \caption{
    Three controlled shift modes in SkillShift-Bench. Aligned variants preserve task intent, task-critical entities, required functionality, and success criteria, while Contextual, Perceptual, and Execution Shifts respectively target skill invocation, grounding evidence, and runtime feasibility.
    }
    \label{fig:shift_modes}
\end{figure*}

\subsection{Controlled Transfer Setting}
\label{sec:controlled_transfer}

SkillShift-Bench is organized around aligned task families. Each family contains a source context, where a skill is learned or selected, and an evolved target context, where the same task is attempted under changed website conditions. To support fine-grained attribution, we enforce four invariants. Goal invariance keeps the instruction tied to the same task intent. Entity invariance aligns task-critical objects such as projects, issues, products, posts, pages, and user accounts. Functionality invariance ensures that the operations required for completion remain available. Outcome invariance keeps the automatic success criteria equivalent across variants.

This design keeps the task fixed while varying the website conditions under which learned skills are reused. 
SkillShift-Bench therefore evaluates skill transfer under controlled environment evolution rather than under task redefinition.

\subsection{Skill Environment Mismatch Modes}
\label{sec:mismatch_modes}

A reusable skill must be invoked in a compatible context, grounded to the intended interface evidence, and executed under feasible runtime conditions. 
SkillShift-Bench defines three mismatch modes that correspond to these stages, as shown in Figure~\ref{fig:shift_modes}.

\paragraph{Contextual Shift.}
Contextual Shift evaluates skill invocation after the deployment context evolves. 
We instantiate it with paired versions or themes of the same website, where routes, navigation, page structure, workflow layout, or control placement may change while the task remains aligned. \looseness=-1

\paragraph{Perceptual Shift.}
Perceptual Shift evaluates grounding under changed interface evidence. 
It applies task-preserving perturbations to layout, DOM structure, element attributes, and locator signals, while keeping the task-critical objects and required functionality intact~\cite{imtiaz2019testbreakage,leotta2016robula}. \looseness=-1

\paragraph{Execution Shift.}
Execution Shift evaluates runtime feasibility during task execution. 
It injects reproducible modal blockers that intercept interactions, testing whether the agent can recover from execution-time interruptions and continue the original task flow~\cite{kara2025warex}.

Implementation details for the perceptual and execution perturbation modules are given in Appendix~\ref{sec:app_shift_perturbation_impl}, together with the configurable injection schedules for Execution Shift in Appendix~\ref{sec:app_injection_regimes}. Appendix~\ref{sec:app_failure_modes} shows that recorded failure trajectories exhibit stage-distinct signatures matching this decomposition. \looseness=-1

\subsection{Benchmark Instantiation}
\label{sec:benchmark_construction}

We instantiate SkillShift-Bench on three controlled web systems: GitLab, Magento, and WordPress. 
They cover developer collaboration, e-commerce administration, and content management, and each supports reproducible deployment, controlled modification, and automatic state inspection. 
The selected version and theme pairs follow official release, migration, and theme documentation, with environment-selection criteria and pairing details provided in Appendix~\ref{sec:app_b_env_pairing}.

\paragraph{Task construction.} Task construction follows a validity-first pipeline. For each website, we instantiate task-relevant data and align the entities and states needed across source and target contexts. We then write natural language instructions around these aligned entities and implement task-specific automatic scorers for final state verification. Each task family is checked against the four invariants in Section~\ref{sec:controlled_transfer} before shifted variants are generated. Only validated families enter the shift generation stage, where local execution conditions are varied while the task remains fixed. Validation is manual and covers instruction validity, scorer correctness, functional availability in both variants, and the existence of an executable solution under each of the six conditions; Appendix~\ref{sec:app_task_validation} gives the full protocol and its distinction from a human performance baseline. This process yields 355 natural language tasks across three web systems.

\paragraph{Evaluation conditions.} Each task family is instantiated under six conditions. \texttt{Src} is the clean source context. \texttt{Src-Perc} and \texttt{Src-Exec} apply Perceptual Shift and Execution Shift to the source context. \texttt{Tgt} is the clean target context and measures transfer under Contextual Shift. \texttt{Tgt-Perc} and \texttt{Tgt-Exec} apply Perceptual Shift and Execution Shift to the target context. Together, these conditions allow controlled comparisons across clean execution, same-site transfer, and additional perceptual or execution shifts under matched tasks.

\subsection{Diagnostic Metrics}
\label{sec:evaluation_diagnostics}

We characterize each run with four complementary diagnostics. \textbf{Success Rate (SR)} measures the fraction of tasks that meet the specified success criterion. \textbf{Average Steps (AS)} measures the mean number of actions before success, failure, or timeout, capturing interaction cost and recovery burden. \textbf{Average Skill Calls (ASC)} measures the mean number of learned skill invocations per task, capturing reliance on the skill library. \textbf{Skill Failure Rate (SFR)} measures the proportion of invoked skills that fail during attempted execution, with runtime errors counted as skill failures. These diagnostics capture final correctness, trajectory cost, skill reuse behavior, and local executability under controlled environment evolution. SFR aggregates retrieval-level and executability-level failures, and is unstable when skills are rarely invoked, since the ratio is then taken over a very small denominator; we therefore report it as a supplementary diagnostic and characterize low-invocation regimes through ASC. Appendix~\ref{sec:app_failure_modes} decomposes the two causes.
\section{Prior-Guided Test-Time Executor Adaptation}
\label{sec:method_improved}

SkillShift-Bench separates task semantics from local execution conditions by preserving the former while varying the latter across aligned environments. This separation motivates a prior-guided view of skill transfer: a learned library may remain task-relevant, but the executor must still choose actions that are executable in the current page state. This section first formalizes shifted skill execution as nonstationary local executability and then introduces an executor-side reranking method that keeps the learned skill library fixed while adapting test-time action selection.

\subsection{SWOP as Prior-Guided Adaptation}
\label{sec:swop}

We formalize this view with SWOP (Sliding-Window Optimistic Prior-Guided Planning), an idealized finite-horizon model for adapting a skill-guided executor under nonstationary execution conditions. SWOP keeps the executor anchored to a reference prior while using recent evidence to update which actions appear reliable in the current environment. \looseness=-1

Following nonstationary episodic MDP formulations~\cite{cheung2020nonstationarymdp,mao2021nonstationaryepisodic,feng2023nonstationarygfa}, we model environment evolution as a finite-horizon information-state MDP sequence:
\begin{equation}
\begin{aligned}
\mathcal{M}^{(k)}
= \big(&\Xi,\{\Omega(\xi)\}_{\xi\in\Xi},
\{r^{(k)}_h\}_{h=1}^{H}, \\
&\{P^{(k)}_h\}_{h=1}^{H}, H\big),
\quad k=1,\ldots,K .
\end{aligned}
\label{eq:idealized_mdp_sequence}
\end{equation}
Here $\Xi$ is a finite information-state space, $\Omega(\xi)$ is the state-dependent action domain, and evolution is captured by changes in the rewards $r^{(k)}_h$ and transitions $P^{(k)}_h$.

At each episode, SWOP estimates rewards and transitions from a recent sliding window, adds optimism for uncertainty, and computes a KL-regularized policy around a reference prior $\pi_{\mathrm{ref}}$, combining sliding-window optimism and regularized MDP ideas~\cite{cheung2020nonstationarymdp,mao2021nonstationaryepisodic,geist2019regularizedmdp}:
\begin{equation}
\widehat{\pi}^k_h(\omega\mid \xi)
\propto
\pi_{\mathrm{ref}}(\omega\mid \xi)
\exp(\widehat{Q}^k_h(\xi,\omega)/\alpha).
\end{equation}
The prior represents the preference induced by the fixed skill library and base executor. The sliding-window estimate allows the policy to respond to recent execution evidence.

Following variation-budget dynamic-regret analyses in nonstationary bandits and MDPs~\cite{besbes2014nonstationarybandits,cheung2020nonstationarymdp,mao2021nonstationaryepisodic}, we measure nonstationarity by the adjacent reward and transition variation:
\begin{align}
    \Delta^{(k)}_r
    &:=
    \max_{h,\xi,\omega}
    \big| r^{(k+1)}_h(\xi,\omega)
    - r^{(k)}_h(\xi,\omega) \big|, \\
    \Delta^{(k)}_p
    &:=
    \max_{h,\xi,\omega}
    \big\| P^{(k+1)}_h(\cdot \mid \xi,\omega)
    - P^{(k)}_h(\cdot \mid \xi,\omega) \big\|_1 ,
\end{align}
and aggregate it as:
\begin{equation}
    V_K :=
    \sum_{k=1}^{K-1}\Delta^{(k)}_r
    + H\sum_{k=1}^{K-1}\Delta^{(k)}_p .
    \label{eq:main_variation_budget}
\end{equation}
SWOP is evaluated by dynamic regret against the episode-wise optimal KL-regularized policy:
\begin{equation}
    \mathrm{DynReg}(K)
    :=
    \sum_{k=1}^{K}
    \left(
        V^{\star,(k)}_1(\xi^k_1)
        -
        V^{\widehat{\pi}^{k},(k)}_1(\xi^k_1)
    \right).
    \label{eq:main_dynamic_regret}
\end{equation}

\begin{theorem}[Informal dynamic-regret guarantee]
\label{thm:swop-dynreg}
Consider the finite-state SWOP abstraction with bounded rewards,
fixed episode-invariant action domains, a full-support reference prior,
and the sampling and concentration conditions stated in Appendix~\ref{app:problem-setup}.
Let $\hat{\pi}^k$ be the policy produced by SWOP at episode $k$.
With an oracle-tuned fixed window chosen before the run as a deterministic
function of the variation budget $V_K$, SWOP satisfies, with probability
at least $1-\delta$,
\begin{equation}
    \mathrm{DynReg}(K)
    \le \widetilde{O}\!\left(K^{2/3}(V_K+1)^{1/3}\right),
\end{equation}
where fixed problem-dependent factors are suppressed.
Consequently, when $H$, $|\Xi|$, and $\max_{\xi\in\Xi}|\Omega(\xi)|$ are fixed
and $V_K=o(K)$, SWOP obtains sublinear dynamic regret.
\end{theorem}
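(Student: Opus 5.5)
The proof follows the sliding-window UCRL template of \citet{cheung2020nonstationarymdp}, adapted to the KL-regularized objective. Fix a window length $W$ and, for each episode $k$, let $\bar{\mathcal M}^{(k)}$ denote the ``window-averaged'' MDP whose rewards and transitions are the empirical-weight averages of $r^{(j)}_h,P^{(j)}_h$ over $j\in\{k-W,\dots,k-1\}$. The regret for episode $k$ is then split into three terms.

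The first is a \emph{drift term} comparing $V^{\star,(k)}_1$ with the optimal regularized value of $\bar{\mathcal M}^{(k)}$. The second is an \emph{optimism/estimation term} comparing that value with SWOP's optimistic value $\widehat V^k_1$. The third is a \emph{policy evaluation term} comparing $\widehat V^k_1$ with $V^{\widehat\pi^k,(k)}_1$.

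The regularized Bellman operator uses the soft maximum $\alpha\log\sum_\omega\pi_{\mathrm{ref}}\exp(Q/\alpha)$. This operator is $1$-Lipschitz in $\|\cdot\|_\infty$ and monotone. A standard simulation-lemma induction over $h$ therefore gives
\[
\bigl|V^{\star}_{\mathcal M}-V^{\star}_{\mathcal M'}\bigr|\le \sum_h\bigl(\|r_h-r'_h\|_\infty + H\|P_h-P'_h\|_{\infty,1}\bigr).
\]
The regularization magnitude is bounded by $\alpha\max_\xi\log(1/\min_\omega\pi_{\mathrm{ref}}(\omega|\xi))$, which is finite by full support. This only inflates the value range, and that inflation is absorbed into the suppressed constants.

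\textbf{Drift term.} Each episode's deviation from its window average is at most $\sum_{j=k-W}^{k-1}(\Delta_r^{(j)}+H\Delta_p^{(j)})$. Summing over $k$ counts each adjacent variation at most $W$ times, so the drift term is $O(H W V_K)$. The same bound controls the mismatch between the window-averaged model and the true $\mathcal M^{(k)}$ in the evaluation term.

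\textbf{Estimation term.} Using the concentration conditions of Appendix~\ref{app:problem-setup} together with a union bound, with probability $1-\delta$ the window-averaged model lies in the confidence set. Bonuses of size $\sqrt{|\Xi|\log(\cdot)/N^k_h(\xi,\omega)}$ then guarantee optimism. The regularized value is monotone in the reward and also in transitions once the value-range bonus is included, and this gives $\widehat V^k_1\ge \bar V^\star_1$.

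\textbf{Evaluation term.} The optimistic-versus-realized gap telescopes along the trajectory of $\widehat\pi^k$ into a sum of bonuses plus a martingale difference sequence, which Azuma bounds by $\widetilde O(H\sqrt{K})$. The bonus sum is handled by a pigeonhole argument within each block of $W$ episodes. The counts restart effectively every $W$ episodes, so each block contributes $\widetilde O(H|\Xi|\sqrt{|\Omega|\,HW})$. With $K/W$ blocks, the total is $\widetilde O(K/\sqrt W)$ in the suppressed constants.

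\textbf{Tuning the window.} Combining the terms gives
\[
\mathrm{DynReg}(K)\le \widetilde O\bigl(W V_K + K W^{-1/2}+\sqrt K\bigr).
\]
Choosing $W=\lceil K^{2/3}(V_K+1)^{-2/3}\rceil$ balances the first two terms at $K^{2/3}(V_K+1)^{1/3}$, and this choice is oracle-tuned and fixed in advance, as the statement allows. The $\sqrt K$ term is dominated by the main term. If $V_K=o(K)$, then $K^{2/3}V_K^{1/3}=o(K)$, which yields the sublinearity corollary.

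\textbf{Main obstacle.} I expect the hardest step to be the optimism argument under the KL regularizer with transition uncertainty. There, ``optimistic transitions'' is not simply a bonus added to $Q$: the soft-max value must be shown to dominate the window-average value. I would first try to avoid optimizing over transitions altogether. Instead I would use the empirical $\widehat P$ and fold the transition error into the reward bonus as $H\cdot\|\widehat P-\bar P\|_1$, which is valid because the soft Bellman operator is monotone and $1$-Lipschitz. A secondary subtlety is that window-averaged counts mix data from different $P^{(j)}$. This is handled by concentrating around the empirical-weight average rather than around any single $P^{(j)}$, which is what the sampling conditions in the appendix are meant to license.
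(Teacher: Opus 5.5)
Your proposal is correct and follows essentially the same route as the paper. The shared steps are: concentration of $\hat r,\hat P$ around the empirical-weight (conditional-mean) model $\breve r,\breve P$; a single bonus that absorbs $|\hat r-\breve r|+H\|\hat P-\breve P\|_1$, using monotonicity and $1$-Lipschitzness of the soft backup; a telescoping certificate into bonuses, an Azuma-controlled martingale, and $O(HD_k)$ drift; a block-of-$W$ pigeonhole for the bonus sum; the counting bound $\sum_k D_k\le WV_K$; and oracle tuning of $W$.

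The differences are minor. The paper merges your drift and optimism terms into one backward induction, $\widetilde V_h^k\ge V_h^{\star,(k)}-C(H-h+1)D_k$, so it never introduces a window-averaged optimal value, which would need a definition at unvisited pairs. It also clips $\widetilde Q_h^k$ at $H-h+1$ and uses a cold-start bonus of at least $2H$. You need that clipping too: your $1/\sqrt{N}$ bonus is unbounded at $N=0$, and the certificate step relies on $\|\widetilde V_{h+1}^k\|_\infty\le H$.
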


This guarantee suggests that, when environment variation grows sublinearly, prior-guided adaptation can track the episode-wise comparator by combining learned priors with recent feasibility evidence. Appendix~\ref{app:problem-setup} provides the formal SWOP statement and proof.

\subsection{From SWOP to Practical Executor Design}
\label{sec:swop_to_mpcr}

SWOP is an analysis model, while deployed web agents operate over language-conditioned observations, browser states, page elements, and candidate actions rather than explicit MDP models. We therefore use SWOP as a design guide for test-time action selection. It suggests adapting with recent execution evidence, retaining uncertainty-aware alternatives when prior-preferred actions become unreliable, and regularizing selection toward learned skill preferences. These principles translate into a practical executor objective: selecting among primitive and skill-action candidates at test time based on current-page feasibility evidence. Appendix~\ref{sec:mpcr_swop_correspondence} states this correspondence componentwise and reports the empirical evidence supporting each of its components. \looseness=-1

\subsection{MPCR as Feasibility-Aware Candidate Reranking}
\label{sec:mpcr}

\begin{figure}[t]
    \centering
    \includegraphics[width=\linewidth]{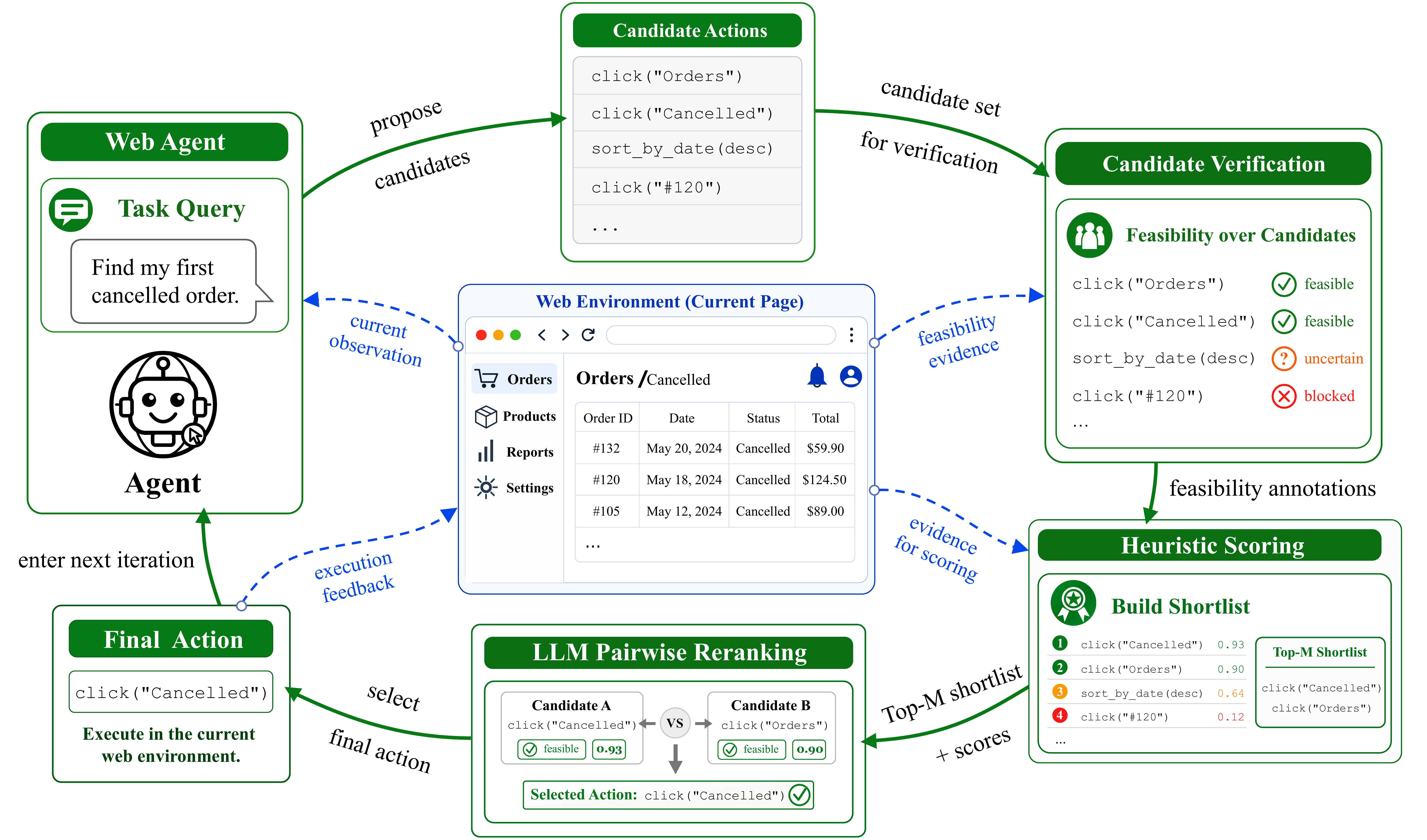}
    \caption{Execution flow of MPCR within the web-agent loop.}
    \label{fig:mpcr-executor}
\end{figure}

MPCR (Masked Prior-Guided Candidate Reranking) instantiates the above principles as an executor-side adaptation method. Figure~\ref{fig:mpcr-executor} shows the execution flow.

Given the current observation, task context, fixed skill library, and base executor, MPCR proposes primitive and skill-action candidates, verifies them with current-page feasibility evidence, scores them according to executability, and ranks the shortlist through LLM-based comparison or reranking~\cite{sun2023chatgptsearch,qin2024prp,wu2025pairwisereranking}.

MPCR selects only from proposed candidates and introduces no new skills. Execution feedback is carried into the next decision step. This realizes the SWOP principle in practical web execution while preserving the fixed learned skill library. Appendix~\ref{sec:appendix_theory_method_mapping} gives the full MPCR algorithm and implementation details.
\section{Experiments and Results}\label{sec:experiments}

We evaluate SkillShift-Bench and MPCR through three research questions: 
\textbf{RQ1} studies how learned-skill methods fail under controlled environment evolution; 
\textbf{RQ2} examines whether the observed adaptation dynamics are consistent with SWOP; 
and \textbf{RQ3} tests whether test-time executor-side adaptation improves fixed-library transfer.

\begin{table*}[t]
\centering
\small
\setlength{\tabcolsep}{6.5pt}
\renewcommand{\arraystretch}{1.1}

\begin{tabular}{cccccccccccc}
\toprule
\multirow{2}{*}[-1.75ex]{\textbf{Metric}}
& \multirow{2}{*}[-1.75ex]{\textbf{Method}}
& \multicolumn{3}{c}{\textbf{Website}}
& \multicolumn{6}{c}{\textbf{Environment}}
& \multirow{2}{*}[-1.75ex]{\textbf{O/A}} \\
\cmidrule(lr){3-5}
\cmidrule(lr){6-11}
&
& \makecell{\textbf{Git-}\\\textbf{Lab}}
& \makecell{\textbf{Ma-}\\\textbf{gento}}
& \makecell{\textbf{Word-}\\\textbf{Press}}
& \textbf{Src}
& \makecell{\textbf{Src-}\\\textbf{Perc}}
& \makecell{\textbf{Src-}\\\textbf{Exec}}
& \textbf{Tgt}
& \makecell{\textbf{Tgt-}\\\textbf{Perc}}
& \makecell{\textbf{Tgt-}\\\textbf{Exec}}
& \\
\midrule

\multirow{4}{*}{\makecell{SR \\(\%)}}
& \cellcolor{successbg} AWM
& \cellcolor{successbg} 17.71 & \cellcolor{successbg} 25.49 & \cellcolor{successbg} \textbf{62.66}
& \cellcolor{successbg} 50.07 & \cellcolor{successbg} 43.35 & \cellcolor{successbg} 22.45
& \cellcolor{successbg} 41.28 & \cellcolor{successbg} \textbf{37.65} & \cellcolor{successbg} 16.90
& \cellcolor{successbg} 35.29 \\
& \cellcolor{stepsbg} ASI
& \cellcolor{stepsbg} 20.81 & \cellcolor{stepsbg} 27.78 & \cellcolor{stepsbg} 60.27
& \cellcolor{stepsbg} \textbf{53.65} & \cellcolor{stepsbg} \textbf{48.80} & \cellcolor{stepsbg} 24.43
& \cellcolor{stepsbg} \textbf{41.84} & \cellcolor{stepsbg} 33.19 & \cellcolor{stepsbg} 15.79
& \cellcolor{stepsbg} \textbf{36.29} \\
& \cellcolor{callbg} SkillWeaver
& \cellcolor{callbg} 13.78 & \cellcolor{callbg} 14.37 & \cellcolor{callbg} 32.29
& \cellcolor{callbg} 35.13 & \cellcolor{callbg} 28.75 & \cellcolor{callbg} 15.63
& \cellcolor{callbg} 17.80 & \cellcolor{callbg} 13.85 & \cellcolor{callbg} 9.71
& \cellcolor{callbg} 20.15 \\
& \cellcolor{sfrbg} WALT
& \cellcolor{sfrbg} \textbf{21.07} & \cellcolor{sfrbg} \textbf{27.95} & \cellcolor{sfrbg} 55.38
& \cellcolor{sfrbg} 46.82 & \cellcolor{sfrbg} 43.53 & \cellcolor{sfrbg} \textbf{25.46}
& \cellcolor{sfrbg} 41.17 & \cellcolor{sfrbg} 32.74 & \cellcolor{sfrbg} \textbf{19.09}
& \cellcolor{sfrbg} 34.80 \\

\midrule

\multirow{4}{*}{\makecell{AS}}
& \cellcolor{successbg} AWM
& \cellcolor{successbg} 8.03 & \cellcolor{successbg} 8.22 & \cellcolor{successbg} 4.76
& \cellcolor{successbg} 6.04 & \cellcolor{successbg} 6.32 & \cellcolor{successbg} 8.31
& \cellcolor{successbg} 6.34 & \cellcolor{successbg} \textbf{6.59} & \cellcolor{successbg} 8.44
& \cellcolor{successbg} 7.01 \\
& \cellcolor{stepsbg} ASI
& \cellcolor{stepsbg} \textbf{7.87} & \cellcolor{stepsbg} \textbf{8.06} & \cellcolor{stepsbg} \textbf{4.73}
& \cellcolor{stepsbg} \textbf{5.90} & \cellcolor{stepsbg} \textbf{5.95} & \cellcolor{stepsbg} \textbf{8.23}
& \cellcolor{stepsbg} \textbf{6.29} & \cellcolor{stepsbg} 6.70 & \cellcolor{stepsbg} \textbf{8.24}
& \cellcolor{stepsbg} \textbf{6.89} \\
& \cellcolor{callbg} SkillWeaver
& \cellcolor{callbg} 14.79 & \cellcolor{callbg} 17.96 & \cellcolor{callbg} 12.63
& \cellcolor{callbg} 11.34 & \cellcolor{callbg} 13.48 & \cellcolor{callbg} 19.82
& \cellcolor{callbg} 12.59 & \cellcolor{callbg} 14.74 & \cellcolor{callbg} 18.80
& \cellcolor{callbg} 15.13 \\
& \cellcolor{sfrbg} WALT
& \cellcolor{sfrbg} 12.39 & \cellcolor{sfrbg} 11.46 & \cellcolor{sfrbg} 8.53
& \cellcolor{sfrbg} 8.77 & \cellcolor{sfrbg} 9.92 & \cellcolor{sfrbg} 12.09
& \cellcolor{sfrbg} 10.66 & \cellcolor{sfrbg} 11.36 & \cellcolor{sfrbg} 11.96
& \cellcolor{sfrbg} 10.79 \\

\midrule

\multirow{4}{*}{\makecell{ASC}}
& \cellcolor{successbg} AWM
& \cellcolor{successbg} -- & \cellcolor{successbg} -- & \cellcolor{successbg} --
& \cellcolor{successbg} -- & \cellcolor{successbg} -- & \cellcolor{successbg} --
& \cellcolor{successbg} -- & \cellcolor{successbg} -- & \cellcolor{successbg} --
& \cellcolor{successbg} -- \\
& \cellcolor{stepsbg} ASI
& \cellcolor{stepsbg} 0.12 & \cellcolor{stepsbg} 0.19 & \cellcolor{stepsbg} 0.17
& \cellcolor{stepsbg} 0.18 & \cellcolor{stepsbg} 0.26 & \cellcolor{stepsbg} 0.29
& \cellcolor{stepsbg} 0.10 & \cellcolor{stepsbg} 0.10 & \cellcolor{stepsbg} 0.05
& \cellcolor{stepsbg} 0.16 \\
& \cellcolor{callbg} SkillWeaver
& \cellcolor{callbg} 0.78 & \cellcolor{callbg} 0.63 & \cellcolor{callbg} 2.64
& \cellcolor{callbg} 1.13 & \cellcolor{callbg} 1.16 & \cellcolor{callbg} 1.14
& \cellcolor{callbg} 1.45 & \cellcolor{callbg} 1.50 & \cellcolor{callbg} 1.73
& \cellcolor{callbg} 1.35 \\
& \cellcolor{sfrbg} WALT
& \cellcolor{sfrbg} 0.07 & \cellcolor{sfrbg} 0.19 & \cellcolor{sfrbg} 0.00
& \cellcolor{sfrbg} 0.05 & \cellcolor{sfrbg} 0.05 & \cellcolor{sfrbg} 0.21
& \cellcolor{sfrbg} 0.08 & \cellcolor{sfrbg} 0.03 & \cellcolor{sfrbg} 0.11
& \cellcolor{sfrbg} 0.09 \\

\midrule

\multirow{4}{*}{\makecell{SFR \\(\%)}}
& \cellcolor{successbg} AWM
& \cellcolor{successbg} -- & \cellcolor{successbg} -- & \cellcolor{successbg} --
& \cellcolor{successbg} -- & \cellcolor{successbg} -- & \cellcolor{successbg} --
& \cellcolor{successbg} -- & \cellcolor{successbg} -- & \cellcolor{successbg} --
& \cellcolor{successbg} -- \\
& \cellcolor{stepsbg} ASI
& \cellcolor{stepsbg} 19.55 & \cellcolor{stepsbg} 48.27 & \cellcolor{stepsbg} 16.26
& \cellcolor{stepsbg} 9.83 & \cellcolor{stepsbg} 24.85 & \cellcolor{stepsbg} 55.20
& \cellcolor{stepsbg} 13.53 & \cellcolor{stepsbg} 28.77 & \cellcolor{stepsbg} 35.98
& \cellcolor{stepsbg} 28.03 \\
& \cellcolor{callbg} SkillWeaver
& \cellcolor{callbg} \textbf{4.61} & \cellcolor{callbg} \textbf{1.11} & \cellcolor{callbg} 0.21
& \cellcolor{callbg} \textbf{0.00} & \cellcolor{callbg} \textbf{0.41} & \cellcolor{callbg} \textbf{0.00}
& \cellcolor{callbg} \textbf{2.03} & \cellcolor{callbg} \textbf{4.24} & \cellcolor{callbg} \textbf{5.17}
& \cellcolor{callbg} \textbf{1.97} \\
& \cellcolor{sfrbg} WALT
& \cellcolor{sfrbg} 48.52 & \cellcolor{sfrbg} 23.88 & \cellcolor{sfrbg} \textbf{0.00}
& \cellcolor{sfrbg} 10.30 & \cellcolor{sfrbg} 23.29 & \cellcolor{sfrbg} 31.50
& \cellcolor{sfrbg} 23.28 & \cellcolor{sfrbg} 18.64 & \cellcolor{sfrbg} 37.76
& \cellcolor{sfrbg} 24.13 \\

\bottomrule
\end{tabular}
\caption{
Transfer diagnostics across websites and environment settings. 
O/A denotes the overall average across the corresponding settings. 
Bold values mark the highest SR and the lowest AS / SFR entries.
ASC is reported as a non-directional diagnostic.
``--'' indicates metrics not applicable to AWM because it uses prompted textual workflows rather than executable skill invocations~\cite{wang2025agentworkflowmemory}.
}
\label{tab:transfer_aggregate_expanded}
\end{table*}

\subsection{Experimental Setup}\label{sec:experimental_setup}

\paragraph{Benchmark and metrics.}
We evaluate all methods under the six SkillShift-Bench conditions defined in Section~\ref{sec:benchmark_construction}: 
\texttt{Src}, \texttt{Src-Perc}, \texttt{Src-Exec}, \texttt{Tgt}, \texttt{Tgt-Perc}, and \texttt{Tgt-Exec}.
For compact analysis, we also report three pooled views:
\emph{Base} combines \texttt{Src} and \texttt{Tgt}, 
\emph{Perceptual} combines \texttt{Src-Perc} and \texttt{Tgt-Perc}, and 
\emph{Execution} combines \texttt{Src-Exec} and \texttt{Tgt-Exec}.
We use the four diagnostics defined in Section~\ref{sec:evaluation_diagnostics}: SR, AS, ASC, and SFR.

\paragraph{Compared methods.}
We compare four skill learning methods: AWM, ASI, SkillWeaver, and WALT~\cite{wang2025agentworkflowmemory,wang2025inducingprogrammaticskills,zheng2025skillweaver,prabhu2026walt}.
Each method learns skills in \texttt{Src} using its native protocol and is evaluated under the aligned transfer conditions.
Method cards and library sizes are provided in Appendix~\ref{sec:appendix_method_cards} and Appendix~\ref{sec:appendix_learning_budgets}.

\paragraph{Fixed-library protocol.}
For RQ2 and RQ3, ASI first learns a skill library on \texttt{Src}.
The library is then frozen and shared by ASI and MPCR.
ASI uses its original selector, while MPCR changes only test-time selection over the same skills and primitive actions.
This protocol isolates executor-side adaptation from additional skill learning.

\paragraph{Models and execution.}
The benchmark-level analysis uses DeepSeek-V3.2, GPT-5 mini, and Claude Haiku 4.5~\cite{deepseek2025v32,openai2025gpt5mini,anthropic2025haiku45}.
The fixed-library ASI and MPCR comparison uses GPT-5 mini.
MPCR considers at most 2 candidate actions per step and keeps a top-2 shortlist, evaluating it as a lightweight test-time reranking method rather than a broad-search procedure. Appendix~\ref{sec:app_budget_selection} sweeps this candidate budget and reports the empirical basis for the default. Appendix~\ref{sec:appendix_site_backbone} further reports backbone-level diagnostic results.

\subsection{RQ1: How Do Existing Learned-Skill Methods Fail under Controlled Environment Evolution?}
\label{sec:RQ1}

\begin{figure}[t]
    \centering
    \includegraphics[width=\linewidth]{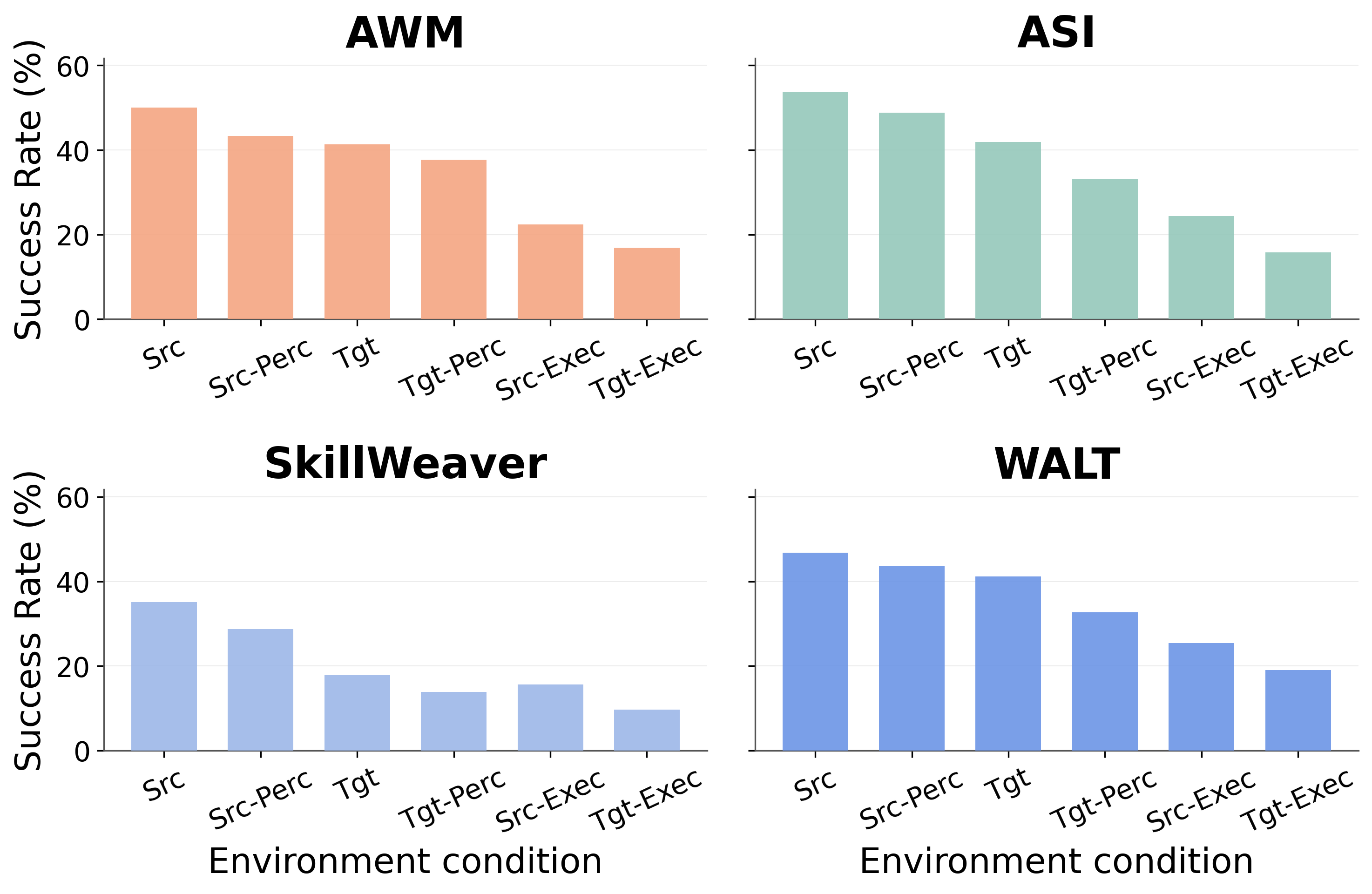}
    \caption{Robustness profiles under controlled environment evolution.}
    \label{fig:transfer_grouped_bar}
\end{figure}

\begin{figure*}[t]
    \centering
    \includegraphics[width=0.9\linewidth]{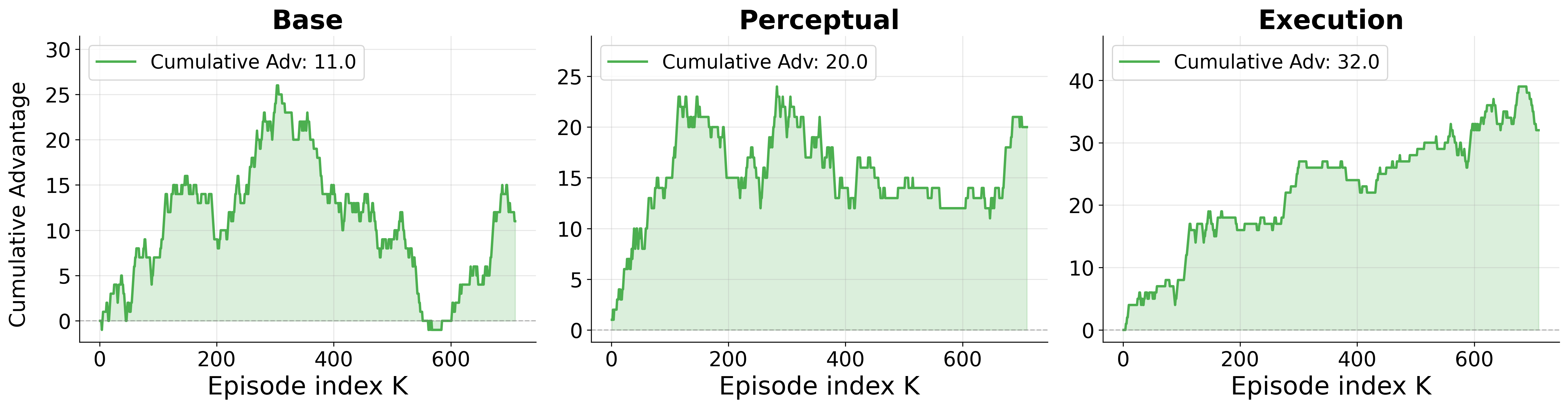}
    \caption{Cumulative task-failure advantage of MPCR over ASI.}
    \label{fig:cumulative_advantage}
\end{figure*}

Figure~\ref{fig:transfer_grouped_bar} and Table~\ref{tab:transfer_aggregate_expanded} show a consistent decline from clean source execution to evolved and shifted environments across all evaluated methods. For example, ASI decreases from 53.65\% on \texttt{Src} to 41.84\% on \texttt{Tgt}, and to 15.79\% on \texttt{Tgt-Exec}. Similar trends appear for AWM, SkillWeaver, and WALT. Under the aligned transfer setting, these drops indicate that skill reuse is sensitive to local execution conditions rather than only to task-level relevance.

The pooled views separate the pressure introduced by different shifts. 
Success decreases from 40.97\% in the \emph{Base} view to 35.23\% in \emph{Perceptual} and 18.69\% in \emph{Execution}. 
Trajectory cost also increases across the same views, with average steps rising from 8.49 to 9.38 and 11.99, while skill failure rate rises from 9.83\% to 16.70\% and 27.60\%. This pattern shows that environment evolution affects both task completion and the execution process, with runtime feasibility shifts creating the strongest pressure.

The diagnostics also reveal complementary failure channels. AWM is evaluated through workflow reuse, so explicit skill-call diagnostics are not applicable. ASI and WALT expose execution pressure through skill failure rate, while SkillWeaver often reflects it through longer trajectories and more skill calls. These patterns motivate reporting process-level diagnostics in addition to final success.

\subsection{RQ2: Are the Observed Adaptation Dynamics Consistent with the SWOP Theoretical Account?}
\label{sec:RQ2}

\begin{figure*}[t]
    \centering
    \includegraphics[width=0.9\linewidth]{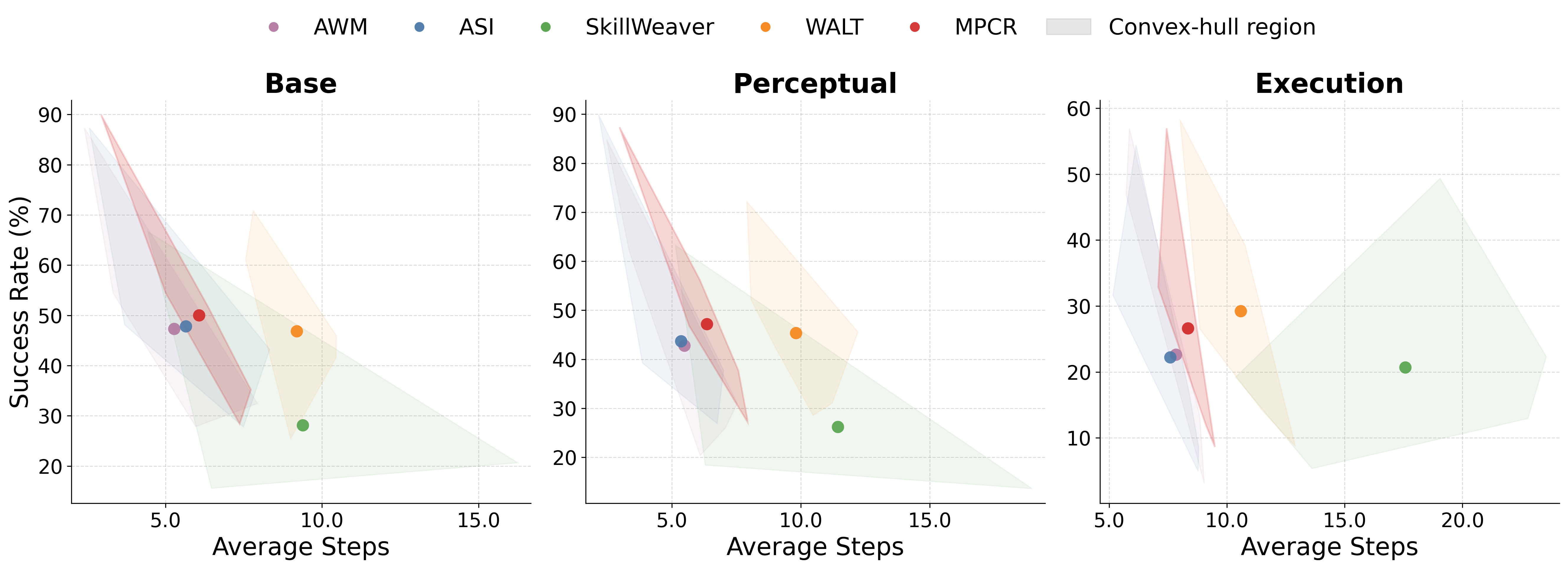}
    \caption{Success and trajectory cost for fixed-library executor adaptation. Points denote method centers across environment condition groups, and shaded hulls show configuration-level variation.}
    \label{fig:success_steps_hull_centers}
\end{figure*}

SWOP models shifted skill execution as nonstationary local executability, where an executor stays close to learned priors while using recent feasibility evidence. Since the theoretical per-step loss is not directly observable in web interaction, we use final failure as a task-level proxy.

For method \(m\) and task \(i\), let \(Y_{m,i}\in\{0,1\}\) be the automatic success label and \(L_{m,i}=1-Y_{m,i}\) the proxy failure loss. We define the cumulative advantage of MPCR over ASI as \(A(K)=\sum_{i=1}^{K}(L_{\mathrm{ASI},i}-L_{\mathrm{MPCR},i})\), where positive values indicate fewer accumulated failures under the same fixed skill library.

Figure~\ref{fig:cumulative_advantage} shows positive cumulative advantage in all condition groups. The final advantage is 11.0 under \emph{Base}, 20.0 under \emph{Perceptual}, and 32.0 under \emph{Execution}. This ordering follows the executability pressure observed in RQ1. When the learned prior remains reliable, the room for correction is smaller. When grounding evidence changes or runtime blockers appear, current feasibility evidence becomes more useful for action selection.

These dynamics are consistent with the SWOP mechanism. In practical web execution, MPCR follows the same prior-guided adaptation principle by conditioning test-time action selection on current execution evidence. The stronger advantage under shifted conditions suggests that part of the transfer loss reflects current executability mismatch, which can be partly mitigated through executor adaptation. \looseness=-1

\subsection{RQ3: Can Test-Time Executor-Side Adaptation Improve Transfer under a Fixed Skill Library?}
\label{sec:RQ3}

Table~\ref{tab:mpcr_gpt5mini_asi} shows that MPCR improves ASI in all six environments. Average success increases from 37.90\% to 41.26\%, corresponding to an 8.87\% relative improvement.

\begin{table}[t]
\centering
\small
\setlength{\tabcolsep}{6.5pt}
\renewcommand{\arraystretch}{1.1}
\begin{tabular}{lcccc}
\toprule
\multirow{2}{*}[-1.0ex]{\makecell{\textbf{Environ-}\\\textbf{ment}}}
& \multicolumn{2}{c}{\textbf{MPCR}} 
& \multicolumn{2}{c}{\textbf{ASI}} \\
\cmidrule(lr){2-3} \cmidrule(lr){4-5}
& \makecell{\textbf{SR (\%)}}
& \makecell{\textbf{ASC}}
& \makecell{\textbf{SR (\%)}}
& \makecell{\textbf{ASC}} \\
\midrule
Src 
& 56.67 {\color{green!50!black}\scriptsize(+7.35\%)}
& 0.22
& 52.79 & 0.05 \\

Src-Perc
& 57.09 {\color{green!50!black}\scriptsize(+7.68\%)}
& 0.20
& 53.02 & 0.13 \\

Src-Exec 
& 33.23 {\color{green!50!black}\scriptsize(+24.55\%)}
& 0.19
& 26.68 & 0.07 \\

Tgt 
& 43.32 {\color{green!50!black}\scriptsize(+1.14\%)}
& 0.12
& 42.83 & 0.04 \\

Tgt-Perc 
& 37.24 {\color{green!50!black}\scriptsize(+8.51\%)}
& 0.14
& 34.32 & 0.05 \\

Tgt-Exec 
& 19.99 {\color{green!50!black}\scriptsize(+12.62\%)}
& 0.03
& 17.75 & 0.02 \\

\midrule
\textbf{Average}
& \textbf{41.26} {\color{green!50!black}\scriptsize(+8.87\%)}
& \textbf{0.15}
& \textbf{37.90}
& \textbf{0.06} \\
\bottomrule
\end{tabular}
\caption{MPCR versus ASI across transfer conditions. Relative SR gains are computed against ASI.}
\label{tab:mpcr_gpt5mini_asi}
\end{table}

The gains are clearest when local execution becomes more fragile. MPCR improves \texttt{Src-Exec} from 26.68\% to 33.23\% and \texttt{Tgt-Exec} from 17.75\% to 19.99\%, with relative gains of 24.55\% and 12.62\%. It also maintains a positive gain in clean target transfer, where the learned prior is already more aligned with the current environment. This pressure-dependent pattern supports the local executability framing: executor adaptation is most useful when the learned skill remains relevant but direct reuse becomes less reliable.

Because ASI and MPCR share the same learned library, these improvements are attributable to test-time selection rather than additional skill acquisition. MPCR increases average skill calls from 0.06 to 0.15, while remaining well below one skill call per task. This indicates evidence-conditioned activation of existing skills rather than broad reuse.

Figure~\ref{fig:success_steps_hull_centers} further shows that the improvement is not simply obtained by longer trajectories. Across \emph{Base}, \emph{Perceptual}, and \emph{Execution}, MPCR improves success over ASI while staying in a comparable trajectory-cost regime. This suggests that feasibility-aware selection improves fixed-library transfer behavior by recovering part of the loss caused by local executability mismatch.

Supplementary analyses support this attribution. Appendix~\ref{sec:app_component_ablation} ablates the executor stages and shows, through a randomized control that preserves the pipeline while emitting no reranking call, that the gain follows from the content of the feasibility-conditioned decision rather than from an extra model call per step. Appendix~\ref{sec:app_awm_generalization} reproduces the same pressure-dependent pattern on a second learned library, Appendix~\ref{sec:app_seed_stability} reports cross-seed dispersion indicating that the gain is stable, and Appendix~\ref{sec:app_relearning} contrasts this test-time route with relearning the library in the target environment.
\section{Conclusion}\label{sec:conclusion}

We introduced SkillShift-Bench, a controlled benchmark for evaluating learned-skill executability under same-site environment evolution. By preserving task semantics while varying contextual, perceptual, and execution conditions, it exposes cases where task-relevant skills become difficult to invoke, ground, or execute. We further formalized this skill-environment mismatch with SWOP and instantiated it as MPCR, which improves fixed-library transfer through feasibility-aware reranking. These results suggest that robust web agents should complement reusable skills with test-time adaptation to current feasibility evidence under evolving environments. \looseness=-1

\section*{Limitations}

This work is designed as a controlled diagnostic study. SkillShift-Bench focuses on self-hosted same-site evolution over representative web systems, which supports reproducible execution and reliable failure attribution but leaves broader website coverage to future work. Our experiments mainly use relatively capable LLM-based web agents to reduce confounding from basic instruction-following or tool-use failures, and future studies can further examine smaller or more resource-constrained models. The current benchmark also focuses on English desktop web interactions, with multilingual, mobile, and accessibility-oriented settings as natural extensions.

\section*{Ethics Statement}

SkillShift-Bench is evaluated in controlled, self-hosted, and authorized web environments using scripted tasks, synthetic or benchmark-specific accounts, seeded task data, and automatic grading. It does not interact with third-party live services, real user data, or private user accounts. Before release, we check task instructions, seeded entities, account names, and grader-visible content to avoid personally identifying information and offensive content, and replace any accidental cases with synthetic placeholders. The perturbations are designed to study environment evolution and execution robustness, not to bypass access controls, evade platform policies, or automate unauthorized activity. We use third-party systems, baseline methods, and model APIs under their released licenses, documentation, or provider terms. Released artifacts will include only benchmark-specific code, task definitions, perturbation scripts, and metadata that we are permitted to distribute, and are intended for reproducible robustness analysis and research evaluation under controlled-use settings.

\section*{Acknowledgment}

This research was supported by the Jiangsu Science Foundation (BK20243012, BG2024036), Natural Science Foundation of China (624B2068, 62576162), and the Fundamental Research Funds for the Central Universities (022114380023).

\bibliography{refs/references}

\onecolumn
\appendix

\section{Comparison with Existing Web Agent Benchmarks}
\label{app:benchmark-comparison}

To situate SkillShift-Bench within the broader design space of web-agent evaluation, Table~\ref{tab:benchmark-comparison} summarizes key properties of representative benchmarks and evaluation frameworks.

\paragraph{Evaluation criteria.}
\textbf{Variation Diversity} measures whether a benchmark covers multiple qualitatively distinct types of environment variation. \textbf{Variation Controllability} measures whether the benchmark supports controlled and reproducible environment perturbations. \textbf{Evaluation Reproducibility} measures whether tasks, states, perturbations, and grading results can be stably reproduced. \textbf{Website Fidelity} measures whether the benchmark preserves realistic web interfaces, interaction structures, page complexity, and long-horizon workflows. \textbf{Diagnostic Granularity} measures whether the benchmark supports evaluation through complementary metrics across multiple dimensions. \textbf{Experience Transferability} measures whether the benchmark evaluates how robustly reusable experience or procedures acquired by an agent remain effective under environment changes. \textbf{Failure Attributability} measures whether the benchmark supports attributing agent failures to specific components of the evaluation process.

\paragraph{Static and live web benchmarks.}
Mind2Web, WebArena, VisualWebArena, WorkArena, WorkArena++, Online-Mind2Web, WebCanvas, and REAL provide important foundations for web-agent evaluation~\cite{deng2023mind2web,zhou2024webarena,koh2024visualwebarena,drouin2024workarena,boisvert2024workarenapp,xue2025illusionofprogress,pan2024webcanvas,garg2025real}.
They cover offline action prediction, executable self-hosted tasks, enterprise workflows, live web evaluation, and deterministic website replicas. These benchmarks are valuable for measuring general web-agent capability across realistic or executable web settings. However, they are not primarily designed to compare aligned variants under controlled environment evolution. As a result, they usually measure whether an agent can solve a task in a given environment, rather than whether experience acquired in one environment remains valid after the same task is transferred to a changed environment.

\paragraph{Robustness and evolution benchmarks.}
WAREX, TimeWarp, ProEvolve, and StressWeb are closer to SkillShift-Bench because they explicitly study environment change, perturbation, or reliability. WAREX injects website and network failures into existing benchmarks, making runtime instability visible~\cite{kara2025warex}. TimeWarp evaluates agents across temporally different interface versions~\cite{ishmam2026timewarp}. ProEvolve studies programmable evolution of agent environments~\cite{li2026proevolve}. StressWeb introduces controlled interaction variability for diagnostic robustness evaluation~\cite{bai2026stressweb}. These benchmarks show that environment variation can substantially affect web-agent performance. However, their main evaluation object is still general agent robustness under shifted, perturbed, or harder conditions. In other words, environment variation is primarily used as a test condition for end-to-end agent or multimodal-model capability.

\paragraph{Skill transfer as the evaluation object.}
SkillShift-Bench studies a different unit of evaluation. In our setting, the source environment is not merely a baseline condition; it is the context in which reusable skills are acquired, selected, or induced. The target and shifted environments then test whether the same learned skills remain executable under preserved task intent, target entities, functional support, and grading semantics. Therefore, the measured gap is not simply an end-to-end performance drop under a harder environment. It is a skill-transfer gap: a previously useful skill may remain semantically relevant while becoming locally non-executable because its invocation context, perceptual grounding evidence, or runtime preconditions no longer match the deployment environment. This setting is especially important for skill-based agents because skill reuse carries implicit assumptions about when the skill should be invoked, what interface evidence grounds its next action, and which local conditions must hold for execution~\cite{wang2025agentworkflowmemory,wang2025inducingprogrammaticskills,zheng2025skillweaver,prabhu2026walt}.

\begin{table*}[t]
\centering
\scriptsize
\renewcommand{\arraystretch}{2.5}
\setlength{\tabcolsep}{2pt}

\resizebox{\textwidth}{!}{
\begin{tabular}{lccccccc}
\toprule
\textbf{Benchmark} &
\makecell{\textbf{Variation}\\\textbf{Diversity}} &
\makecell{\textbf{Variation}\\\textbf{Controllability}} &
\makecell{\textbf{Evaluation}\\\textbf{Reproducibility}} &
\makecell{\textbf{Website}\\\textbf{Fidelity}} &
\makecell{\textbf{Diagnostic}\\\textbf{Granularity}} &
\makecell{\textbf{Experience}\\\textbf{Transferability}} &
\makecell{\textbf{Failure}\\\textbf{Attributability}} \\
\midrule
\textbf{Mind2Web}                  & \xmark & \xmark & \cmark & \tmark & \tmark & \xmark & \xmark \\
\textbf{WebArena}                  & \xmark & \xmark & \cmark & \cmark & \tmark & \xmark & \xmark \\
\makecell[l]{\textbf{Visual}\\\textbf{WebArena}}
                                    & \xmark & \xmark & \cmark & \cmark & \tmark & \xmark & \xmark \\
\makecell[l]{\textbf{WorkArena /}\\\textbf{WorkArena++}}
                                    & \xmark & \xmark & \tmark & \cmark & \tmark & \xmark & \xmark \\
\textbf{WebCanvas}                 & \tmark & \xmark & \tmark & \cmark & \cmark & \xmark & \tmark \\
\makecell[l]{\textbf{Online-}\\\textbf{Mind2Web}}
                                    & \xmark & \xmark & \tmark & \cmark & \tmark & \xmark & \xmark \\
\textbf{REAL}                      & \xmark & \xmark & \cmark & \cmark & \tmark & \xmark & \xmark \\
\textbf{WAREX}                     & \cmark & \cmark & \cmark & \tmark & \cmark & \xmark & \cmark \\
\textbf{TimeWarp}                  & \xmark & \cmark & \cmark & \cmark & \xmark & \tmark & \tmark \\
\textbf{ProEvolve}                 & \cmark & \cmark & \cmark & \xmark & \tmark & \tmark & \tmark \\
\textbf{StressWeb}                 & \cmark & \cmark & \cmark & \tmark & \cmark & \xmark & \cmark \\
\midrule
\makecell[l]{\textbf{SkillShift-}\\\textbf{Bench (Ours)}}  
                                    & \cmark & \cmark & \cmark & \cmark & \cmark & \cmark & \cmark \\
\bottomrule
\end{tabular}
}
\caption{Comparison of benchmarks across evaluation criteria. \cmark{} = explicitly supported as a central benchmark objective under the criterion definitions above, \tmark{} = partially supported or indirectly covered, and \xmark{} = not a primary target of the original benchmark design or not publicly verifiable.}
\label{tab:benchmark-comparison}
\end{table*}
\section{Benchmark Construction Details}
\label{app:benchmark_details}

This section describes how SkillShift-Bench is constructed to support controlled evaluation under environment evolution. We detail the selection of environments and source--target pairs, the validation of aligned task families, and the implementation of shifted variants.

\subsection{Environment Selection and Version Pairing}
\label{sec:app_b_env_pairing}

SkillShift-Bench selects environments and version pairs under two levels of constraints. At the website level, candidate systems must be mature open-source projects that support complete local self-hosted deployment. This requirement decouples evaluation from the public Internet and removes uncontrolled factors such as live service updates, network variability, third-party availability, and account-side changes. In addition, the targeted notion of environment evolution must primarily reflect structural changes in the front-end interaction layer rather than ordinary data-content updates. We therefore exclude websites whose evolution is mainly expressed as changing data content, such as map or knowledge-base growth, because such changes would entangle UI robustness with dynamic information retrieval.

At the version-pair level, the source and target variants must satisfy two further criteria. First, the version gap must contain substantial interface evolution, preferably a discontinuity in design language, navigation organization, workflow structure, or interaction pattern. Minor cosmetic changes are insufficient, because they do not create a meaningful test of learned-skill executability. Second, cross-version data migration must be feasible. For version-driven systems, we follow the official upgrade or migration path, including required intermediate upgrade points when applicable, so that the agent faces the same task-level semantic data state across versions~\cite{gitlabUpgradePaths,adobe2024datamigrationtool}. This means that task-critical entities, attributes, relations, and grader-visible outcomes are preserved even when the underlying database schema changes.

Applying these criteria, we instantiate SkillShift-Bench with three complementary environments: GitLab, Magento, and WordPress. They cover developer collaboration, e-commerce administration, and content management, while exposing different forms of interface evolution under reproducible self-hosted deployments.

\paragraph{GitLab.}
GitLab provides the developer-platform setting in SkillShift-Bench. The selected tasks cover repositories, issues, merge requests, labels, milestones, project settings, and user or group administration. These workflows require agents to operate inside nested project contexts and to rely on navigation paths, sidebars, management panels, and local action controls. As a mature open-source platform with self-hosted deployment support, GitLab satisfies our deployment criterion and can be reset in isolated containers. More importantly, its version evolution affects the interaction layer rather than only the underlying project data: project navigation, management pages, issue views, merge-request views, and settings workflows all undergo interface-level changes.

We use GitLab \textbf{v12.0.0}~\cite{gitlab2019v120release} as the source variant and GitLab \textbf{v16.0.6}~\cite{gitlab2023v1606release} as the target variant. This pair creates a realistic contextual-shift setting. Core developer operations remain available, but the routes, page contexts, menu hierarchy, and action locations needed to execute those operations may differ across versions~\cite{gitlab2023v160release}. To preserve task-level comparability, the deployment is migrated through a controlled upgrade procedure, so that projects, issues, labels, milestones, users, permissions, and other benchmark entities remain semantically aligned. Under this setup, a transfer failure is not explained by missing GitLab functionality or changed task data; it is instead evidence that a previously learned skill no longer matches the evolved execution context.

Figure~\ref{fig:app_b_gitlab_pair} shows the GitLab source-target pair on task-relevant project-management pages. The screenshots are included as construction evidence rather than as grading targets, illustrating the interface changes under which the same developer-platform operations must still be executed.

\begin{figure*}[htbp]
    \centering
    \begin{subfigure}{0.48\textwidth}
        \centering
        \includegraphics[width=\linewidth]{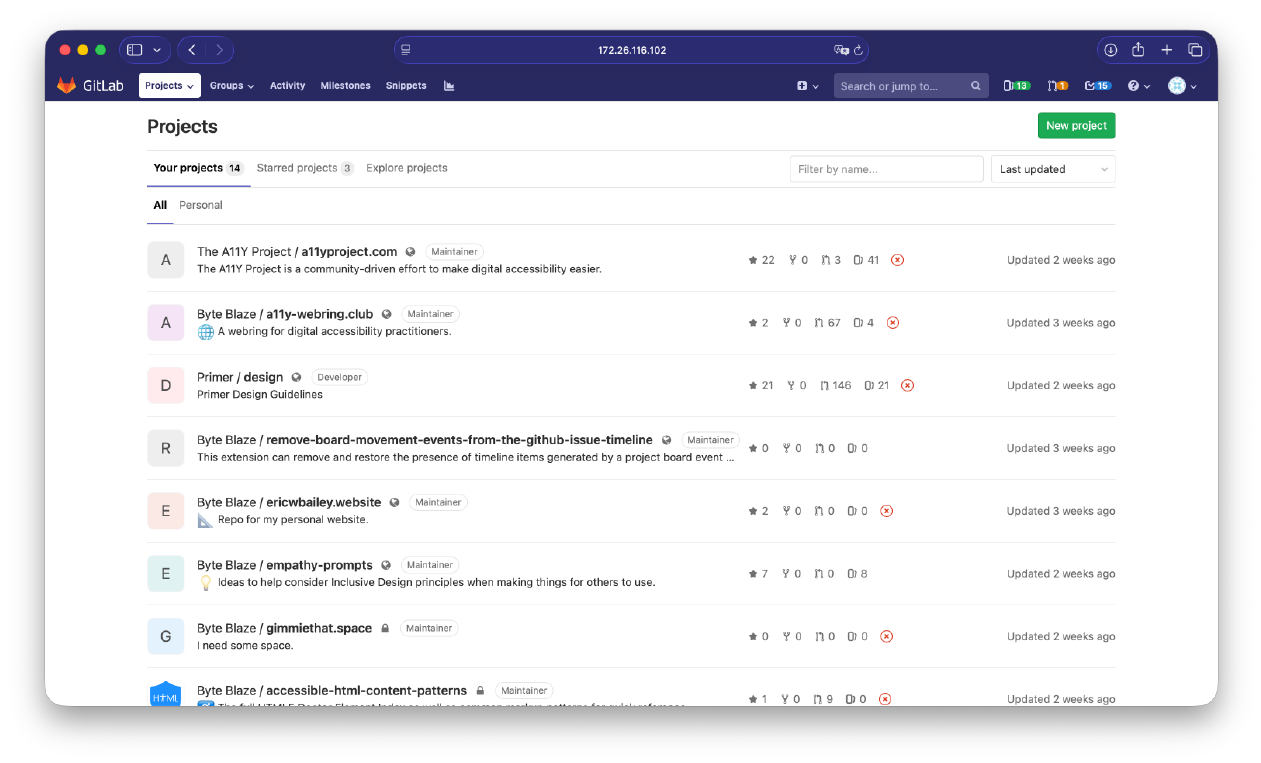}
        \caption{GitLab v12.0.0 source variant.}
        \label{fig:app_b_gitlab_source}
    \end{subfigure}
    \hfill
    \begin{subfigure}{0.48\textwidth}
        \centering
        \includegraphics[width=\linewidth]{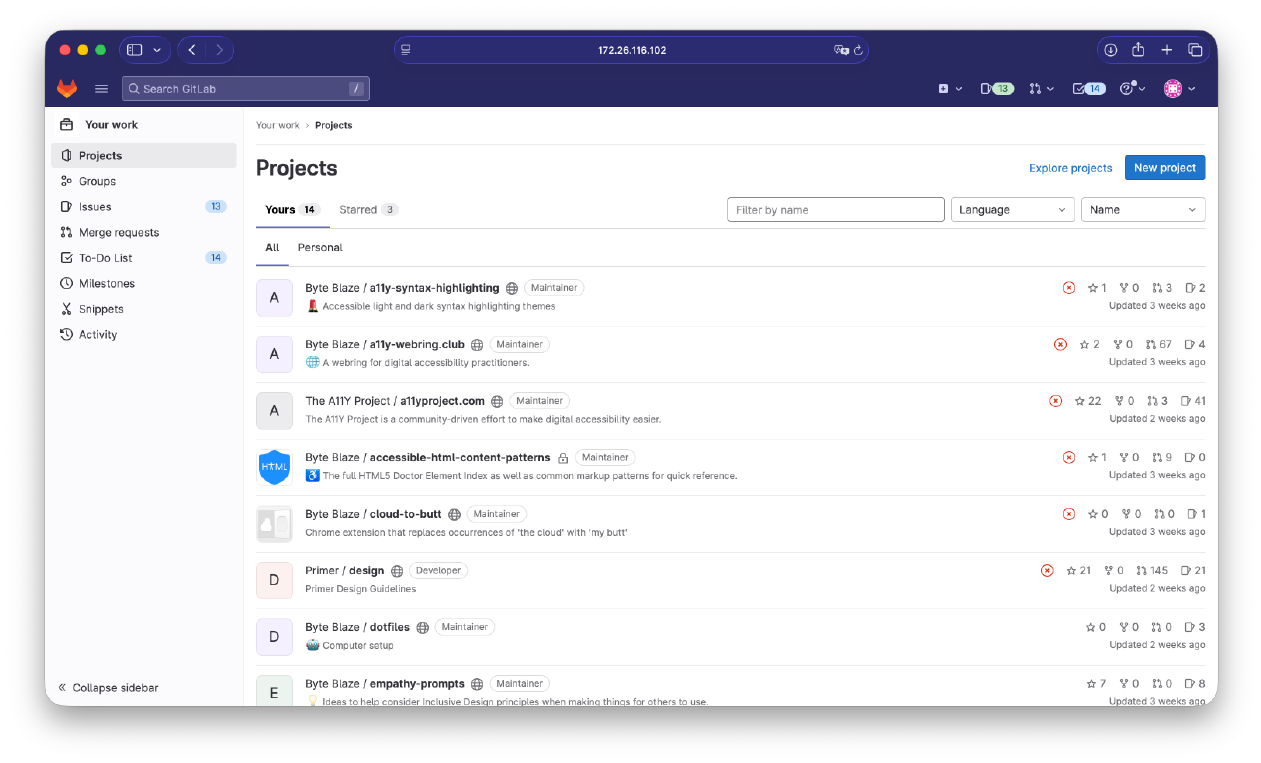}
        \caption{GitLab v16.0.6 target variant.}
        \label{fig:app_b_gitlab_target}
    \end{subfigure}
    \caption{
    GitLab source and target variants.
    }
    \label{fig:app_b_gitlab_pair}
\end{figure*}

\paragraph{Magento.}
Magento serves as the e-commerce administration environment. Its task space includes products, categories, customers, orders, inventory, attributes, and multi-field administrative forms. Compared with GitLab, the main pressure here comes less from nested project contexts and more from table- and form-heavy record manipulation: agents must search, filter, inspect, edit, and save structured business objects. Magento is suitable for our website-level criteria because it is a mature open-source e-commerce system that supports private deployment and deterministic reset. Its evolution is also interaction-centric. The product catalog is not treated as the evolving object; instead, the benchmark focuses on how the administrative interface for operating on the same catalog entities changes across major versions.

For the source variant, we use \textbf{v1.9.3}~\cite{magento193ReleaseNotes}; for the target variant, we use \textbf{v2.4.6}~\cite{adobe2023commerce246release}. This pair captures a major break in the Magento administration experience. Magento 2 reorganizes menus, grids, form layouts, product-editing workflows, and save flows relative to Magento 1, producing a strong test of version-driven skill transfer~\cite{adobe2024datamigrationtool,adobe2023commerce246release}. To keep the comparison focused on interface evolution, both deployments are initialized from a shared canonical benchmark store state and materialized through version-specific setup procedures~\cite{adobe2024datamigrationtool}. This preserves task-critical products, categories, customers, orders, inventory records, and attributes at the semantic level. Thus, the e-commerce operation required by the instruction remains the same, while the interaction surface through which the operation is carried out changes substantially.

Figure~\ref{fig:app_b_magento_pair} presents the Magento source-target pair on task-relevant administration pages. The visual comparison highlights the administrative UI migration that agents must handle while operating over aligned catalog-management data.

\begin{figure*}[htbp]
    \centering
    \begin{subfigure}{0.48\textwidth}
        \centering
        \includegraphics[width=\linewidth]{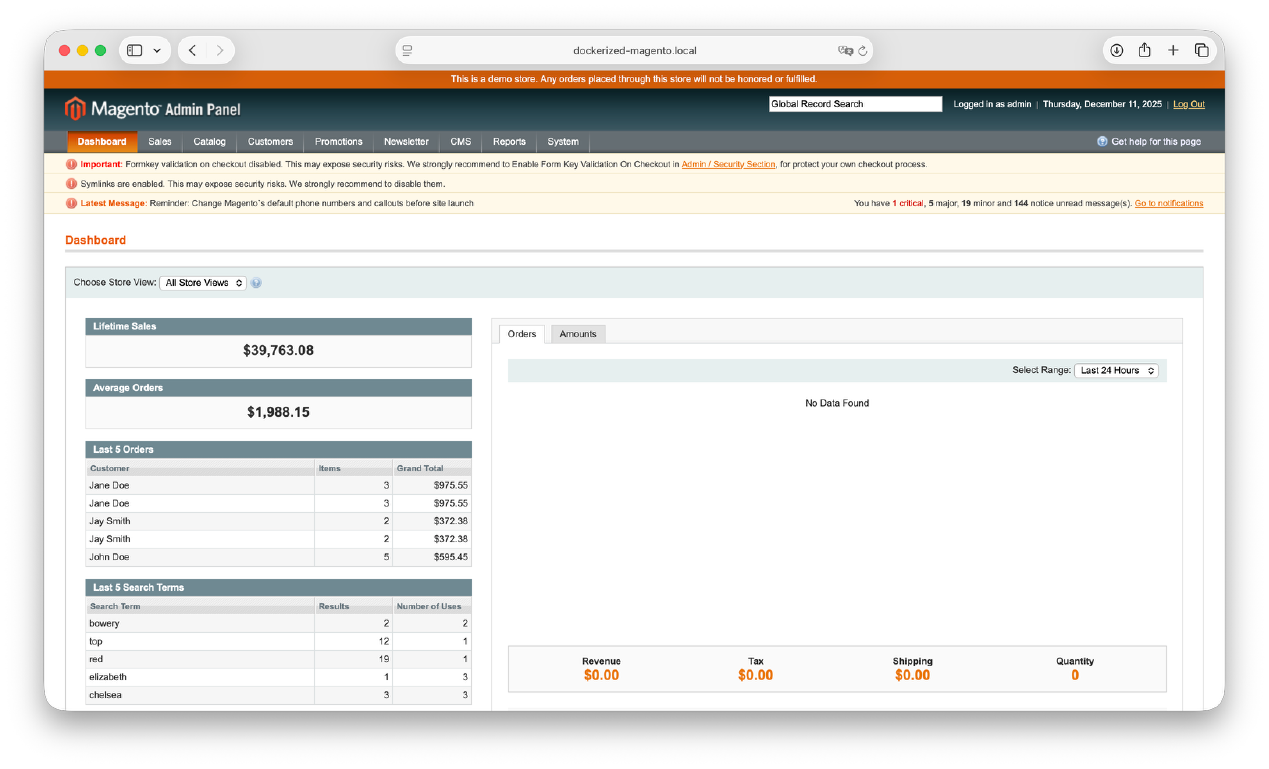}
        \caption{Magento v1.9.3 source variant.}
        \label{fig:app_b_magento_source}
    \end{subfigure}
    \hfill
    \begin{subfigure}{0.48\textwidth}
        \centering
        \includegraphics[width=\linewidth]{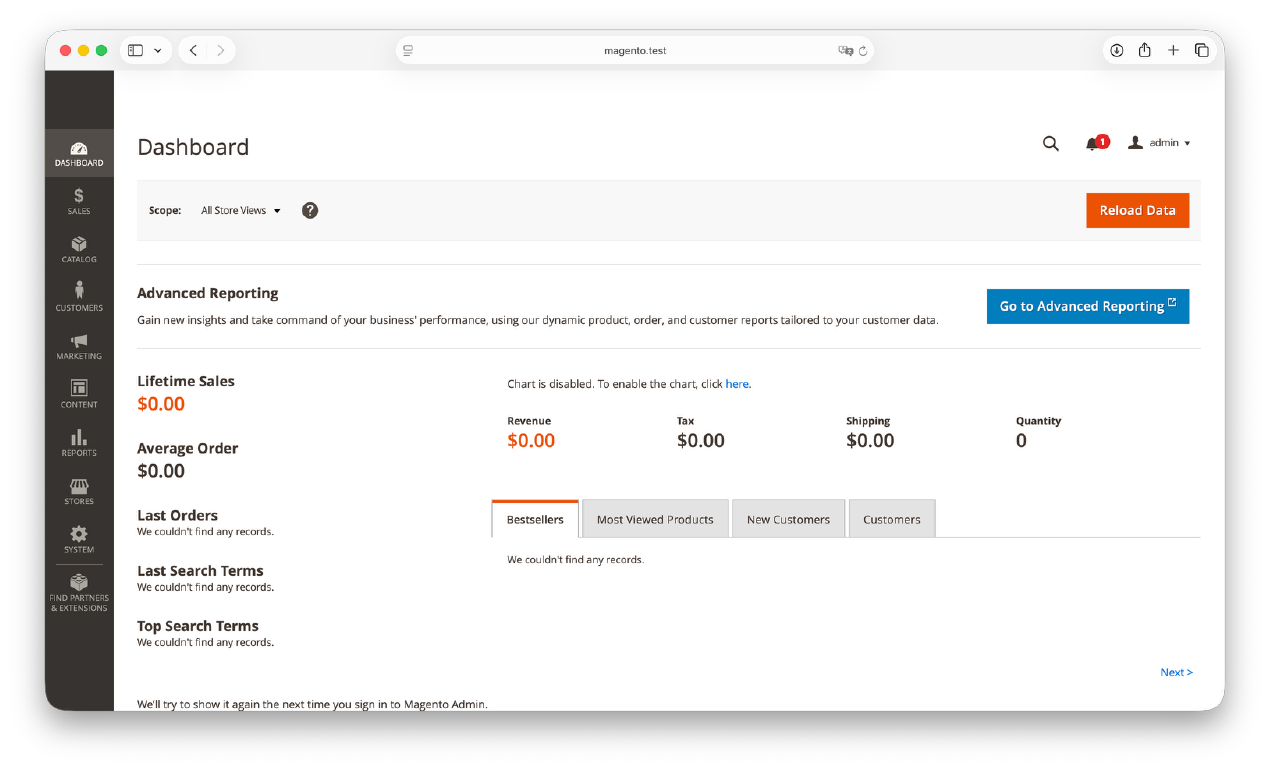}
        \caption{Magento v2.4.6 target variant.}
        \label{fig:app_b_magento_target}
    \end{subfigure}
    \caption{
    Magento source and target variants.
    }
    \label{fig:app_b_magento_pair}
\end{figure*}

\paragraph{WordPress.}
WordPress adds a complementary content-management setting. The tasks involve posts, pages, menus, media, comments, and public-facing content verification. Unlike GitLab and Magento, WordPress is not used primarily as a back-end version migration case. Instead, it isolates presentation-layer evolution: the same content-management state is evaluated under different themes. This makes WordPress useful for separating UI and data evolution. The website itself satisfies our deployment criterion as a mature open-source system that can be locally hosted and deterministically reset, while the chosen theme pair changes the observable layout, DOM structure, styling, and front-end navigation without changing the underlying seeded content.

We use the \textbf{News Portal}~\cite{wordpressNewsPortalTheme} and \textbf{Twenty Twenty-Four}~\cite{wordpress2023twentytwentyfour} themes under the same WordPress-based content-management deployment. The contrast between these themes provides a substantial front-end shift: page layout, navigation presentation, DOM organization, and styling assumptions differ markedly. No cross-version database migration is required in this case. The same WordPress instance and seeded database can be evaluated under both themes, which gives exact task-level data alignment. Posts, pages, menus, comments, and grading-relevant content remain fixed; only the presentation layer changes. WordPress therefore tests whether learned skills remain executable when front-end structure evolves while the back-end data state is held constant.

Figure~\ref{fig:app_b_wordpress_pair} shows the WordPress theme pair. In contrast to the GitLab and Magento examples, the figure illustrates theme-level presentation drift rather than a software-version migration.

\begin{figure*}[htbp]
    \centering
    \begin{subfigure}{0.48\textwidth}
        \centering
        \includegraphics[width=\linewidth]{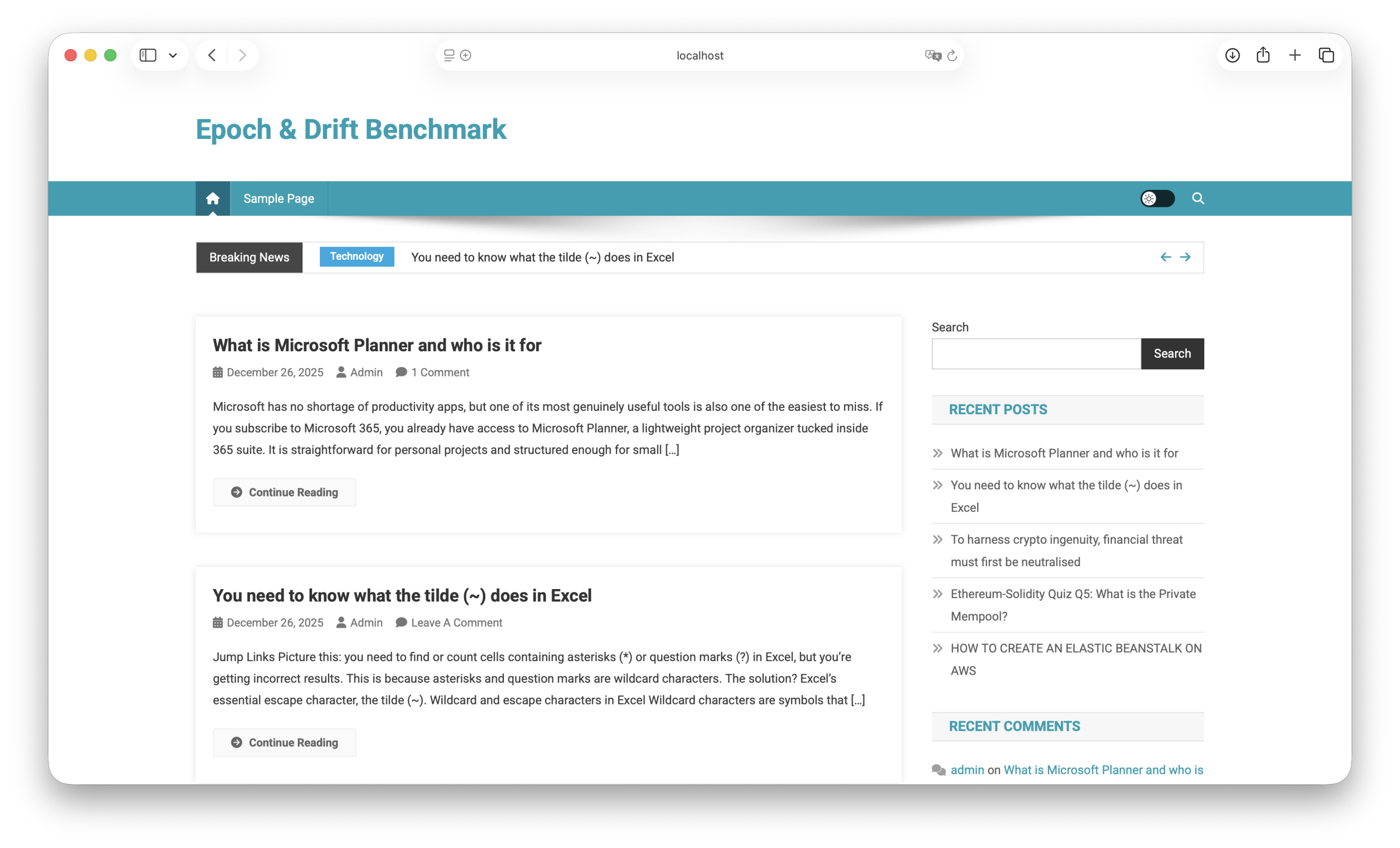}
        \caption{WordPress News Portal source variant.}
        \label{fig:app_b_wordpress_source}
    \end{subfigure}
    \hfill
    \begin{subfigure}{0.48\textwidth}
        \centering
        \includegraphics[width=\linewidth]{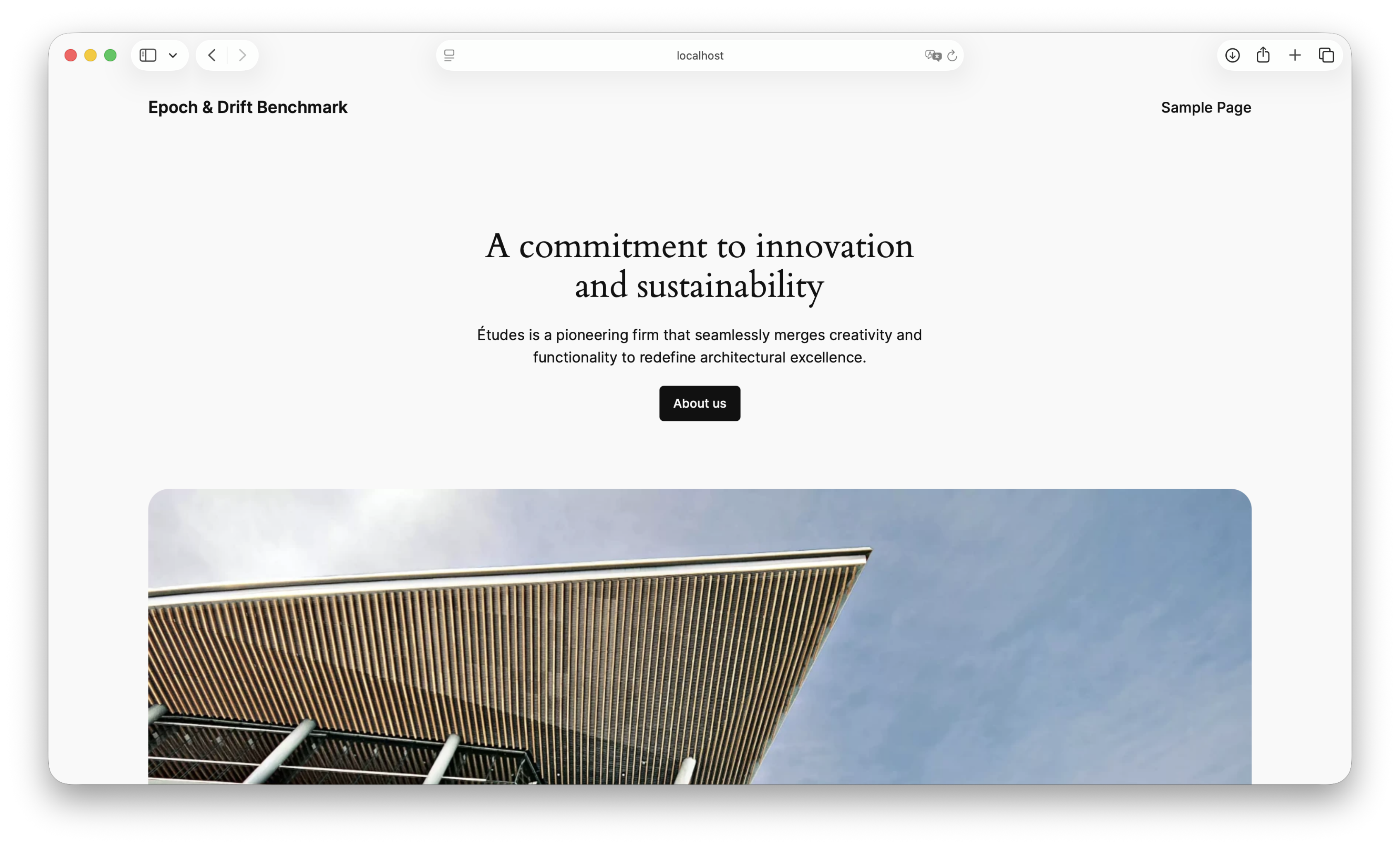}
        \caption{WordPress Twenty Twenty-Four target variant.}
        \label{fig:app_b_wordpress_target}
    \end{subfigure}
    \caption{
    WordPress source and target variants.
    }
    \label{fig:app_b_wordpress_pair}
\end{figure*}

\subsection{Shift Perturbation Implementation}
\label{sec:app_shift_perturbation_impl}

Shifted variants are generated through controlled interventions that preserve the underlying task semantics. Contextual Shift is realized by the version or theme pairings in Section~\ref{sec:app_b_env_pairing}, while Perceptual Shift and Execution Shift are implemented as runtime modules that alter grounding evidence and action feasibility, respectively.

\subsubsection{Perceptual Shift Implementation}
\label{sec:app_perceptual_shift_impl}

Perceptual Shift is implemented as a deterministic observable-surface transformation applied after page load.
The goal is not to change the task, the underlying data, or the available functionality, but to alter the interface evidence through which an agent grounds its next operation.
This design targets agents whose learned skills rely on stable visual hierarchy, DOM structure, attributes, or locator cues.

Concretely, the perturbation module applies two classes of transformations.
The first class modifies non-semantic visual presentation, such as typography, spacing, contrast, button styling, and small geometric distortions.
These changes alter the rendered appearance of the page while preserving the functional role and textual content of task-relevant elements.
The second class modifies DOM-surface signals, including locator attributes, class names, and lightweight tag-level structure.
For example, stable locator cues may be removed or perturbed, synthetic classes may be added, and simple inline tags may be replaced by semantically similar alternatives.
These transformations are designed to change the agent-facing grounding evidence rather than the website state itself.

To ensure reproducibility, all stochastic choices are generated from a fixed seed.
Thus, the same task instance produces the same shifted interface across repeated runs.
The perturbation is also re-applied to dynamically inserted elements, so the perceptual pressure persists throughout the episode rather than only at initial page load.
This allows Perceptual Shift to serve as a controlled stress test for grounding robustness while preserving task intent, task-critical entities, functional roles, navigable paths, and grader semantics.

\subsubsection{Execution Shift Implementation}
\label{sec:app_execution_shift_impl}

Execution Shift is implemented as a controlled runtime interruption mechanism.
Unlike Perceptual Shift, which changes the observable surface used for grounding, Execution Shift changes whether an otherwise valid action can be immediately realized in the browser.

The main execution-shift mechanism injects a modal blocker during the task episode.
After the relevant page is loaded, a full-screen overlay is inserted into the interface and intercepts user interactions with the underlying page.
As a result, task-relevant controls may still be present and visually accessible, but actions directed at them cannot take effect until the interruption is resolved.
The agent must therefore detect the blocker, dismiss it, and then resume the original task flow.

The interruption controller is activated only within the execution window of a benchmark task.
Start and end signals delimit when perturbations are allowed to affect the environment, preventing interference with task initialization, grading, or unrelated browser traffic.
Once the modal blocker is dismissed, the controller records the recovery event and stops re-injecting the same blocker for that episode.
This makes the perturbation reproducible while still requiring the agent to perform recovery before continuing the task.

\subsubsection{Injection Regimes for Execution Shift}
\label{sec:app_injection_regimes}

The interruption controller exposes the \emph{injection schedule} as a configurable dimension of the benchmark rather than as a fixed property of Execution Shift. SkillShift-Bench implements the three regimes summarized in Table~\ref{tab:app_injection_regimes}. Under \texttt{once}, a server-side episode flag and a client-side browser-storage flag jointly guarantee that a dismissed blocker is not reinstated. Under \texttt{multiple}, both persistent guards are removed, so the injection predicate evaluates to true on every document load and the blocker reappears after each navigation. Under \texttt{random}, injection is decided by a Bernoulli draw at each document load and the overlay is revealed after a randomized in-page delay, so the agent cannot anticipate when the interruption will occur.

\begin{table}[htbp]
\centering
\small
\setlength{\tabcolsep}{4pt}
\renewcommand{\arraystretch}{1.5}
\begin{tabularx}{0.8\linewidth}{@{}lXX@{}}
\toprule
\textbf{Regime} & \textbf{Behavior} & \textbf{Mechanism} \\
\midrule
\texttt{once} & Injected once per episode; not reinstated after dismissal & Server-side episode flag plus client-side storage guard \\
\texttt{multiple} & Reinjected on every page navigation & Persistent guards removed; injection predicate always true \\
\texttt{random} & Injected with fixed probability at a randomized in-page delay & Per-load Bernoulli draw plus client-side delay timer \\
\bottomrule
\end{tabularx}
\caption{Injection regimes available for Execution Shift.}
\label{tab:app_injection_regimes}
\end{table}

All Execution Shift conditions reported in the main paper use \texttt{once}. This is the most conservative of the three schedules: a single successful recovery restores an unobstructed page for the remainder of the episode, so the measured degradation is a lower bound on the pressure that runtime interruption can exert. Appendix~\ref{sec:app_reinjection} reports how both the fixed-library baseline and MPCR behave under the two more aggressive schedules.

\subsection{Task Validation Protocol}
\label{sec:app_task_validation}

Every task family in SkillShift-Bench is manually verified before it enters the shift-generation stage. The validation covers four properties. \emph{Validity} checks that the instruction describes a well-formed operation on the seeded website state. \emph{Correctness} checks that the automatic scorer accepts exactly the intended final state and rejects near-miss states, such as a partially filled form or an unsaved edit. \emph{Functional availability} checks that every capability required by the instruction is present in both the source and the target variant, so that a failure cannot be attributed to a removed feature. \emph{Solvability} checks that at least one executable solution path exists under each of the six evaluation conditions, including the perceptual and execution perturbations; task families for which a perturbation removes the last remaining solution path are revised or discarded.

We emphasize that this protocol establishes task solvability, not a human performance reference. SkillShift-Bench does not report a human baseline, and the numbers in Section~\ref{sec:experiments} should not be read against an implicit human ceiling. The protocol instead supports the benchmark's attribution claim: because a solution exists in every condition, a drop in success between aligned conditions reflects the agent's inability to invoke, ground, or execute a learned skill rather than the absence of a feasible path.

\subsection{Shift-Induced Failure Modes}
\label{sec:app_failure_modes}

This section examines how a previously working skill breaks under each shift. To isolate the effect of a single shift, we analyze \emph{regression} sets: tasks that the fixed-library ASI agent with GPT-5 mini solves in the reference condition but fails under the shifted condition, with the reference condition being \texttt{Src} for Perceptual and Execution Shift and the \texttt{Src}--\texttt{Tgt} pair for Contextual Shift. Both the quantitative sets and the illustrative cases below are drawn from Magento. All cited trajectories, step counts, and runtime errors are taken from the recorded execution logs.

\begin{table}[htbp]
\centering
\small
\setlength{\tabcolsep}{6.5pt}
\renewcommand{\arraystretch}{1.15}
\begin{tabular}{@{}lccc@{}}
\toprule
\textbf{Regression set} & \textbf{n} & \textbf{Steps} & \textbf{Trunc.} \\
\midrule
Contextual & 13 & 5.4 $\to$ 6.2 & 6/13 \\
Perceptual & 10 & 5.9 $\to$ 9.7 & 9/10 \\
Execution & 29 & 5.4 $\to$ 9.5 & 26/29 \\
\bottomrule
\end{tabular}
\caption{Regression sets used for failure-mode analysis, on Magento with GPT-5 mini. ``Steps'' reports mean trajectory length in the reference condition and in the shifted condition; ``Trunc.'' counts shifted runs that exhaust the step budget.}
\label{tab:app_regression_sets}
\end{table}

Table~\ref{tab:app_regression_sets} already separates two behavioral regimes. Under Perceptual and Execution Shift, failing trajectories roughly double in length and almost always exhaust the step budget, indicating unrecoverable looping. Under Contextual Shift, trajectories grow only slightly and truncate in fewer than half of the cases, indicating premature termination on a wrong route or a stale answer.

\paragraph{Contextual Shift breaks skill invocation.}
When routes, control placement, and workflow layout evolve, the learned skill is invoked against a target that has moved. In the task ``total payment of the last five completed orders,'' the source trajectory navigates the order grid and returns the correct total in five steps. Under Contextual Shift the agent instead fires the reporting step immediately, in a single action, and returns a stale figure without ever re-grounding on the relocated order page. The skill remains task-relevant, but it is invoked against a page state that the learned procedure no longer locates, and the run ends early on a wrong answer rather than looping. This is the signature that Table~\ref{tab:app_regression_sets} records as short trajectories with comparatively low truncation.

\paragraph{Perceptual Shift breaks skill grounding.}
The perturbation preserves task-critical objects and functionality while altering DOM structure, attributes, and locator signals, which appear in the logs as injected synthetic class markers. The learned action remains plausible but can no longer be grounded to the intended element. In the task ``order identifier of the newest pending order,'' the clean trajectory invokes two of ASI's named learned skills, for order navigation and status filtering. Under Perceptual Shift the status filter that was previously rendered as a selection element is rendered as a table header, so the selection action fails repeatedly with a type error and the run loops until the step budget is exhausted. What the shift violates is the grounding assumption encoded in the skill itself, namely that a particular control type sits at a particular node.

\paragraph{Execution Shift breaks execution feasibility.}
A reproducible modal blocker intercepts interactions, so semantically correct and correctly targeted skill steps become unexecutable in the current page state. In the task ``reviews mentioning \emph{decent},'' which the clean condition solves in three to four steps, the review-link click times out repeatedly and an explicit navigation call times out as well, because the overlay keeps intercepting pointer events. The agent registers that something is wrong, stalling to ask whether it may proceed, but never resolves the interruption; once the overlay reshuffles the document, the previously valid identifier for the search box can no longer be found, producing a ten-step loop with no answer. Awareness of the obstruction is therefore not sufficient: the failure lies in restoring feasibility, not in identifying what blocks it.

\paragraph{Implications.}
The three shifts do not lower success uniformly. They induce \emph{stage-distinct failure signatures} that match the diagnostic intent of each mode: wrong-route or early-termination behavior for invocation, un-actionable-locator loops for grounding, and interception-induced stalls for execution feasibility. This is the structure that a feasibility-aware, current-page-conditioned reranker is designed to exploit at transfer time, because in all three cases the shifted skill remains task-relevant while its default action is no longer the feasible one in the current page state.

This analysis also qualifies the Skill Failure Rate diagnostic defined in Section~\ref{sec:evaluation_diagnostics}. SFR aggregates two distinct causes: a skill that is retrieved and executed but returns an incorrect result, and a skill whose steps are correct but cannot be realized in the current page state. The qualitative decomposition above separates them, and SFR should be read as a supplementary diagnostic rather than as a primary metric. SFR is additionally uninformative when skills are almost never invoked, since a near-zero denominator makes the ratio unstable; we therefore report skill-avoidance behavior through ASC in such regimes and refrain from interpreting SFR values computed over a handful of invocations.

\section{Dynamic-Regret Analysis of SWOP}
\label{app:problem-setup}

This appendix is organized as a theorem-first proof appendix, while the proof itself is ordered so that each proof step consumes previously stated mathematical interfaces. Subsection~\ref{app:problem-setup:basic-setup} defines the SWOP model, objective, sliding-window estimator, and algorithm. Subsection~\ref{app:swop-main-results} states the fixed-window dynamic-regret theorem and the oracle-tuned corollary used by the informal statement in the main text. Subsections~\ref{app:bellman-supporting-facts} and~\ref{app:concentration-filtration-details} collect the KL-Bellman, boundedness, concentration, filtration, and auxiliary martingale facts used by the proof. Subsection~\ref{app:fixed-window-proof} proves the fixed-window theorem from near-optimism, certificate decomposition, summed-bonus control, martingale control, and drift counting. Subsection~\ref{app:oracle-window-proof} proves the oracle-window balancing statement.

\subsection{Model, Objective, and Algorithm}
\label{app:problem-setup:basic-setup}

\subsubsection{Nonstationary Information-State MDP Sequence}
\label{app:problem-setup:mdp-sequence}

We consider a finite-horizon information state MDP sequence that varies
with episode. There are in total \(K\) episodes, and the horizon length
of each episode is fixed as \(H\).

The episode-\(k\) environment is
\begin{equation}
\label{eq:mdp-sequence}
\mathcal M^{(k)}=
\Bigl(
\Xi,
\{\Omega(\xi)\}_{\xi\in\Xi},
\{r_h^{(k)}\}_{h=1}^H,
\{P_h^{(k)}\}_{h=1}^H,
H
\Bigr),\qquad k=1,\dots,K,
\end{equation}
where
\begin{itemize}
\item
  \(\Xi\) is a finite information state space, \(|\Xi| = S_\xi\);
\item
  for each information state \(\xi\in\Xi\), \(\Omega(\xi)\) is the
  fixed feasible action set, and it does not vary with episode;
\item
  define
\[
   A:= \max_{\xi\in\Xi} |\Omega(\xi)|;
   \]
\item
  for each stage \(h\in[H]:=\{1,\dots,H\}\), episode \(k\), state
  \(\xi\) and action \(\omega\in\Omega(\xi)\), we have
\[
   P_h^{(k)}(\cdot\mid \xi,\omega)\in \Delta(\Xi),\qquad
   r_h^{(k)}(\xi,\omega)\in [0,1].
   \]
\end{itemize}

For the theoretical analysis, we use a fixed episode-invariant candidate universe \(\Omega(\xi)\). This convention keeps the KL-regularized Bellman operators and comparators defined on a common support across episodes. Changes in local executability are represented through the non-stationary rewards and transitions rather than through a changing action support.

This fixed-domain abstraction is representation-agnostic. Executable
programs, API functions, or tools can be represented as actions in
\(\Omega(\xi)\) when they are exposed to the executor as callable
candidates. In contrast, prompt-level workflow memories, such as AWM
workflows~\cite{wang2025agentworkflowmemory}, are represented through the behavioral prior or contextual
state information rather than as additional executable actions.

\subsubsection{KL-Regularized Objective}
\label{app:problem-setup:kl-objective}

Following regularized MDP and control-as-inference formulations~\cite{geist2019regularizedmdp,levine2018probabilisticinference}, we take a reference prior on the fixed action domain
\[
\pi_{\rm ref}(\cdot\mid \xi)\in \Delta(\Omega(\xi)),
\qquad \forall \xi\in\Xi,
\]
and assume that there exists \(\pi_{\min}>0\) such that
\[
\pi_{\rm ref}(\omega\mid \xi)\ge \pi_{\min},
\qquad \forall \xi\in\Xi,\ \forall \omega\in\Omega(\xi).
\]
Fix the regularization parameter \(\alpha>0\). For any local randomized
policy \(\pi_h(\cdot\mid \xi)\in\Delta(\Omega(\xi))\), define its
one-step KL-regularized return at stage \(h\), episode \(k\) as
\begin{equation}
\label{eq:kl-one-step-return}
\tilde r_h^{(k)}(\xi,\pi_h):=
\langle \pi_h(\cdot\mid \xi), r_h^{(k)}(\xi,\cdot)\rangle
-\alpha\,\mathrm{KL}\!\left(
\pi_h(\cdot\mid \xi)\,\|\,\pi_{\rm ref}(\cdot\mid \xi)
\right).
\end{equation}

Equivalently, if the concrete action
\(\omega_h\sim \pi_h(\cdot\mid \xi_h)\) is taken, then the
within-episode KL-regularized return is
\[
\sum_{h=1}^H
\left(
r_h^{(k)}(\xi_h,\omega_h)
-\alpha\log\frac{\pi_h(\omega_h\mid \xi_h)}{\pi_{\rm ref}(\omega_h\mid \xi_h)}
\right).
\]

Under this fixed-support analytical convention, masking is absorbed into
the construction of the candidate universe and the non-stationary executability
model. The KL penalty is therefore evaluated on the common support
\(\Omega(\xi)\).

\textbf{Interpretation.}
\(\pi_{\rm ref}\) can be understood as a behavioral prior provided by a
large language model (LLM), human knowledge, or prompt-level learned
workflows. This includes workflow-memory methods such as AWM, where
learned procedures guide action selection through contextual prompting
rather than by adding new executable actions. The agent optimizes a
trade-off between environment reward and the cost of deviating from the
prior.

\subsubsection{Policy Class and Episode Value}
\label{app:problem-setup:policy-value}

Let \(\Pi\) be the class of all stage-dependent Markov randomized
policies satisfying the support constraint, namely
\[
\pi = \{\pi_h(\cdot\mid \xi)\}_{h=1}^H,
\qquad
\mathrm{supp}(\pi_h(\cdot\mid \xi))\subseteq \Omega(\xi),
\ \forall h,\xi.
\]
For any policy \(\pi\in\Pi\), define its KL-regularized value in episode
\(k\) by
\begin{equation}
\label{eq:episode-policy-value}
V_1^{\pi,(k)}(\xi):=
\mathbb E^\pi\!\left[
\sum_{h=1}^H
\left(
r_h^{(k)}(\xi_h,\omega_h)
-\alpha\log\frac{\pi_h(\omega_h\mid \xi_h)}{\pi_{\rm ref}(\omega_h\mid \xi_h)}
\right)
\;\middle|\;
\xi_1=\xi
\right].
\end{equation}

The stage-wise policy value functions used below are defined by the
following backward recursion:
\[
V_{H+1}^{\pi,(k)}(\xi):=0,\qquad \forall \xi\in\Xi,
\]
\[
Q_h^{\pi,(k)}(\xi,\omega)
:=
r_h^{(k)}(\xi,\omega)
+
P_h^{(k)}(\cdot\mid\xi,\omega)^\top V_{h+1}^{\pi,(k)},
\]
and
\[
V_h^{\pi,(k)}(\xi)
=
\sum_{\omega\in\Omega(\xi)}
\pi_h(\omega\mid\xi)
\left[
Q_h^{\pi,(k)}(\xi,\omega)
-
\alpha
\log
\frac{\pi_h(\omega\mid\xi)}
{\pi_{\rm ref}(\omega\mid\xi)}
\right],
\qquad h=H,\dots,1.
\]
The definition in \eqref{eq:episode-policy-value} is the \(h=1\) special
case of this recursion.

Let the initial distribution be \(\mu_1\in\Delta(\Xi)\). In the main
regret theorem below we use the pathwise initial states \(\xi_1^k\)
directly, rather than introducing separate \(\mu_1\)-averaged shorthand
values.

\subsubsection{Variation Budgets}
\label{app:problem-setup:variation-budgets}

Following variation-budget analyses of non-stationary bandits and episodic MDPs~\cite{besbes2014nonstationarybandits,cheung2020nonstationarymdp,mao2021nonstationaryepisodic}, we allow the environment to change only between episodes. Define the
reward drift budget and the transition drift budget as
\begin{align}
B_r
&:=
\sum_{k=1}^{K-1}
\max_{h\in[H],\,\xi\in\Xi,\,\omega\in\Omega(\xi)}
\left|
r_h^{(k+1)}(\xi,\omega)-r_h^{(k)}(\xi,\omega)
\right|, \label{eq:variation-budget-reward}\\
B_p
&:=
\sum_{k=1}^{K-1}
\max_{h\in[H],\,\xi\in\Xi,\,\omega\in\Omega(\xi)}
\left\|
P_h^{(k+1)}(\cdot\mid \xi,\omega)
-
P_h^{(k)}(\cdot\mid \xi,\omega)
\right\|_1. \label{eq:variation-budget-transition}
\end{align}

Sometimes the two can also be combined into a total variation quantity as
\begin{equation}
\label{eq:total-variation-budget}
V_K:= B_r + H B_p,
\end{equation}
because in finite-horizon Bellman recursion, transition perturbations
are usually amplified through downstream value functions by an \(O(H)\)
factor. Subsequent theorem statements can be written either separately
as functions of \(B_r,B_p\) or uniformly as functions of \(V_K\).

\paragraph{Primitive adjacent drifts.}
For later Bellman-stability and sliding-window arguments, define
\[
\Delta_r^{(k)}
:=
\max_{h\in[H],\,\xi\in\Xi,\,\omega\in\Omega(\xi)}
\left|
r_h^{(k+1)}(\xi,\omega)-r_h^{(k)}(\xi,\omega)
\right|,
\qquad k=1,\ldots,K-1,
\]
and
\[
\Delta_p^{(k)}
:=
\max_{h\in[H],\,\xi\in\Xi,\,\omega\in\Omega(\xi)}
\left\|
P_h^{(k+1)}(\cdot\mid\xi,\omega)
-
P_h^{(k)}(\cdot\mid\xi,\omega)
\right\|_1,
\qquad k=1,\ldots,K-1.
\]
Then
\[
B_r=\sum_{k=1}^{K-1}\Delta_r^{(k)},
\qquad
B_p=\sum_{k=1}^{K-1}\Delta_p^{(k)}.
\]

\paragraph{Logarithmic notation.}
Throughout the appendix, \(\widetilde O(\cdot)\) hides only absolute
constants and polylogarithmic factors in
\[
S_\xi,\ A,\ H,\ K,\ \delta^{-1},
\]
and does not hide polynomial dependence on \(S_\xi,A,H,K\) or on the
variation budgets \(B_r,B_p,V_K\).

\subsubsection{Dynamic Comparator and Dynamic Regret}
\label{app:problem-setup:dynamic-regret}

As in dynamic-regret analyses for non-stationary decision processes~\cite{besbes2014nonstationarybandits,cheung2020nonstationarymdp,mao2021nonstationaryepisodic}, in the episodic setting, the dynamic benchmark compares each episode with
that episode's own optimal KL-regularized policy~\cite{geist2019regularizedmdp}. The KL-regularized
episodic dynamic regret used in the main theorem is
\begin{equation}
\label{eq:dynamic-regret}
\mathrm{DynReg}(K):=
\sum_{k=1}^K
\Big(
V_1^{\star,(k)}(\xi_1^k) - V_1^{\hat\pi^k,(k)}(\xi_1^k)
\Big).
\end{equation}
This is the standard value-based episodic dynamic regret, evaluated along the
realized initial states and the algorithmic history. No separate
cumulative-value shorthand or expected-regret object is used in the proof of
Theorem~\ref{thm:dynamic-regret-fixed-window}.

The theorem is stated in the standard variation-dependent form used for
non-stationary decision processes. This form is more informative than an
unconditional rate, because it separates statistical estimation error from the
intrinsic cost of environmental variation. Sublinear regret follows in the
low-variation regimes stated in Corollary~\ref{cor:oracle-tuned-window}. In the
common fixed-complexity regime where \(H,S_\xi,A\) are treated as problem
constants, the oracle-tuned window gives the concise condition \(V_K=o(K)\).
The corresponding growing-dimension condition is stated explicitly in
Corollary~\ref{cor:oracle-tuned-window}.

\subsubsection{Sliding-Window Empirical Model and Bonus}
\label{app:problem-setup:sliding-window-model}

Following sliding-window optimism for non-stationary reinforcement learning~\cite{cheung2020nonstationarymdp,mao2021nonstationaryepisodic}, to perform provable learning in a nonstationary environment, we use a
sliding window of length \(W\) and fix throughout
\[
1\le W\le K.
\]
For episode \(k\), define the window
\begin{equation}
\label{eq:sliding-window}
\mathcal W_k:= \{\max\{1,k-W\},\dots,k-1\}.
\end{equation}

At the beginning of episode \(k\) the algorithm uses only the data from
the most recent episodes in the window \(\mathcal W_k\) to
construct the window empirical model. The basic idea is:

\begin{enumerate}
\item
  Use only the most recent window data in order to track environment
  changes;
\item
  Estimate rewards and transitions within the window;
\item
  Construct an upper-confidence bonus;
\item
  Perform optimistic KL-regularized backward planning on the ``empirical
  model + bonus'';
\item
  Output the corresponding softmax policy and execute it.
\end{enumerate}

In this way, the environment inside the window can be regarded as
``approximately stationary'', while old data outside the window are
actively discarded to reduce the bias caused by drift.

For each \((h,\xi,\omega)\), define the number of visits inside the
window by
\begin{equation}
\label{eq:window-count}
N_h^k(\xi,\omega):=
\sum_{k'\in\mathcal W_k}
\mathbf 1\{(\xi_h^{k'},\omega_h^{k'})=(\xi,\omega)\}.
\end{equation}

The empirical reward mean uses the unique zero-count default value:
\[
\hat r_h^k(\xi,\omega):=
\begin{cases}
\displaystyle
\frac{1}{N_h^k(\xi,\omega)}
\sum_{k'\in\mathcal W_k}
r_h^{k'}\,
\mathbf 1\{(\xi_h^{k'},\omega_h^{k'})=(\xi,\omega)\},
& N_h^k(\xi,\omega)\ge 1,\\[1.2em]
0, & N_h^k(\xi,\omega)=0.
\end{cases}
\]

The empirical transition distribution uses a pre-fixed default distribution
\[
P_h^\circ(\cdot\mid\xi,\omega)\in\Delta(\Xi)
\]
as the unique cold-start convention, and is defined by
\[
\hat P_h^k(\xi'\mid\xi,\omega):=
\begin{cases}
\displaystyle
\frac{1}{N_h^k(\xi,\omega)}
\sum_{k'\in\mathcal W_k}
\mathbf 1\{(\xi_h^{k'},\omega_h^{k'},\xi_{h+1}^{k'})=(\xi,\omega,\xi')\},
& N_h^k(\xi,\omega)\ge 1,\\[1.2em]
P_h^\circ(\xi'\mid\xi,\omega),
& N_h^k(\xi,\omega)=0.
\end{cases}
\]

Here \(r_h^{k'}\) denotes the immediate reward observed at stage \(h\) of episode \(k'\).

The only UCB bonus used throughout is defined as
\begin{equation}
\label{eq:ucb-bonus}
b_h^k(\xi,\omega)
=
c_b H\sqrt{
\frac{S_\xi\log(C_{\log}S_\xi AHK/\delta)}
{(N_h^k(\xi,\omega)+1)\wedge W}
},
\end{equation}
where \(C_{\log}\ge e^2\) is an absolute constant and \(c_b>0\) is chosen
according to the parameter convention in
Subsection~\ref{app:main-proof-setup:empirical-optimistic-model}.
Throughout, assume \(\delta\in(0,1)\), \(1\le W\le K\), and
\(S_\xi,A,H,K\ge1\). We choose \(C_{\log}\) sufficiently large for the
reward and transition concentration union bounds, and choose \(c_b\) so
that
\[
c_b\sqrt{S_\xi\log(C_{\log}S_\xi AHK/\delta)}\ge 2.
\]
This displayed condition is used to uniformly absorb the maximal bonus
in cold-start cases; the explicit concentration-absorption lower bound on
\(c_b\) is fixed in
Subsection~\ref{app:main-proof-setup:empirical-optimistic-model}.
Drift does not enter the bonus; it is handled only through \(D_k\) from
Subsection~\ref{app:main-proof-setup:window-drift} and the final regret
decomposition.

\subsubsection{Optimistic Soft Planning}
\label{app:algorithm:optimistic-planning}

Define the optimistic \(Q\)-backup by
\begin{equation}
\label{eq:optimistic-q}
\widetilde Q_h^k(\xi,\omega):=
\min\left\{
H-h+1,\;
\hat r_h^k(\xi,\omega)
+
b_h^k(\xi,\omega)
+
\hat P_h^k(\cdot\mid\xi,\omega)^\top \widetilde V_{h+1}^k
\right\},
\end{equation}
with terminal condition
\[
\widetilde V_{H+1}^k(\xi)=0.
\]
Then define the optimistic value by
\begin{equation}
\label{eq:optimistic-value}
\widetilde V_h^k(\xi):=
\alpha\log
\sum_{\omega\in\Omega(\xi)}
\pi_{\rm ref}(\omega\mid\xi)
\exp\!\left(
\frac{\widetilde Q_h^k(\xi,\omega)}{\alpha}
\right).
\end{equation}

The corresponding output policy is defined by
\begin{equation}
\label{eq:optimistic-policy}
\hat\pi_h^k(\omega\mid\xi)
=
\frac{
\pi_{\rm ref}(\omega\mid\xi)
\exp(\widetilde Q_h^k(\xi,\omega)/\alpha)
}{
\sum_{\omega'\in\Omega(\xi)}
\pi_{\rm ref}(\omega'\mid\xi)
\exp(\widetilde Q_h^k(\xi,\omega')/\alpha)
}.
\end{equation}
This is the natural form of sliding-window UCB-style optimism under KL regularization~\cite{cheung2020nonstationarymdp,geist2019regularizedmdp}. Algorithm~\ref{alg:swop} summarizes the resulting SWOP procedure across episodes, combining sliding-window estimation, optimistic soft planning, policy execution, and trajectory update.

\begin{algorithm}[t]
\caption{SWOP: Sliding Window Optimistic Prior-Guided Planning}
\label{alg:swop}
\begin{algorithmic}[1]
\Require Window size $W\in\{1,\ldots,K\}$, regularization parameter $\alpha$, confidence level $\delta$, reference prior $\pi_{\rm ref}$, and default transition kernels $P_h^\circ(\cdot\mid \xi,\omega)$.
\For{$k=1,2,\ldots,K$}
    \State Set $\mathcal W_k=\{\max\{1,k-W\},\ldots,k-1\}$.
    \For{each $(h,\xi,\omega)$}
        \State Compute $N_h^k(\xi,\omega)$, $\hat r_h^k(\xi,\omega)$, $\hat P_h^k(\cdot\mid\xi,\omega)$, and $b_h^k(\xi,\omega)$ as in Subsubsection~\ref{app:problem-setup:sliding-window-model}.
    \EndFor
    \State Set $\widetilde V_{H+1}^k(\xi)=0$ for all $\xi\in\Xi$.
    \For{$h=H,H-1,\ldots,1$}
        \For{each $\xi\in\Xi$ and $\omega\in\Omega(\xi)$}
            \State Compute
            \[
            \widetilde Q_h^k(\xi,\omega)=
            \min\left\{H-h+1,\hat r_h^k(\xi,\omega)+b_h^k(\xi,\omega)+\hat P_h^k(\cdot\mid\xi,\omega)^\top\widetilde V_{h+1}^k\right\}.
            \]
        \EndFor
        \For{each $\xi\in\Xi$}
            \State Compute $\widetilde V_h^k(\xi)$ by the optimistic soft Bellman backup and set $\hat\pi_h^k(\cdot\mid\xi)$ to the induced softmax policy.
        \EndFor
    \EndFor
    \State Execute $\hat\pi^k=\{\hat\pi_h^k\}_{h=1}^H$ in episode $k$ and store the trajectory.
\EndFor
\end{algorithmic}
\end{algorithm}

\subsection{Main Theoretical Guarantees}
\label{app:swop-main-results}

This subsection states the two main formal outputs of Appendix~\ref{app:problem-setup}. The first result is the fixed-window high-probability dynamic-regret bound. The second result is the oracle-tuned fixed-window upper envelope corresponding to the informal statement in the main text. The proofs are given in Subsections~\ref{app:fixed-window-proof} and~\ref{app:oracle-window-proof}.

\subsubsection{Dynamic-Regret Bound for a Fixed Window}
\label{app:main-proof:fixed-window-bound}

The following result builds on the non-stationary MDP sliding-window optimism perspective and the KL-regularized MDP formulation cited above. The theorem itself is the paper's SWOP-specific analysis under the fixed-domain abstraction and proof setup developed in this appendix.

\begin{theorem}[SWOP Dynamic Regret under Sliding-Window KL-Regularized Optimistic Planning]
\label{thm:dynamic-regret-fixed-window}

Fix \(\alpha>0\), confidence level \(\delta\in(0,1)\), and a window length
\(W\in\{1,\ldots,K\}\) chosen before the algorithm is run. Suppose the
following standing assumptions hold.

\begin{enumerate}
\item The state space \(\Xi\) is finite with \(|\Xi|=S_\xi\), and the action
set \(\Omega(\xi)\) is fixed over episodes for each \(\xi\in\Xi\). Let
\(A:=\max_{\xi\in\Xi}|\Omega(\xi)|\). Throughout,
\(K,H,S_\xi,A\ge1\), \(\mu_1\in\Delta(\Xi)\), and the initial state of
each episode satisfies \(\xi_1^k\sim\mu_1\) independently of the completed
past history.
\item For every \(k\in[K]\), \(h\in[H]\), \(\xi\in\Xi\), and
\(\omega\in\Omega(\xi)\),
\[
r_h^{(k)}(\xi,\omega)\in[0,1],
\qquad
P_h^{(k)}(\cdot\mid\xi,\omega)\in\Delta(\Xi).
\]
The adjacent drifts \(\Delta_r^{(k)}\), \(\Delta_p^{(k)}\), the budgets
\(B_r,B_p\), and \(V_K=B_r+HB_p\) are defined in
Subsection~\ref{app:problem-setup:variation-budgets}.
\item The reference prior has full support on the fixed action domain:
\[
\pi_{\rm ref}(\omega\mid\xi)\ge\pi_{\min}>0,
\qquad
\forall \xi\in\Xi,\ \omega\in\Omega(\xi).
\]
All KL-regularized values, policies, and Bellman operators use the same
\(\alpha\) and \(\pi_{\rm ref}\).
\item The sampling and filtration conventions in
Subsection~\ref{app:problem-setup:sampling-filtration} hold, including the
environment predictability convention. In particular, at the beginning of
episode \(k\), the episode-\(k\) model objects are fixed or predictable,
all algorithmic planning objects are measurable with respect to the
completed history of episodes \(1,\ldots,k-1\), and
\(\xi_1^k\sim\mu_1\) is independent of that past.
\item Algorithm~\ref{alg:swop} uses fixed default transition kernels
\[
P_h^\circ(\cdot\mid\xi,\omega)\in\Delta(\Xi),
\qquad h\in[H],\ \xi\in\Xi,\ \omega\in\Omega(\xi),
\]
and the unified bonus
\[
b_h^k(\xi,\omega)
=
c_b H\sqrt{
\frac{S_\xi\log(C_{\log}S_\xi AHK/\delta)}
{(N_h^k(\xi,\omega)+1)\wedge W}
},
\]
where \(C_{\log}\ge e^2\) is chosen sufficiently large for
Lemmas~\ref{lem:reward-concentration}--\ref{lem:total-concentration-event},
and \(c_b\) satisfies the concentration-absorption and cold-start
conditions in
Subsection~\ref{app:main-proof-setup:empirical-optimistic-model}.
\end{enumerate}

The regret is the value-based dynamic regret defined in
\eqref{eq:dynamic-regret}. Then there exist absolute constants
\(C_1,C_2,C_3>0\) such that, with probability at least \(1-\delta\), the
pathwise random variable \(\mathrm{DynReg}(K)\) satisfies
\begin{equation}
\label{eq:fixed-window-regret-bound}
\mathrm{DynReg}(K)
\le
C_1 H^2 K S_\xi\sqrt{\frac{A\log(C_{\log}S_\xi AHK/\delta)}{W}}
+
C_2 W H(B_r+HB_p)
+
C_3 H^2\sqrt{K H\log(C_{\log}/\delta)}.
\end{equation}
Equivalently, up to logarithmic factors,
\[
\mathrm{DynReg}(K)
=
\widetilde O\!\left(
H^2 K S_\xi\sqrt{\frac{A}{W}}
+
W H(B_r+HB_p)
+
H^2\sqrt{KH}
\right).
\]

\end{theorem}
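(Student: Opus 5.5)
The argument follows the standard sliding-window UCB template, adapted to the soft (log-sum-exp) Bellman operator. It proceeds in four steps.

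\textbf{Step 1: Window-averaged model and drift.} For episode \(k\), let \(\bar r_h^k\) and \(\bar P_h^k\) be the visit-weighted averages of the true rewards and transitions over the window \(\mathcal W_k\). These are the quantities the empirical estimates actually concentrate to. Telescoping the adjacent drifts gives, uniformly in \((h,\xi,\omega)\),
\[
|\bar r_h^k-r_h^{(k)}|\le\sum_{j=k-W}^{k-1}\Delta_r^{(j)},\qquad \|\bar P_h^k-P_h^{(k)}\|_1\le\sum_{j=k-W}^{k-1}\Delta_p^{(j)}.
\]
Define \(D_k:=\sum_{j=k-W}^{k-1}(\Delta_r^{(j)}+H\Delta_p^{(j)})\).

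\textbf{Step 2: Concentration and near-optimism.} A self-normalized martingale (Freedman/Azuma-type) inequality, applied to the window sums and union-bounded over \((h,\xi,\omega,k)\) and window counts, gives \(|\hat r-\bar r|+\|\hat P-\bar P\|_1\,H\le b_h^k\) on an event of probability at least \(1-\delta/2\). Here the \(\sqrt{S_\xi}\) factor in the bonus pays for the \(\ell_1\) transition deviation. Cold-start pairs are handled by the choice \(c_b\sqrt{S_\xi\log}\ge2\), which forces the clipped value \(H-h+1\).

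I then prove \(\widetilde V_h^k\ge V_h^{\star,(k)}-\sum_{h'\ge h}D_k\) by backward induction. Two facts make this work:
\begin{itemize}
\item the soft backup \(Q\mapsto\alpha\log\sum\pi_{\rm ref}e^{Q/\alpha}\) is monotone and \(1\)-Lipschitz in sup-norm;
\item the KL-optimal value satisfies the same soft Bellman equation with the true model.
\end{itemize}
Since only the true-versus-window-average error is uncovered, the slack per step is at most \(D_k\), giving near-optimism up to \(HD_k\) at \(h=1\).

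\textbf{Step 3: Certificate decomposition.} The key identity is that \(\hat\pi^k\) is exactly the softmax of \(\widetilde Q^k\). Hence
\[
\widetilde V_h^k(\xi)=\langle\hat\pi_h^k,\widetilde Q_h^k\rangle-\alpha\,\mathrm{KL}(\hat\pi_h^k\|\pi_{\rm ref}).
\]
The KL terms therefore cancel exactly against those in \(V_h^{\hat\pi^k,(k)}\), and
\[
\widetilde V_1^k-V_1^{\hat\pi^k,(k)}=\mathbb E^{\hat\pi^k}\!\sum_h\bigl[(\hat r-r)+b+(\hat P-P)^\top\widetilde V_{h+1}\bigr].
\]
Rewriting this along the realized trajectory gives a sum of bonuses \(2b_h^k(\xi_h^k,\omega_h^k)\), drift terms \(HD_k\), and a martingale difference sequence bounded by \(2H\). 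Azuma's inequality bounds the martingale sum by \(H^2\sqrt{KH\log(1/\delta)}\)-type terms, which is the \(C_3\) term.

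\textbf{Step 4: Summing and the main obstacle.} The drift contributions total \(\sum_k 2HD_k\le 2WH(B_r+HB_p)\), since each adjacent drift appears in at most \(W\) windows. This is the \(C_2\) term.

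The main obstacle is the summed bonus under a sliding window, because counts are reset as episodes leave the window. I would split the \(K\) episodes into \(\lceil K/W\rceil\) consecutive blocks of length \(W\). Within a block, the in-window count of \((h,\xi,\omega)\) is at least its visits in the block so far, which requires care, since the window may straddle blocks; I would use monotonicity of the window within the overlapping part. The pigeonhole bound then gives, per block and per \(h\),
\[
\sum 1/\sqrt{N+1}\le 2\sqrt{S_\xi AW}.
\]
Summing over \(H\) stages and \(K/W\) blocks, and multiplying by \(c_bH\sqrt{S_\xi\log}\), yields
\[
H^2S_\xi\sqrt{A\log}\cdot\frac{K}{\sqrt W},
\]
which matches the \(C_1\) term. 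Combining the two high-probability events gives the claimed bound with probability at least \(1-\delta\).
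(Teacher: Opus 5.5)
Your proposal follows the paper's proof essentially step for step. The visit-weighted window model is the paper's conditional-mean model $\breve r_h^k,\breve P_h^k$, with the same drift bound via $D_k$. Concentration is taken around it, with the $\sqrt{S_\xi}$ paying for the $\ell_1$ deviation and $b_h^k\ge 2H$ forcing clipping at cold start. Near-optimism is proved by backward induction through the monotone, translation-equivariant log-sum-exp backup, and the certificate uses the softmax identity so the KL terms cancel. Drift is counted via $\sum_k D_k\le W(B_r+HB_p)$, and the bonus sum uses the same length-$W$ blocks. Your straddling worry is unnecessary: two visits in one block are at most $W-1$ episodes apart, so the $j$-th visit already sees the previous $j-1$ in its window, giving $(N_h^k+1)\wedge W\ge j$ directly.

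The one step that fails as written is the martingale increment bound of $2H$. That bound is the unregularized fact that $\widetilde V$ and $V^\pi$ both lie in $[0,H]$. Here $V_h^{\hat\pi^k,(k)}$ carries the penalty $-\alpha\log(\hat\pi_h^k/\pi_{\rm ref})$, which can be of order $H$ at every stage. So $V_h^{\hat\pi^k,(k)}$ can be of order $-H^2$, and $\Gamma_h^k=\widetilde V_h^k-V_h^{\hat\pi^k,(k)}$ can be of order $H^2$.

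The paper controls this with the identity $\alpha\log(\hat\pi_h^k/\pi_{\rm ref})=\widetilde Q_h^k-\widetilde V_h^k\in[-H,H]$. This gives $|V_h^{\hat\pi^k,(k)}|\le 2H^2$, $|\Gamma_h^k|\le 3H^2$, and increments (action-sampling part $a_h^k$ plus transition part $t_h^k$) of at most $14H^2$. Azuma over the $KH$ flattened steps then yields exactly the $C_3H^2\sqrt{KH\log(C_{\log}/\delta)}$ term you state. A genuine $2H$ bound would have given only $H\sqrt{KH}$, so your final term is right but the stated envelope is not, and the two sentences are inconsistent.

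Also form the martingale from the bridge quantities $\widetilde Q_h^k,\widetilde V_h^k,V_h^{\hat\pi^k,(k)}$ \emph{before} substituting $\widetilde Q_h^k\le \hat r_h^k+b_h^k+(\hat P_h^k)^\top\widetilde V_{h+1}^k$. That substitution also makes your Step 3 display an inequality rather than an equality. If you instead convert $\mathbb E^{\hat\pi^k}\sum_h[(\hat r-r)+b+(\hat P-P)^\top\widetilde V_{h+1}]$ to the realized path, the increments inherit the cold-start bonus size $c_bH\sqrt{S_\xi\log(C_{\log}S_\xi AHK/\delta)}$. The $C_3$ term then picks up a spurious $\sqrt{S_\xi\log(C_{\log}S_\xi AHK/\delta)}$ factor.

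With the $O(H^2)$ envelope and the martingale built from clipped objects, the rest of your argument goes through as in the paper.
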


\subsubsection{Oracle-Tuned Window Corollary}
\label{app:main-proof:oracle-tuned-window}

\begin{corollary}[Oracle-Tuned Fixed-Window Balancing]
\label{cor:oracle-tuned-window}

Recall that
\[
V_K:=B_r+HB_p.
\]
Assume that the window length is selected before deployment as a deterministic
function of the variation budget \(V_K\). More precisely, for any admissible
\(W_{\mathrm{orc}}\in\{1,\dots,K\}\) fixed before the algorithm is run,
Theorem~\ref{thm:dynamic-regret-fixed-window} gives, with probability at least
\(1-\delta\),
\[
\mathrm{DynReg}(K)
\le
\widetilde O\!\left(
H^2KS_\xi\sqrt{\frac{A}{W_{\mathrm{orc}}}}
+
W_{\mathrm{orc}}HV_K
+
H^2\sqrt{KH}
\right).
\]
Equivalently, an oracle with knowledge of \(V_K\) may choose the deterministic
upper-envelope minimizer
\[
W_{\mathrm{orc}}
\in
\arg\min_{1\le W\le K}
\left\{
H^2KS_\xi\sqrt{\frac{A}{W}}
+
WHV_K
\right\}
\]
before the run, yielding
\[
\mathrm{DynReg}(K)
\le
\widetilde O\!\left(
\min_{1\le W\le K}
\left\{
H^2KS_\xi\sqrt{\frac{A}{W}}
+
WHV_K
\right\}
+
H^2\sqrt{KH}
\right).
\]
For a uniform display over all variation regimes, use the smoothed quantity
\(V_K+1\) and set
\begin{equation}
\label{eq:oracle-tuned-window}
W_\star:=
\left(
\frac{H^2S_\xi^2 A K^2}{(V_K+1)^2}
\right)^{1/3},
\end{equation}
with the admissible deterministic window
\[
W_{\mathrm{orc}}:=
\max\!\left\{
1,\;
\min\!\left\{
K,\;
\left\lceil W_\star\right\rceil
\right\}
\right\}.
\]
In the untruncated case \(1\le W_\star\le K\), running the algorithm with this
fixed window gives
\begin{equation}
\label{eq:oracle-tuned-bound}
\mathrm{DynReg}(K)
\le
\widetilde O\!\left(
H^{5/3}S_\xi^{2/3}A^{1/3}
K^{2/3}(V_K+1)^{1/3}
+
H^2\sqrt{KH}
\right),
\end{equation}
with probability at least \(1-\delta\).

In the common fixed-complexity regime where \(H,S_\xi,A\) are treated as
problem constants, the tuned-window bound gives
\[
\mathrm{DynReg}(K)=\widetilde O\!\left(K^{2/3}(V_K+1)^{1/3}\right)+o(K),
\]
and hence sublinear dynamic regret whenever \(V_K=o(K)\). If these problem
parameters are allowed to grow with \(K\), the corresponding sufficient
sublinearity conditions are
\[
H^5S_\xi^2A(V_K+1)=o(K),
\qquad
H^2\sqrt{KH}=o(K).
\]

\end{corollary}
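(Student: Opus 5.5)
The plan is to take Theorem~\ref{thm:dynamic-regret-fixed-window} as given, so the corollary becomes a deterministic balancing computation applied to its high-probability bound. The first claim, for an arbitrary admissible $W_{\mathrm{orc}}$ fixed before the run, follows by substituting $W=W_{\mathrm{orc}}$ into \eqref{eq:fixed-window-regret-bound}. This substitution is legitimate because the theorem only requires the window to be fixed in advance. Since $W_{\mathrm{orc}}$ is a deterministic function of $V_K$ and of $(H,S_\xi,A,K)$, it does not depend on the sampled trajectories. Hence the measurability and predictability conventions of the theorem still hold, and the same probability-$(1-\delta)$ event applies. The second display then follows by taking $W_{\mathrm{orc}}$ to be the minimizer over the finite set $\{1,\dots,K\}$, so that the bound equals the minimum of the envelope up to constants and logarithms.

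For the explicit rate, let
\[
f(W)=H^2KS_\xi\sqrt{A/W}+WH(V_K+1).
\]
Because $V_K\le V_K+1$, the envelope with $V_K$ is at most $f$. Treat $W$ as continuous and set $f'(W)=0$. This gives $W^{3/2}=\tfrac12 H K S_\xi\sqrt{A}/(V_K+1)$, which matches $W_\star$ in \eqref{eq:oracle-tuned-window} up to an absolute constant factor. At $W_\star$ the two terms are equal up to constants. The second term evaluates to
\[
W_\star H(V_K+1)=H^{2/3}S_\xi^{2/3}A^{1/3}K^{2/3}(V_K+1)^{-2/3}\cdot H(V_K+1)=H^{5/3}S_\xi^{2/3}A^{1/3}K^{2/3}(V_K+1)^{1/3},
\]
and the first term gives the same quantity. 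In the untruncated case $1\le W_\star\le K$ we have $W_\star\le\lceil W_\star\rceil\le 2W_\star$. The first term decreases in $W$, so it can only shrink under rounding up. The second term at most doubles. Adding the $H^2\sqrt{KH}$ term from the theorem yields \eqref{eq:oracle-tuned-bound}.

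For the asymptotic statements, fix $H,S_\xi,A$. Then \eqref{eq:oracle-tuned-bound} becomes $\widetilde O(K^{2/3}(V_K+1)^{1/3})+O(\sqrt K)$, and the $\sqrt K$ term is $o(K)$. If $V_K=o(K)$, then $K^{2/3}(V_K+1)^{1/3}=K\,((V_K+1)/K)^{1/3}=o(K)$. For the growing-dimension version, divide the main term by $K$ and cube it:
\[
\bigl(H^{5/3}S_\xi^{2/3}A^{1/3}K^{2/3}(V_K+1)^{1/3}/K\bigr)^3=H^5S_\xi^2A(V_K+1)/K,
\]
which is $o(1)$ exactly under the stated condition. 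The remaining term requires $H^2\sqrt{KH}=o(K)$ separately.

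The main obstacle is the truncated regimes, even though the corollary restricts \eqref{eq:oracle-tuned-bound} to the untruncated case. I would still check that the minimizer form remains valid there. When $W_\star>K$, we take $W=K$, and since $f'\le 0$ on $[1,K]$ in this case, the drift term $KH(V_K+1)$ is at most the statistical term at $K$, so the bound is $O(H^2S_\xi\sqrt{AK})$. When $W_\star<1$, we take $W=1$, and the bound becomes linear in $K$, which is consistent with the sublinearity conditions failing in that regime. Care is also needed so that logarithmic factors do not enter the choice of $W$. Because $\widetilde O$ hides the $\log(C_{\log}S_\xi AHK/\delta)$ factor, I would keep it as a multiplicative constant in the first term. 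This leaves the optimizer unchanged up to polylogarithmic factors.
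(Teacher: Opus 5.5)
Your proposal is correct and follows the paper's own argument: apply the fixed-window theorem at a window fixed before the run, balance the two envelope terms to obtain $W_\star$, substitute to get the $H^{5/3}S_\xi^{2/3}A^{1/3}K^{2/3}(V_K+1)^{1/3}$ rate, and cube to get the growing-dimension condition. Your rounding step $W_\star\le\lceil W_\star\rceil\le 2W_\star$ makes precise what the paper writes as $W_{\mathrm{orc}}\asymp W_\star$. In your truncated-regime aside, $W_\star>K$ does not by itself give $f'\le 0$ on $[1,K]$, because the exact critical point of $f$ is $2^{-2/3}W_\star$; however, the comparison you actually need, $KH(V_K+1)\le H^2S_\xi\sqrt{AK}$, follows directly from $W_\star>K$.
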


\subsubsection{Proof Roadmap and Dependency Graph}
\label{app:swop-proof-roadmap}

The proof of Theorem~\ref{thm:dynamic-regret-fixed-window} is organized around five mathematical interfaces.

\begin{enumerate}
\item \textbf{Near-optimism under window drift.} Proposition~\ref{prop:near-optimism-window-drift} shows that the clipped optimistic value upper-bounds the episode-wise optimal value up to the local window-drift term:
\[
V_h^{\star,(k)}(\xi)-\widetilde V_h^k(\xi)
\le
C_{\mathrm{opt}}(H-h+1)D_k.
\]
\item \textbf{Certificate decomposition.} Lemma~\ref{lem:pathwise-certificate-bound} controls the episode-level gap between the optimistic value and the value of the executed policy by the realized bonuses, a martingale remainder, and the same drift quantity. The fixed-window theorem consumes the following pathwise interface:
\[
\widetilde V_1^k(\xi_1^k)-V_1^{\hat\pi^k,(k)}(\xi_1^k)
\le
2\sum_{h=1}^H b_h^k(\xi_h^k,\omega_h^k)
+
M_k
+
C_{\mathrm{cert}}H D_k.
\]
\item \textbf{Summed-bonus control.} Proposition~\ref{prop:summed-bonus-bound} controls the cumulative realized uncertainty bonus under overlapping sliding-window counts.
\item \textbf{Martingale control.} Proposition~\ref{prop:unified-martingale-control} controls the unified pathwise remainder \(\sum_k M_k\) with high probability.
\item \textbf{Window drift counting.} Equation~\eqref{eq:window-drift-sum} converts episode-wise window drift into the variation-budget contribution \(W(B_r+HB_p)\).
\end{enumerate}

Combining these five interfaces yields the three terms in the fixed-window theorem: statistical estimation, non-stationary drift, and martingale noise.

\subsection{Supporting KL-Regularized Bellman Facts}
\label{app:bellman-supporting-facts}

This subsection collects the KL-regularized Bellman algebra, log-sum-exp monotonicity, and boundedness facts used by the main proof. It also includes additional Bellman stability facts for context; those stability facts are not invoked in the proof of Theorem~\ref{thm:dynamic-regret-fixed-window}, whose drift control uses Lemma~\ref{lem:conditional-mean-model-drift} and \eqref{eq:window-drift-sum}.

\subsubsection{Soft Bellman Recursion}
\label{app:kl-bellman:recursion}

This subsection uses the standard KL-regularized Bellman and softmax form for regularized MDPs~\cite{geist2019regularizedmdp,levine2018probabilisticinference}, specialized to the fixed-domain episode-\(k\) model.

For each episode \(k\), define the stage-\(h\) optimal regularized value
function \(V_{h}^{\star,(k)}:\Xi\to\mathbb R\), with terminal condition
\[
V_{H+1}^{\star,(k)}(\xi)=0,\qquad \forall \xi\in\Xi.
\]
The corresponding soft \(Q\)-function is defined by
\begin{equation}
\label{eq:soft-q-true}
Q_h^{\star,(k)}(\xi,\omega):=
r_h^{(k)}(\xi,\omega)
+
P_h^{(k)}(\cdot\mid \xi,\omega)^\top V_{h+1}^{\star,(k)}.
\end{equation}
Then the KL-regularized Bellman recursion is
\begin{equation}
\label{eq:kl-bellman-recursion}
V_h^{\star,(k)}(\xi)
=
\alpha \log
\sum_{\omega\in\Omega(\xi)}
\pi_{\rm ref}(\omega\mid \xi)
\exp\!\left(
\frac{Q_h^{\star,(k)}(\xi,\omega)}{\alpha}
\right),
\qquad h=H,\dots,1.
\end{equation}
For any episode \(k\), stage \(h\), and bounded function
\(f:\Xi\to\mathbb R\), define
\[
(\mathcal T_h^{(k)}f)(\xi)
:=
\alpha\log
\sum_{\omega\in\Omega(\xi)}
\pi_{\rm ref}(\omega\mid\xi)
\exp\!\left(
\frac{
r_h^{(k)}(\xi,\omega)
+
P_h^{(k)}(\cdot\mid\xi,\omega)^\top f
}{\alpha}
\right).
\]
With this notation,
\[
V_{H+1}^{\star,(k)}\equiv 0,
\qquad
V_h^{\star,(k)}=\mathcal T_h^{(k)}V_{h+1}^{\star,(k)}.
\]

The corresponding stage-wise greedy regularized policy has the following
closed-form softmax form:
\begin{equation}
\label{eq:softmax-policy}
\pi_{h}^{\star,(k)}(\omega\mid \xi)
=
\frac{
\pi_{\rm ref}(\omega\mid \xi)
\exp\!\left(
Q_h^{\star,(k)}(\xi,\omega)/\alpha
\right)
}{
\sum_{\omega'\in\Omega(\xi)}
\pi_{\rm ref}(\omega'\mid \xi)
\exp\!\left(
Q_h^{\star,(k)}(\xi,\omega')/\alpha
\right)
},
\qquad \omega\in\Omega(\xi).
\end{equation}
This closed-form expression is a key advantage of KL regularization: the
optimal policy is naturally absolutely continuous with respect to
\(\pi_{\rm ref}\) as a soft policy, thereby avoiding the discontinuity
and tie-breaking technical issues caused by hard argmax selection.

\subsubsection{KL-Bellman Optimality}
\label{app:kl-bellman:optimality}

\begin{proposition}[KL-Bellman optimality and softmax form]
\label{prop:kl-bellman-optimality}

For any fixed episode \(k\), stage \(h\), and state \(\xi\), let \(q=(q_\omega)_{\omega\in\Omega(\xi)}\in\mathbb R^{|\Omega(\xi)|}\). Then
\[
\max_{\pi\in\Delta(\Omega(\xi))}
\left\{
\sum_{\omega}\pi(\omega)q_\omega
-\alpha\,\mathrm{KL}\bigl(\pi\|\pi_{\rm ref}(\cdot\mid\xi)\bigr)
\right\}
=
\alpha\log\sum_{\omega\in\Omega(\xi)}
\pi_{\rm ref}(\omega\mid\xi)\exp(q_\omega/\alpha),
\]
and the unique maximizer is
\[
\pi^q(\omega)
=
\frac{\pi_{\rm ref}(\omega\mid\xi)\exp(q_\omega/\alpha)}
{\sum_{\omega'}\pi_{\rm ref}(\omega'\mid\xi)\exp(q_{\omega'}/\alpha)}.
\]
Therefore, by taking
\[
q_\omega
=
r_h^{(k)}(\xi,\omega)
+
P_h^{(k)}(\cdot\mid\xi,\omega)^\top V_{h+1}^{\star,(k)}
\]
in the above equality and applying backward induction from \(h=H\) we obtain the KL-regularized Bellman recursion and the optimal soft policy in this section. 

\end{proposition}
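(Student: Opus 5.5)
The identity is the Gibbs variational principle (Donsker--Varadhan duality on a finite set), and I will prove it by rewriting the objective as a single KL divergence. Fix $k,h,\xi$ and abbreviate $\rho:=\pi_{\rm ref}(\cdot\mid\xi)$, which has full support on $\Omega(\xi)$ by the standing assumption $\pi_{\rm ref}\ge\pi_{\min}>0$. Set $Z:=\sum_{\omega}\rho(\omega)\exp(q_\omega/\alpha)$. Since $Z\in(0,\infty)$, the candidate $\pi^q(\omega)=\rho(\omega)e^{q_\omega/\alpha}/Z$ is a well-defined element of $\Delta(\Omega(\xi))$ with full support.

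The key step is an algebraic identity valid for every $\pi\in\Delta(\Omega(\xi))$. On $\mathrm{supp}(\pi)$ we have $\log(\pi(\omega)/\rho(\omega))=\log(\pi(\omega)/\pi^q(\omega))+q_\omega/\alpha-\log Z$. Multiplying by $\pi(\omega)$, summing, and using the convention $0\log 0=0$ gives
\[
\sum_\omega\pi(\omega)q_\omega-\alpha\,\mathrm{KL}(\pi\|\rho)
=\alpha\log Z-\alpha\,\mathrm{KL}(\pi\|\pi^q).
\]
Both KL terms are finite because $\rho$ and $\pi^q$ have full support. By Gibbs' inequality, $\mathrm{KL}(\pi\|\pi^q)\ge0$, with equality iff $\pi=\pi^q$. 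Hence the maximum equals $\alpha\log Z$, it is attained at $\pi^q$, and this maximizer is unique. This step carries all of the content; the only point needing care is treating zero-mass actions of $\pi$ consistently in both KL terms.

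For the recursion, I argue by backward induction on $h$ from $H$ down to $1$, with $V^{\star,(k)}_{H+1}\equiv0$. Assume $V^{\star,(k)}_{h+1}$ is the optimal regularized value from stage $h+1$. Then for any policy whose stages after $h$ are optimal, the stage-$h$ value at $\xi$ is $\sum_\omega\pi_h(\omega)q_\omega-\alpha\mathrm{KL}(\pi_h\|\rho)$ with $q_\omega=r^{(k)}_h(\xi,\omega)+P^{(k)}_h(\cdot\mid\xi,\omega)^\top V^{\star,(k)}_{h+1}$, as given by the policy-value recursion in Subsection~\ref{app:problem-setup:policy-value}.

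Any policy's continuation value is at most $V^{\star,(k)}_{h+1}$ pointwise. Since $P\ge0$, replacing the continuation by $V^{\star,(k)}_{h+1}$ only increases each $q_\omega$, so the first part applies statewise. It yields $V^{\star,(k)}_h=\mathcal T^{(k)}_hV^{\star,(k)}_{h+1}$ and the softmax policy \eqref{eq:softmax-policy}. Because the per-state problems are decoupled, the optimal policy can be chosen independently at each $\xi$, which completes the induction.
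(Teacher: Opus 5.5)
Your proposal is correct and follows essentially the same route as the paper. Both expand the KL to get $\alpha\log Z(q)-\{\langle\pi,q\rangle-\alpha\,\mathrm{KL}(\pi\|\pi_{\rm ref})\}=\alpha\,\mathrm{KL}(\pi\|\pi^q)\ge 0$, read off the maximum value and uniqueness from the equality case, and then obtain the recursion by backward induction from $V_{H+1}^{\star,(k)}\equiv 0$. Your induction step is slightly more explicit than the paper's, since you justify via $V^{\pi,(k)}_{h+1}\le V^{\star,(k)}_{h+1}$ and $P\ge 0$ why the stage-$h$ problem reduces to the one-step maximization, but the argument is the same.
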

\begin{proof}

Let \(p_\omega=\pi_{\rm ref}(\omega\mid\xi)\). By assumption \(p_\omega>0\). Let
\[
Z(q):=\sum_{\omega}p_\omega\exp(q_\omega/\alpha),
\qquad
\pi^q(\omega):=\frac{p_\omega\exp(q_\omega/\alpha)}{Z(q)}.
\]
For any \(\pi\in\Delta(\Omega(\xi))\), direct expansion of the KL gives
\[
\alpha\log Z(q)
-
\left\{\langle\pi,q\rangle-\alpha\mathrm{KL}(\pi\|p)\right\}
=
\alpha\mathrm{KL}(\pi\|\pi^q)
\ge0.
\]
Therefore the maximum value is \(\alpha\log Z(q)\), and since \(\mathrm{KL}(\pi\|\pi^q)=0\) if and only if \(\pi=\pi^q\), the maximizer is unique. 

The terminal layer \(V_{H+1}^{\star,(k)}\equiv0\) has been fixed. If \(V_{h+1}^{\star,(k)}\) has been defined, then the optimal local choice at stage \(h\) is exactly the maximization problem above with \(q_\omega=Q_h^{\star,(k)}(\xi,\omega)\) substituted in; its maximum value gives \(V_h^{\star,(k)}(\xi)\), and its unique maximizing policy gives \(\pi_h^{\star,(k)}(\cdot\mid\xi)\). The backward induction is complete. This proves the claim.
\end{proof}

\subsubsection{Log-Sum-Exp and Bellman Non-Expansiveness}
\label{app:tools:nonexpansiveness}

\begin{lemma}[Log-Sum-Exp Backup Is \(\ell_\infty\)-Non-Expansive in \(Q\)]
\label{lem:log-sum-exp-nonexpansive}

Fix any state \(\xi\) and define the map
\[
\Phi_\xi(q):=
\alpha\log\sum_{\omega\in\Omega(\xi)}
\pi_{\rm ref}(\omega\mid\xi)\exp(q_\omega/\alpha),
\qquad q\in\mathbb R^{|\Omega(\xi)|}.
\]
Then for any \(q,q'\in\mathbb R^{|\Omega(\xi)|}\), 
\[
|\Phi_\xi(q)-\Phi_\xi(q')|
\le
\|q-q'\|_\infty.
\]

\end{lemma}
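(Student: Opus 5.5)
The plan is to prove the inequality through a gradient (mean-value) argument, identifying the gradient of \(\Phi_\xi\) as a probability vector. First note that \(\Phi_\xi\) is well defined and smooth on all of \(\mathbb R^{|\Omega(\xi)|}\). The reason is that \(\pi_{\rm ref}(\omega\mid\xi)\ge\pi_{\min}>0\), so the sum inside the logarithm is strictly positive.

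Differentiating gives
\[
\frac{\partial \Phi_\xi}{\partial q_\omega}(q)
=
\frac{\pi_{\rm ref}(\omega\mid\xi)\exp(q_\omega/\alpha)}{\sum_{\omega'}\pi_{\rm ref}(\omega'\mid\xi)\exp(q_{\omega'}/\alpha)}
=
\pi^q(\omega).
\]
This is exactly the softmax policy appearing in Proposition~\ref{prop:kl-bellman-optimality}. Its entries are nonnegative and sum to one, so \(\|\nabla\Phi_\xi(q)\|_1=1\) for every \(q\).

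Next, for fixed \(q,q'\), consider the path \(g(t):=\Phi_\xi(q'+t(q-q'))\) for \(t\in[0,1]\). By the fundamental theorem of calculus,
\[
\Phi_\xi(q)-\Phi_\xi(q')
=
\int_0^1 \langle \nabla\Phi_\xi(q'+t(q-q')),\, q-q'\rangle\,dt .
\]
Hölder's inequality bounds each integrand by \(\|\nabla\Phi_\xi\|_1\,\|q-q'\|_\infty=\|q-q'\|_\infty\). Integrating over \(t\in[0,1]\) then gives \(|\Phi_\xi(q)-\Phi_\xi(q')|\le\|q-q'\|_\infty\).

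A calculus-free alternative, which I would mention as a cross-check, uses two properties. The first is monotonicity: \(q\le q'\) entrywise implies \(\Phi_\xi(q)\le\Phi_\xi(q')\). The second is translation equivariance: \(\Phi_\xi(q+c\mathbf 1)=\Phi_\xi(q)+c\). Set \(c=\|q-q'\|_\infty\), so that \(q'-c\mathbf 1\le q\le q'+c\mathbf 1\). Applying \(\Phi_\xi\) to this sandwich yields the claim immediately.

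The argument has no real obstacle. The only point to check is that the gradient is a probability vector, which holds because \(\pi_{\rm ref}(\cdot\mid\xi)\) is a distribution with full support. That same fact is what makes \(\Phi_\xi\) finite and differentiable everywhere.
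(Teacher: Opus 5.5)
Your proof is correct, but your main argument takes a different route from the paper's. The paper proves the lemma by the elementary sandwich you list as a cross-check, done inline. With $\delta=\|q-q'\|_\infty$, it writes $\exp(q_\omega/\alpha)\le e^{\delta/\alpha}\exp(q'_\omega/\alpha)$ for each $\omega$, multiplies by $\pi_{\rm ref}(\omega\mid\xi)$, sums, and takes $\alpha\log$ to get $\Phi_\xi(q)\le\Phi_\xi(q')+\delta$; it then swaps $q$ and $q'$. Pulling out the factor $e^{\delta/\alpha}$ is exactly your translation-equivariance step.

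Your primary argument instead identifies $\nabla\Phi_\xi(q)$ with the softmax policy $\pi^q$ and integrates along the segment, using $\ell_1$--$\ell_\infty$ duality. This route is slightly heavier, since it needs differentiability and the fundamental theorem of calculus. In return, it explains why the Lipschitz constant is exactly one: the gradient is a probability vector. It also gives the exact identity $\Phi_\xi(q)-\Phi_\xi(q')=\langle\bar p,q-q'\rangle$ for an averaged policy $\bar p\in\Delta(\Omega(\xi))$, which ties the lemma directly to the softmax form in Proposition~\ref{prop:kl-bellman-optimality}.

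The paper's route uses only nonnegativity of the weights and monotonicity of $\exp$ and $\log$. It is also the same monotone-plus-translation structure that the paper packages in Lemma~\ref{lem:log-sum-exp-monotone-translation} and reuses in the near-optimism induction.

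One small correction: full support of $\pi_{\rm ref}$ is not needed anywhere in your argument. Positivity of the normalizer, smoothness of $\Phi_\xi$, and the fact that $\pi^q$ is a probability vector all follow from $\pi_{\rm ref}(\cdot\mid\xi)$ being a distribution.
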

\begin{proof}

Let
\[
\delta:= \|q-q'\|_\infty.
\]
Then for each action \(\omega\), 
\[
q_\omega \le q'_\omega + \delta.
\]
Hence
\[
\exp(q_\omega/\alpha)
\le
e^{\delta/\alpha}\exp(q'_\omega/\alpha).
\]
Multiplying both sides by \(\pi_{\rm ref}(\omega\mid\xi)\) and summing gives
\[
\sum_{\omega}\pi_{\rm ref}(\omega\mid\xi)\exp(q_\omega/\alpha)
\le
e^{\delta/\alpha}
\sum_{\omega}\pi_{\rm ref}(\omega\mid\xi)\exp(q'_\omega/\alpha).
\]
Taking \(\alpha\log\) gives
\[
\Phi_\xi(q)\le \Phi_\xi(q')+\delta.
\]
Swapping \(q,q'\) gives similarly
\[
\Phi_\xi(q')\le \Phi_\xi(q)+\delta.
\]
Thus
\[
|\Phi_\xi(q)-\Phi_\xi(q')|\le \delta=\|q-q'\|_\infty.
\]
This proves the claim. 

\end{proof}

\begin{lemma}[Log-Sum-Exp Monotonicity and Translation]
\label{lem:log-sum-exp-monotone-translation}
Fix any state \(\xi\) and let
\[
\Phi_\xi(q):=
\alpha\log\sum_{\omega\in\Omega(\xi)}
\pi_{\rm ref}(\omega\mid\xi)\exp(q_\omega/\alpha).
\]
Then the following two properties hold.
\begin{enumerate}
\item If \(q_\omega\ge q'_\omega\) for all \(\omega\in\Omega(\xi)\), then
\[
\Phi_\xi(q)\ge \Phi_\xi(q').
\]
\item For any scalar \(c\in\mathbb R\),
\[
\Phi_\xi(q+c\mathbf 1)=\Phi_\xi(q)+c.
\]
Consequently, if \(q_\omega\ge q'_\omega-\varepsilon\) for all
\(\omega\in\Omega(\xi)\), then
\[
\Phi_\xi(q)\ge \Phi_\xi(q')-\varepsilon.
\]
\end{enumerate}
\end{lemma}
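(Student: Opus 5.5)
Both parts follow directly from the explicit form of $\Phi_\xi$. We only need that $\pi_{\rm ref}(\cdot\mid\xi)$ has nonnegative weights (in fact strictly positive, by the full-support assumption), that $t\mapsto e^{t/\alpha}$ is increasing for $\alpha>0$, and that $\log$ is increasing. We argue pointwise in $\omega$ and then sum, in the same way as the proof of Lemma~\ref{lem:log-sum-exp-nonexpansive}.

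\textbf{Monotonicity.} Assume $q_\omega\ge q'_\omega$ for every $\omega\in\Omega(\xi)$. Then $\exp(q_\omega/\alpha)\ge\exp(q'_\omega/\alpha)$ for each $\omega$. Multiply by $\pi_{\rm ref}(\omega\mid\xi)\ge 0$ and sum over $\omega$; both sums are strictly positive. Taking $\alpha\log$, which is increasing since $\alpha>0$, gives $\Phi_\xi(q)\ge\Phi_\xi(q')$.

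\textbf{Translation.} For a scalar $c$, factor the common term out of the sum:
\[
\sum_\omega\pi_{\rm ref}(\omega\mid\xi)\,e^{(q_\omega+c)/\alpha}=e^{c/\alpha}\sum_\omega\pi_{\rm ref}(\omega\mid\xi)\,e^{q_\omega/\alpha}.
\]
Taking $\alpha\log$ of both sides gives $\Phi_\xi(q+c\mathbf 1)=\Phi_\xi(q)+c$.

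\textbf{Consequence.} Suppose $q_\omega\ge q'_\omega-\varepsilon$ for all $\omega$, i.e.\ $q\ge q'-\varepsilon\mathbf 1$ componentwise. Part~1 gives $\Phi_\xi(q)\ge\Phi_\xi(q'-\varepsilon\mathbf 1)$, and part~2 with $c=-\varepsilon$ gives $\Phi_\xi(q'-\varepsilon\mathbf 1)=\Phi_\xi(q')-\varepsilon$. Combining the two yields $\Phi_\xi(q)\ge\Phi_\xi(q')-\varepsilon$.

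Nothing here is hard. The only point to state is that every term in the sums is positive, so the logarithms are well defined; this holds because $\pi_{\rm ref}(\cdot\mid\xi)\ge\pi_{\min}>0$ and $\Omega(\xi)$ is nonempty.
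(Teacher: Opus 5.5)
Your proof is correct and matches the paper's argument essentially step for step: monotonicity follows from the increasing exponential and the nonnegative weights, translation from factoring out $e^{c/\alpha}$, and the consequence from applying monotonicity to $q'-\varepsilon\mathbf 1$ and then translation with $c=-\varepsilon$. Your remark that the sums are strictly positive, so the logarithms are well defined, is a small detail the paper leaves implicit.
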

\begin{proof}
The first claim follows because the exponential function is increasing
and all weights \(\pi_{\rm ref}(\omega\mid\xi)\) are nonnegative. For the
second claim,
\[
\Phi_\xi(q+c\mathbf 1)
=
\alpha\log\sum_{\omega}
\pi_{\rm ref}(\omega\mid\xi)\exp((q_\omega+c)/\alpha)
=
\alpha\log\left(
e^{c/\alpha}
\sum_{\omega}\pi_{\rm ref}(\omega\mid\xi)e^{q_\omega/\alpha}
\right)
=
\Phi_\xi(q)+c.
\]
If \(q_\omega\ge q'_\omega-\varepsilon\) for all \(\omega\), then by
monotonicity and translation,
\[
\Phi_\xi(q)\ge \Phi_\xi(q'-\varepsilon\mathbf 1)
=
\Phi_\xi(q')-\varepsilon.
\]
This proves the claim.
\end{proof}

\begin{lemma}[Soft Bellman Operator Is Non-Expansive in Future Values]
\label{lem:soft-bellman-nonexpansive}

For any fixed \(k,h\), and any two functions \(f,g:\Xi\to\mathbb R\), 
\[
\|\mathcal T_h^{(k)}f-\mathcal T_h^{(k)}g\|_\infty
\le
\|f-g\|_\infty.
\]

\end{lemma}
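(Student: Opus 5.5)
The plan is to reduce the claim to Lemma~\ref{lem:log-sum-exp-nonexpansive} applied statewise. Fix \(k,h\) and two bounded functions \(f,g:\Xi\to\mathbb R\), and set \(\delta:=\|f-g\|_\infty\). For each state \(\xi\) define the action-indexed vectors
\[
q_\omega:=r_h^{(k)}(\xi,\omega)+P_h^{(k)}(\cdot\mid\xi,\omega)^\top f,
\qquad
q'_\omega:=r_h^{(k)}(\xi,\omega)+P_h^{(k)}(\cdot\mid\xi,\omega)^\top g,
\qquad \omega\in\Omega(\xi),
\]
so that, by the definition of \(\mathcal T_h^{(k)}\), we have \((\mathcal T_h^{(k)}f)(\xi)=\Phi_\xi(q)\) and \((\mathcal T_h^{(k)}g)(\xi)=\Phi_\xi(q')\).

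The first step bounds \(\|q-q'\|_\infty\). The reward terms cancel, so
\[
|q_\omega-q'_\omega|=|P_h^{(k)}(\cdot\mid\xi,\omega)^\top(f-g)|\le\sum_{\xi'}P_h^{(k)}(\xi'\mid\xi,\omega)\,|f(\xi')-g(\xi')|\le\delta .
\]
This uses only the fact that \(P_h^{(k)}(\cdot\mid\xi,\omega)\in\Delta(\Xi)\) is a probability vector, so the inner product is a convex combination of the entries of \(f-g\). Hence \(\|q-q'\|_\infty\le\delta\).

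The second step invokes Lemma~\ref{lem:log-sum-exp-nonexpansive}, which gives \(|(\mathcal T_h^{(k)}f)(\xi)-(\mathcal T_h^{(k)}g)(\xi)|\le\delta\) for each fixed \(\xi\). Taking the supremum over the finite set \(\Xi\) then yields the claim.

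There is no real obstacle here. The only points to check are that the full-support assumption on \(\pi_{\rm ref}\) keeps \(\Phi_\xi\) well defined, since the log argument is strictly positive, and that boundedness of \(f,g\) makes every quantity finite. Alternatively, one could avoid the lemma and argue directly with Lemma~\ref{lem:log-sum-exp-monotone-translation}: from \(q\le q'+\delta\mathbf 1\), monotonicity and translation give \(\Phi_\xi(q)\le\Phi_\xi(q')+\delta\), and the symmetric argument gives the reverse inequality.
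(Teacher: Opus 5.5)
Your proposal is correct and follows the paper's proof essentially step for step. Like the paper, you fix \(\xi\), form the action-value vectors \(q,q'\) whose reward terms cancel, and apply Lemma~\ref{lem:log-sum-exp-nonexpansive}; you then bound \(|P_h^{(k)}(\cdot\mid\xi,\omega)^\top(f-g)|\le\|f-g\|_\infty\) because \(P_h^{(k)}(\cdot\mid\xi,\omega)\) is a probability vector, and take the supremum over \(\xi\).
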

\begin{proof}

Fix \(\xi\in\Xi\). Define the action-value vectors
\[
q_\omega:= r_h^{(k)}(\xi,\omega)+P_h^{(k)}(\cdot\mid\xi,\omega)^\top f,
\qquad
q'_\omega:= r_h^{(k)}(\xi,\omega)+P_h^{(k)}(\cdot\mid\xi,\omega)^\top g.
\]
By Lemma~\ref{lem:log-sum-exp-nonexpansive},
\[
|(\mathcal T_h^{(k)}f)(\xi)-(\mathcal T_h^{(k)}g)(\xi)|
\le
\max_{\omega}
\left|
P_h^{(k)}(\cdot\mid\xi,\omega)^\top(f-g)
\right|.
\]
Because \(P_h^{(k)}(\cdot\mid\xi,\omega)\) is a probability distribution, 
\[
\left|
P_h^{(k)}(\cdot\mid\xi,\omega)^\top(f-g)
\right|
\le
\|f-g\|_\infty.
\]
Taking the supremum over all \(\xi\) gives
\[
\|\mathcal T_h^{(k)}f-\mathcal T_h^{(k)}g\|_\infty
\le
\|f-g\|_\infty.
\]
This proves the claim.
\end{proof}

\subsubsection{Boundedness of Soft Value Functions}
\label{app:tools:value-boundedness}

\begin{lemma}[Finite-Horizon Boundedness of Soft Value Functions]
\label{lem:soft-value-boundedness}

For any episode \(k\), stage \(h\), state \(\xi\), 
\[
0\le V_h^{\star,(k)}(\xi)\le H-h+1.
\]
Therefore
\[
\|V_h^{\star,(k)}\|_\infty \le H-h+1.
\]

\end{lemma}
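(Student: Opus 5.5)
I will prove the statement by backward induction on $h$, running from $h=H+1$ down to $h=1$, using the recursion $V_h^{\star,(k)}=\mathcal T_h^{(k)}V_{h+1}^{\star,(k)}$ from \eqref{eq:kl-bellman-recursion}. The base case is immediate because $V_{H+1}^{\star,(k)}\equiv 0$, which satisfies $0\le V_{H+1}^{\star,(k)}\le 0=H-(H+1)+1$.

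For the inductive step, suppose $0\le V_{h+1}^{\star,(k)}(\xi')\le H-h$ holds for every $\xi'\in\Xi$. Rewards lie in $[0,1]$, and $P_h^{(k)}(\cdot\mid\xi,\omega)$ is a probability vector. Hence each soft $Q$-value $Q_h^{\star,(k)}(\xi,\omega)=r_h^{(k)}(\xi,\omega)+P_h^{(k)}(\cdot\mid\xi,\omega)^\top V_{h+1}^{\star,(k)}$ lies in $[0,\,H-h+1]$.

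Next I apply Lemma~\ref{lem:log-sum-exp-monotone-translation}, which gives monotonicity and translation of $\Phi_\xi$. Compare $q=Q_h^{\star,(k)}(\xi,\cdot)$ with the constant vectors $0\cdot\mathbf 1$ and $(H-h+1)\mathbf 1$. Since $\pi_{\rm ref}(\cdot\mid\xi)$ is a probability distribution on $\Omega(\xi)$, we have $\Phi_\xi(0\cdot\mathbf 1)=\alpha\log 1=0$. By translation, $\Phi_\xi(c\mathbf 1)=c$. Monotonicity then sandwiches $V_h^{\star,(k)}(\xi)=\Phi_\xi(q)$ between $0$ and $H-h+1$. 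The second claim, $\|V_h^{\star,(k)}\|_\infty\le H-h+1$, follows by taking the supremum over $\xi$, using nonnegativity.

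No step is a real obstacle. The only point needing care is the normalization $\sum_\omega\pi_{\rm ref}(\omega\mid\xi)=1$ on the fixed domain $\Omega(\xi)$. This is what makes the log-sum-exp of a constant vector return exactly that constant, so that the KL penalty can never push the optimal value below $0$ or above $H-h+1$. An alternative route to the lower bound is to note that the prior policy itself achieves KL $=0$ with nonnegative reward, but the monotonicity argument already covers it.
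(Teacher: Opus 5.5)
Your proof is correct. It shares the paper's skeleton: backward induction from $V_{H+1}^{\star,(k)}\equiv 0$, together with the bound $0\le Q_h^{\star,(k)}(\xi,\omega)\le H-h+1$. Where it differs is in how the inductive step is closed.

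The paper uses the variational form from Proposition~\ref{prop:kl-bellman-optimality}, citing full support of $\pi_{\rm ref}$:
\[
V_h^{\star,(k)}(\xi)=\max_{\pi}\bigl\{\langle \pi, Q_h^{\star,(k)}(\xi,\cdot)\rangle-\alpha\,\mathrm{KL}(\pi\,\|\,\pi_{\rm ref}(\cdot\mid\xi))\bigr\}.
\]
Dropping the nonnegative KL term gives the upper bound $\max_\omega Q_h^{\star,(k)}(\xi,\omega)$. Plugging in $\pi=\pi_{\rm ref}$ gives the lower bound $\sum_\omega \pi_{\rm ref}(\omega\mid\xi)\,Q_h^{\star,(k)}(\xi,\omega)\ge 0$. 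This is exactly the alternative you mention at the end.

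You instead work directly from the closed-form recursion \eqref{eq:kl-bellman-recursion}. You compare $\Phi_\xi(Q_h^{\star,(k)}(\xi,\cdot))$ against constant vectors using the monotonicity and translation properties in Lemma~\ref{lem:log-sum-exp-monotone-translation}. This needs only that $\pi_{\rm ref}(\cdot\mid\xi)$ is normalized, not the variational identity. It is also the same sandwich the paper uses later for the clipped optimistic values in Lemma~\ref{lem:optimistic-value-boundedness}.

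The paper's route buys a slightly sharper lower bound, $\mathbb{E}_{\pi_{\rm ref}}[Q_h^{\star,(k)}(\xi,\cdot)]$ rather than $0$ or $\min_\omega Q_h^{\star,(k)}(\xi,\omega)$, and keeps the argument tied to the optimality meaning of $V^\star$. For the stated $[0,H-h+1]$ envelope, the two arguments are interchangeable.
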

\begin{proof}

The terminal condition \(V_{H+1}^{\star,(k)}\equiv 0\) clearly holds. We now use backward induction. 

Assume that \(h+1\) has already been established
\[
0\le V_{h+1}^{\star,(k)}(\xi)\le H-h,\qquad \forall \xi.
\]
Then for any \((\xi,\omega)\), 
\[
0\le r_h^{(k)}(\xi,\omega)+P_h^{(k)}(\cdot\mid\xi,\omega)^\top V_{h+1}^{\star,(k)}
\le 1+(H-h)=H-h+1.
\]
Since \(\alpha>0\) and \(\pi_{\rm ref}\) has full support, Proposition~\ref{prop:kl-bellman-optimality} gives
\[
V_h^{\star,(k)}(\xi)
=
\max_{\pi_h(\cdot\mid\xi)}
\left\{
\langle \pi_h, Q_h^{\star,(k)}(\xi,\cdot)\rangle
-\alpha \mathrm{KL}(\pi_h\|\pi_{\rm ref}(\cdot\mid\xi))
\right\}.
\]
Because \(\mathrm{KL}\ge 0\), the above expression does not exceed \(\max_\omega Q_h^{\star,(k)}(\xi,\omega)\le H-h+1\). 

On the other hand, take \(\pi_h=\pi_{\rm ref}(\cdot\mid\xi)\), then the KL term is \(0\). Thus
\[
V_h^{\star,(k)}(\xi)
\ge
\sum_\omega \pi_{\rm ref}(\omega\mid\xi)Q_h^{\star,(k)}(\xi,\omega)
\ge 0.
\]
The induction is complete. This proves the claim. 

\end{proof}

\begin{remark}[Boundedness convention for optimal and executed values]
Lemma~\ref{lem:soft-value-boundedness} applies to the optimal
KL-regularized value \(V_h^{\star,(k)}\). The same nonnegative
\([0,H-h+1]\) envelope should not be used for an arbitrary policy value
\(V_h^{\pi,(k)}\), because the KL penalty may make the one-step
regularized reward negative. The main proof separately uses the result
obtained from optimistic clipping,
\[
0\le \widetilde Q_h^k(\xi,\omega)\le H-h+1,
\qquad
0\le \widetilde V_h^k(\xi)\le H-h+1.
\]
In the martingale-control proof, we only use a conservative envelope for
the particular executed policy \(\hat\pi^k\), derived from the clipped
optimistic softmax identity:
\[
\alpha\log\frac{\hat\pi_h^k(\omega\mid\xi)}
{\pi_{\rm ref}(\omega\mid\xi)}
=
\widetilde Q_h^k(\xi,\omega)-\widetilde V_h^k(\xi).
\]
Since both clipped optimistic quantities lie in \([0,H]\), this implies
\[
\left|
 r_h^{(k)}(\xi,\omega)
 -\alpha\log\frac{\hat\pi_h^k(\omega\mid\xi)}
 {\pi_{\rm ref}(\omega\mid\xi)}
\right|
\le 2H.
\]
Consequently,
\[
|V_h^{\hat\pi^k,(k)}(\xi)|
\le 2H(H-h+1)\le 2H^2.
\]
\end{remark}

\subsubsection{Bellman Stability under Model Perturbation}
\label{app:tools:bellman-stability}

The results in this subsection are included to connect the notation with standard non-stationary MDP variation arguments. They are not invoked in the proof of Theorem~\ref{thm:dynamic-regret-fixed-window}; the fixed-window regret proof uses the conditional-mean drift bound in Lemma~\ref{lem:conditional-mean-model-drift} together with the window counting bound \eqref{eq:window-drift-sum}.

\begin{lemma}[Bellman Backup Stability under One-Step Model Perturbation]
\label{lem:bellman-perturbation}

For any \(h\in[H]\), arbitrary function \(f:\Xi\to\mathbb R\), 
\[
\|\mathcal T_h^{(k+1)}f-\mathcal T_h^{(k)}f\|_\infty
\le
\Delta_r^{(k)} + \Delta_p^{(k)}\|f\|_\infty.
\]

\end{lemma}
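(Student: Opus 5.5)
The plan is to reduce the claim to the $\ell_\infty$-non-expansiveness of the log-sum-exp backup, Lemma~\ref{lem:log-sum-exp-nonexpansive}. Fix a state $\xi\in\Xi$. Both operators $\mathcal T_h^{(k+1)}$ and $\mathcal T_h^{(k)}$ apply the same map $\Phi_\xi$, because $\alpha$, $\pi_{\rm ref}$ and the action domain $\Omega(\xi)$ do not depend on the episode. So we can write
\[
(\mathcal T_h^{(k+1)}f)(\xi)=\Phi_\xi(q),\qquad (\mathcal T_h^{(k)}f)(\xi)=\Phi_\xi(q'),
\]
where
\[
q_\omega=r_h^{(k+1)}(\xi,\omega)+P_h^{(k+1)}(\cdot\mid\xi,\omega)^\top f,\qquad
q'_\omega=r_h^{(k)}(\xi,\omega)+P_h^{(k)}(\cdot\mid\xi,\omega)^\top f .
\]
Lemma~\ref{lem:log-sum-exp-nonexpansive} then gives $|\Phi_\xi(q)-\Phi_\xi(q')|\le\max_{\omega\in\Omega(\xi)}|q_\omega-q'_\omega|$. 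This is the only point where the soft structure is used.

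Next, bound each coordinate difference by the triangle inequality:
\[
|q_\omega-q'_\omega|\le \bigl|r_h^{(k+1)}(\xi,\omega)-r_h^{(k)}(\xi,\omega)\bigr|+\bigl|(P_h^{(k+1)}(\cdot\mid\xi,\omega)-P_h^{(k)}(\cdot\mid\xi,\omega))^\top f\bigr|.
\]
The first term is at most $\Delta_r^{(k)}$, since $\Delta_r^{(k)}$ is defined as a maximum over $h,\xi,\omega$. For the second term we use Hölder's inequality, $|u^\top f|\le\|u\|_1\|f\|_\infty$. Then the definition of $\Delta_p^{(k)}$ bounds it by $\Delta_p^{(k)}\|f\|_\infty$.

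Finally, take the maximum over $\omega$ and then the supremum over $\xi\in\Xi$. The right-hand side $\Delta_r^{(k)}+\Delta_p^{(k)}\|f\|_\infty$ does not depend on $\xi$ or $\omega$, so this gives the stated bound.

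No step is a genuine obstacle. The one point to check is that $\Omega(\xi)$ and $\pi_{\rm ref}$ are episode-invariant, so that a single $\Phi_\xi$ serves both episodes. Standing assumption 1 of Theorem~\ref{thm:dynamic-regret-fixed-window} and the fixed-domain convention of Subsection~\ref{app:problem-setup:mdp-sequence} guarantee this. The bound could be sharpened to $\tfrac12\Delta_p^{(k)}\operatorname{span}(f)$ by centering $f$, but the stated form does not need it.
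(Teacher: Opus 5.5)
Your proposal is correct and matches the paper's proof essentially step for step: fix $\xi$, write both backups as $\Phi_\xi$ applied to $q$ and $q'$, invoke Lemma~\ref{lem:log-sum-exp-nonexpansive}, bound each coordinate difference by the triangle inequality plus H\"older, and take the supremum over $\xi$. Your explicit check that $\Omega(\xi)$ and $\pi_{\rm ref}$ are episode-invariant, so that a single $\Phi_\xi$ serves both episodes, makes explicit a point the paper leaves implicit.
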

\begin{proof}

Fix \(\xi\in\Xi\). Define the action-values
\[
\begin{aligned}
q_\omega
&:=
r_h^{(k+1)}(\xi,\omega)
+
P_h^{(k+1)}(\cdot\mid\xi,\omega)^\top f,\\
q'_\omega
&:=
r_h^{(k)}(\xi,\omega)
+
P_h^{(k)}(\cdot\mid\xi,\omega)^\top f.
\end{aligned}
\]
By Lemma~\ref{lem:log-sum-exp-nonexpansive},
\[
|(\mathcal T_h^{(k+1)}f)(\xi)-(\mathcal T_h^{(k)}f)(\xi)|
\le
\max_\omega |q_\omega-q'_\omega|.
\]
For each \(\omega\), 
\[
|q_\omega-q'_\omega|
\le
|r_h^{(k+1)}(\xi,\omega)-r_h^{(k)}(\xi,\omega)|
+
\left|
\big(P_h^{(k+1)}-P_h^{(k)}\big)(\cdot\mid\xi,\omega)^\top f
\right|.
\]
The first term is at most \(\Delta_r^{(k)}\). The second term follows from Hölder's inequality
\[
\left|
\big(P_h^{(k+1)}-P_h^{(k)}\big)^\top f
\right|
\le
\|P_h^{(k+1)}(\cdot\mid\xi,\omega)-P_h^{(k)}(\cdot\mid\xi,\omega)\|_1 \,\|f\|_\infty
\le
\Delta_p^{(k)}\|f\|_\infty.
\]
Thus, for each \(\xi\), 
\[
|(\mathcal T_h^{(k+1)}f)(\xi)-(\mathcal T_h^{(k)}f)(\xi)|
\le
\Delta_r^{(k)}+\Delta_p^{(k)}\|f\|_\infty.
\]
Taking the supremum over \(\xi\) gives the conclusion. This proves the claim. 

\end{proof}
\begin{theorem}[Soft Bellman Stability across Episodes]
\label{thm:soft-bellman-stability}

For any adjacent episodes \(k\) and \(k+1\), and any stage \(h\in[H]\), 
\[
\|V_h^{\star,(k+1)}-V_h^{\star,(k)}\|_\infty
\le
(H-h+1)\Delta_r^{(k)}
+
\frac{(H-h)(H-h+1)}{2}\Delta_p^{(k)}.
\]
In particular, a coarse form is
\[
\|V_h^{\star,(k+1)}-V_h^{\star,(k)}\|_\infty
\le
(H-h+1)\big(\Delta_r^{(k)}+H\Delta_p^{(k)}\big).
\]

\end{theorem}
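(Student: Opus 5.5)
We argue by backward induction on $h$, from $h=H+1$ down to $h=1$, using the recursion $V_h^{\star,(k)}=\mathcal T_h^{(k)}V_{h+1}^{\star,(k)}$. Write $e_h:=\|V_h^{\star,(k+1)}-V_h^{\star,(k)}\|_\infty$. The base case is immediate: $V_{H+1}^{\star,(k)}\equiv V_{H+1}^{\star,(k+1)}\equiv 0$, so $e_{H+1}=0$.

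For the inductive step we insert the mixed term $\mathcal T_h^{(k+1)}V_{h+1}^{\star,(k)}$ and apply the triangle inequality:
\[
e_h\le \bigl\|\mathcal T_h^{(k+1)}V_{h+1}^{\star,(k+1)}-\mathcal T_h^{(k+1)}V_{h+1}^{\star,(k)}\bigr\|_\infty+\bigl\|\mathcal T_h^{(k+1)}V_{h+1}^{\star,(k)}-\mathcal T_h^{(k)}V_{h+1}^{\star,(k)}\bigr\|_\infty .
\]
The two terms are handled separately.
\begin{itemize}
\item By Lemma~\ref{lem:soft-bellman-nonexpansive}, the first term is at most $e_{h+1}$.
\item By Lemma~\ref{lem:bellman-perturbation} with $f=V_{h+1}^{\star,(k)}$, the second term is at most $\Delta_r^{(k)}+\Delta_p^{(k)}\|V_{h+1}^{\star,(k)}\|_\infty$. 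Lemma~\ref{lem:soft-value-boundedness} gives $\|V_{h+1}^{\star,(k)}\|_\infty\le H-h$.
\end{itemize}
Together these yield the one-step recursion
\[
e_h\le e_{h+1}+\Delta_r^{(k)}+(H-h)\Delta_p^{(k)}.
\]

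Unrolling this recursion from stage $h$ to $H$ with $e_{H+1}=0$ gives
\[
e_h\le \sum_{j=h}^{H}\Bigl(\Delta_r^{(k)}+(H-j)\Delta_p^{(k)}\Bigr)=(H-h+1)\Delta_r^{(k)}+\Delta_p^{(k)}\sum_{m=0}^{H-h}m .
\]
Since $\sum_{m=0}^{H-h}m=(H-h)(H-h+1)/2$, this is exactly the stated bound. For the coarse form, note that $(H-h)/2\le H$, so the transition term is at most $(H-h+1)H\Delta_p^{(k)}$, and factoring out $(H-h+1)$ gives $(H-h+1)\bigl(\Delta_r^{(k)}+H\Delta_p^{(k)}\bigr)$.

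There is no real obstacle here. The only point requiring care is to evaluate the perturbation lemma at the episode-$k$ value $V_{h+1}^{\star,(k)}$, for which the boundedness lemma directly supplies the factor $H-h$, rather than at the episode-$(k+1)$ value.
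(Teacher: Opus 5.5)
Your proof is correct and matches the paper's argument step for step. Both insert the mixed term $\mathcal T_h^{(k+1)}V_{h+1}^{\star,(k)}$, bound the first piece by non-expansiveness and the second by the perturbation and boundedness lemmas, and unroll the resulting recursion into the same arithmetic sum. Your explicit justification of the coarse form via $(H-h)/2\le H$ fills in a step the paper leaves implicit.
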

\begin{proof}

Define
\[
\delta_h^{(k)}:=\|V_h^{\star,(k+1)}-V_h^{\star,(k)}\|_\infty.
\]
From the terminal condition we know
\[
\delta_{H+1}^{(k)}=0.
\]
For any \(h\le H\), 
\[
V_h^{\star,(k+1)}=\mathcal T_h^{(k+1)}V_{h+1}^{\star,(k+1)},
\qquad
V_h^{\star,(k)}=\mathcal T_h^{(k)}V_{h+1}^{\star,(k)}.
\]
Therefore
\[
\delta_h^{(k)}
\le
\underbrace{
\|\mathcal T_h^{(k+1)}V_{h+1}^{\star,(k+1)}-\mathcal T_h^{(k+1)}V_{h+1}^{\star,(k)}\|_\infty
}_{\text{(I)}}
+
\underbrace{
\|\mathcal T_h^{(k+1)}V_{h+1}^{\star,(k)}-\mathcal T_h^{(k)}V_{h+1}^{\star,(k)}\|_\infty
}_{\text{(II)}}.
\]
For term (I), By Lemma~\ref{lem:soft-bellman-nonexpansive}, 
\[
\text{(I)}\le \delta_{h+1}^{(k)}.
\]
For term (II), By Lemmas~\ref{lem:bellman-perturbation} and~\ref{lem:soft-value-boundedness}, 
\[
\text{(II)}
\le
\Delta_r^{(k)}+\Delta_p^{(k)}\|V_{h+1}^{\star,(k)}\|_\infty
\le
\Delta_r^{(k)}+(H-h)\Delta_p^{(k)}.
\]
Thus we obtain the recursion
\[
\delta_h^{(k)}
\le
\delta_{h+1}^{(k)}+\Delta_r^{(k)}+(H-h)\Delta_p^{(k)}.
\]
Iterating from \(h\) to \(H\), and using \(\delta_{H+1}^{(k)}=0\), gives
\[
\delta_h^{(k)}
\le
\sum_{j=h}^H \Delta_r^{(k)}
+
\sum_{j=h}^H (H-j)\Delta_p^{(k)}.
\]
The first term equals
\[
(H-h+1)\Delta_r^{(k)}.
\]
The second term equals
\[
\sum_{m=0}^{H-h} m \,\Delta_p^{(k)}
=
\frac{(H-h)(H-h+1)}{2}\Delta_p^{(k)}.
\]
Combining gives
\[
\delta_h^{(k)}
\le
(H-h+1)\Delta_r^{(k)}
+
\frac{(H-h)(H-h+1)}{2}\Delta_p^{(k)}.
\]
This proves the claim. 

\end{proof}
\begin{corollary}[Cumulative Value-Function Drift Bound]
\label{cor:cumulative-value-drift}

Summing at the initial stage \(h=1\) gives
\[
\sum_{k=1}^{K-1}
\|V_1^{\star,(k+1)}-V_1^{\star,(k)}\|_\infty
\le
H B_r + \frac{H(H-1)}{2} B_p.
\]
Therefore in big \(O\) notation, 
\[
\sum_{k=1}^{K-1}
\|V_1^{\star,(k+1)}-V_1^{\star,(k)}\|_\infty
=
O\big(H(B_r+H B_p)\big).
\]
This corollary is included as explanatory context; the proof of
Theorem~\ref{thm:dynamic-regret-fixed-window} uses the
conditional-mean drift bound below rather than this cumulative
value-drift comparison directly.
\end{corollary}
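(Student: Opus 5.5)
The statement to prove is Corollary~\ref{cor:cumulative-value-drift}, which follows by summing Theorem~\ref{thm:soft-bellman-stability} over episodes at stage \(h=1\). For each \(k\in\{1,\dots,K-1\}\), Theorem~\ref{thm:soft-bellman-stability} with \(h=1\) gives
\[
\|V_1^{\star,(k+1)}-V_1^{\star,(k)}\|_\infty
\le
H\,\Delta_r^{(k)}+\frac{(H-1)H}{2}\,\Delta_p^{(k)},
\]
since \(H-h+1=H\) and \((H-h)(H-h+1)/2=H(H-1)/2\) when \(h=1\).

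Next, sum this inequality over \(k=1,\dots,K-1\). The coefficients \(H\) and \(H(H-1)/2\) do not depend on \(k\), so they factor out of the sums. The primitive-drift identities in Subsection~\ref{app:problem-setup:variation-budgets}, namely \(B_r=\sum_{k}\Delta_r^{(k)}\) and \(B_p=\sum_k\Delta_p^{(k)}\), then turn the right-hand side into \(HB_r+\tfrac{H(H-1)}{2}B_p\). This is the first claim.

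For the big-\(O\) form, bound \(\tfrac{H(H-1)}{2}\le H^2\). This gives \(HB_r+\tfrac{H(H-1)}{2}B_p\le H(B_r+HB_p)\), which is exactly \(H V_K\) by \eqref{eq:total-variation-budget}.

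There is no real obstacle here. The only points to check are that Theorem~\ref{thm:soft-bellman-stability} holds uniformly in \(k\), which it does since its proof uses only the per-episode boundedness of Lemma~\ref{lem:soft-value-boundedness}, and that the drift quantities are nonnegative, so summing the inequalities termwise is legitimate. The closing remark in the statement, that this corollary is not used in the main proof, is a statement about the paper's structure and needs no proof.
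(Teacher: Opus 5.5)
Your proof is correct and follows the paper's argument: set \(h=1\) in Theorem~\ref{thm:soft-bellman-stability}, sum over \(k=1,\dots,K-1\), and use \(B_r=\sum_k\Delta_r^{(k)}\) and \(B_p=\sum_k\Delta_p^{(k)}\), with the big-\(O\) form following from \(H(H-1)/2\le H^2\). One small correction: summing inequalities termwise never requires nonnegativity, but \(B_p\ge 0\) is what justifies enlarging the coefficient \(H(H-1)/2\) to \(H^2\) in the big-\(O\) step.
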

\begin{proof}

Taking \(h=1\) in Theorem~\ref{thm:soft-bellman-stability} gives, for
each \(k=1,\dots,K-1\),
\[
\|V_1^{\star,(k+1)}-V_1^{\star,(k)}\|_\infty
\le
H\Delta_r^{(k)}
+
\frac{H(H-1)}{2}\Delta_p^{(k)}.
\]
Summing over \(k=1,\dots,K-1\) and using
\(B_r=\sum_{k=1}^{K-1}\Delta_r^{(k)}\) and
\(B_p=\sum_{k=1}^{K-1}\Delta_p^{(k)}\) gives
\[
\sum_{k=1}^{K-1}
\|V_1^{\star,(k+1)}-V_1^{\star,(k)}\|_\infty
\le
H B_r+\frac{H(H-1)}{2}B_p.
\]
The displayed big \(O\) form follows directly. This proves the claim.

\end{proof}

\subsection{Concentration, Filtration, and Window-Model Details}
\label{app:concentration-filtration-details}

This subsection collects the sampling, filtration, concentration, and auxiliary martingale results used by the proof. The concentration statements are formulated for the sliding-window empirical model and its conditional-mean counterpart.

\subsubsection{Sampling Model and Filtration}
\label{app:problem-setup:sampling-filtration}

The following interactive sampling model is used throughout. Let
\[
\mathcal F_{k,h}^{-}
\]
denote, at episode \(k\), stage \(h\), when the current state
\(\xi_h^k\) and current action \(\omega_h^k\) have already been
determined, but the immediate reward \(r_h^k\) and next state
\(\xi_{h+1}^k\) have not yet been revealed, the history
\(\sigma\)--algebra. If
\[
(\xi_h^k,\omega_h^k)=(\xi,\omega),
\]
then the reward observation satisfies
\[
\mathbb E\!\left[r_h^k\mid \mathcal F_{k,h}^{-}\right]
=
r_h^{(k)}(\xi,\omega),
\qquad
r_h^k\in[0,1],
\]
and the next state satisfies
\[
\mathbb P\!\left(
\xi_{h+1}^k=\xi'
\mid
\mathcal F_{k,h}^{-}
\right)
=
P_h^{(k)}(\xi'\mid\xi,\omega),
\qquad \forall \xi'\in\Xi.
\]
Here conditional independence between reward noise and transition
sampling is not required; the subsequent proofs use only the above
conditional mean and conditional law. The window counts, empirical
model, bonus, optimistic value, and executed policy constructed by the
algorithm at the beginning of episode \(k\) depend only on the data from
episodes \(1,\dots,k-1\).

\paragraph{Reward notation convention.}
Throughout the appendix, \(r_h^{(k)}(\xi,\omega)\) denotes the conditional
mean reward function of the episode-\(k\) environment, whereas \(r_h^k\)
denotes the realized reward observation collected at stage \(h\) of
episode \(k\). Thus the superscript \((k)\) indexes the environment model,
while the superscript \(k\) without parentheses indexes the realized
trajectory.

\paragraph{Global filtration convention.} 
Throughout, a global increasing
filtration is assumed to exist, revealing randomness in the natural time
order of episodes and stages. At the beginning of each episode \(k\) the
initial state \(\xi_1^k\sim\mu_1\) under the given fixed \(\mu_1\) is
independent of the past history. At the beginning of
episode \(k\), the algorithm constructs \(\hat\pi^k\) and all planning
objects using only the completed history of episodes \(1,\ldots,k-1\).
At stage \(h\), given the current state \(\xi_h^k\), the action is
sampled according to
\[
\omega_h^k\sim \hat\pi_h^k(\cdot\mid\xi_h^k)
\]
and is already determined before the reward and next state are
revealed; the resulting pre-outcome \(\sigma\)--algebra is exactly the
above \(\mathcal F_{k,h}^{-}\). Conditional on \(\mathcal F_{k,h}^{-}\),
the reward is only required to satisfy the above conditional mean, and
the next state is only required to satisfy the above conditional law.
The pair \((\mathcal F_t,\mathcal G_t)\) used in Proposition~\ref{prop:unified-martingale-control} is the linearized version of this global filtration under the global time index \(t=(k-1)H+h\).

\paragraph{Environment predictability convention.}
Throughout the appendix, the nonstationary MDP sequence
\[
\{\mathcal M^{(k)}\}_{k=1}^K
\]
is either fixed before the interaction begins, or more generally is
predictable at the beginning of each episode. Namely, for every episode
\(k\), the collection
\[
\{r_h^{(k)}(\xi,\omega),P_h^{(k)}(\cdot\mid\xi,\omega):
h\in[H],\xi\in\Xi,\omega\in\Omega(\xi)\}
\]
is measurable with respect to the beginning-of-episode \(k\)
\(\sigma\)-algebra and is not allowed to depend on the current
within-episode reward or transition randomness. Conditional on the
pre-outcome \(\sigma\)-algebra \(\mathcal F_{k,h}^{-}\), the only
randomness used in the sampling model is the current reward observation
and next-state draw satisfying the displayed conditional mean and
conditional law.

\subsubsection{Martingale Auxiliary Inequality}
\label{app:tools:azuma-hoeffding}

\begin{lemma}[Azuma--Hoeffding Inequality]
\label{lem:azuma-hoeffding}
Let \((Y_t,\mathcal F_t)_{t=1}^n\) be a martingale difference sequence
such that \(|Y_t|\le B\) almost surely for every \(t\). Then, for every
\(u>0\),
\[
\mathbb P\!\left(
\left|\sum_{t=1}^n Y_t\right|\ge u
\right)
\le
2\exp\!\left(
-\frac{u^2}{2B^2n}
\right).
\]
\end{lemma}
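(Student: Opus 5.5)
The plan is the standard Chernoff--martingale argument: bound the exponential moment of the partial sum $S_n:=\sum_{t=1}^n Y_t$ by peeling off one increment at a time via the tower property, then apply Markov's inequality and optimize. Throughout I read ``martingale difference sequence'' in the usual sense: $Y_t$ is $\mathcal F_t$-measurable, $\mathbb E[Y_t\mid\mathcal F_{t-1}]=0$, and $\mathcal F_0$ is the trivial $\sigma$-algebra.

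First, I establish a conditional Hoeffding lemma. For a random variable $Y$ with $\mathbb E[Y\mid\mathcal G]=0$ and $|Y|\le B$, I claim that for every $\lambda\in\mathbb R$,
\[
\mathbb E[e^{\lambda Y}\mid\mathcal G]\le e^{\lambda^2B^2/2}.
\]
By convexity of $y\mapsto e^{\lambda y}$ on $[-B,B]$,
\[
e^{\lambda y}\le \frac{B-y}{2B}e^{-\lambda B}+\frac{B+y}{2B}e^{\lambda B}.
\]
Taking conditional expectations and using the zero conditional mean gives
\[
\mathbb E[e^{\lambda Y}\mid\mathcal G]\le\cosh(\lambda B).
\]
Comparing Taylor series termwise, $(2j)!\ge 2^j j!$, so $\cosh x\le e^{x^2/2}$, which proves the claim. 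This is the only step with any real content. I expect it to be the main point needing care, because the constant must come out as $B^2/2$ in the exponent. That is exactly Hoeffding's $(b-a)^2/8$ with interval width $2B$, and the symmetric-interval $\cosh$ route gives it cleanly without the general Hoeffding lemma.

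Second, I iterate. Since $S_{n-1}$ is $\mathcal F_{n-1}$-measurable,
\[
\mathbb E[e^{\lambda S_n}]=\mathbb E\bigl[e^{\lambda S_{n-1}}\,\mathbb E[e^{\lambda Y_n}\mid\mathcal F_{n-1}]\bigr]\le e^{\lambda^2B^2/2}\,\mathbb E[e^{\lambda S_{n-1}}].
\]
Induction on $n$ then yields $\mathbb E[e^{\lambda S_n}]\le e^{n\lambda^2B^2/2}$.

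Third, for $\lambda>0$, Markov's inequality gives
\[
\mathbb P(S_n\ge u)\le \exp\!\left(-\lambda u+\frac{n\lambda^2B^2}{2}\right).
\]
Choosing $\lambda=u/(nB^2)$ gives the bound $\exp(-u^2/(2nB^2))$. Applying the same argument to the martingale difference sequence $(-Y_t)$, which satisfies the same hypotheses, bounds $\mathbb P(S_n\le -u)$ identically. A union bound over the two tails produces the factor $2$ and completes the statement.
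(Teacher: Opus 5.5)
Your proof is correct and matches the paper's argument step for step: a conditional Hoeffding bound on each increment, iteration through the tower property, Markov's inequality with $\lambda=u/(nB^2)$, and a union bound over the two tails. The only difference is that you prove the conditional Hoeffding lemma inline via $\cosh(\lambda B)\le e^{\lambda^2B^2/2}$, whereas the paper simply invokes it; also, your assumption that $\mathcal F_0$ is trivial is never used, which is fortunate because the paper applies the lemma with a nontrivial initial $\sigma$-algebra.
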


\begin{proof}
For each \(\lambda\in\mathbb R\), Hoeffding's lemma applied conditionally on
\(\mathcal F_{t-1}\) gives
\[
\mathbb E\!\left[\exp(\lambda Y_t)\mid\mathcal F_{t-1}\right]
\le
\exp\!\left(\frac{\lambda^2B^2}{2}\right),
\]
because \(\mathbb E[Y_t\mid\mathcal F_{t-1}]=0\) and \(|Y_t|\le B\) almost
surely. Iterating this inequality over \(t=1,\ldots,n\) yields
\[
\mathbb E\exp\!\left(\lambda\sum_{t=1}^nY_t\right)
\le
\exp\!\left(\frac{n\lambda^2B^2}{2}\right).
\]
By Markov's inequality, for any \(u>0\) and \(\lambda>0\),
\[
\mathbb P\!\left(\sum_{t=1}^nY_t\ge u\right)
\le
\exp\!\left(-\lambda u+\frac{n\lambda^2B^2}{2}\right).
\]
Choosing \(\lambda=u/(nB^2)\) gives
\[
\mathbb P\!\left(\sum_{t=1}^nY_t\ge u\right)
\le
\exp\!\left(-\frac{u^2}{2nB^2}\right).
\]
Applying the same argument to \(-Y_t\) and taking a union bound proves the
claim.
\end{proof}

\subsubsection{Predictable-Skipping Concentration}
\label{app:tools:predictable-skipping}

This subsection handles the concentration problem caused by random hit
locations. The core requirement is: the hit indicator must be measurable
before the current reward / next-state outcome is revealed; the proof
must not condition on the complete hit-location vector.

\begin{lemma}[Predictable-Skipping Concentration with Stopped Filtration]
\label{lem:predictable-skipping-concentration}
Let \(M\ge1\). Suppose there exists a two-level filtration
\[
\mathcal H_0\subseteq \mathcal G_1\subseteq \mathcal H_1
\subseteq \mathcal G_2\subseteq \mathcal H_2\subseteq\cdots
\subseteq \mathcal G_M\subseteq\mathcal H_M,
\]
where \(\mathcal G_t\) denotes the pre-outcome information for the \(t\)-th candidate sample and \(\mathcal H_t\) denotes the corresponding post-outcome information. 
In particular, the displayed nesting implies that
\[
\mathcal H_s\subseteq \mathcal G_t,
\qquad 0\le s<t\le M,
\]
with the convention that \(\mathcal H_0\subseteq\mathcal G_t\) for all \(t\ge1\).
This is the temporal measurability relation used below.
\[
I_t\in\{0,1\},\qquad I_t\in\mathcal G_t,
\]
and suppose the random variable \(X_t\) satisfies
\[
X_t\in\mathcal H_t,
\qquad
X_t=I_tX_t,
\qquad
|X_t|\le B,
\qquad
\mathbb E[X_t\mid\mathcal G_t]=0.
\]
Define the \(n\)-th selected location
\[
\tau_n:=\inf\left\{t\in[M]:\sum_{s=1}^t I_s\ge n\right\},
\]
If the set is empty, set \(\tau_n=\infty\). Let
\[
Z_n:=
\begin{cases}
X_{\tau_n}, & \tau_n<\infty,\\
0, & \tau_n=\infty.
\end{cases}
\]
Thus the undefined quantity \(X_\infty\) is never invoked. Then there exists a stopped filtration \((\mathcal K_n)_{n=0}^M\), such that \((Z_n,\mathcal K_n)\) is \(B\)-bounded martingale difference sequence. Furthermore, for any \(\delta'\in(0,1)\), with probability at least \(1-\delta'\), simultaneously for all \(1\le n\le M\) 
\[
\left|\sum_{m=1}^n Z_m\right|
\le
2B\sqrt{n\log(2M/\delta')}.
\]
Therefore, if \(N:=\sum_{t=1}^M I_t\ge1\), then
\[
\left|
\frac1N\sum_{t=1}^M I_tX_t
\right|
=\left|\frac1N\sum_{m=1}^N Z_m\right|
\le
2B\sqrt{\frac{\log(2M/\delta')}{N}}.
\]

\end{lemma}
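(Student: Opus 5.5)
The plan is to construct the stopped filtration explicitly, verify the martingale-difference property through an optional-sampling argument over the candidate index, and then combine Azuma--Hoeffding (Lemma~\ref{lem:azuma-hoeffding}) with a union bound over $n\in[M]$.

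\textbf{Filtration.} Set $\mathcal K_0:=\mathcal H_0$. For $n\ge1$, let
\[
\mathcal K_n:=\bigl\{E:\ E\cap\{\tau_n=t\}\in\mathcal H_t\ \forall t\in[M],\ E\cap\{\tau_n=\infty\}\in\mathcal H_M\bigr\},
\]
the $\sigma$-algebra of the stopping time $\tau_n$ with respect to the post-outcome filtration $(\mathcal H_t)$. Since $I_s\in\mathcal G_s\subseteq\mathcal H_s$, each $\tau_n$ is a stopping time for $(\mathcal H_t)$. Because $\tau_n<\tau_{n+1}$ on $\{\tau_n<\infty\}$ and $\tau_n\le\tau_{n+1}$ always, the family $(\mathcal K_n)$ is increasing. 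On $\{\tau_n=t\}$ we have $Z_n=X_t\in\mathcal H_t$, so $Z_n$ is $\mathcal K_n$-measurable, and $|Z_n|\le B$ is immediate.

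\textbf{Martingale-difference property (the main obstacle).} The key computation is
\[
\mathbb E[Z_n\mid\mathcal K_{n-1}]=\sum_{t}\mathbb E\bigl[\mathbf 1\{\tau_n=t\}X_t\mid\mathcal K_{n-1}\bigr].
\]
The approach is to write $\mathbf 1\{\tau_n=t\}=\mathbf 1\{\tau_{n-1}<t\}\,\mathbf 1\{\text{no selection in }(\tau_{n-1},t)\}\,I_t$. This indicator is $\mathcal G_t$-measurable: the first two factors are determined by $I_1,\dots,I_{t-1}\in\mathcal H_{t-1}\subseteq\mathcal G_t$, and the last factor is $I_t\in\mathcal G_t$. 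It then suffices to show, for $E\in\mathcal K_{n-1}$, that
\[
\mathbb E\bigl[\mathbf 1_E\,\mathbf 1\{\tau_n=t\}\,X_t\bigr]=0 .
\]
For this I would decompose $E$ over the events $\{\tau_{n-1}=s\}$ with $s<t$. On each such piece, $E\cap\{\tau_{n-1}=s\}\in\mathcal H_s\subseteq\mathcal G_t$, so the whole integrand's prefactor is $\mathcal G_t$-measurable, and the tower property together with $\mathbb E[X_t\mid\mathcal G_t]=0$ gives zero. This handles the case $s<t$. On $\{\tau_{n-1}=\infty\}$ we have $\tau_n=\infty$, so the contribution vanishes there as well. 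The care needed is exactly at this step: I must never condition on the full hit vector, and must use only the nesting relation $\mathcal H_s\subseteq\mathcal G_t$ for $s<t$.

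\textbf{Concentration and the ratio bound.} For each fixed $n\le M$, Lemma~\ref{lem:azuma-hoeffding} with $u=2B\sqrt{n\log(2M/\delta')}$ gives failure probability at most
\[
2\exp\bigl(-2\log(2M/\delta')\bigr)\le \delta'/M .
\]
A union bound over the $M$ values of $n$ then yields the simultaneous bound with probability at least $1-\delta'$. Finally, on the event $N\ge1$, the selected indices are exactly $\tau_1<\dots<\tau_N\le M$, so
\[
\sum_t I_tX_t=\sum_{m\le N}Z_m,
\]
using $X_t=I_tX_t$. Applying the simultaneous bound at the random index $n=N$ is legitimate precisely because the bound holds uniformly in $n$. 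Dividing by $N$ then gives the stated ratio bound.
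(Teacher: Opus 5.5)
Your proposal is correct and follows essentially the same route as the paper. Both arguments use the same stopped $\sigma$-algebras $\mathcal K_n$ with $\mathcal K_0=\mathcal H_0$, prove the zero-mean property by splitting $E\in\mathcal K_{n-1}$ over $\{\tau_{n-1}=s\}$ with $s<t$ together with $\mathcal H_s\subseteq\mathcal G_t$ and $\{\tau_n=t\}\in\mathcal G_t$, and finish with Azuma--Hoeffding, a union bound over $n\in[M]$, and evaluation at $n=N$. The only differences are cosmetic: you obtain monotonicity of $(\mathcal K_n)$ from the general fact that $\tau_n\le\tau_{n+1}$ implies inclusion of stopped $\sigma$-algebras, where the paper checks it by hand; and the case $n=1$ should be handled directly via $E\in\mathcal H_0\subseteq\mathcal G_t$ rather than through a $\tau_0$ decomposition.
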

\begin{proof}

Let \(\mathsf\Omega\) denote the underlying probability sample space in
this proof, to avoid conflict with the action-set notation
\(\Omega(\xi)\).

First, because \(I_t\in\mathcal G_t\), the event
\[
\{\tau_n\le t\}
=
\left\{\sum_{s=1}^t I_s\ge n\right\}
\]
belongs to \(\mathcal G_t\). Therefore \(\tau_n\) is a stopping time with respect to the pre-outcome filtration \((\mathcal G_t)_{t=1}^M\), and \(\{\tau_n=t\}\in\mathcal G_t\). 

Define the stopped \(\sigma\)-algebra
\[
\mathcal K_n:=
\left\{
A:\ 
A\cap\{\tau_n=t\}\in\mathcal H_t\ \forall t\in[M],
\quad
A\cap\{\tau_n=\infty\}\in\mathcal H_M
\right\},
\]
and set \(\mathcal K_0:=\mathcal H_0\). Next verify \((\mathcal K_n)_{n=0}^M\) is a filtration. 

First, \(\mathcal K_n\) is a \(\sigma\)-algebra. Indeed, \(\mathsf\Omega\in\mathcal K_n\) and \(\varnothing\in\mathcal K_n\) clearly hold, because \(\{\tau_n=t\}\in\mathcal G_t\subseteq\mathcal H_t\) and \(\{\tau_n=\infty\}\in\mathcal G_M\subseteq\mathcal H_M\). If \(A\in\mathcal K_n\), then for each \(t\in[M]\), \(A^c\cap\{\tau_n=t\}=\{\tau_n=t\}\setminus(A\cap\{\tau_n=t\})\in\mathcal H_t\), and \(A^c\cap\{\tau_n=\infty\}\in\mathcal H_M\); if \(A_i\in\mathcal K_n\), then \((\bigcup_i A_i)\cap\{\tau_n=t\}=\bigcup_i(A_i\cap\{\tau_n=t\})\in\mathcal H_t\), and \((\bigcup_i A_i)\cap\{\tau_n=\infty\}\in\mathcal H_M\). Therefore \(\mathcal K_n\) is a \(\sigma\)-algebra. 

Next prove monotonicity. If \(A\in\mathcal K_{n-1}\), then for any \(t\in[M]\):
\begin{itemize}
\item When \(n=1\), \(A\in\mathcal H_0\subseteq\mathcal H_t\), and
\(\{\tau_1=t\}\in\mathcal G_t\subseteq\mathcal H_t\), so
\(A\cap\{\tau_1=t\}\in\mathcal H_t\); at the same time
\(A\cap\{\tau_1=\infty\}\in\mathcal H_M\).
\item When \(n\ge2\),
\[
A\cap\{\tau_n=t\}
=
\bigcup_{s=1}^{t-1}
\bigl(A\cap\{\tau_{n-1}=s\}\cap\{\tau_n=t\}\bigr).
\]
Since \(A\in\mathcal K_{n-1}\), \(A\cap\{\tau_{n-1}=s\}\in\mathcal H_s\subseteq\mathcal H_t\); at the same time \(\{\tau_n=t\}\in\mathcal G_t\subseteq\mathcal H_t\). Thus each union term belongs to \(\mathcal H_t\).
\end{itemize}
On the other hand, 
\[
A\cap\{\tau_n=\infty\}
=
\bigl(A\cap\{\tau_{n-1}=\infty\}\bigr)
\cup
\bigcup_{s=1}^{M}
\bigl(A\cap\{\tau_{n-1}=s\}\cap\{\tau_n=\infty\}\bigr)
\in\mathcal H_M,
\]
because \(A\cap\{\tau_{n-1}=\infty\}\in\mathcal H_M\), \(A\cap\{\tau_{n-1}=s\}\in\mathcal H_s\subseteq\mathcal H_M\), and \(\{\tau_n=\infty\}\in\mathcal G_M\subseteq\mathcal H_M\). 

Therefore \(A\in\mathcal K_n\), and hence \(\mathcal K_{n-1}\subseteq\mathcal K_n\). Thus \((\mathcal K_n)_{n=0}^M\) is a filtration. 

Next verify \(Z_n\) measurability. For any Borel set \(E\subseteq\mathbb R\), if \(0\notin E\), then
\[
\{Z_n\in E\}\cap\{\tau_n=\infty\}=\varnothing\in\mathcal H_M,
\]
If \(0\in E\), then
\[
\{Z_n\in E\}\cap\{\tau_n=\infty\}=\{\tau_n=\infty\}\in\mathcal H_M.
\]
At the same time, for each \(t\in[M]\), 
\[
\{Z_n\in E\}\cap\{\tau_n=t\}
=
\{X_t\in E\}\cap\{\tau_n=t\}
\in\mathcal H_t,
\]
because \(X_t\in\mathcal H_t\) and \(\{\tau_n=t\}\in\mathcal G_t\subseteq\mathcal H_t\). Thus \(\{Z_n\in E\}\in\mathcal K_n\), \(Z_n\) is measurable with respect to \(\mathcal K_n\). Also, by \(|X_t|\le B\) and \(Z_n=0\) on \(\{\tau_n=\infty\}\), \(|Z_n|\le B\). 

Next prove zero mean. Fix \(A\in\mathcal K_{n-1}\). We first prove that for each \(t\), 
\[
A\cap\{\tau_n=t\}\in\mathcal G_t.
\]
If \(n=1\), then \(A\in\mathcal K_0=\mathcal H_0\subseteq\mathcal G_t\), and \(\{\tau_1=t\}\in\mathcal G_t\). Thus the claim holds. If \(n\ge2\), then on \(\{\tau_n=t\}\) the \(n-1\) hits must occur at some \(s<t\), therefore
\[
A\cap\{\tau_n=t\}
=
\bigcup_{s=1}^{t-1}
\left(
A\cap\{\tau_{n-1}=s\}\cap\{\tau_n=t\}
\right).
\]
Since \(A\in\mathcal K_{n-1}\),
\[
A\cap\{\tau_{n-1}=s\}\in\mathcal H_s.
\]
Also, because \(s<t\), we have \(\mathcal H_s\subseteq\mathcal G_t\), and because \(\{\tau_n=t\}\in\mathcal G_t\), each term in the finite union belongs to \(\mathcal G_t\). Hence the above measurability holds.

Therefore
\[
\mathbb E[\mathbf 1_A Z_n]
=
\sum_{t=1}^M
\mathbb E\!\left[
\mathbf 1_{A\cap\{\tau_n=t\}}X_t
\right]
=
\sum_{t=1}^M
\mathbb E\!\left[
\mathbf 1_{A\cap\{\tau_n=t\}}
\mathbb E[X_t\mid\mathcal G_t]
\right]
=0.
\]
Thus \(\mathbb E[Z_n\mid\mathcal K_{n-1}]=0\), and \((Z_n,\mathcal K_n)\) is a \(B\)-bounded martingale difference sequence. The branch \(\tau_n=\infty\) contributes zero and therefore does not produce \(X_\infty\) or any nonmeasurable term.

For fixed \(n\), Lemma~\ref{lem:azuma-hoeffding} gives
\[
\mathbb P\!\left(
\left|\sum_{m=1}^n Z_m\right|\ge u
\right)
\le
2\exp\!\left(-\frac{u^2}{2B^2n}\right).
\]
Taking
\[
u=2B\sqrt{n\log(2M/\delta')}
\]
makes the right-hand side at most \(\delta'/M\). A union bound over \(n=1,\dots,M\) gives simultaneous control of all partial sums. Finally, if \(N\ge1\), taking \(n=N\) gives the empirical mean bound. If \(N=0\), no empirical average is formed; the subsequent \(N=0\) cases are covered by the cold-start bonus clipping mechanism. This proves the claim.
\end{proof}

\subsubsection{Reward and Transition Concentration}
\label{app:tools:reward-transition-concentration}

Before stating the concentration events, define the conditional-mean
model around which the empirical window estimates concentrate. For any
\((k,h,\xi,\omega)\), let
\[
N_h^k(\xi,\omega)
=
\sum_{j\in\mathcal W_k}
\mathbf 1\{(\xi_h^j,\omega_h^j)=(\xi,\omega)\}.
\]
If \(N_h^k(\xi,\omega)\ge 1\), define
\[
\breve r_h^k(\xi,\omega)
:=
\frac{1}{N_h^k(\xi,\omega)}
\sum_{j\in\mathcal W_k}
r_h^{(j)}(\xi,\omega)
\mathbf 1\{(\xi_h^j,\omega_h^j)=(\xi,\omega)\},
\]
and
\[
\breve P_h^k(\cdot\mid\xi,\omega)
:=
\frac{1}{N_h^k(\xi,\omega)}
\sum_{j\in\mathcal W_k}
P_h^{(j)}(\cdot\mid\xi,\omega)
\mathbf 1\{(\xi_h^j,\omega_h^j)=(\xi,\omega)\}.
\]
If \(N_h^k(\xi,\omega)=0\), set
\[
\breve r_h^k(\xi,\omega):=0,
\qquad
\breve P_h^k(\cdot\mid\xi,\omega)
:=
P_h^\circ(\cdot\mid\xi,\omega),
\]
where \(P_h^\circ(\cdot\mid\xi,\omega)\in\Delta(\Xi)\) is fixed before
learning begins. These default values are not used in the concentration
statements, which are only asserted for \(N_h^k(\xi,\omega)\ge1\), but
they keep the notation globally single-valued.

\begin{lemma}[Reward Concentration around the Conditional Mean]
\label{lem:reward-concentration}
There exists an absolute constant \(c_r>0\) such that, with probability at least \(1-\delta/4\), uniformly over all
\[
k\in[K],\quad h\in[H],\quad \xi\in\Xi,\quad \omega\in\Omega(\xi)
\]
quadruples satisfying \(N_h^k(\xi,\omega)\ge1\), we have
\[
\bigl|\hat r_h^k(\xi,\omega)-\breve r_h^k(\xi,\omega)\bigr|
\le
c_r\sqrt{\frac{\log(C_{\log}S_\xi AHK/\delta)}{(N_h^k(\xi,\omega)+1)\wedge W}}.
\]
The concentration event is stated for positive empirical counts. The zero-count regime is covered separately by the clipped cold-start bonus, which is the mechanism used in the near-optimism and certificate arguments. 

\end{lemma}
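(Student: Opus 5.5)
The plan is to reduce the statement to Lemma~\ref{lem:predictable-skipping-concentration}, applied once per fixed triple \((h,\xi,\omega)\) and per episode \(k\), and then take a union bound. Fix \((k,h,\xi,\omega)\). The window \(\mathcal W_k\) contains at most \(W\) episodes, and these are ordered in time. Index the candidate samples by \(t=1,\dots,M\) with \(M=|\mathcal W_k|\le W\), corresponding to episodes \(j\in\mathcal W_k\) in increasing order. We set \(\mathcal G_t:=\mathcal F^{-}_{j,h}\) and let \(\mathcal H_t\) be the \(\sigma\)-algebra just after \(r_h^j\) and \(\xi_{h+1}^j\) are revealed; the \(\sigma\)-algebra \(\mathcal H_0\) is the history up to the start of the earliest window episode. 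The hit indicator is \(I_t=\mathbf 1\{(\xi_h^j,\omega_h^j)=(\xi,\omega)\}\), which is \(\mathcal G_t\)-measurable. The increment is \(X_t=I_t\bigl(r_h^j-r_h^{(j)}(\xi,\omega)\bigr)\), which satisfies \(|X_t|\le 1\).

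Conditional mean zero comes from the sampling model together with the environment predictability convention: \(r_h^{(j)}\) is measurable at the start of episode \(j\), so \(\mathbb E[X_t\mid\mathcal G_t]=I_t\bigl(\mathbb E[r_h^j\mid\mathcal F^-_{j,h}]-r_h^{(j)}(\xi,\omega)\bigr)=0\). The nesting \(\mathcal H_s\subseteq\mathcal G_t\) for \(s<t\) holds because the global filtration is increasing in \((k,h)\) order.

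The key identity is
\[
\hat r_h^k-\breve r_h^k=\frac1N\sum_t I_tX_t \quad\text{when } N=N_h^k(\xi,\omega)\ge1 .
\]
The lemma with \(B=1\) then gives \(|\hat r_h^k-\breve r_h^k|\le 2\sqrt{\log(2W/\delta')/N}\) with probability at least \(1-\delta'\).

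The one subtlety is that the cutoff in the statement is \((N+1)\wedge W\), not \(N\). Since \(N\le W\) and \(N\ge1\), we have \((N+1)\wedge W\le N+1\le 2N\). Hence \(1/N\le 2/((N+1)\wedge W)\), and this costs only a factor \(\sqrt2\) that goes into \(c_r\).

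Finally, we take \(\delta'=\delta/(4S_\xi AHK)\) and union bound over all \(k\in[K]\), \(h\in[H]\), \(\xi\), and \(\omega\in\Omega(\xi)\), which is at most \(S_\xi AHK\) tuples. This gives total failure probability \(\delta/4\). We then bound
\[
\log(2W/\delta')\le\log(8S_\xi AHK^2/\delta)\le 2\log(C_{\log}S_\xi AHK/\delta)
\]
for \(C_{\log}\ge 8\) (or, more carefully, \(C_{\log}\) large enough, using \(K\le S_\xi AHK\)). This yields an absolute \(c_r\), for example \(c_r=4\).

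I expect the main obstacle to be the measurability bookkeeping. The window episodes are not contiguous in the global filtration, since stages \(h'\neq h\) and other actions intervene. The fix is to verify that coarsening the global filtration to the subsequence of pre/post-outcome \(\sigma\)-algebras at stage \(h\) of window episodes preserves both \(\mathbb E[X_t\mid\mathcal G_t]=0\) and the nesting. This holds because conditioning on the finer \(\mathcal F^-_{j,h}\) already gives zero mean, and each \(\mathcal G_t\) is itself a member of the global filtration, so no further coarsening is needed. The hit locations are random, but that is exactly the case Lemma~\ref{lem:predictable-skipping-concentration} handles, and no conditioning on the full hit vector is required.
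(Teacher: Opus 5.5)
Your proposal is correct and follows the paper's proof essentially step for step. Both fix \((k,h,\xi,\omega)\), build the pre/post-outcome filtration at stage \(h\) of the window episodes, and apply Lemma~\ref{lem:predictable-skipping-concentration} with \(B=1\), \(M=m_k\le W\), and \(\delta'=\delta/(4KS_\xi AH)\). Both then convert \(1/N\le 2/((N+1)\wedge W)\) and take a union bound over at most \(KHS_\xi A\) quadruples, absorbing the logarithm into \(C_{\log}\) and \(c_r\) via \(W\le K\).
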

\begin{proof}

Fix \(k,h,\xi,\omega\), and denote the episodes in the window as
\[
u_1<u_2<\cdots<u_{m_k},\qquad \{u_1,\dots,u_{m_k}\}=\mathcal W_k,
\qquad m_k=|\mathcal W_k|\le W.
\]
If \(m_k=0\), then there are no quadruples with \(N_h^k(\xi,\omega)\ge1\), so the assertion is empty. We therefore assume \(m_k\ge1\). 

Let \(\mathcal G_s\) be the pre-outcome \(\sigma\)-algebra at stage \(h\) of episode \(u_s\), after the state and action have been determined but before the reward and next state have been revealed. Let
\[
\mathcal H_s:=\mathcal G_s\vee\sigma(r_h^{u_s},\xi_{h+1}^{u_s}).
\]
By the global filtration convention, \(\mathcal H_{s-1}\subseteq\mathcal G_s\). Define
\[
I_s:=\mathbf 1\{(\xi_h^{u_s},\omega_h^{u_s})=(\xi,\omega)\},
\qquad
X_s:=I_s\bigl(r_h^{u_s}-r_h^{(u_s)}(\xi,\omega)\bigr).
\]
Then \(I_s\in\mathcal G_s\), \(X_s\in\mathcal H_s\), \(X_s=I_sX_s\), \(|X_s|\le1\), and the sampling model gives
\[
\mathbb E[X_s\mid\mathcal G_s]
=
I_s\,
\mathbb E[
r_h^{u_s}-r_h^{(u_s)}(\xi,\omega)\mid\mathcal G_s]
=0.
\]
Let \(N:=N_h^k(\xi,\omega)\). If \(N\ge1\), then
\[
\hat r_h^k(\xi,\omega)-\breve r_h^k(\xi,\omega)
=
\frac1N\sum_{s=1}^{m_k} I_s\bigl(r_h^{u_s}-r_h^{(u_s)}(\xi,\omega)\bigr).
\]
The number of possible quadruples is at most
\[
K\cdot H\cdot\sum_{\xi\in\Xi}|\Omega(\xi)|
\le KHS_\xi A.
\]
Thus the following failure-probability allocation is sufficient for a union
bound over all \((k,h,\xi,\omega)\).

For fixed \((k,h,\xi,\omega)\), apply Lemma~\ref{lem:predictable-skipping-concentration} with \(B=1\), \(M=m_k\le W\) and
\[
\delta'=
\frac{\delta}{4KS_\xi AH}.
\]
By \(m_k\le W\), 
\[
\log(2m_k/\delta')\le\log(8WKS_\xi AH/\delta).
\]
We obtain, for \(N\ge1\),
\[
\bigl|\hat r_h^k(\xi,\omega)-\breve r_h^k(\xi,\omega)\bigr|
\le
2\sqrt{\frac{\log(8WKS_\xi AH/\delta)}{N}}.
\]
Taking a union bound over all \((k,h,\xi,\omega)\) and using \(1\le W\le K\), for an absolute constant \(c>0\), after choosing \(C_{\log}\) sufficiently large,
\[
\log(8WKS_\xi AH/\delta)
\le
c\,\log(C_{\log}S_\xi AHK/\delta).
\]
At the same time, when \(N\ge1\) and \(N\le W\), 
\[
(N+1)\wedge W\le 2N,
\qquad\text{therefore}\qquad
\frac1N\le \frac{2}{(N+1)\wedge W}.
\]
Absorbing absolute constants and taking the supremum over all \(N\ge1\) gives the stated bound. This proves the claim.

\end{proof}

This subsection establishes \(\ell_1\)-concentration for \(\hat P_h^k(\cdot\mid\xi,\omega)-\breve P_h^k(\cdot\mid\xi,\omega)\). Whenever it is necessary to control \((\hat P_h^k-\breve P_h^k)^\top f\), we use
\[
|(\hat P_h^k-\breve P_h^k)^\top f|
\le
\|f\|_\infty
\|\hat P_h^k-\breve P_h^k\|_1.
\]
We do not need a separate concentration result for the random test function \(\widetilde V\).

\begin{lemma}[Transition Concentration around the Conditional Mean]
\label{lem:transition-concentration}
There exists an absolute constant \(c_P>0\) such that, with probability at least \(1-\delta/4\), uniformly over all
\[
k\in[K],\quad h\in[H],\quad \xi\in\Xi,\quad \omega\in\Omega(\xi)
\]
quadruples satisfying \(N_h^k(\xi,\omega)\ge1\), we have
\[
\bigl\|
\hat P_h^k(\cdot\mid\xi,\omega)-\breve P_h^k(\cdot\mid\xi,\omega)
\bigr\|_1
\le
c_P\sqrt{\frac{S_\xi\log(C_{\log}S_\xi AHK/\delta)}{(N_h^k(\xi,\omega)+1)\wedge W}}.
\]
The concentration event is stated for positive empirical counts. The zero-count regime is covered by the same clipped cold-start bonus mechanism used in the subsequent optimism and certificate bounds. 

\end{lemma}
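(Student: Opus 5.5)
We follow the template of Lemma~\ref{lem:reward-concentration}, replacing the scalar reward deviation by an $\ell_1$ deviation of the next-state law. The $\ell_1$ norm is written as a supremum over sign vectors:
\[
\|p-q\|_1=\max_{\sigma\in\{-1,+1\}^{\Xi}}\sum_{\xi'}\sigma(\xi')\bigl(p(\xi')-q(\xi')\bigr).
\]
This reduces the claim to $2^{S_\xi}$ scalar martingale concentrations, and the union bound over these sign vectors produces the extra factor $S_\xi$ inside the logarithm. That factor is what yields the $\sqrt{S_\xi\log(\cdot)}$ in the statement. Because $\sigma$ is a fixed deterministic vector, we never need to handle a random test function; this is consistent with the remark that no concentration for $\widetilde V$ is required.

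Concretely, fix $(k,h,\xi,\omega)$ and enumerate the window episodes $u_1<\dots<u_{m_k}$, with $m_k\le W$. Use the same filtrations $\mathcal G_s\subseteq\mathcal H_s$ and hit indicators $I_s\in\mathcal G_s$ as in the reward proof. For a fixed $\sigma$, set
\[
X_s^\sigma:=I_s\Bigl(\sigma(\xi_{h+1}^{u_s})-\textstyle\sum_{\xi'}\sigma(\xi')P_h^{(u_s)}(\xi'\mid\xi,\omega)\Bigr).
\]
This satisfies $|X_s^\sigma|\le 2$, $X_s^\sigma\in\mathcal H_s$, and $X_s^\sigma=I_sX_s^\sigma$. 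It also has zero conditional mean given $\mathcal G_s$, by the conditional-law part of the sampling model together with the environment predictability convention, which makes $P_h^{(u_s)}$ $\mathcal G_s$-measurable. When $N:=N_h^k(\xi,\omega)\ge1$, we have
\[
\sum_{\xi'}\sigma(\xi')\bigl(\hat P_h^k-\breve P_h^k\bigr)(\xi'\mid\xi,\omega)=\frac1N\sum_s X_s^\sigma,
\]
because $\breve P_h^k$ is exactly the hit-weighted average of the episode models. Applying Lemma~\ref{lem:predictable-skipping-concentration} with $B=2$, $M=m_k$, and
\[
\delta'=\frac{\delta}{4\cdot 2^{S_\xi}KS_\xi AH}
\]
gives the bound
\[
4\sqrt{\frac{\log(2W/\delta')}{N}}.
\]

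It remains to take the union bound and simplify constants. The union runs over the $2^{S_\xi}$ sign vectors and the at most $KHS_\xi A$ quadruples, so the total failure probability is at most $\delta/4$. For the logarithm,
\[
\log(2W/\delta')\le S_\xi\log 2+\log(8WKS_\xi AH/\delta)\le c\,S_\xi\log(C_{\log}S_\xi AHK/\delta).
\]
This uses $W\le K$, $C_{\log}\ge e^2$, and $S_\xi\ge1$, with $C_{\log}$ chosen large. Finally, $1/N\le 2/((N+1)\wedge W)$ for $1\le N\le W$, exactly as in the reward lemma. Maximizing over $\sigma$ converts the signed bound into the $\ell_1$ bound, and absorbing the absolute constants into $c_P$ gives the statement.

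The main obstacle is the same measurability subtlety as in the reward case: the hit locations are random. The predictable-skipping lemma already handles this, so the only new points to check are that $\sigma$ is fixed before any data is seen and that $\breve P$ matches the conditional mean at each hit.
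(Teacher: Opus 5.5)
Your proposal is correct and follows the paper's proof essentially step for step. Both arguments use the sign-vector dual representation $\|x\|_1=\max_{v\in\{-1,1\}^{S_\xi}}v^\top x$ and the increments $X_s^v$ bounded by $2$. Both apply Lemma~\ref{lem:predictable-skipping-concentration} with $B=2$, $M=m_k\le W$ and $\delta'=\delta/(4KS_\xi AH\,2^{S_\xi})$, then absorb the logarithm into $c\,S_\xi\log(C_{\log}S_\xi AHK/\delta)$ and convert via $1/N\le 2/((N+1)\wedge W)$.
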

\begin{proof}

Fix \(k,h,\xi,\omega\), and follow the same candidate episode order \(u_1<\cdots<u_{m_k}\) and filtration construction as in Lemma~\ref{lem:reward-concentration}. If \(m_k=0\), then there are no quadruples with \(N_h^k(\xi,\omega)\ge1\), so the assertion is empty. We therefore assume \(m_k\ge1\). 

For each \(v\in\{-1,1\}^{S_\xi}\), define
\[
I_s:=\mathbf 1\{(\xi_h^{u_s},\omega_h^{u_s})=(\xi,\omega)\},
\]
\[
X_s^v:=
I_s\,v^\top\left(
e_{\xi_{h+1}^{u_s}}-P_h^{(u_s)}(\cdot\mid\xi,\omega)
\right).
\]
Then \(I_s\in\mathcal G_s\), \(X_s^v\in\mathcal H_s\), and
\[
|X_s^v|
\le
|v^\top e_{\xi_{h+1}^{u_s}}|
+
|v^\top P_h^{(u_s)}(\cdot\mid\xi,\omega)|
\le 2.
\]
On the event \(I_s=1\), the sampling model gives
\[
\mathbb E[
e_{\xi_{h+1}^{u_s}}-P_h^{(u_s)}(\cdot\mid\xi,\omega)
\mid\mathcal G_s]=0,
\]
whereas on the event \(I_s=0\), \(X_s^v=0\). Therefore
\[
\mathbb E[X_s^v\mid\mathcal G_s]=0.
\]
Let \(N:=N_h^k(\xi,\omega)\). If \(N\ge1\), then
\[
v^\top\!\left(
\hat P_h^k(\cdot\mid\xi,\omega)-\breve P_h^k(\cdot\mid\xi,\omega)
\right)
=
\frac1N\sum_{s=1}^{m_k}X_s^v.
\]
For fixed \(v\in\{-1,1\}^{S_\xi}\), the number of possible
quadruples \((k,h,\xi,\omega)\) is at most
\[
K\cdot H\cdot\sum_{\xi\in\Xi}|\Omega(\xi)|
\le KHS_\xi A.
\]
Thus the following failure-probability allocation is sufficient for a union
bound over all quadruples and all sign vectors
\(v\in\{-1,1\}^{S_\xi}\).

For fixed \((k,h,\xi,\omega,v)\), apply Lemma~\ref{lem:predictable-skipping-concentration} with \(B=2\), \(M=m_k\le W\) and
\[
\delta'=
\frac{\delta}{4KS_\xi AH\,2^{S_\xi}}.
\]
By \(m_k\le W\), 
\[
\log(2m_k/\delta')\le
\log(8WKS_\xi AH\,2^{S_\xi}/\delta).
\]
Therefore, for \(N\ge1\),
\[
\left|
v^\top\!\left(
\hat P_h^k(\cdot\mid\xi,\omega)-\breve P_h^k(\cdot\mid\xi,\omega)
\right)
\right|
\le
4\sqrt{
\frac{\log(8WKS_\xi AH\,2^{S_\xi}/\delta)}{N}
}.
\]
Taking the supremum over all \(v\in\{-1,1\}^{S_\xi}\) and applying a union bound over all \((k,h,\xi,\omega)\), together with the finite-dimensional dual representation
\[
\|x\|_1=\max_{v\in\{-1,1\}^{S_\xi}}v^\top x,
\qquad x\in\mathbb R^{S_\xi},
\]
and using \(1\le W\le K\), for an absolute constant \(c>0\), after choosing \(C_{\log}\) sufficiently large,
\[
\log(8WKS_\xi AH\,2^{S_\xi}/\delta)
\le
c\,S_\xi\log(C_{\log}S_\xi AHK/\delta),
\]
gives the simultaneous bound for all \(N_h^k(\xi,\omega)\ge1\)
\[
\bigl\|
\hat P_h^k(\cdot\mid\xi,\omega)-\breve P_h^k(\cdot\mid\xi,\omega)
\bigr\|_1
\le
c_P\sqrt{
\frac{S_\xi\log(C_{\log}S_\xi AHK/\delta)}
{N_h^k(\xi,\omega)}
}.
\]
Since \(N:=N_h^k(\xi,\omega)\ge1\) and \(N\le W\),
\[
\frac1N\le \frac{2}{(N+1)\wedge W}.
\]
Absorbing absolute constants gives the stated bound. This proves the claim.

\end{proof}

Define
\[
\mathcal E_r:=
\left\{
\begin{aligned}
&\forall k,h,\xi,\omega\text{ with }N_h^k(\xi,\omega)\ge1,\\
&\qquad
\bigl|\hat r_h^k(\xi,\omega)-\breve r_h^k(\xi,\omega)\bigr|
\le
c_r\sqrt{\frac{\log(C_{\log}S_\xi AHK/\delta)}{(N_h^k(\xi,\omega)+1)\wedge W}}
\end{aligned}
\right\},
\]
and
\[
\mathcal E_P:=
\left\{
\begin{aligned}
&\forall k,h,\xi,\omega\text{ with }N_h^k(\xi,\omega)\ge1,\\
&\qquad
\bigl\|
\hat P_h^k(\cdot\mid\xi,\omega)-\breve P_h^k(\cdot\mid\xi,\omega)
\bigr\|_1
\le
c_P\sqrt{\frac{S_\xi\log(C_{\log}S_\xi AHK/\delta)}{(N_h^k(\xi,\omega)+1)\wedge W}}
\end{aligned}
\right\}.
\]
Define
\[
\mathcal E_{\mathrm{conc}}:=\mathcal E_r\cap\mathcal E_P.
\]

\begin{lemma}[Total Concentration Event]
\label{lem:total-concentration-event}
If \(C_{\log}\) is sufficiently large, then
\[
\mathbb P(\mathcal E_{\mathrm{conc}})\ge1-\delta/2.
\]

\end{lemma}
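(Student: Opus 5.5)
The plan is a union bound over the two events defined just above, using the failure probabilities already established in Lemmas~\ref{lem:reward-concentration} and~\ref{lem:transition-concentration}. The only subtlety is to make sure that both lemmas are invoked with the same choice of \(C_{\log}\), so that the event \(\mathcal E_{\mathrm{conc}}\) is defined with a single constant.

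First, fix \(C_{\log}\ge e^2\) large enough to satisfy both absorption steps used in the two earlier proofs:
\[
\log(8WKS_\xi AH/\delta)\le c\,\log(C_{\log}S_\xi AHK/\delta)
\]
for the reward bound, and
\[
\log(8WKS_\xi AH\,2^{S_\xi}/\delta)\le c\,S_\xi\log(C_{\log}S_\xi AHK/\delta)
\]
for the transition bound. Both reduce to requiring that \(\log C_{\log}\) dominate absolute constants such as \(\log 8\) and \(\log 2\), using \(W\le K\), \(S_\xi\ge1\), and \(\log(C_{\log}S_\xi AHK/\delta)\ge 2\). Hence a single sufficiently large absolute constant works for both simultaneously. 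With this choice, and with the absolute constants \(c_r,c_P\) produced by those lemmas, we have \(\mathbb P(\mathcal E_r^c)\le\delta/4\) and \(\mathbb P(\mathcal E_P^c)\le\delta/4\).

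Second, since \(\mathcal E_{\mathrm{conc}}^c=\mathcal E_r^c\cup\mathcal E_P^c\), subadditivity gives
\[
\mathbb P(\mathcal E_{\mathrm{conc}}^c)\le \delta/4+\delta/4=\delta/2,
\]
that is, \(\mathbb P(\mathcal E_{\mathrm{conc}})\ge1-\delta/2\). No independence between the reward and transition randomness is needed, which is consistent with the sampling model of Subsection~\ref{app:problem-setup:sampling-filtration}.

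The only point needing care is the constant bookkeeping in the first step. One must verify that the events \(\mathcal E_r\) and \(\mathcal E_P\), as displayed, coincide with (or are implied by) the events whose probabilities the lemmas control, including the passage from \(1/N\) to \(2/((N+1)\wedge W)\). That passage was already handled inside each lemma, so I expect no genuine obstacle. The remaining \(\delta/2\) of the failure budget is left for the martingale control used later.
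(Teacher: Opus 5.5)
Your proposal is correct and is essentially the paper's argument: the paper invokes Lemmas~\ref{lem:reward-concentration} and~\ref{lem:transition-concentration} to get \(\mathbb P(\mathcal E_r^c)\le\delta/4\) and \(\mathbb P(\mathcal E_P^c)\le\delta/4\), then takes a union bound without using any independence. Your extra step of fixing one \(C_{\log}\) that serves both lemmas at once is harmless bookkeeping that the paper leaves implicit in the phrase ``if \(C_{\log}\) is sufficiently large''.
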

\begin{proof}

By Lemmas~\ref{lem:reward-concentration} and~\ref{lem:transition-concentration}, 
\[
\mathbb P(\mathcal E_r^c)\le\delta/4,
\qquad
\mathbb P(\mathcal E_P^c)\le\delta/4.
\]
A union bound gives
\[
\mathbb P(\mathcal E_{\mathrm{conc}}^c)
\le
\mathbb P(\mathcal E_r^c)+\mathbb P(\mathcal E_P^c)
\le
\delta/2.
\]
This proves the claim. 
\end{proof}

\subsection{Proof of the Fixed-Window Theorem}
\label{app:fixed-window-proof}

\subsubsection{Current True Model}
\label{app:main-proof-setup:true-model}

This section has three formal responsibilities:

\begin{enumerate}
\item
  unify the model objects and notation appearing in all subsequent
  propositions;
\item
  specify the unified high-probability event and its allocation of
  failure probabilities;
\item
  fix the same bonus interface jointly used in subsequent Proposition~\ref{prop:near-optimism-window-drift}, Lemmas~\ref{lem:pathwise-bridge-identity}--\ref{lem:pathwise-certificate-bound}, Proposition~\ref{prop:summed-bonus-bound}, and Theorem~\ref{thm:dynamic-regret-fixed-window}.
\end{enumerate}

We first unify the objects used in the stability analysis and in the dynamic-regret theorem. The proof separates two roles that are useful in non-stationary analysis: \textbf{nonstationarity enters through a window-average conditional model, while statistical concentration enters through the empirical model}. This separation keeps the optimism and certificate arguments aligned with the sliding-window estimator.

Recall from Subsection~\ref{app:problem-setup:mdp-sequence} that the true
finite-horizon KL-regularized MDP in episode \(k\in[K]\) is
\[
\mathcal M^{(k)}=
\Bigl(
\Xi,\{\Omega(\xi)\}_{\xi\in\Xi},
\{r_h^{(k)}\}_{h=1}^H,
\{P_h^{(k)}\}_{h=1}^H,
H
\Bigr),
\]
where, for any \(h\in[H]\), \(\xi\in\Xi\), and \(\omega\in\Omega(\xi)\),
\[
r_h^{(k)}(\xi,\omega)\in[0,1],
\qquad
P_h^{(k)}(\cdot\mid \xi,\omega)\in\Delta(\Xi).
\]
Recall that the corresponding soft Bellman operator
\(\mathcal T_h^{(k)}\) was defined after the KL Bellman recursion in
Subsection~\ref{app:kl-bellman:recursion}. With that notation, the
optimal value functions satisfy
\[
V_{H+1}^{\star,(k)}\equiv 0,
\qquad
V_h^{\star,(k)}=\mathcal T_h^{(k)}V_{h+1}^{\star,(k)}.
\]

\subsubsection{Primitive Drift and Window Cumulative Drift}
\label{app:main-proof-setup:window-drift}

Recall that the primitive adjacent drifts \(\Delta_r^{(j)}\) and
\(\Delta_p^{(j)}\), \(j=1,\ldots,K-1\), are defined in
Subsection~\ref{app:problem-setup:variation-budgets} immediately after
the total variation budget.

For episode \(k\), define the sliding window
\[
\mathcal W_k:=\{\max\{1,k-W\},\dots,k-1\},
\qquad
m_k:=|\mathcal W_k|.
\]

The window cumulative drift quantity used uniformly in the main proof is
defined as
\begin{equation}
\label{eq:window-drift}
D_k:=
\sum_{j=\max\{1,k-W\}}^{k-1}
\bigl(\Delta_r^{(j)}+H\Delta_p^{(j)}\bigr).
\end{equation}

Throughout the appendix, an empty sum is zero. In particular, if \(k=1\),
then \(\mathcal W_1=\varnothing\) and \(D_1=0\).

The main analysis uniformly uses \(D_k\), because it satisfies a clean
summability relation
\begin{equation}
\label{eq:window-drift-sum}
\sum_{k=1}^K D_k\le W(B_r+HB_p).
\end{equation}

\textbf{Discrete counting note. }
The above display follows directly from discrete counting. Indeed,
\[
\sum_{k=1}^K D_k
=
\sum_{k=1}^K\sum_{j=\max\{1,k-W\}}^{k-1}
\bigl(\Delta_r^{(j)}+H\Delta_p^{(j)}\bigr)
=
\sum_{j=1}^{K-1} c_j \bigl(\Delta_r^{(j)}+H\Delta_p^{(j)}\bigr),
\]
where
\[
c_j:=
\#\{k\in[K]: j\in[\max\{1,k-W\},k-1]\}.
\]
For fixed \(j\), condition \(j\in[\max\{1,k-W\},k-1]\) is equivalent to
\[
j+1\le k\le \min\{K,j+W\},
\]
Thus
\[
0\le c_j\le W.
\]
Hence
\[
\sum_{k=1}^K D_k
\le
W\sum_{j=1}^{K-1}\bigl(\Delta_r^{(j)}+H\Delta_p^{(j)}\bigr)
=
W(B_r+HB_p).
\]

\subsubsection{Window Reference Model}
\label{app:main-proof-setup:window-reference-model}

\begin{remark}[Conditional-Mean Reference Model]
\label{rem:window-reference-model}
The main proof below depends only on the conditional-mean model
\(\breve r_h^k,\breve P_h^k\), the concentration event
\(\mathcal E_{\mathrm{conc}}\), and the window drift quantity \(D_k\).
No additional window-average model is used in the proof of
Theorem~\ref{thm:dynamic-regret-fixed-window}.
\end{remark}

\subsubsection{Conditional-Mean Model}

\label{app:main-proof-setup:conditional-mean-model}

Recall that the conditional-mean reward and transition model
\(\breve r_h^k,\breve P_h^k\) was defined before the concentration
events in Subsection~\ref{app:tools:reward-transition-concentration}.
The role of this model is that the concentration of
\(\hat r_h^k,\hat P_h^k\) holds with respect to
\(\breve r_h^k,\breve P_h^k\), and the subsequent proof uses this
conditional-mean reference model together with the drift quantity \(D_k\).

\subsubsection{Empirical Model and Optimistic Planning Objects}
\label{app:main-proof-setup:empirical-optimistic-model}

The empirical model is,
\[
\hat r_h^k(\xi,\omega):=
\begin{cases}
\displaystyle
\frac{1}{N_h^k(\xi,\omega)}
\sum_{j\in\mathcal W_k}
r_h^j\,\mathbf 1\{(\xi_h^j,\omega_h^j)=(\xi,\omega)\},
& N_h^k(\xi,\omega)\ge 1,\\[1.2em]
0, & N_h^k(\xi,\omega)=0,
\end{cases}
\]
\[
\hat P_h^k(\xi'\mid\xi,\omega):=
\begin{cases}
\displaystyle
\frac{1}{N_h^k(\xi,\omega)}
\sum_{j\in\mathcal W_k}
\mathbf 1\{(\xi_h^j,\omega_h^j,\xi_{h+1}^j)=(\xi,\omega,\xi')\},
& N_h^k(\xi,\omega)\ge 1,\\[1.2em]
P_h^\circ(\xi'\mid\xi,\omega), & N_h^k(\xi,\omega)=0.
\end{cases}
\]
At the same time define
\[
\widetilde V_{H+1}^k(\xi)=0,\qquad \forall \xi\in\Xi,
\]
and recursively for \(h=H,\dots,1\)
\[
\widetilde Q_h^k(\xi,\omega):=
\min\Bigl\{
H-h+1,\;
\hat r_h^k(\xi,\omega)
+b_h^k(\xi,\omega)
+\hat P_h^k(\cdot\mid\xi,\omega)^\top \widetilde V_{h+1}^k
\Bigr\},
\]
\[
\widetilde V_h^k(\xi):=
\alpha\log\sum_{\omega\in\Omega(\xi)}
\pi_{\rm ref}(\omega\mid\xi)
\exp\!\left(\frac{\widetilde Q_h^k(\xi,\omega)}{\alpha}\right),
\]
\[
\hat\pi_h^k(\omega\mid\xi):=
\frac{
\pi_{\rm ref}(\omega\mid\xi)\exp(\widetilde Q_h^k(\xi,\omega)/\alpha)
}{
\sum_{\omega'}\pi_{\rm ref}(\omega'\mid\xi)\exp(\widetilde Q_h^k(\xi,\omega')/\alpha)
}.
\]

The bonus is
\[
b_h^k(\xi,\omega)
=
c_b H\sqrt{
\frac{S_\xi\log(C_{\log}S_\xi AHK/\delta)}{(N_h^k(\xi,\omega)+1)\wedge W}
}
\]
where \(C_{\log}\ge e^2\) is an absolute constant, \(c_b>0\) is an absolute
constant. Throughout the main proof, assume
\[
\delta\in(0,1),\qquad 1\le W\le K,\qquad S_\xi,A,H,K\ge1.
\]
Let \(c_r\) and \(c_P\) denote respectively the reward and transition concentration constants in Lemmas~\ref{lem:reward-concentration} and~\ref{lem:transition-concentration}, and let
\[
c_{\mathrm{conc}}:=c_r+c_P.
\]
Choose \(C_{\log}\) and \(c_b\) such that
\[
c_b\ge 2(c_r+c_P+1)\ge c_{\mathrm{conc}},
\qquad
c_b\sqrt{S_\xi\log(C_{\log}S_\xi AHK/\delta)}\ge 2.
\]
Hence, when \(N_h^k(\xi,\omega)=0\),
\[
b_h^k(\xi,\omega)
=c_b H\sqrt{S_\xi\log(C_{\log}S_\xi AHK/\delta)}
\ge 2H.
\]
All subsequent cold-start concentration absorption only invoke this
parameter convention, and do not change the bonus separately. Here
\((+1)\) is used to remove cold-start singularity, while \(\wedge W\) is
kept consistent with the effective sample size of the sliding window.

\textbf{Concentration absorption convention. } For any
\(n\in\{1,\dots,W\}\), because \(H\ge1\) and \(S_\xi\ge1\),
\[
\begin{aligned}
&c_r\sqrt{\frac{\log(C_{\log}S_\xi AHK/\delta)}{n}}
+
c_P H\sqrt{\frac{S_\xi\log(C_{\log}S_\xi AHK/\delta)}{n}}\\
&\qquad\le
(c_r+c_P)H\sqrt{\frac{S_\xi\log(C_{\log}S_\xi AHK/\delta)}{n}}
\le
b_h^k(\xi,\omega)
\end{aligned}
\]
whenever \(n=(N_h^k(\xi,\omega)+1)\wedge W\). Thus the statistical
reward and transition deviations are explicitly absorbed by the single
bonus term; zero-count cases use only the preceding cold-start condition.

\textbf{Unified interface convention. } From this section onward, after
Proposition~\ref{prop:near-optimism-window-drift}, Lemmas~\ref{lem:pathwise-bridge-identity}--\ref{lem:pathwise-certificate-bound}, Proposition~\ref{prop:summed-bonus-bound}, and Theorem~\ref{thm:dynamic-regret-fixed-window}
and their proofs
\[
b_h^k(\xi,\omega)
\]
all refer to the same window-truncated bonus definition above;
subsequent sections do not switch back to any other bonus notation.
The near-optimism result, certificate decomposition, summed-bonus bound, and aggregate dynamic-regret theorem all hold for the same algorithmic object.

\begin{lemma}[Boundedness of Clipped Optimistic Values]
\label{lem:optimistic-value-boundedness}
For every episode \(k\), stage \(h\in[H]\), state \(\xi\in\Xi\), and
action \(\omega\in\Omega(\xi)\), the optimistic planning objects defined in
this subsection satisfy
\[
0\le \widetilde Q_h^k(\xi,\omega)\le H-h+1,
\qquad
0\le \widetilde V_h^k(\xi)\le H-h+1.
\]
\end{lemma}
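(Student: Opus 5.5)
The plan is a backward induction on $h$ from $H+1$ down to $1$, proving both bounds at each stage together. The base case is immediate, since $\widetilde V_{H+1}^k\equiv 0$ lies in $[0,0]$.

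For the inductive step, assume $0\le \widetilde V_{h+1}^k(\xi')\le H-h$ for all $\xi'\in\Xi$. We first bound $\widetilde Q_h^k(\xi,\omega)$.

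For the lower bound, each of the three terms inside the minimum is nonnegative:
\begin{itemize}
\item $\hat r_h^k(\xi,\omega)\ge 0$, because it is either an average of observed rewards in $[0,1]$ or the zero-count default $0$;
\item $b_h^k(\xi,\omega)\ge 0$, because $c_b>0$, the logarithm is positive (as $C_{\log}\ge e^2$ and $\delta<1$), and the denominator is at least $1$;
\item $\hat P_h^k(\cdot\mid\xi,\omega)^\top\widetilde V_{h+1}^k\ge 0$, because $\hat P_h^k(\cdot\mid\xi,\omega)$ is a probability vector in both the empirical and cold-start cases, and $\widetilde V_{h+1}^k\ge0$ by the induction hypothesis.
\end{itemize}
Hence the unclipped backup is nonnegative. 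The other argument of the minimum, $H-h+1$, is at least $1$. So the minimum is nonnegative, giving $\widetilde Q_h^k(\xi,\omega)\ge 0$. The upper bound $\widetilde Q_h^k(\xi,\omega)\le H-h+1$ holds directly by the clipping in \eqref{eq:optimistic-q}.

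Next we transfer these bounds to $\widetilde V_h^k(\xi)=\Phi_\xi(\widetilde Q_h^k(\xi,\cdot))$. Lemma~\ref{lem:log-sum-exp-monotone-translation} gives monotonicity and translation of $\Phi_\xi$. Since $\pi_{\rm ref}(\cdot\mid\xi)$ sums to one, $\Phi_\xi(c\mathbf 1)=c$ for every constant $c$. Comparing $\widetilde Q_h^k(\xi,\cdot)$ entrywise with $0\cdot\mathbf 1$ and with $(H-h+1)\mathbf 1$, monotonicity then yields
\[
0=\Phi_\xi(0\cdot\mathbf 1)\le \widetilde V_h^k(\xi)\le \Phi_\xi\bigl((H-h+1)\mathbf 1\bigr)=H-h+1 .
\]
This closes the induction.

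No step is a real obstacle. The only points needing care are that the cold-start conventions (reward default $0$ and transition default $P_h^\circ\in\Delta(\Xi)$) preserve nonnegativity and the probability-vector property. The argument also does not need the full-support assumption, only that $\pi_{\rm ref}(\cdot\mid\xi)$ is a probability distribution.
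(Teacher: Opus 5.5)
Your proof is correct and follows essentially the same route as the paper. Both arguments use backward induction, get nonnegativity of the unclipped backup plus clipping for $\widetilde Q_h^k$, and then sandwich $\widetilde V_h^k$ between constants. The only cosmetic difference is how the sandwich is obtained: you use the monotonicity and translation properties of Lemma~\ref{lem:log-sum-exp-monotone-translation}, while the paper bounds $\sum_{\omega}\pi_{\rm ref}(\omega\mid\xi)e^{\widetilde Q_h^k(\xi,\omega)/\alpha}$ directly between $e^{m/\alpha}$ and $e^{M/\alpha}$, where $m$ and $M$ are the minimum and maximum of $\widetilde Q_h^k(\xi,\cdot)$.
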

\begin{proof}
The terminal condition gives \(\widetilde V_{H+1}^k\equiv0\). Suppose
\(0\le \widetilde V_{h+1}^k(\xi')\le H-h\) for all \(\xi'\in\Xi\). Since
\(\hat r_h^k(\xi,\omega)\ge0\), \(b_h^k(\xi,\omega)\ge0\), and
\(\hat P_h^k(\cdot\mid\xi,\omega)\in\Delta(\Xi)\), the unclipped backup is
nonnegative. Therefore the clipping definition gives
\[
0\le \widetilde Q_h^k(\xi,\omega)\le H-h+1.
\]
Let
\[
m:=\min_{\omega\in\Omega(\xi)}\widetilde Q_h^k(\xi,\omega),
\qquad
M:=\max_{\omega\in\Omega(\xi)}\widetilde Q_h^k(\xi,\omega).
\]
Since \(\pi_{\rm ref}(\cdot\mid\xi)\) is a probability distribution and
\(\alpha>0\),
\[
e^{m/\alpha}
\le
\sum_{\omega\in\Omega(\xi)}
\pi_{\rm ref}(\omega\mid\xi)
e^{\widetilde Q_h^k(\xi,\omega)/\alpha}
\le
e^{M/\alpha}.
\]
Taking \(\alpha\log\) gives
\[
\min_{\omega\in\Omega(\xi)}\widetilde Q_h^k(\xi,\omega)
\le
\widetilde V_h^k(\xi)
\le
\max_{\omega\in\Omega(\xi)}\widetilde Q_h^k(\xi,\omega).
\]
Thus \(0\le \widetilde V_h^k(\xi)\le H-h+1\). Backward induction proves the
claim.
\end{proof}

\begin{lemma}[Unified Bonus Absorption]
\label{lem:unified-bonus-absorption}
Under the parameter convention in this subsection, on
\(\mathcal E_{\mathrm{conc}}\), uniformly over all
\((k,h,\xi,\omega)\), if \(N_h^k(\xi,\omega)\ge1\), then
\[
|\hat r_h^k(\xi,\omega)-\breve r_h^k(\xi,\omega)|
+
H
\|\hat P_h^k(\cdot\mid\xi,\omega)
-
\breve P_h^k(\cdot\mid\xi,\omega)\|_1
\le
b_h^k(\xi,\omega).
\]
If \(N_h^k(\xi,\omega)=0\), then
\[
b_h^k(\xi,\omega)\ge 2H.
\]
\end{lemma}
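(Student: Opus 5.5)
The statement splits into a positive-count case and a zero-count case, and each follows from results already fixed in this subsection.

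\emph{Case $N_h^k(\xi,\omega)\ge1$.} We work on $\mathcal E_{\mathrm{conc}}=\mathcal E_r\cap\mathcal E_P$, and set $n:=(N_h^k(\xi,\omega)+1)\wedge W\ge1$ and $L:=\log(C_{\log}S_\xi AHK/\delta)$.

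\begin{enumerate}
\item By the definition of $\mathcal E_r$ (Lemma~\ref{lem:reward-concentration}),
\[
|\hat r_h^k(\xi,\omega)-\breve r_h^k(\xi,\omega)|\le c_r\sqrt{L/n}.
\]
\item By the definition of $\mathcal E_P$ (Lemma~\ref{lem:transition-concentration}), multiplied by $H$,
\[
H\,\|\hat P_h^k(\cdot\mid\xi,\omega)-\breve P_h^k(\cdot\mid\xi,\omega)\|_1\le c_P H\sqrt{S_\xi L/n}.
\]
\item Add the two bounds and invoke the concentration absorption convention displayed above. Since $H\ge1$ and $S_\xi\ge1$, we have $c_r\sqrt{L/n}\le c_rH\sqrt{S_\xi L/n}$, so the sum is at most
\[
(c_r+c_P)H\sqrt{S_\xi L/n}.
\]
\item Because $c_b\ge 2(c_r+c_P+1)\ge c_r+c_P$, this is at most $c_bH\sqrt{S_\xi L/n}=b_h^k(\xi,\omega)$.
\end{enumerate}

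One point needs care: the absorption display is stated for $n\in\{1,\dots,W\}$. Here $n$ is always at least $1$ and at most $W$ by the $\wedge W$ truncation, so the convention applies directly.

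\emph{Case $N_h^k(\xi,\omega)=0$.} This case is deterministic and does not need the concentration event. We have $(0+1)\wedge W=1$, so $b_h^k(\xi,\omega)=c_bH\sqrt{S_\xi L}$. The parameter choice $c_b\sqrt{S_\xi L}\ge2$ then gives $b_h^k(\xi,\omega)\ge 2H$.

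Neither case presents a real obstacle. The only thing to check is that the events $\mathcal E_r,\mathcal E_P$ are stated with exactly the same denominator $(N+1)\wedge W$ as the bonus, which they are. Uniformity over all $(k,h,\xi,\omega)$ is inherited from the uniform statements in the definitions of $\mathcal E_r$ and $\mathcal E_P$, so no further union bound is needed.
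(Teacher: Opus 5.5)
Your proof is correct and matches the paper's argument. For $N_h^k(\xi,\omega)\ge1$ you bound the reward deviation and the $H$-scaled transition deviation on $\mathcal E_r\cap\mathcal E_P$ and absorb their sum into $b_h^k(\xi,\omega)$ via the concentration-absorption convention, and for $N_h^k(\xi,\omega)=0$ you read off $b_h^k(\xi,\omega)=c_bH\sqrt{S_\xi\log(C_{\log}S_\xi AHK/\delta)}\ge 2H$ from the cold-start condition on $c_b$; you are only more explicit than the paper about $c_r\le c_rH\sqrt{S_\xi}$ and about $(N_h^k+1)\wedge W$ lying in $\{1,\dots,W\}$.
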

\begin{proof}

For \(N_h^k(\xi,\omega)\ge1\), the event \(\mathcal E_r\) controls the
reward deviation and \(\mathcal E_P\) controls the transition deviation.
The concentration absorption convention then absorbs their sum into the
single bonus \(b_h^k(\xi,\omega)\). For \(N_h^k(\xi,\omega)=0\), the
cold-start convention gives
\[
b_h^k(\xi,\omega)
=c_bH\sqrt{S_\xi\log(C_{\log}S_\xi AHK/\delta)}
\ge2H.
\]
This proves the claim.

\end{proof}

\textbf{Measurability note. }
For each episode \(k\), the planning objects
\[
\hat r_h^k,\ \hat P_h^k,\ b_h^k,\ \widetilde Q_h^k,\ \widetilde V_h^k,\ \hat\pi_h^k
\]
are all constructed only from episode \(k\) window history before the
beginning of episode \(k\), using only the window history \(\mathcal W_k\subseteq\{1,\dots,k-1\}\).
Therefore, for any \(h\), the random function
\[
\widetilde V_{h+1}^k
\]
is measurable with respect to the beginning-of-episode \(k\) \(\sigma\)-algebra. This measurability is used only in
the martingale filtration definitions in the executed-policy bridge;
the transition concentration argument no longer treats
\(\widetilde V_{h+1}^k\) as a test function.

\subsubsection{Model Difference Bounds}
\label{app:main-proof-setup:model-difference-bounds}

The proof of Theorem~\ref{thm:dynamic-regret-fixed-window} uses the
conditional-mean drift bound below as the only model-comparison input in
this subsection.

\begin{remark}[Only the Conditional-Mean Drift Bound Is Used]
\label{rem:window-reference-model-drift}
The proof of Theorem~\ref{thm:dynamic-regret-fixed-window} uses only
Lemma~\ref{lem:conditional-mean-model-drift}. No separate drift bound for
an auxiliary window-average model is invoked.
\end{remark}

\begin{lemma}[Conditional-Mean Model Drift]

\label{lem:conditional-mean-model-drift}
If $N_h^k(\xi,\omega)\ge 1$, then
\[
|\breve r_h^k(\xi,\omega)-r_h^{(k)}(\xi,\omega)|\le D_k,
\]
and
\[
\|\breve P_h^k(\cdot\mid\xi,\omega)-P_h^{(k)}(\cdot\mid\xi,\omega)\|_1
\le \frac{D_k}{H}.
\]

\end{lemma}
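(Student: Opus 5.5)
The plan is to write $\breve r_h^k(\xi,\omega)-r_h^{(k)}(\xi,\omega)$ and $\breve P_h^k(\cdot\mid\xi,\omega)-P_h^{(k)}(\cdot\mid\xi,\omega)$ as convex combinations of per-episode differences over the window. Each per-episode difference is then controlled by telescoping adjacent drifts.

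Fix $(k,h,\xi,\omega)$ with $N:=N_h^k(\xi,\omega)\ge 1$. Set $I_j:=\mathbf 1\{(\xi_h^j,\omega_h^j)=(\xi,\omega)\}$ for $j\in\mathcal W_k$. By definition,
\[
\breve r_h^k(\xi,\omega)-r_h^{(k)}(\xi,\omega)
=\frac1N\sum_{j\in\mathcal W_k} I_j\bigl(r_h^{(j)}(\xi,\omega)-r_h^{(k)}(\xi,\omega)\bigr),
\]
since the weights $I_j/N$ are nonnegative and sum to one. The transition difference has the same form. By the triangle inequality (convexity of $|\cdot|$ and of $\|\cdot\|_1$), each difference is at most $\max_{j\in\mathcal W_k}$ of the corresponding per-episode difference.

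For any $j\in\mathcal W_k$, we have $\max\{1,k-W\}\le j\le k-1$. Telescoping gives
\[
|r_h^{(j)}(\xi,\omega)-r_h^{(k)}(\xi,\omega)|
\le\sum_{i=j}^{k-1}|r_h^{(i+1)}(\xi,\omega)-r_h^{(i)}(\xi,\omega)|
\le\sum_{i=j}^{k-1}\Delta_r^{(i)}
\le\sum_{i=\max\{1,k-W\}}^{k-1}\Delta_r^{(i)},
\]
and the transition analogue gives $\|P_h^{(j)}-P_h^{(k)}\|_1\le\sum_{i=\max\{1,k-W\}}^{k-1}\Delta_p^{(i)}$. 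Both window sums are dominated by the definition \eqref{eq:window-drift}. Nonnegativity of all terms gives $\sum_i\Delta_r^{(i)}\le D_k$. Multiplying the transition sum by $H$ gives $H\sum_i\Delta_p^{(i)}\le D_k$, that is, $\sum_i\Delta_p^{(i)}\le D_k/H$, which is the claimed transition bound.

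No step is a real obstacle. The only points needing care are that every $j$ in the window has its whole telescoping range $[j,k-1]$ inside the drift range defining $D_k$, and that the weights depend on the random visits while the bound is pathwise and uniform in $j$. Because of this uniformity, no measurability or concentration argument is needed.
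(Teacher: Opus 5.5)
Your proposal is correct and follows essentially the same argument as the paper. Both write the conditional-mean deviation as an average of per-episode differences over the hitting episodes in $\mathcal W_k$, apply the triangle inequality, telescope each difference through $\Delta_r^{(\ell)}$ (resp.\ $\Delta_p^{(\ell)}$) over $[j,k-1]\subseteq[\max\{1,k-W\},k-1]$, and conclude from $D_k\ge\sum_\ell\Delta_r^{(\ell)}$ and $D_k\ge H\sum_\ell\Delta_p^{(\ell)}$, with the bound holding pathwise so no measurability argument is needed.
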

\begin{proof}

Let
\[
N:=N_h^k(\xi,\omega)\ge 1.
\]
The episodes in the window hitting $(h,\xi,\omega)$ have indices
\[
j_1<j_2<\cdots<j_N,\qquad j_m\in\mathcal W_k.
\]
By definition, 
\[
\breve r_h^k(\xi,\omega)
=
\frac1N\sum_{m=1}^N r_h^{(j_m)}(\xi,\omega).
\]
Therefore
\[
\breve r_h^k(\xi,\omega)-r_h^{(k)}(\xi,\omega)
=
\frac1N\sum_{m=1}^N
\bigl(r_h^{(j_m)}(\xi,\omega)-r_h^{(k)}(\xi,\omega)\bigr).
\]
Taking absolute values and using the triangle inequality, 
\[
|\breve r_h^k(\xi,\omega)-r_h^{(k)}(\xi,\omega)|
\le
\frac1N\sum_{m=1}^N
|r_h^{(j_m)}(\xi,\omega)-r_h^{(k)}(\xi,\omega)|.
\]
For each $m$, because $j_m\in\mathcal W_k$, by telescoping the primitive drift definitions, 
\[
|r_h^{(j_m)}(\xi,\omega)-r_h^{(k)}(\xi,\omega)|
\le
\sum_{\ell=j_m}^{k-1}\Delta_r^{(\ell)}
\le
\sum_{\ell=\max\{1,k-W\}}^{k-1}\Delta_r^{(\ell)}.
\]
Thus
\[
|\breve r_h^k(\xi,\omega)-r_h^{(k)}(\xi,\omega)|
\le
\sum_{\ell=\max\{1,k-W\}}^{k-1}\Delta_r^{(\ell)}
\le D_k.
\]
Similarly, 
\[
\breve P_h^k(\cdot\mid\xi,\omega)
=
\frac1N\sum_{m=1}^N P_h^{(j_m)}(\cdot\mid\xi,\omega),
\]
Hence
\[
\|\breve P_h^k(\cdot\mid\xi,\omega)-P_h^{(k)}(\cdot\mid\xi,\omega)\|_1
\le
\frac1N\sum_{m=1}^N
\|P_h^{(j_m)}(\cdot\mid\xi,\omega)-P_h^{(k)}(\cdot\mid\xi,\omega)\|_1.
\]
Again by telescoping, 
\[
\|P_h^{(j_m)}(\cdot\mid\xi,\omega)-P_h^{(k)}(\cdot\mid\xi,\omega)\|_1
\le
\sum_{\ell=j_m}^{k-1}\Delta_p^{(\ell)}
\le
\sum_{\ell=\max\{1,k-W\}}^{k-1}\Delta_p^{(\ell)}.
\]
Thus
\[
\|\breve P_h^k(\cdot\mid\xi,\omega)-P_h^{(k)}(\cdot\mid\xi,\omega)\|_1
\le
\sum_{\ell=\max\{1,k-W\}}^{k-1}\Delta_p^{(\ell)}
\le
\frac{D_k}{H},
\]
because
\[
D_k=\sum_{\ell=\max\{1,k-W\}}^{k-1}
\bigl(\Delta_r^{(\ell)}+H\Delta_p^{(\ell)}\bigr)
\ge
H\sum_{\ell=\max\{1,k-W\}}^{k-1}\Delta_p^{(\ell)}.
\]

\textbf{Boundary remark for \(N_h^k(\xi,\omega)=0\).} When $N_h^k(\xi,\omega)=0$, $\breve r_h^k(\xi,\omega)$ and $\breve P_h^k(\cdot\mid\xi,\omega)$ follow the fixed convention values; the subsequent near-optimism and certificate decomposition arguments cover this regime through optimistic clipping and maximal bonus, while Lemma~\ref{lem:conditional-mean-model-drift} is used for the positive-count regime.

This proves the claim. 
\end{proof}

\subsubsection{Near-Optimism under Sliding-Window Estimation}
\label{app:main-proof:near-optimism}

\begin{lemma}[Single-Step Optimistic Domination, Including Zero-Count Cases]
\label{lem:single-step-optimistic-domination}

On the event \(\mathcal E_{\mathrm{conc}}\), under the parameter
convention of
Subsection~\ref{app:main-proof-setup:empirical-optimistic-model}, there
exists an absolute constant \(C_{\mathrm{loc}}>0\) such that, for any
\[
k\in[K],\quad h\in[H],\quad \beta_{h+1}\ge 0,
\]
if
\[
\widetilde V_{h+1}^k(\xi')
\ge
V_{h+1}^{\star,(k)}(\xi')-\beta_{h+1}D_k,
\qquad \forall \xi'\in\Xi,
\]
then, for every \(\xi\in\Xi\) and every \(\omega\in\Omega(\xi)\), we have
\[
\widetilde Q_h^k(\xi,\omega)
\ge
Q_h^{\star,(k)}(\xi,\omega)
-
\bigl(\beta_{h+1}+C_{\mathrm{loc}}\bigr)D_k.
\]

\end{lemma}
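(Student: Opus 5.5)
The plan is a direct one-step comparison. We write \(\widetilde Q_h^k(\xi,\omega)=\min\{H-h+1,X\}\), where \(X:=\hat r_h^k(\xi,\omega)+b_h^k(\xi,\omega)+\hat P_h^k(\cdot\mid\xi,\omega)^\top\widetilde V_{h+1}^k\) is the unclipped backup. We then show that each argument of the minimum is at least \(Q_h^{\star,(k)}(\xi,\omega)-(\beta_{h+1}+C_{\mathrm{loc}})D_k\), and we aim for the explicit constant \(C_{\mathrm{loc}}=2\).

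\emph{The clipping branch.} By Lemma~\ref{lem:soft-value-boundedness} we have \(\|V_{h+1}^{\star,(k)}\|_\infty\le H-h\), and \(r_h^{(k)}\in[0,1]\). Hence \(Q_h^{\star,(k)}(\xi,\omega)\le H-h+1\), so the constant \(H-h+1\) already dominates \(Q_h^{\star,(k)}\) with no loss. It therefore suffices to lower bound \(X\).

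\emph{Zero count, \(N_h^k(\xi,\omega)=0\).} Here we use only the cold-start part of Lemma~\ref{lem:unified-bonus-absorption}. The terms \(\hat r_h^k\ge0\) and \(\hat P_h^{k\top}\widetilde V_{h+1}^k\ge0\) are nonnegative by Lemma~\ref{lem:optimistic-value-boundedness}. Together with \(b_h^k\ge 2H\), this gives \(X\ge 2H\ge H-h+1\ge Q_h^{\star,(k)}(\xi,\omega)\). So in this case \(\widetilde Q_h^k\ge Q_h^{\star,(k)}\) outright, and no concentration statement is needed.

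\emph{Positive count, \(N_h^k(\xi,\omega)\ge1\).} Write \(V^\star:=V_{h+1}^{\star,(k)}\). We decompose
\[
X-Q_h^{\star,(k)}
= b_h^k+(\hat r_h^k-\breve r_h^k)+(\breve r_h^k-r_h^{(k)})
+\hat P_h^{k\top}(\widetilde V_{h+1}^k-V^\star)
+(\hat P_h^k-\breve P_h^k)^\top V^\star
+(\breve P_h^k-P_h^{(k)})^\top V^\star ,
\]
with all model quantities evaluated at \((\xi,\omega)\). We bound the five terms after \(b_h^k\) as follows.
\begin{itemize}
\item \(\hat P_h^{k\top}(\widetilde V_{h+1}^k-V^\star)\ge-\beta_{h+1}D_k\). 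This follows from the induction hypothesis, since \(\hat P_h^k(\cdot\mid\xi,\omega)\) is a probability vector.
\item \((\hat r_h^k-\breve r_h^k)+(\hat P_h^k-\breve P_h^k)^\top V^\star\ge-b_h^k\). By H\"older with \(\|V^\star\|_\infty\le H\), the sum is at least \(-\bigl(|\hat r_h^k-\breve r_h^k|+H\|\hat P_h^k-\breve P_h^k\|_1\bigr)\). On \(\mathcal E_{\mathrm{conc}}\), Lemma~\ref{lem:unified-bonus-absorption} bounds the bracket by \(b_h^k\), which cancels the bonus. Because the concentration is in \(\ell_1\), it does not matter that the test function is \(V^\star\).
\item \((\breve r_h^k-r_h^{(k)})+(\breve P_h^k-P_h^{(k)})^\top V^\star\ge-D_k-H\cdot D_k/H=-2D_k\). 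This uses Lemma~\ref{lem:conditional-mean-model-drift} together with \(\|V^\star\|_\infty\le H\).
\end{itemize}
Summing gives \(X\ge Q_h^{\star,(k)}(\xi,\omega)-(\beta_{h+1}+2)D_k\). Taking the minimum with the clipping branch yields the claim with \(C_{\mathrm{loc}}=2\).

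The only delicate point is the bookkeeping that keeps the two sources of error separate. The statistical error is measured against the conditional-mean model \(\breve r,\breve P\) and is absorbed by the bonus. The nonstationary error is measured between \(\breve r,\breve P\) and the current model \(r^{(k)},P^{(k)}\) and is charged to \(D_k\). A second point is making sure the zero-count case is handled without invoking concentration, which is exactly what the cold-start condition \(b_h^k\ge2H\) provides. I expect no other obstacle.
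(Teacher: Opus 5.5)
Your proposal is correct and follows the paper's proof essentially step for step. It uses the same split of the positive-count error into a statistical part against $\breve r_h^k,\breve P_h^k$, absorbed by the bonus via Lemma~\ref{lem:unified-bonus-absorption}, and a drift part charged to $D_k$ via Lemma~\ref{lem:conditional-mean-model-drift}; it also uses the same clipping argument through $Q_h^{\star,(k)}\le H-h+1$ and the same cold-start case through $b_h^k\ge 2H$, yielding $C_{\mathrm{loc}}=2$.
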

\begin{proof}

Fix \(k,h,\xi,\omega\), and write
\[
N:=N_h^k(\xi,\omega),
\qquad
U_h^k(\xi,\omega):=
\hat r_h^k(\xi,\omega)
+
b_h^k(\xi,\omega)
+
\hat P_h^k(\cdot\mid\xi,\omega)^\top \widetilde V_{h+1}^k .
\]
By the induction hypothesis and because
\(\hat P_h^k(\cdot\mid\xi,\omega)\in\Delta(\Xi)\),
\[
\hat P_h^k(\cdot\mid\xi,\omega)^\top \widetilde V_{h+1}^k
\ge
\hat P_h^k(\cdot\mid\xi,\omega)^\top V_{h+1}^{\star,(k)}
-\beta_{h+1}D_k .
\]
Hence
\[
U_h^k(\xi,\omega)
\ge
\hat r_h^k(\xi,\omega)
+
b_h^k(\xi,\omega)
+
\hat P_h^k(\cdot\mid\xi,\omega)^\top V_{h+1}^{\star,(k)}
-\beta_{h+1}D_k .
\]
First suppose \(N\ge1\). We compare the last display with
\[
Q_h^{\star,(k)}(\xi,\omega)
=
r_h^{(k)}(\xi,\omega)
+
P_h^{(k)}(\cdot\mid\xi,\omega)^\top V_{h+1}^{\star,(k)} .
\]
Decompose the reward and transition errors as
\[
\hat r_h^k-r_h^{(k)}
=
(\hat r_h^k-\breve r_h^k)+(\breve r_h^k-r_h^{(k)})
\]
and
\[
\hat P_h^k{}^\top V_{h+1}^{\star,(k)}
-
P_h^{(k)\top}V_{h+1}^{\star,(k)}
=
(\hat P_h^k-\breve P_h^k)^\top V_{h+1}^{\star,(k)}
+
(\breve P_h^k-P_h^{(k)})^\top V_{h+1}^{\star,(k)} .
\]
On \(\mathcal E_{\mathrm{conc}}\), Lemma~\ref{lem:unified-bonus-absorption}
gives
\[
\bigl|\hat r_h^k-\breve r_h^k\bigr|
+
H\|\hat P_h^k-\breve P_h^k\|_1
\le
b_h^k(\xi,\omega).
\]
Moreover, by Lemma~\ref{lem:conditional-mean-model-drift},
\[
\bigl|\breve r_h^k-r_h^{(k)}\bigr|\le D_k,
\qquad
\|\breve P_h^k-P_h^{(k)}\|_1\le \frac{D_k}{H}.
\]
By Lemma~\ref{lem:soft-value-boundedness},
\(\|V_{h+1}^{\star,(k)}\|_\infty\le H-h\le H\). Therefore, we obtain
\[
\begin{aligned}
&\hat r_h^k(\xi,\omega)-r_h^{(k)}(\xi,\omega)
+
\bigl(\hat P_h^k(\cdot\mid\xi,\omega)
-
P_h^{(k)}(\cdot\mid\xi,\omega)\bigr)^\top
V_{h+1}^{\star,(k)}
\\
&\quad\ge
-\bigl|\hat r_h^k-\breve r_h^k\bigr|
-
H\|\hat P_h^k-\breve P_h^k\|_1
-
\bigl|\breve r_h^k-r_h^{(k)}\bigr|
-
H\|\breve P_h^k-P_h^{(k)}\|_1
\\
&\quad\ge
-b_h^k(\xi,\omega)-2D_k .
\end{aligned}
\]
Substituting this bound into the lower bound for \(U_h^k(\xi,\omega)\)
yields
\[
U_h^k(\xi,\omega)
\ge
Q_h^{\star,(k)}(\xi,\omega)
-
\bigl(\beta_{h+1}+2\bigr)D_k .
\]
Choose an absolute constant \(C_{\mathrm{loc}}\ge2\). Since
Lemma~\ref{lem:soft-value-boundedness} also gives
\(Q_h^{\star,(k)}(\xi,\omega)\le H-h+1\), and
\[
\widetilde Q_h^k(\xi,\omega)
=
\min\{H-h+1,U_h^k(\xi,\omega)\},
\]
we get
\[
\widetilde Q_h^k(\xi,\omega)
\ge
Q_h^{\star,(k)}(\xi,\omega)
-
\bigl(\beta_{h+1}+C_{\mathrm{loc}}\bigr)D_k .
\]
It remains to consider \(N=0\). In this case, neither concentration nor
Lemma~\ref{lem:conditional-mean-model-drift} is used. By the zero-count
default convention,
\[
\hat r_h^k(\xi,\omega)=0,
\qquad
\hat P_h^k(\cdot\mid\xi,\omega)=P_h^\circ(\cdot\mid\xi,\omega).
\]
By Lemma~\ref{lem:optimistic-value-boundedness},
\(\widetilde V_{h+1}^k\ge0\), and hence
\[
U_h^k(\xi,\omega)
=
b_h^k(\xi,\omega)
+
P_h^\circ(\cdot\mid\xi,\omega)^\top \widetilde V_{h+1}^k
\ge
b_h^k(\xi,\omega).
\]
By Lemma~\ref{lem:unified-bonus-absorption}, in the zero-count case
\[
b_h^k(\xi,\omega)\ge2H\ge H-h+1 .
\]
Therefore
\[
U_h^k(\xi,\omega)\ge H-h+1,
\]
and so
\[
\widetilde Q_h^k(\xi,\omega)
=
\min\{H-h+1,U_h^k(\xi,\omega)\}
=
H-h+1 .
\]
By Lemma~\ref{lem:soft-value-boundedness},
\(Q_h^{\star,(k)}(\xi,\omega)\le H-h+1\). Therefore
\[
\widetilde Q_h^k(\xi,\omega)
\ge
Q_h^{\star,(k)}(\xi,\omega)
\ge
Q_h^{\star,(k)}(\xi,\omega)
-
\bigl(\beta_{h+1}+C_{\mathrm{loc}}\bigr)D_k .
\]
Combining the cases \(N\ge1\) and \(N=0\) proves the claim.

\end{proof}
\begin{proposition}[Near-Optimism under Window Drift]
\label{prop:near-optimism-window-drift}

On the event \(\mathcal E_{\mathrm{conc}}\), there exists an absolute constant \(C_{\mathrm{opt}}>0\) such that, uniformly over all
\[
k\in[K],\quad h\in[H],\quad \xi\in\Xi,
\]
we have
\begin{equation}
\label{eq:near-optimism-bound}
\widetilde V_h^k(\xi)
\ge
V_h^{\star,(k)}(\xi)
-
C_{\mathrm{opt}}(H-h+1)D_k.
\end{equation}

\end{proposition}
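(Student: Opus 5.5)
We prove the proposition by backward induction on $h$, working throughout on the event $\mathcal E_{\mathrm{conc}}$. We take $C_{\mathrm{opt}}:=C_{\mathrm{loc}}$, the absolute constant of Lemma~\ref{lem:single-step-optimistic-domination}. The inductive claim is that, for every $k$ and every $\xi'\in\Xi$,
\[
\widetilde V_{h}^k(\xi')\ge V_{h}^{\star,(k)}(\xi')-C_{\mathrm{loc}}(H-h+1)D_k .
\]
The base case $h=H+1$ holds trivially, because $\widetilde V_{H+1}^k\equiv V_{H+1}^{\star,(k)}\equiv0$ and the right-hand factor $(H-H)D_k$ vanishes.

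For the inductive step, fix $k$ and $h\le H$, and assume the claim at stage $h+1$ with $\beta_{h+1}:=C_{\mathrm{loc}}(H-h)\ge0$. Lemma~\ref{lem:single-step-optimistic-domination} applies verbatim. It covers both the positive-count regime, through concentration and Lemma~\ref{lem:conditional-mean-model-drift}, and the zero-count regime, through the cold-start bonus. It gives, for all $\xi$ and all $\omega\in\Omega(\xi)$,
\[
\widetilde Q_h^k(\xi,\omega)\ge Q_h^{\star,(k)}(\xi,\omega)-\bigl(C_{\mathrm{loc}}(H-h)+C_{\mathrm{loc}}\bigr)D_k
= Q_h^{\star,(k)}(\xi,\omega)-C_{\mathrm{loc}}(H-h+1)D_k .
\]

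Next we pass from $Q$ to $V$ using the shared log-sum-exp form. By \eqref{eq:optimistic-value}, $\widetilde V_h^k(\xi)=\Phi_\xi(\widetilde Q_h^k(\xi,\cdot))$. By \eqref{eq:kl-bellman-recursion}, $V_h^{\star,(k)}(\xi)=\Phi_\xi(Q_h^{\star,(k)}(\xi,\cdot))$, with the same $\pi_{\rm ref}$ and $\alpha$. The entrywise lower bound above has the form $q_\omega\ge q'_\omega-\varepsilon$, where $\varepsilon=C_{\mathrm{loc}}(H-h+1)D_k$ is uniform in $\omega$. The monotonicity and translation statement of Lemma~\ref{lem:log-sum-exp-monotone-translation} therefore gives $\widetilde V_h^k(\xi)\ge V_h^{\star,(k)}(\xi)-C_{\mathrm{loc}}(H-h+1)D_k$. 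This closes the induction.

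The one point that needs checking is that $\beta_{h+1}$ is nonnegative and deterministic given $k$, and that $D_k$ is the same quantity at every stage of episode $k$. Both hold, because $D_k$ depends only on $k$ and $W$. As a result, the stage-wise losses accumulate additively, $C_{\mathrm{loc}}$ per stage, rather than compounding. The clipping inside $\widetilde Q$ does not interfere, since it is already handled in Lemma~\ref{lem:single-step-optimistic-domination} via $Q^{\star,(k)}_h\le H-h+1$. The bound is uniform over $k$, $h$ and $\xi$ because everything holds on the single event $\mathcal E_{\mathrm{conc}}$.
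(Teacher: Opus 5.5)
Your proposal is correct and takes the same route as the paper's proof. Both set \(C_{\mathrm{opt}}:=C_{\mathrm{loc}}\), run backward induction from the trivial stage \(H+1\), and invoke Lemma~\ref{lem:single-step-optimistic-domination} with \(\beta_{h+1}=C_{\mathrm{loc}}(H-h)\) to obtain the entrywise bound \(\widetilde Q_h^k\ge Q_h^{\star,(k)}-C_{\mathrm{loc}}(H-h+1)D_k\); they then pass from \(\widetilde Q\) to \(\widetilde V\) via the monotonicity-and-translation property in Lemma~\ref{lem:log-sum-exp-monotone-translation}.
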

\begin{proof}

We prove the claim by backward induction on \(h\). The boundary case \(h=H+1\) satisfies 
\(\widetilde V_{H+1}^k(\xi)=V_{H+1}^{\star,(k)}(\xi)=0\). 

Fix
\[
C_{\mathrm{opt}}:=C_{\mathrm{loc}}.
\]
Assume that the claim has already been established for \(h+1\le H+1\), namely
\[
\widetilde V_{h+1}^k(\xi')
\ge
V_{h+1}^{\star,(k)}(\xi')
-
C_{\mathrm{opt}}(H-h)D_k,
\qquad \forall \xi'\in\Xi.
\]
By Lemma~\ref{lem:single-step-optimistic-domination}, taking \(\beta_{h+1}=C_{\mathrm{opt}}(H-h)\), for any \((\xi,\omega)\) 
\[
\widetilde Q_h^k(\xi,\omega)
\ge
Q_h^{\star,(k)}(\xi,\omega)
-
\bigl(C_{\mathrm{opt}}(H-h)+C_{\mathrm{loc}}\bigr)D_k.
\]
Since \(C_{\mathrm{opt}}=C_{\mathrm{loc}}\), the right-hand side is
\[
Q_h^{\star,(k)}(\xi,\omega)
-C_{\mathrm{opt}}(H-h+1)D_k.
\]
By Lemma~\ref{lem:log-sum-exp-monotone-translation}, applied with
\[
q_\omega=\widetilde Q_h^k(\xi,\omega),
\qquad
q'_\omega=Q_h^{\star,(k)}(\xi,\omega),
\qquad
\varepsilon=C_{\mathrm{opt}}(H-h+1)D_k,
\]
we obtain
\[
\widetilde V_h^k(\xi)
\ge
V_h^{\star,(k)}(\xi)
-
C_{\mathrm{opt}}(H-h+1)D_k.
\]
This closes the induction and proves the claim.
\end{proof}

\subsubsection{Certificate Decomposition}
\label{app:main-proof:certificate-decomposition}

This subsection first establishes a pathwise bridge identity with a unified remainder and then derives the certificate upper bound on the event \(\mathcal E_{\mathrm{conc}}\). The martingale-control argument uses the same conditional-mean identity, and Theorem~\ref{thm:dynamic-regret-fixed-window} invokes the resulting certificate bound on \(\mathcal E_{\mathrm{tot}}\).

\begin{lemma}[Pathwise bridge identity and unified remainder]
\label{lem:pathwise-bridge-identity}

Fix an arbitrary episode \(k\in[K]\). Define
\[
\Gamma_h^k(\xi):=
\widetilde V_h^k(\xi)-V_h^{\hat\pi^k,(k)}(\xi),
\qquad
\Gamma_{H+1}^k\equiv 0.
\]
Along the true trajectory \((\xi_h^k,\omega_h^k,\xi_{h+1}^k)\), define
\[
G_h^k(\xi,\omega):=
\widetilde Q_h^k(\xi,\omega)
-
r_h^{(k)}(\xi,\omega)
-
P_h^{(k)}(\cdot\mid\xi,\omega)^\top V_{h+1}^{\hat\pi^k,(k)},
\]
\[
a_h^k:=
\Gamma_h^k(\xi_h^k)-G_h^k(\xi_h^k,\omega_h^k),
\]
\[
t_h^k:=
P_h^{(k)}(\cdot\mid\xi_h^k,\omega_h^k)^\top \Gamma_{h+1}^k
-
\Gamma_{h+1}^k(\xi_{h+1}^k),
\]
and
\[
\varepsilon_h^k:=a_h^k+t_h^k,
\qquad
M_k:=\sum_{h=1}^H \varepsilon_h^k.
\]
Then, for every \(h\in[H]\) and every \(\xi\in\Xi\),
\[
\Gamma_h^k(\xi)
=
\sum_{\omega\in\Omega(\xi)}
\hat\pi_h^k(\omega\mid\xi)G_h^k(\xi,\omega).
\]
Consequently, along the realized trajectory,
\[
a_h^k
=
\sum_{\omega\in\Omega(\xi_h^k)}
\hat\pi_h^k(\omega\mid\xi_h^k)G_h^k(\xi_h^k,\omega)
-
G_h^k(\xi_h^k,\omega_h^k).
\]
Moreover,
\[
\Gamma_h^k(\xi_h^k)
=
G_h^k(\xi_h^k,\omega_h^k)+a_h^k,
\]
and
\[
\Gamma_h^k(\xi_h^k)
=
\Bigl[
\widetilde Q_h^k(\xi_h^k,\omega_h^k)
-
r_h^{(k)}(\xi_h^k,\omega_h^k)
-
P_h^{(k)}(\cdot\mid\xi_h^k,\omega_h^k)^\top \widetilde V_{h+1}^k
\Bigr]
+
\Gamma_{h+1}^k(\xi_{h+1}^k)
+
\varepsilon_h^k.
\]

\end{lemma}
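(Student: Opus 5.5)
The plan is to verify each identity directly from the definitions. The only substantive input is the policy-value recursion of Subsection~\ref{app:problem-setup:policy-value}, combined with the softmax identity linking $\hat\pi^k$, $\widetilde Q_h^k$ and $\widetilde V_h^k$.

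\textbf{Step 1 (the per-state identity).} Fix $h$ and $\xi$. Since $\hat\pi_h^k(\cdot\mid\xi)$ is the softmax of $\widetilde Q_h^k(\xi,\cdot)$ relative to $\pi_{\rm ref}$, full support of $\pi_{\rm ref}$ gives, for every $\omega\in\Omega(\xi)$,
\[
\alpha\log\frac{\hat\pi_h^k(\omega\mid\xi)}{\pi_{\rm ref}(\omega\mid\xi)}=\widetilde Q_h^k(\xi,\omega)-\widetilde V_h^k(\xi).
\]
Averaging over $\omega\sim\hat\pi_h^k(\cdot\mid\xi)$ then yields
\[
\widetilde V_h^k(\xi)=\sum_\omega\hat\pi_h^k(\omega\mid\xi)\Bigl[\widetilde Q_h^k(\xi,\omega)-\alpha\log\frac{\hat\pi_h^k(\omega\mid\xi)}{\pi_{\rm ref}(\omega\mid\xi)}\Bigr].
\]
The policy-value recursion gives
\[
V_h^{\hat\pi^k,(k)}(\xi)=\sum_\omega\hat\pi_h^k(\omega\mid\xi)\Bigl[r_h^{(k)}(\xi,\omega)+P_h^{(k)}(\cdot\mid\xi,\omega)^\top V_{h+1}^{\hat\pi^k,(k)}-\alpha\log\frac{\hat\pi_h^k(\omega\mid\xi)}{\pi_{\rm ref}(\omega\mid\xi)}\Bigr].
\]
Subtracting, the log-ratio terms cancel exactly, and what remains is $\Gamma_h^k(\xi)=\sum_\omega\hat\pi_h^k(\omega\mid\xi)G_h^k(\xi,\omega)$. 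This cancellation is the only delicate point: it needs the \emph{same} policy $\hat\pi^k$ in both the optimistic soft value and the evaluated value. The clipping in $\widetilde Q$ causes no problem, because the identity holds for the clipped $\widetilde Q$ itself.

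\textbf{Step 2 (the trajectory forms).} Evaluating Step 1 at $\xi=\xi_h^k$ and using the definition of $a_h^k$ gives the displayed formula for $a_h^k$. Rearranging the definition of $a_h^k$ gives $\Gamma_h^k(\xi_h^k)=G_h^k(\xi_h^k,\omega_h^k)+a_h^k$.

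\textbf{Step 3 (the one-step recursion).} Since $\Gamma_{h+1}^k=\widetilde V_{h+1}^k-V_{h+1}^{\hat\pi^k,(k)}$, we can write
\[
G_h^k(\xi,\omega)=\Bigl[\widetilde Q_h^k(\xi,\omega)-r_h^{(k)}(\xi,\omega)-P_h^{(k)}(\cdot\mid\xi,\omega)^\top\widetilde V_{h+1}^k\Bigr]+P_h^{(k)}(\cdot\mid\xi,\omega)^\top\Gamma_{h+1}^k.
\]
At $(\xi_h^k,\omega_h^k)$ the last term equals $\Gamma_{h+1}^k(\xi_{h+1}^k)+t_h^k$ by the definition of $t_h^k$. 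Adding $a_h^k$ and using $\varepsilon_h^k=a_h^k+t_h^k$ gives the final displayed identity. The case $h=H$ is consistent because $\Gamma_{H+1}^k\equiv0$, so $t_H^k=0$.

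No probabilistic argument is needed here; all three identities are pathwise algebra. I expect no real obstacle beyond tracking signs and keeping $\widetilde V_{h+1}^k$ and $V_{h+1}^{\hat\pi^k,(k)}$ separate when splitting $G_h^k$.
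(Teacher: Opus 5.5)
Your proof is correct and follows essentially the same route as the paper. The paper gets the representation of $\widetilde V_h^k$ from the Fenchel equality in Proposition~\ref{prop:kl-bellman-optimality} applied to the softmax maximizer $\hat\pi_h^k$; this is equivalent to your direct averaging of the log-ratio identity. It then subtracts the policy-value recursion, rearranges the definition of $a_h^k$, and splits $G_h^k$ through $\Gamma_{h+1}^k$ and $t_h^k$, exactly as you do.
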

\begin{proof}

For any state \(\xi\), apply
Proposition~\ref{prop:kl-bellman-optimality} to the vector
\(q_\omega=\widetilde Q_h^k(\xi,\omega)\),
\(\omega\in\Omega(\xi)\). By the definition of \(\hat\pi_h^k\), this
policy is the corresponding softmax maximizer, and the Fenchel equality
in Proposition~\ref{prop:kl-bellman-optimality} gives
\[
\widetilde V_h^k(\xi)
=
\sum_{\omega\in\Omega(\xi)}
\hat\pi_h^k(\omega\mid\xi)
\left[
\widetilde Q_h^k(\xi,\omega)
-
\alpha\log\frac{\hat\pi_h^k(\omega\mid\xi)}{\pi_{\rm ref}(\omega\mid\xi)}
\right].
\]
The policy value satisfies
\[
V_h^{\hat\pi^k,(k)}(\xi)
=
\sum_{\omega\in\Omega(\xi)}
\hat\pi_h^k(\omega\mid\xi)
\Bigl(
r_h^{(k)}(\xi,\omega)
+
P_h^{(k)}(\cdot\mid\xi,\omega)^\top V_{h+1}^{\hat\pi^k,(k)}
-
\alpha\log\frac{\hat\pi_h^k(\omega\mid\xi)}{\pi_{\rm ref}(\omega\mid\xi)}
\Bigr).
\]
Subtracting the two displays gives
\[
\Gamma_h^k(\xi)
=
\sum_{\omega}
\hat\pi_h^k(\omega\mid\xi)\,G_h^k(\xi,\omega).
\]
For the action term, the definition of \(a_h^k\) gives
\[
\Gamma_h^k(\xi_h^k)=G_h^k(\xi_h^k,\omega_h^k)+a_h^k.
\]
On the other hand, 
\[
G_h^k(\xi_h^k,\omega_h^k)
=
\widetilde Q_h^k(\xi_h^k,\omega_h^k)
-
r_h^{(k)}(\xi_h^k,\omega_h^k)
-
P_h^{(k)}(\cdot\mid\xi_h^k,\omega_h^k)^\top \widetilde V_{h+1}^k
+
P_h^{(k)}(\cdot\mid\xi_h^k,\omega_h^k)^\top\Gamma_{h+1}^k.
\]
By the definition of \(t_h^k\),
\[
P_h^{(k)}(\cdot\mid\xi_h^k,\omega_h^k)^\top\Gamma_{h+1}^k
=
\Gamma_{h+1}^k(\xi_{h+1}^k)+t_h^k,
\]
we obtain the identity. This proves the claim. 

\end{proof}
\begin{lemma}[On-Event Pathwise Certificate Bound]
\label{lem:pathwise-certificate-bound}

On the event \(\mathcal E_{\mathrm{conc}}\), there exists an absolute constant \(C_{\mathrm{cert}}>0\) such that, uniformly over all \(k\in[K]\) and \(h\in[H]\), we have
\begin{equation}
\label{eq:certificate-decomposition-step}
\Gamma_h^k(\xi_h^k)
\le
2\,b_h^k(\xi_h^k,\omega_h^k)
+
\Gamma_{h+1}^k(\xi_{h+1}^k)
+
\varepsilon_h^k
+
C_{\mathrm{cert}}D_k.
\end{equation}
Therefore,
\begin{equation}
\label{eq:certificate-decomposition-bound}
\widetilde V_1^k(\xi_1^k)-V_1^{\hat\pi^k,(k)}(\xi_1^k)
\le
2\sum_{h=1}^H b_h^k(\xi_h^k,\omega_h^k)
+
M_k
+
C_{\mathrm{cert}}H D_k.
\end{equation}

\end{lemma}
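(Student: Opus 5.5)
The plan is to start from the pathwise identity of Lemma~\ref{lem:pathwise-bridge-identity}:
\[
\Gamma_h^k(\xi_h^k)=\Bigl[\widetilde Q_h^k(\xi_h^k,\omega_h^k)-r_h^{(k)}(\xi_h^k,\omega_h^k)-P_h^{(k)}(\cdot\mid\xi_h^k,\omega_h^k)^\top\widetilde V_{h+1}^k\Bigr]+\Gamma_{h+1}^k(\xi_{h+1}^k)+\varepsilon_h^k .
\]
The whole task then reduces to bounding the bracketed one-step Bellman residual by \(2b_h^k(\xi_h^k,\omega_h^k)+C_{\mathrm{cert}}D_k\) on \(\mathcal E_{\mathrm{conc}}\). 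Once this is done, \eqref{eq:certificate-decomposition-step} follows immediately. Summing it over \(h=1,\dots,H\) and using \(\Gamma_{H+1}^k\equiv0\) and \(M_k=\sum_h\varepsilon_h^k\) yields \eqref{eq:certificate-decomposition-bound} with the drift term \(C_{\mathrm{cert}}HD_k\).

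To bound the residual at a fixed \((\xi,\omega)=(\xi_h^k,\omega_h^k)\), I will drop the clipping in the upper direction: \(\widetilde Q_h^k\le \hat r_h^k+b_h^k+\hat P_h^{k\top}\widetilde V_{h+1}^k\). This gives
\[
\text{residual}\le b_h^k+(\hat r_h^k-r_h^{(k)})+(\hat P_h^k-P_h^{(k)})^\top\widetilde V_{h+1}^k .
\]
I then split into two cases.

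\textbf{Case \(N_h^k(\xi,\omega)\ge1\).} I insert the conditional-mean model \(\breve r_h^k,\breve P_h^k\) exactly as in the proof of Lemma~\ref{lem:single-step-optimistic-domination}. By Lemma~\ref{lem:optimistic-value-boundedness}, \(\|\widetilde V_{h+1}^k\|_\infty\le H\), so H\"older's inequality applies to the transition terms. The statistical part \(|\hat r-\breve r|+H\|\hat P-\breve P\|_1\) is at most \(b_h^k\) by Lemma~\ref{lem:unified-bonus-absorption}. The drift part \(|\breve r-r^{(k)}|+H\|\breve P-P^{(k)}\|_1\) is at most \(2D_k\) by Lemma~\ref{lem:conditional-mean-model-drift}. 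Together the residual is at most \(2b_h^k+2D_k\). A point worth noting is that \(\widetilde V_{h+1}^k\) is data-dependent. This causes no difficulty, because we use only the \(\ell_1\) concentration of the transition estimate together with a sup-norm bound, never concentration for a fixed test function.

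\textbf{Case \(N_h^k(\xi,\omega)=0\).} Here I use the clip in the other direction. We have \(\widetilde Q_h^k\le H-h+1\), and \(r_h^{(k)}+P_h^{(k)\top}\widetilde V_{h+1}^k\ge0\) since all terms are nonnegative. Hence the residual is at most \(H\le 2H\le b_h^k\) by the cold-start half of Lemma~\ref{lem:unified-bonus-absorption}. This is again bounded by \(2b_h^k\). Taking \(C_{\mathrm{cert}}=2\) therefore covers both cases.

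The main obstacle I anticipate is the bookkeeping of the clipping and of the zero-count convention. The residual must be controlled in the positive-count case using the unclipped upper bound, and in the cold-start case purely through the bonus magnitude, without invoking concentration where it is not asserted. Beyond that, one must make sure that the residual is evaluated against the true episode-\(k\) model \(r^{(k)},P^{(k)}\), which is exactly what turns the drift into the \(D_k\) term. The remaining steps are routine telescoping.
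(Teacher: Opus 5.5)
Your proposal is correct and follows the paper's proof essentially step for step. You start from the bridge identity and use the unclipped upper bound on \(\widetilde Q_h^k\). You then split the residual through the conditional-mean model and absorb it via Lemma~\ref{lem:unified-bonus-absorption} and Lemma~\ref{lem:conditional-mean-model-drift} into \(2b_h^k+2D_k\), and telescope using \(\Gamma_{H+1}^k\equiv0\). The only difference is cosmetic, in the zero-count case. You bound the residual via the clip \(\widetilde Q_h^k\le H-h+1\) and nonnegativity of \(r_h^{(k)}+P_h^{(k)\top}\widetilde V_{h+1}^k\). The paper instead keeps the unclipped backup and uses \(\hat r_h^k=0\) and \(\hat P_h^{k\top}\widetilde V_{h+1}^k\le H\). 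Both are absorbed by \(b_h^k\ge 2H\).
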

\begin{proof}

By the optimistic backup and Lemma~\ref{lem:pathwise-bridge-identity}, 
\[
\Gamma_h^k(\xi_h^k)
\le
\hat r_h^k-r_h^{(k)}
+
b_h^k
+
(\hat P_h^k-P_h^{(k)})^\top \widetilde V_{h+1}^k
+
\Gamma_{h+1}^k(\xi_{h+1}^k)
+
\varepsilon_h^k,
\]
where all unindexed terms are evaluated at \((h,\xi_h^k,\omega_h^k)\). 

Let
\[
N:=N_h^k(\xi_h^k,\omega_h^k).
\]
If \(N\ge1\), decompose
\[
\hat r_h^k-r_h^{(k)}
=
(\hat r_h^k-\breve r_h^k)+(\breve r_h^k-r_h^{(k)}),
\]
\[
(\hat P_h^k-P_h^{(k)})^\top \widetilde V_{h+1}^k
=
(\hat P_h^k-\breve P_h^k)^\top \widetilde V_{h+1}^k
+
(\breve P_h^k-P_h^{(k)})^\top \widetilde V_{h+1}^k.
\]
On the event \(\mathcal E_{\mathrm{conc}}\), by
Lemmas~\ref{lem:unified-bonus-absorption}
and~\ref{lem:conditional-mean-model-drift}, and by
Lemma~\ref{lem:optimistic-value-boundedness},
\(\|\widetilde V_{h+1}^k\|_\infty\le H\), for \(N\ge1\),
\[
\hat r_h^k-r_h^{(k)}
+
(\hat P_h^k-P_h^{(k)})^\top \widetilde V_{h+1}^k
\le
b_h^k(\xi_h^k,\omega_h^k)+2D_k.
\]
If \(N=0\), Lemma~\ref{lem:conditional-mean-model-drift} is not invoked.
Instead, by the default convention \(\hat r_h^k(\xi_h^k,\omega_h^k)=0\)
and \(r_h^{(k)}\in[0,1]\), we get
\(\hat r_h^k-r_h^{(k)} = -r_h^{(k)} \le 0\).
At the same time, \(\hat P_h^k\) and
\(P_h^{(k)}\) are probability distributions, and by Lemma~\ref{lem:optimistic-value-boundedness},
\(0\le\widetilde V_{h+1}^k\le H\), therefore
\[
(\hat P_h^k-P_h^{(k)})^\top \widetilde V_{h+1}^k
\le
(\hat P_h^k)^\top \widetilde V_{h+1}^k
\le H.
\]
Hence
\[
\hat r_h^k-r_h^{(k)}
+
(\hat P_h^k-P_h^{(k)})^\top \widetilde V_{h+1}^k
\le
0+H
=
H,
\]
while Lemma~\ref{lem:unified-bonus-absorption} gives
\(b_h^k\ge2H\ge H\).
Therefore the cold-start contribution is directly absorbed
by the same bonus term.

Therefore, in both cases, and using \(D_k\ge0\) for the zero-count case,
the unified concentration interface gives, for example with \(C_0=2\),
\[
\hat r_h^k-r_h^{(k)}
+
(\hat P_h^k-P_h^{(k)})^\top \widetilde V_{h+1}^k
\le
b_h^k(\xi_h^k,\omega_h^k)+C_0D_k.
\]
Substituting back into the initial inequality and taking
\(C_{\mathrm{cert}}\ge C_0\) gives the certificate bound. Summing over
\(h=1,\dots,H\), telescoping, and using \(\Gamma_{H+1}^k\equiv0\) gives
the episode-wise bound. This proves the claim.
\end{proof}

\subsubsection{Summed Bonus Bound}
\label{app:main-proof:summed-bonus}

\begin{proposition}[Summed Bonus Bound under Overlapping Sliding-Window Counts]
\label{prop:summed-bonus-bound}

For any \(1\le W\le K\), consider the sliding-window bonus defined in Subsection~\ref{app:main-proof-setup:empirical-optimistic-model}.
\[
b_h^k(\xi,\omega)
=
c_b H\sqrt{
\frac{S_\xi\log(C_{\log}S_\xi AHK/\delta)}{(N_h^k(\xi,\omega)+1)\wedge W}
}.
\]
Then there exists an absolute constant \(C_{\mathrm{bon}}>0\) such that
\begin{equation}
\label{eq:summed-bonus-bound}
\sum_{k=1}^K\sum_{h=1}^H
b_h^k(\xi_h^k,\omega_h^k)
\le
C_{\mathrm{bon}}\,H^2 K S_\xi
\sqrt{\frac{A\log(C_{\log}S_\xi AHK/\delta)}{W}}.
\end{equation}
Up to absolute constants,
\[
\sum_{k=1}^K\sum_{h=1}^H
b_h^k(\xi_h^k,\omega_h^k)
=
\widetilde O\!\left(
H^2 K S_\xi\sqrt{\frac{A}{W}}
\right).
\]

\end{proposition}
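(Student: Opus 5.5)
The plan is a blocking argument that replaces the overlapping sliding-window counts by non-overlapping within-block counts. Pulling out the common factor \(c_bH\sqrt{S_\xi\log(C_{\log}S_\xi AHK/\delta)}\) from every summand, it suffices to show
\[
\sum_{k=1}^K\sum_{h=1}^H \frac{1}{\sqrt{(N_h^k(\xi_h^k,\omega_h^k)+1)\wedge W}}
\le C\,HK\sqrt{\frac{S_\xi A}{W}}
\]
for an absolute constant \(C\). Multiplying back by the factor then gives exactly \(C_{\mathrm{bon}}H^2KS_\xi\sqrt{A\log(C_{\log}S_\xi AHK/\delta)/W}\), as claimed in \eqref{eq:summed-bonus-bound}.

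Partition \([K]\) into consecutive blocks \(\mathcal B_j:=\{(j-1)W+1,\dots,\min\{jW,K\}\}\), for \(j=1,\dots,\lceil K/W\rceil\). The key observation is a domination of counts. For \(k\in\mathcal B_j\), every earlier episode \(k'\) of the same block satisfies \(k-W\le k'\le k-1\), so \(k'\in\mathcal W_k\) by \eqref{eq:sliding-window}. Define the within-block count \(n_{j,h}^{k}(\xi,\omega):=\#\{k'\in\mathcal B_j:k'<k,\ (\xi_h^{k'},\omega_h^{k'})=(\xi,\omega)\}\). Then \(N_h^k(\xi,\omega)\ge n_{j,h}^k(\xi,\omega)\). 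Moreover \(n_{j,h}^k\le W-1\), so \((N_h^k+1)\wedge W\ge n_{j,h}^k+1\). Hence each summand is at most \(1/\sqrt{n_{j,h}^k(\xi_h^k,\omega_h^k)+1}\). This step is the main point of care: overlapping windows make \(N_h^k\) non-monotone in \(k\), and the usual pigeonhole argument fails. The block restriction restores a monotone counter that is reset at each block boundary, and this is the step I would verify most carefully, especially the boundary indices \(\max\{1,k-W\}\).

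Now fix \(j\) and \(h\). For each pair \((\xi,\omega)\), let \(m_{j,h}(\xi,\omega)\) be its number of visits at stage \(h\) during block \(\mathcal B_j\). Each episode contributes exactly one visit at stage \(h\), so \(\sum_{\xi,\omega}m_{j,h}(\xi,\omega)=|\mathcal B_j|\le W\). The successive visits to \((\xi,\omega)\) see the values \(n=0,1,\dots,m_{j,h}(\xi,\omega)-1\), so their contribution is \(\sum_{n=0}^{m-1}(n+1)^{-1/2}\le 2\sqrt{m_{j,h}(\xi,\omega)}\). By Cauchy--Schwarz over at most \(S_\xi A\) pairs,
\[
\sum_{\xi,\omega}2\sqrt{m_{j,h}(\xi,\omega)}\le 2\sqrt{S_\xi A W}.
\]
Summing over the \(H\) stages and the \(\lceil K/W\rceil\le 2K/W\) blocks gives \(4HK\sqrt{S_\xi A/W}\). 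This is the reduced bound with \(C=4\), and \(C_{\mathrm{bon}}=4c_b\) works.

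The argument is purely pathwise: it uses no concentration and no probabilistic event, so the bound holds deterministically, as the statement asserts. The \(\widetilde O\) form then follows by suppressing \(c_b\) and the logarithm.
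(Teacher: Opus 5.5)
Your proof is correct and is essentially the paper's argument. Both partition the episodes into consecutive blocks of length $W$, observe that earlier in-block visits to the same $(h,\xi,\omega)$ lie in the sliding window so that the $j$-th such visit has $(N_h^k+1)\wedge W\ge j$, bound $\sum_j j^{-1/2}\le 2\sqrt{m}$, and finish with Cauchy--Schwarz. The only cosmetic difference is where Cauchy--Schwarz is applied: the paper applies it once over all (block, triple) cells with total mass $KH$, giving $C_{\mathrm{bon}}=2\sqrt{2}\,c_b$, whereas you apply it per block and stage with mass at most $W$, giving $C_{\mathrm{bon}}=4c_b$.
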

\begin{proof}

We use an overlapping-window counting argument and keep the \(\sqrt{S_\xi}\) factor from the bonus explicitly. 

Define
\[
\mathcal Z:=\{(h,\xi,\omega): h\in[H],\ \xi\in\Xi,\ \omega\in\Omega(\xi)\}.
\]
Then
\[
|\mathcal Z|\le H S_\xi A.
\]
Partition the episode axis into non-overlapping blocks of length \(W\)
\[
\mathcal B_1,\mathcal B_2,\dots,\mathcal B_B,
\qquad
B:=\left\lceil \frac{K}{W}\right\rceil,
\]
where
\[
\mathcal B_b:=\{(b-1)W+1,\dots,\min\{bW,K\}\}.
\]
For each fixed triple \(z=(h,\xi,\omega)\in\mathcal Z\) and block \(b\in[B]\), define the number of occurrences of \(z\) in the block
\[
M_{b,z}:=
\sum_{k\in\mathcal B_b}
\mathbf 1\{(\xi_h^k,\omega_h^k)=(\xi,\omega)\}.
\]
If \(M_{b,z}\ge1\), write its occurrence times as
\[
k_{b,z,1}<k_{b,z,2}<\cdots<k_{b,z,M_{b,z}},
\qquad k_{b,z,j}\in\mathcal B_b.
\]
For \(i<j\), the two occurrence times \(k_{b,z,i}\) and
\(k_{b,z,j}\) belong to the same block \(\mathcal B_b\), whose length is
at most \(W\). Hence
\[
1\le k_{b,z,j}-k_{b,z,i}\le W-1 .
\]
Therefore
\[
k_{b,z,i}\in
\{\max\{1,k_{b,z,j}-W\},\ldots,k_{b,z,j}-1\}
=\mathcal W_{k_{b,z,j}} .
\]
Thus, when the \(j\)-th occurrence of \(z\) in block \(b\) happens, the
previous \(j-1\) occurrences of the same \(z\) in that block are all
counted in the sliding-window count. Consequently,
\[
N_h^{k_{b,z,j}}(\xi,\omega)\ge j-1,
\qquad
(N_h^{k_{b,z,j}}(\xi,\omega)+1)\wedge W\ge j .
\]
Therefore
\[
b_h^{k_{b,z,j}}(\xi,\omega)
\le
c_b H\sqrt{\frac{S_\xi\log(C_{\log}S_\xi AHK/\delta)}{j}}.
\]
Summing over the occurrences of \(z\) inside block \(b\) and using
\(\sum_{j=1}^m j^{-1/2}\le 2\sqrt m\), we obtain
\[
\sum_{k\in\mathcal B_b}
b_h^k(\xi_h^k,\omega_h^k)
\mathbf 1\{(\xi_h^k,\omega_h^k)=(\xi,\omega)\}
\le
2c_b H
\sqrt{S_\xi\log(C_{\log}S_\xi AHK/\delta)}
\sqrt{M_{b,z}}.
\]
Summing this deterministic inequality over all blocks \(b\) and triples
\(z\in\mathcal Z\), and then applying Cauchy--Schwarz, gives
\[
\sum_{k=1}^K\sum_{h=1}^H b_h^k(\xi_h^k,\omega_h^k)
\le
2c_b H
\sqrt{S_\xi\log(C_{\log}S_\xi AHK/\delta)}
\sum_{b=1}^B\sum_{z\in\mathcal Z}\sqrt{M_{b,z}}
\]
and
\[
\sum_{b=1}^B\sum_{z\in\mathcal Z}\sqrt{M_{b,z}}
\le
\sqrt{B|\mathcal Z|}
\sqrt{\sum_{b=1}^B\sum_{z\in\mathcal Z}M_{b,z}}.
\]
Since
\[
\sum_{b=1}^B\sum_{z\in\mathcal Z}M_{b,z}=KH,
\qquad
B\le \left\lceil\frac KW\right\rceil\le \frac{2K}{W},
\qquad
|\mathcal Z|\le HS_\xi A,
\]
we have
\[
\sum_{b=1}^B\sum_{z\in\mathcal Z}\sqrt{M_{b,z}}
\le
\sqrt{\frac{2K}{W}\cdot HS_\xi A}\sqrt{KH}
=
\sqrt 2\, H K\sqrt{\frac{S_\xi A}{W}}.
\]
Substituting this bound yields
\[
\sum_{k=1}^K\sum_{h=1}^H b_h^k(\xi_h^k,\omega_h^k)
\le
2\sqrt 2\,c_b\,
H^2 K S_\xi
\sqrt{\frac{A\log(C_{\log}S_\xi AHK/\delta)}{W}}.
\]
Absorbing the absolute factor \(2\sqrt2\,c_b\) into
\(C_{\mathrm{bon}}\) proves the claim.

\end{proof}
\begin{remark}[Use of the Summed-Bonus Bound]
\label{rem:summed-bonus-bound}

This proposition is a pathwise combinatorial bound controlling the realized bonus on the true trajectory. It is used pathwise in Theorem~\ref{thm:dynamic-regret-fixed-window}.
\end{remark}

\subsubsection{Pathwise Martingale Control}
\label{app:main-proof:martingale-control}

\begin{proposition}[High-Probability Control of the Unified Remainder]
\label{prop:unified-martingale-control}

Assume the sampling, global-filtration, environment-predictability, and
algorithmic-measurability conventions from
Subsection~\ref{app:problem-setup:sampling-filtration} and
Subsection~\ref{app:main-proof-setup:empirical-optimistic-model}. In
particular, at the beginning of episode \(k\), the model
\(\mathcal M^{(k)}\), the empirical objects
\(\hat r^k,\hat P^k,b^k\), the optimistic planning objects
\(\widetilde Q^k,\widetilde V^k\), and the policy \(\hat\pi^k\) are
measurable with respect to the beginning-of-episode \(\sigma\)-algebra,
and the action is sampled according to
\[
\omega_h^k\sim \hat\pi_h^k(\cdot\mid\xi_h^k)
\]
before the reward and next state are revealed.

Using the remainder notation from Lemma~\ref{lem:pathwise-bridge-identity}, define
\[
\varepsilon_h^k=a_h^k+t_h^k,
\qquad
M_k=\sum_{h=1}^H\varepsilon_h^k.
\]
Flatten the episode-stage indices
\[
t=(k-1)H+h,\qquad t=1,\dots,KH.
\]
Let \((k(t),h(t))\) denote the inverse episode-stage map, and define
\[
\eta_t:=\varepsilon_{h(t)}^{k(t)}.
\]

Define the flattened filtration as follows. For \(t=(k-1)H+h\), let
\(\mathcal F_{t-1}\) contain all randomness completed strictly before the
stage-\((k,h)\) outcome is generated, together with the
beginning-of-episode \(k\) planning objects and the current state
\(\xi_h^k\). Let
\[
\mathcal G_t:=\mathcal F_{t-1}\vee\sigma(\omega_h^k),
\]
and let
\[
\mathcal F_t:=\mathcal G_t\vee\sigma(r_h^k,\xi_{h+1}^k).
\]
If \(h=H\) and \(k<K\), we enlarge \(\mathcal F_t\) further by including
\(\xi_1^{k+1}\). This enlargement preserves the increasing-filtration
property. Since \(\xi_1^{k+1}\) is independent of the completed history
and \(t_h^k\) depends only on the current transition outcome
\(\xi_{h+1}^k\), this enlargement does not affect the conditional
zero-mean calculation for \(t_h^k\).

Then there exists an absolute constant \(C_{\mathrm{mg}}>0\), such that the event
\begin{equation}
\label{eq:martingale-remainder-bound}
\mathcal E_{\mathrm{mg}}:=
\left\{
\left|
\sum_{k=1}^K M_k
\right|
\le
C_{\mathrm{mg}}H^2\sqrt{KH\log(C_{\log}/\delta)}
\right\}
\end{equation}
satisfies
\[
\mathbb P(\mathcal E_{\mathrm{mg}})\ge1-\delta/2.
\]

\end{proposition}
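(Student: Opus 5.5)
The plan is to show that $(\eta_t,\mathcal F_t)$ is a martingale difference sequence with a deterministic bound of order $H^2$, and then apply Lemma~\ref{lem:azuma-hoeffding} once over the flattened horizon $n=KH$.

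\textbf{Splitting the remainder.} I would first record $\sum_k M_k=\sum_{t=1}^{KH}\eta_t$. Each term $\varepsilon_h^k=a_h^k+t_h^k$ contains two parts revealed at different moments. The action part $a_h^k$ becomes known once $\omega_h^k$ is drawn. The transition part $t_h^k$ becomes known once $\xi_{h+1}^k$ is drawn. To keep the filtration clean, I would use a refined index $2KH$, placing $a_h^k$ at the $\mathcal G_t$ level and $t_h^k$ at the $\mathcal F_t$ level. Alternatively, one can check directly that $\mathbb E[\eta_t\mid\mathcal F_{t-1}]=0$ and that $\eta_t$ is $\mathcal F_t$-measurable.

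\textbf{Zero conditional mean.} Both parts are centred, by the following arguments.
\begin{itemize}
\item For $a_h^k$, Lemma~\ref{lem:pathwise-bridge-identity} gives $a_h^k=\sum_\omega\hat\pi_h^k(\omega\mid\xi_h^k)G_h^k(\xi_h^k,\omega)-G_h^k(\xi_h^k,\omega_h^k)$. The function $G_h^k$ is built only from $\widetilde Q_h^k$, $r_h^{(k)}$, $P_h^{(k)}$ and $V^{\hat\pi^k,(k)}_{h+1}$. These are all measurable at the beginning of episode $k$, by environment predictability and the measurability note. Since $\omega_h^k\sim\hat\pi_h^k(\cdot\mid\xi_h^k)$ given $\mathcal F_{t-1}$, we get $\mathbb E[a_h^k\mid\mathcal F_{t-1}]=0$.
\item For $t_h^k$, the function $\Gamma_{h+1}^k$ is likewise predictable. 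The conditional law of $\xi_{h+1}^k$ given $\mathcal G_t$ is $P_h^{(k)}(\cdot\mid\xi_h^k,\omega_h^k)$, so $\mathbb E[t_h^k\mid\mathcal G_t]=0$.
\item The enlargement by $\xi_1^{k+1}$ is harmless because of its independence from the completed history.
\end{itemize}

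\textbf{Boundedness, the main obstacle.} Executed-policy values include the KL term and need not be nonnegative. I would use the remark after Lemma~\ref{lem:soft-value-boundedness}, which gives $|V_h^{\hat\pi^k,(k)}|\le 2H^2$, together with Lemma~\ref{lem:optimistic-value-boundedness}, which gives $0\le\widetilde V_h^k,\widetilde Q_h^k\le H$. These imply $|\Gamma_{h+1}^k|\le H+2H^2\le 3H^2$ and $|G_h^k|\le H+1+2H^2\le 4H^2$. Hence $|a_h^k|\le 8H^2$, $|t_h^k|\le 6H^2$, and so $|\eta_t|\le 14H^2=:B$.

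\textbf{Concentration.} Applying Lemma~\ref{lem:azuma-hoeffding} with $n=KH$ (or $2KH$ in the refined version) and $u=B\sqrt{2n\log(4/\delta)}$ gives failure probability at most $\delta/2$. Since $C_{\log}\ge e^2$, we have $\log(4/\delta)\le c\log(C_{\log}/\delta)$, and absorbing constants yields $C_{\mathrm{mg}}H^2\sqrt{KH\log(C_{\log}/\delta)}$.

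The delicate points are the value envelope (an $H^2$ rather than $H$ scaling, which explains the $H^2$ in the statement) and the measurability of $V^{\hat\pi^k,(k)}$, which must be verified as a deterministic function of predictable objects.
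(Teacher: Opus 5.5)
Your proposal is correct and matches the paper's proof essentially step for step. It uses the same martingale-difference argument: $a_h^k$ is centred because $\omega_h^k\sim\hat\pi_h^k(\cdot\mid\xi_h^k)$ given $\mathcal F_{t-1}$, $t_h^k$ is centred by the transition law given $\mathcal G_t$, and the two are combined via the tower property. It then uses the same envelope chain $3H^2,4H^2,8H^2,6H^2,14H^2$ and a single Azuma--Hoeffding application with $n=KH$. Your optional refined $2KH$-step indexing is also valid, but the paper does not need it because it works directly with the combined increment $\eta_t$.
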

\begin{proof}

We prove the high-probability bound by showing that the flattened sequence
\(\eta_t\) is a bounded martingale difference sequence, and then applying
Lemma~\ref{lem:azuma-hoeffding}.

Fix \(t=(k-1)H+h\). By the measurability note in Subsection~\ref{app:main-proof-setup:empirical-optimistic-model}, \(\widetilde Q_h^k,\widetilde V_h^k,\hat\pi_h^k\) are fixed at the beginning of episode \(k\); after the current state \(\xi_h^k\) is observed, \(\xi_h^k\) is \(\mathcal F_{t-1}\)-measurable. By the environment predictability convention in Subsection~\ref{app:problem-setup:sampling-filtration}, the current model objects
\[
\{r_h^{(k)}(\xi,\omega),P_h^{(k)}(\cdot\mid\xi,\omega):
 h\in[H],\xi\in\Xi,\omega\in\Omega(\xi)\}
\]
are also fixed, or at least measurable with respect to the beginning-of-episode \(k\) \(\sigma\)-algebra. Since \(\hat\pi^k\) is constructed at the beginning of episode \(k\), the policy value functions \(V_h^{\hat\pi^k,(k)}\) are deterministic functions of these beginning-of-episode objects. Hence, after \(\xi_h^k\) is included in \(\mathcal F_{t-1}\), the map
\[
\omega\mapsto G_h^k(\xi_h^k,\omega)
\]
is \(\mathcal F_{t-1}\)-measurable. Lemma~\ref{lem:pathwise-bridge-identity} gives
\[
\Gamma_h^k(\xi_h^k)
=
\sum_{\omega\in\Omega(\xi_h^k)}
\hat\pi_h^k(\omega\mid\xi_h^k)
G_h^k(\xi_h^k,\omega),
\]
and therefore \(\Gamma_h^k(\xi_h^k)\) is \(\mathcal F_{t-1}\)-measurable. Since
\[
\omega_h^k\mid\mathcal F_{t-1}
\sim
\hat\pi_h^k(\cdot\mid\xi_h^k),
\]
and
\[
a_h^k
=
\Gamma_h^k(\xi_h^k)-G_h^k(\xi_h^k,\omega_h^k),
\]
it follows that
\[
\mathbb E[a_h^k\mid\mathcal F_{t-1}]=0.
\]
On the other hand, 
\[
t_h^k
=
P_h^{(k)}(\cdot\mid\xi_h^k,\omega_h^k)^\top\Gamma_{h+1}^k
-
\Gamma_{h+1}^k(\xi_{h+1}^k).
\]
Conditioning on \(\mathcal G_t\), \((\xi_h^k,\omega_h^k)\) has been fixed, while sampling model gives
\[
\xi_{h+1}^k\mid\mathcal G_t
\sim
P_h^{(k)}(\cdot\mid\xi_h^k,\omega_h^k).
\]
Therefore
\[
\mathbb E[t_h^k\mid\mathcal G_t]=0.
\]
By tower property, 
\[
\mathbb E[\eta_t\mid\mathcal F_{t-1}]
=
\mathbb E[a_h^k+t_h^k\mid\mathcal F_{t-1}]
=
\mathbb E[a_h^k\mid\mathcal F_{t-1}]
+
\mathbb E[\mathbb E[t_h^k\mid\mathcal G_t]\mid\mathcal F_{t-1}]
=0.
\]
Thus \((\eta_t,\mathcal F_t)_{t=1}^{KH}\) is MDS. 

We next prove bounded increments. By clipping, 
\[
0\le \widetilde Q_h^k(\xi,\omega)\le H-h+1\le H,
\qquad
0\le \widetilde V_h^k(\xi)\le H-h+1\le H.
\]
By the definition of the optimistic softmax policy in
\eqref{eq:optimistic-policy} and the optimistic value in
\eqref{eq:optimistic-value},
\[
\alpha\log
\frac{\hat\pi_h^k(\omega\mid\xi)}{\pi_{\rm ref}(\omega\mid\xi)}
=
\widetilde Q_h^k(\xi,\omega)-\widetilde V_h^k(\xi).
\]
Therefore
\[
\left|
\alpha\log
\frac{\hat\pi_h^k(\omega\mid\xi)}{\pi_{\rm ref}(\omega\mid\xi)}
\right|
\le H,
\]
and regularized reward with respect to
\[
\left|
r_h^{(k)}(\xi,\omega)
-
\alpha\log
\frac{\hat\pi_h^k(\omega\mid\xi)}{\pi_{\rm ref}(\omega\mid\xi)}
\right|
\le
1+H\le2H.
\]
gives
\[
|V_h^{\hat\pi^k,(k)}(\xi)|
\le
2H(H-h+1)
\le
2H^2.
\]
This \(H^2\) envelope follows from the clipped \(Q,V\)-range and the softmax
identity above. The bound is stated in a form sufficient for the
variation-dependent sublinearity result used in the main text.
Hence
\[
|\Gamma_h^k(\xi)|
\le
|\widetilde V_h^k(\xi)|
+
|V_h^{\hat\pi^k,(k)}(\xi)|
\le
H+2H^2
\le
3H^2.
\]
At the same time,
\[
|G_h^k(\xi,\omega)|
\le
|\widetilde Q_h^k(\xi,\omega)|
+
|r_h^{(k)}(\xi,\omega)|
+
\left|P_h^{(k)}(\cdot\mid\xi,\omega)^\top V_{h+1}^{\hat\pi^k,(k)}\right|
\le
H+1+2H^2
\le
4H^2,
\qquad H\ge1.
\]
because
\[
a_h^k
=
\Gamma_h^k(\xi_h^k)-G_h^k(\xi_h^k,\omega_h^k)
=
\mathbb E_{\omega\sim\hat\pi_h^k(\cdot\mid\xi_h^k)}G_h^k(\xi_h^k,\omega)
-G_h^k(\xi_h^k,\omega_h^k),
\]
we get
\[
|a_h^k|
\le
2\sup_{\omega\in\Omega(\xi_h^k)}|G_h^k(\xi_h^k,\omega)|
\le
8H^2.
\]
Similarly,
\[
|t_h^k|
\le
\left|
P_h^{(k)}(\cdot\mid\xi_h^k,\omega_h^k)^\top\Gamma_{h+1}^k
\right|
+
|\Gamma_{h+1}^k(\xi_{h+1}^k)|
\le
2\|\Gamma_{h+1}^k\|_\infty
\le
6H^2.
\]
Therefore
\[
|\eta_t|=|\varepsilon_h^k|
\le
|a_h^k|+|t_h^k|
\le14H^2.
\]
We subsequently use an absolute-constant \(H^2\) boundedness. Thus we write
\[
|\eta_t|\le C_\eta H^2
\]
with \(C_\eta\) later absorbed into \(C_{\mathrm{mg}}\). 

Applying Lemma~\ref{lem:azuma-hoeffding} to the martingale difference sequence gives 
\[
\mathbb P\!\left(
\left|\sum_{t=1}^{KH}\eta_t\right|\ge u
\right)
\le
2\exp\left(
-\frac{u^2}{2C_\eta^2H^4KH}
\right).
\]
Take
\[
u=C_{\mathrm{mg}}H^2\sqrt{KH\log(C_{\log}/\delta)}
\]
and \(C_{\mathrm{mg}}\) large enough, gives
\[
\mathbb P(\mathcal E_{\mathrm{mg}})\ge1-\delta/2.
\]
Finally, 
\[
\sum_{t=1}^{KH}\eta_t
=
\sum_{k=1}^K\sum_{h=1}^H\varepsilon_h^k
=
\sum_{k=1}^K M_k.
\]
This proves the claim.
\end{proof}

\subsubsection{Unified High-Probability Event}
\label{app:main-proof-setup:unified-event}

The analysis uses the following high-probability events. Recall that
\[
\mathcal E_{\mathrm{conc}}=\mathcal E_r\cap\mathcal E_P,
\]
where \(\mathcal E_r\) and \(\mathcal E_P\) are the reward and transition
concentration events defined in
Subsection~\ref{app:tools:reward-transition-concentration}. By
Lemma~\ref{lem:total-concentration-event},
\[
\mathbb P(\mathcal E_{\mathrm{conc}})\ge 1-\delta/2.
\]
The martingale event is
\[
\mathcal E_{\mathrm{mg}}
:=
\left\{
\left|\sum_{k=1}^K M_k\right|
\le
C_{\mathrm{mg}}H^2\sqrt{KH\log(C_{\log}/\delta)}
\right\},
\]
where \(M_k\) is the unified remainder from
Lemma~\ref{lem:pathwise-bridge-identity}. By
Proposition~\ref{prop:unified-martingale-control},
\[
\mathbb P(\mathcal E_{\mathrm{mg}})\ge 1-\delta/2.
\]
Finally define
\[
\mathcal E_{\mathrm{tot}}
:=
\mathcal E_{\mathrm{conc}}\cap\mathcal E_{\mathrm{mg}}.
\]
A union bound gives
\[
\mathbb P(\mathcal E_{\mathrm{tot}})\ge1-\delta.
\]
All pathwise regret bounds in Theorem~\ref{thm:dynamic-regret-fixed-window}
are proved on \(\mathcal E_{\mathrm{tot}}\).

\subsubsection{Proof of Theorem~\ref{thm:dynamic-regret-fixed-window}}
\label{app:main-proof:fixed-window-proof}

\begin{proof}[Proof of Theorem~\ref{thm:dynamic-regret-fixed-window}]
By Subsection~\ref{app:main-proof-setup:unified-event},
\(\mathbb P(\mathcal E_{\mathrm{tot}})\ge1-\delta\). We fix an outcome in
\(\mathcal E_{\mathrm{tot}}\) and proceed pathwise.

For each episode \(k\), at the realized initial state \(\xi_1^k\) decompose: 
\[
V_1^{\star,(k)}(\xi_1^k)-V_1^{\hat\pi^k,(k)}(\xi_1^k)
=
\bigl(V_1^{\star,(k)}(\xi_1^k)-\widetilde V_1^k(\xi_1^k)\bigr)
+
\bigl(\widetilde V_1^k(\xi_1^k)-V_1^{\hat\pi^k,(k)}(\xi_1^k)\bigr).
\]
By Proposition~\ref{prop:near-optimism-window-drift}, 
\[
V_1^{\star,(k)}(\xi_1^k)-\widetilde V_1^k(\xi_1^k)
\le
C_{\mathrm{opt}}HD_k.
\]
By Lemma~\ref{lem:pathwise-certificate-bound}, 
\[
\widetilde V_1^k(\xi_1^k)-V_1^{\hat\pi^k,(k)}(\xi_1^k)
\le
2\sum_{h=1}^H b_h^k(\xi_h^k,\omega_h^k)
+
M_k
+
C_{\mathrm{cert}}HD_k.
\]
Here \(M_k=\sum_{h=1}^H\varepsilon_h^k\) is the unified remainder from Lemma~\ref{lem:pathwise-bridge-identity} controlled by Proposition~\ref{prop:unified-martingale-control}. Summing over episodes and using the pathwise definition of \(\mathrm{DynReg}(K)\) gives
\[
\mathrm{DynReg}(K)
\le
2\sum_{k=1}^K\sum_{h=1}^H b_h^k(\xi_h^k,\omega_h^k)
+
\sum_{k=1}^K M_k
+
C_\star H\sum_{k=1}^K D_k,
\]
where \(C_\star:=C_{\mathrm{opt}}+C_{\mathrm{cert}}\). 

The three terms are controlled as follows:
\begin{enumerate}
\item Proposition~\ref{prop:summed-bonus-bound} gives the pathwise summed-bonus bound:
\[
2\sum_{k=1}^K\sum_{h=1}^H b_h^k(\xi_h^k,\omega_h^k)
\le
2C_{\mathrm{bon}}H^2KS_\xi
\sqrt{\frac{A\log(C_{\log}S_\xi AHK/\delta)}{W}}.
\]
\item Proposition~\ref{prop:unified-martingale-control} on \(\mathcal E_{\mathrm{mg}}\) gives
\[
\sum_{k=1}^K M_k
\le
C_{\mathrm{mg}}H^2\sqrt{KH\log(C_{\log}/\delta)}.
\]
This has the same logarithmic factor as the third term in
\eqref{eq:fixed-window-regret-bound}, so the martingale contribution is
absorbed after increasing the absolute constant.
\item Equation~\eqref{eq:window-drift-sum} gives
\[
\sum_{k=1}^K D_k
\le
W(B_r+HB_p).
\]
Therefore
\[
C_\star H\sum_{k=1}^K D_k
\le
C_\star WH(B_r+HB_p).
\]
\end{enumerate}
Renaming absolute constants gives the theorem statement. This proves the claim.
\end{proof}

\subsection{From Fixed-Window Regret to the Main-Text Informal Statement}
\label{app:oracle-window-proof}

This subsection proves Corollary~\ref{cor:oracle-tuned-window}. The result is a known-variation tuning statement: the window length is selected before the run as a deterministic function of the variation budget, and the fixed-window theorem is then applied to that algorithm instance.

\subsubsection{Proof of Corollary~\ref{cor:oracle-tuned-window}}
\label{app:main-proof:oracle-tuned-window-proof}

\begin{proof}[Proof of Corollary~\ref{cor:oracle-tuned-window}]
Fix an arbitrary window chosen before the algorithm is run,
\(W_{\mathrm{orc}}\in\{1,\dots,K\}\). Because the algorithm, empirical windows,
bonuses, and policy sequence are then fixed by this window choice,
Theorem~\ref{thm:dynamic-regret-fixed-window} applies directly to that algorithm
instance and gives
\[
\mathrm{DynReg}(K)
\le
\widetilde O\!\left(
H^2KS_\xi\sqrt{\frac{A}{W_{\mathrm{orc}}}}
+
W_{\mathrm{orc}}HV_K
+
H^2\sqrt{KH}
\right)
\]
with probability at least \(1-\delta\). Since the oracle may choose any
deterministic window before the run, choosing a window that minimizes the
deterministic upper-bound function gives the oracle upper envelope.

For a uniform display over all variation regimes, we use the smoothed quantity
\(V_K+1\). This affects only constant-level edge cases and yields the same
variation-dependent scaling in the non-stationary regimes of interest. In the
untruncated case, ignoring constants gives
\[
H^2KS_\xi\sqrt{\frac{A}{W}}
\asymp
WH(V_K+1),
\]
and hence
\[
HKS_\xi\sqrt A
\asymp
W^{3/2}(V_K+1).
\]
Therefore
\[
W_\star
=
\left(
\frac{H^2S_\xi^2 A K^2}{(V_K+1)^2}
\right)^{1/3}.
\]
When \(1\le W_\star\le K\), \(W_{\mathrm{orc}}\asymp W_\star\), substitution gives
\[
H^2KS_\xi\sqrt{\frac{A}{W_\star}}
\asymp
HW_\star(V_K+1)
\asymp
H^{5/3}S_\xi^{2/3}A^{1/3}K^{2/3}(V_K+1)^{1/3}.
\]
The martingale remainder \(H^2\sqrt{KH}\) gives the tuned-window bound.

For the growing-dimension statement, the first term in
\eqref{eq:oracle-tuned-bound} is \(o(K)\) whenever
\[
H^{5/3}S_\xi^{2/3}A^{1/3}K^{2/3}(V_K+1)^{1/3}=o(K).
\]
Cubing both sides and canceling \(K^2\) gives
\[
H^5S_\xi^2A(V_K+1)=o(K).
\]
Together with \(H^2\sqrt{KH}=o(K)\), this implies
\(\mathrm{DynReg}(K)=o(K)\). When \(H,S_\xi,A\) are fixed, the condition reduces
to \(V_K=o(K)\), and the martingale term is lower order.

This is a known-variation tuning statement. No simultaneous high-probability
guarantee over all window lengths is needed for the stated corollary, because
the algorithm is run with a single deterministic window chosen before observing
the trajectory. The minimization above is over the deterministic upper-bound
expression, not over realized pathwise regrets from multiple window choices. A
separate simultaneous guarantee over all window lengths could be obtained by a
standard union bound over \(W\in[K]\), at the cost of an additional logarithmic
factor. This proves the claim.
\end{proof}

\section{Deployable Executor Instantiation}
\label{sec:appendix_theory_method_mapping}

This appendix specifies the deployable instantiation of MPCR used in the fixed-library transfer experiments. MPCR is implemented as an executor-side wrapper around a base web agent and a frozen learned skill library.

\subsection{Correspondence between SWOP and MPCR}
\label{sec:mpcr_swop_correspondence}

SWOP and MPCR are two views of the same adaptation principle: stay anchored to what the learned library prefers, and let recent execution evidence decide which of those preferences is actionable right now. Table~\ref{tab:app_swop_mpcr_map} maps each element of the analysis onto the executor stage that realizes it.

\begin{table}[htbp]
\centering
\small
\setlength{\tabcolsep}{4pt}
\renewcommand{\arraystretch}{1.25}
\begin{tabularx}{0.92\linewidth}{@{}XX@{}}
\toprule
\textbf{SWOP (analysis)} & \textbf{MPCR (executor)} \\
\midrule
Reference prior $\pi_{\mathrm{ref}}$ & Fixed skill-library and base-executor preference \\
Sliding-window recent evidence & Current-page feasibility verification \\
Optimism for unresolved uncertainty & Soft scoring that downweights rather than filters \\
KL regularization toward the prior & Reranking constrained to the proposed shortlist \\
\bottomrule
\end{tabularx}
\caption{Componentwise correspondence between the SWOP analysis model and the MPCR executor.}
\label{tab:app_swop_mpcr_map}
\end{table}

This mapping is what fixes the executor's structure, and each row settles a concrete design question. The reference prior is the frozen skill library and base executor, so adaptation may not touch either; every reported gain is therefore attributable to test-time selection rather than to further skill acquisition. Recent evidence enters as current-page feasibility verification, which places adaptation inside a single episode, the timescale on which executability actually changes under environment evolution. Optimism is realized as soft scoring rather than hard filtering: a candidate whose target cannot be confirmed is downweighted but retained, so a partially observed page state cannot discard the action that would have recovered the task. The regularization term is then confined to reranking the proposed shortlist, which limits how far selection may move and keeps the executor from drifting away from the learned preference when the feasibility signal is noisy. The analysis therefore supplies the design: what to hold fixed, what to condition on, how to act when that evidence is inconclusive, and how far to move. Theorem~\ref{thm:swop-dynreg} characterizes the principle in the idealized model, and the evidence that it carries into browser execution is empirical.

Three results support the mapping, one per row. Appendix~\ref{sec:app_component_ablation} shows that its two active halves contribute separately and are not interchangeable: constrained reranking over the shortlist adds $4.31$ points of success over the unmodified proposal prior, and feasibility evidence adds a further $2.14$ points on top of it. Appendix~\ref{sec:app_budget_selection} confirms the regularization row directly, in that enlarging the candidate set dilutes the effective weight of the skill-library preference and monotonically suppresses skill invocation, which is precisely the behavior the regularization term exists to prevent. Appendix~\ref{sec:app_awm_generalization} shows that the same pressure-dependent gain pattern reappears when the mapping is instantiated over a structurally different learned library, indicating that it tracks a mechanism rather than a property of one skill format.

\subsection{Executor Interface and Execution Loop}
\label{sec:mpcr_overview_contract}

At each decision step, MPCR receives the task goal, the current browser state, a summary of progress, the primitive browser actions, the base executor, and the fixed learned skill library. It returns one executable next action. The base executor's top action is not executed directly; it enters the stage sequence of Table~\ref{tab:app_swop_mpcr_map} as one candidate among the proposed set. The selected action is executed in the browser, and the observed page change, error, blocker, or timeout is incorporated into the next decision step.

Algorithm~\ref{alg:mpcr} summarizes the full MPCR execution loop. The loop exposes the four operational stages used throughout this appendix: proposal, verification, scoring, and reranking. It also shows how browser feedback is folded into the next decision step and where fallback behavior is invoked.

\begin{algorithm}[t]
\caption{MPCR: Masked Prior-Guided Candidate Reranking}
\label{alg:mpcr}
\begin{algorithmic}[1]
\Require Task goal $g$, browser state $B_0$, fixed skill library $\mathcal{L}$, primitive action set $\mathcal{A}_{\mathrm{prim}}$, base executor $E$, candidate budget $N_{\mathrm{cand}}$, shortlist size $M$, maximum horizon $T$
\Ensure Executed trajectory $\tau$
\State $\tau \gets \emptyset$, $h_0 \gets \emptyset$
\For{$t = 1,\ldots,T$}
    \State $o_t \gets \textsc{Observe}(B_{t-1})$
    \If{$\textsc{TaskComplete}(g,o_t,h_{t-1})$}
        \State \Return $\tau$
    \EndIf

    \Statex \textit{// Phase 1: Propose}
    \State $C_t \gets \textsc{Propose}_{E}(g,h_{t-1},o_t,\mathcal{L},\mathcal{A}_{\mathrm{prim}},N_{\mathrm{cand}})$

    \If{$C_t$ is invalid or empty}
        \State $a_t \gets \textsc{SafeRecovery}(o_t,h_{t-1})$
    \Else
        \Statex \textit{// Phase 2: Verify}
        \ForAll{$c \in C_t$}
            \State $v_t(c) \gets \textsc{AnnotateFeasibility}(c,o_t,B_{t-1})$
        \EndFor

        \Statex \textit{// Phase 3: Score}
        \ForAll{$c \in C_t$}
            \State $s_t(c) \gets \textsc{FeasibilityScore}(c,v_t(c),g,h_{t-1})$
        \EndFor
        \State $S_t \gets \textsc{TopM}(C_t,s_t,M)$

        \Statex \textit{// Phase 4: Rank}
        \State $a_t \gets \textsc{PairwiseRank}(S_t,g,h_{t-1},\{v_t(c),s_t(c)\}_{c\in S_t})$
        \If{$a_t$ is invalid, ambiguous, or outside $S_t$}
            \State $a_t \gets \arg\max_{c \in S_t} s_t(c)$
        \EndIf
    \EndIf

    \State $(B_t,f_t) \gets \textsc{Execute}(B_{t-1},a_t)$
    \State $\tau \gets \tau \cup \{(o_t,C_t,a_t,f_t)\}$
    \State $h_t \gets \textsc{UpdateProgress}(h_{t-1},a_t,f_t)$
\EndFor
\State \Return $\tau$
\end{algorithmic}
\end{algorithm}

The remaining subsections unpack each component of Algorithm~\ref{alg:mpcr}. Section~\ref{sec:mpcr_propose} specifies the constrained proposal prompt and the structured candidate interface. Section~\ref{sec:mpcr_verify} describes the browser-state evidence used to annotate local feasibility. Section~\ref{sec:mpcr_score} explains how these annotations are converted into conservative shortlisting scores. Section~\ref{sec:mpcr_rank} specifies the constrained prior-guided reranking prompt. Section~\ref{sec:mpcr_recovery} describes the fallback rules that make the executor total under malformed generations, uncertain probes, ambiguous rankings, and weak shortlists.

\subsection{Candidate Interface and Proposal}
\label{sec:mpcr_propose}

The proposal stage defines the candidate interface used by all downstream MPCR components. It instantiates the $\textsc{Propose}_{E}$ call in Algorithm~\ref{alg:mpcr} with a strict structured-output contract, so that verification, scoring, and ranking can be applied deterministically. At each decision step, the proposal module must return exactly $N_{\mathrm{cand}}$ candidates. Each candidate contains a complete executable action string, including both the action name and its arguments, together with a concise rationale grounded in the current observation.

\begin{tcolorbox}[
colback=back_color,
colframe=frame_color,
title=Constrained Candidate-Proposal Prompt,
fonttitle=\bfseries,
breakable,
left=4pt,
right=4pt
]
\textbf{Instruction.}
Review the current page state, previous execution feedback, and all provided task information to propose the best next actions for accomplishing the goal. The output will be interpreted and executed by a browser-control program, so it must follow the required format exactly.\\

\textbf{Inputs.}
The prompt is instantiated with the following runtime fields:
\begin{itemize}
    \item task goal;
    \item current progress summary;
    \item currently open tabs and active URL;
    \item current page accessibility tree;
    \item current page screenshot, when available;
    \item available primitive actions and fixed-library skill actions;
    \item previous actions and execution feedback, when available.
\end{itemize}

\ 

\textbf{Candidate-generation rule.}
Produce exactly $N_{\mathrm{cand}}$ next-action candidates, ranked from best to worst. The candidates may use the same action type with different arguments. The ordering is treated as a proposal prior, but the final action is selected only after verification, scoring, and reranking.\\

\textbf{Action constraints.}
Each candidate must be an executable action string from the provided action space. Do not output an action outside the available primitive or skill actions. Do not invent hidden capabilities, unavailable tools, or unprovided skill calls.\\

\textbf{Reason constraint.}
Each candidate must include a concise reason grounded in the current observation, task goal, and available action space.\\

\textbf{Output.}
Return valid JSON only. Do not output markdown or extra text.

\begin{verbatim}
{
  "candidates": [
    {
      "action": "click | type | navigate | scroll | skill_call | ...",
      "reason": "brief justification"
    }
  ]
}
\end{verbatim}
\end{tcolorbox}

This contract serves three purposes. First, it bounds the action set considered at each decision step. Second, it makes the candidate set parseable by deterministic verification and scoring modules. Third, it exposes primitive actions and fixed-library skills through the same interface, while preventing the proposal module from bypassing MPCR with an unstructured direct action.

The proposal stage is therefore not a broad search procedure. It is a bounded prior-guided expansion of the base executor's next-action preferences. MPCR preserves several plausible alternatives before committing to one action, but all later decisions remain constrained to the generated candidate set.

\subsection{Local Feasibility Annotation}
\label{sec:mpcr_verify}

MPCR collects local feasibility evidence from the browser state at the moment of decision. The design goal is to add a small amount of grounded execution evidence without turning the executor into a broad search procedure or a new learning algorithm. The evidence layer is intentionally lightweight: it checks properties that are directly relevant to immediate action feasibility, while leaving final selection to the scoring and ranking stages.

\subsubsection{Element-Level Readiness Evidence}

For candidates that refer to an interface element, MPCR checks whether the target appears present and actionable in the current page state. The annotation considers whether the element can be found through available identifiers or selectors, whether it is visible, whether it appears enabled, whether it has a meaningful bounding region, whether it is inside the current viewport, and whether it may be blocked by another element.

The annotation is not used as a universal hard filter. Browser state can be incomplete or transient, and some useful actions may require scrolling, focusing, waiting, or navigating before the final target becomes available. MPCR therefore treats element-level evidence as a soft feasibility signal. This keeps the executor conservative without making it brittle.

\subsubsection{Viewport and Reachability Evidence}

MPCR also records page-level reachability evidence, especially whether the current viewport may omit task-relevant content that is reachable through scrolling. This distinction matters because an element that is not currently visible is not necessarily absent. In long pages, administrative dashboards, product tables, issue lists, and content-management interfaces, relevant controls often appear below the fold or inside scrollable regions.

The reachability annotation helps the executor avoid conflating ``not currently visible'' with ``not available.'' If the page appears scrollable, an otherwise weak candidate may be interpreted as requiring an intermediate exposure action rather than being discarded. Conversely, if the page offers little additional reachable content, the executor can treat missing or hidden targets as stronger evidence against immediate execution.

This design supports MPCR's conservative adaptation principle: use current browser evidence to reduce obvious execution mistakes, but preserve plausible recovery paths when the evidence indicates that the page state may be incomplete.

\subsection{Conservative Feasibility Scoring}
\label{sec:mpcr_score}

The scoring stage is a bounded and reproducible intermediate selection layer. Its role is not to define an optimal policy, but to provide a transparent conservative prior over immediate executability before the final pairwise comparison. This makes the ranker compare candidates that have already been grounded in the current page state, rather than asking it to resolve semantic relevance, feasibility, and risk from scratch.

For each candidate $c \in C_t$, MPCR computes a heuristic feasibility-aware score:
\[
s_t(c) =
\lambda_{\mathrm{rel}} R_t(c)
+ \lambda_{\mathrm{prog}} P_t(c)
+ \lambda_{\mathrm{ready}} E_t(c)
+ \lambda_{\mathrm{expose}} X_t(c)
+ \lambda_{\mathrm{stable}} U_t(c)
- \lambda_{\mathrm{risk}} Z_t(c),
\]
where $R_t(c)$ measures task relevance, $P_t(c)$ estimates immediate progress, $E_t(c)$ captures current-page readiness, $X_t(c)$ captures whether the action may expose required information or controls, $U_t(c)$ captures stability of the action structure or grounding evidence, and $Z_t(c)$ captures execution risk. The coefficients are fixed across evaluations.

The evidence families have the following interpretations:
\begin{itemize}
    \item \textbf{Task relevance} measures whether the candidate is semantically aligned with the user instruction and the current progress state.
    \item \textbf{Progress} estimates whether the candidate is likely to move the task forward immediately.
    \item \textbf{Readiness} measures whether the target appears executable from the current page state.
    \item \textbf{Exposure} rewards actions that reveal information, controls, or page regions needed for later progress.
    \item \textbf{Stability} captures whether the candidate is supported by robust identifiers, reliable action structure, or unambiguous grounding evidence.
    \item \textbf{Risk} penalizes actions that are destructive, irreversible, speculative, or likely to move the agent away from the task.
\end{itemize}

The scoring rule also accounts for page-region and readiness cues in a conservative way. Actions in task-relevant regions, such as main content, navigation, forms, or management panels, receive more favorable treatment than actions in low-value regions. Candidates that appear ready now are preferred over candidates that appear blocked or require unresolved preconditions. However, these cues are deliberately soft. A candidate marked as \texttt{blocked} or \texttt{needs\_scroll} is downweighted, but it may still survive if it remains highly task-relevant and no stronger alternative exists.

This soft treatment is central to MPCR's design. Under environment evolution, feasibility evidence is informative but imperfect. A hard filter can remove the correct recovery action when the page state is partially observed; an unconstrained prior can overcommit to an action that is locally infeasible. Conservative scoring occupies the middle ground: it reduces the priority of fragile candidates while preserving enough action-domain coverage for the ranker to make a context-sensitive comparison.

The scoring rule is fixed across the reported evaluations. This is important for attribution. MPCR's gains are not explained by per-condition tuning or by a hidden adaptive learner; they arise from applying the same feasibility-aware selection interface across the fixed learned library.

\subsection{Constrained Prior-Guided Reranking}
\label{sec:mpcr_rank}

The final ranking step uses the LLM as a constrained local comparator over the shortlist $S_t$, following the broader use of LLMs for reranking and pairwise comparison~\cite{sun2023chatgptsearch,qin2024prp,wu2025pairwisereranking}. The ranker is conditioned on the task state, candidate rationales, heuristic scores, and feasibility annotations. It must choose among the provided candidates only.

This constrained design preserves the prior-guided nature of MPCR. The proposal stage reflects the prior induced by the base executor and fixed skill library, while the verification and scoring stages expose current-page feasibility evidence. The ranker then compares a small shortlist using both sources of information. It is not allowed to invent a new action, call an unseen skill, or change the candidate action strings.

In the reported experiments, the shortlist size is set to $M=2$, so the pairwise comparator is invoked once per decision step. For larger shortlists, the same comparator can be applied in a deterministic tournament order sorted by the heuristic score. The current winner is compared against the next candidate, and ties are broken by the heuristic score. This keeps reranking bounded while ensuring that the final action is selected only from the verified shortlist.

\begin{tcolorbox}[
colback=back_color,
colframe=frame_color,
title=Constrained Pairwise Reranking Prompt,
fonttitle=\bfseries,
breakable,
left=4pt,
right=4pt
]
\textbf{Role.} You are a strict pairwise ranker for web-agent actions.\\

\textbf{Objective.} Choose the better next single action right now. Prefer the action that is more executable, more directly useful for the task, more likely to unblock progress, and lower risk.\\

\textbf{Decision criteria.}
\begin{enumerate}
    \item Prefer the candidate that better advances the current task goal.
    \item Prefer the candidate that appears executable in the current page state.
    \item Prefer actions that reveal required information or remove blockers.
    \item Avoid destructive, irreversible, or speculative actions unless required.
\end{enumerate}

\ 

\textbf{Executability evidence.}
Treat a candidate as weaker if its target is missing, invisible, disabled, occluded, or outside the reachable page state. For skill calls or multi-step candidates, compare them using their signatures, descriptions, current context, and available feasibility annotations.\\

\textbf{Candidate constraint.}
Choose only among the provided candidates. Do not invent a new action, call an unseen skill, assume hidden skill effects, or change the candidate action strings.\\

\textbf{Tie rule.}
Return \texttt{tie} only when neither candidate clearly dominates. If one action is moderately better in immediate usefulness or executability, choose it.\\

\textbf{Output.}
Return strict JSON only:
\begin{verbatim}
{"winner":"a"|"b"|"tie","why":"<=80 words"}
\end{verbatim}
\end{tcolorbox}

The pairwise prompt is intentionally local. It asks only which candidate is better as the next action under the current browser state. This prevents the ranker from replacing the executor with an open-ended planner and keeps MPCR aligned with its role as a test-time reranking mechanism.

\subsection{Totality, Fallbacks, and Recovery}
\label{sec:mpcr_recovery}

MPCR includes explicit recovery behavior so that the executor remains well-defined under parser uncertainty, incomplete probes, or ranker ambiguity. These fallbacks are safety guards, not additional adaptation mechanisms. They ensure that the executor always returns an action, while preserving the fixed-library and candidate-constrained design.

\begin{itemize}
    \item \textbf{Generation failure.} If candidate generation returns invalid JSON or an unusable candidate set, the executor follows a safe default recovery action.
    \item \textbf{Probe uncertainty.} If a candidate cannot be confidently verified, it is retained but downweighted through readiness-related penalties.
    \item \textbf{Ranking ambiguity.} If the pairwise ranker returns invalid or ambiguous output, the executor selects the highest-scored candidate from the heuristic stage.
    \item \textbf{Weak shortlist.} If all shortlisted candidates appear blocked or insufficiently ready, the executor still returns the highest-scored candidate, ensuring that the action-selection procedure remains total.
\end{itemize}

The guiding principle is conservative continuity. MPCR uses additional deliberation when feasibility signals are informative, but degrades to deterministic behavior when structured judgments are noisy. This makes the deployable executor robust to malformed model outputs, incomplete browser probes, transient page states, and ambiguous comparisons without introducing skill relearning or unconstrained exploration.
\subsection{Design Validation}
\label{sec:app_design_validation}

The preceding subsections specify what MPCR does. This subsection provides the empirical basis for two design decisions embedded in that specification: that each of the four stages is necessary, and that the default candidate budget is $N_{\mathrm{cand}}=2$.

Both studies are run on Magento with GPT-5 mini over the same fixed ASI library and the same six evaluation conditions used in Section~\ref{sec:experiments}. Because the main tables pool GitLab, Magento, and WordPress, absolute values here are not comparable to Table~\ref{tab:mpcr_gpt5mini_asi}; full MPCR averages $39.18\%$ on this single-site subset against $41.26\%$ pooled across three sites. All comparisons below are made within this subset.

\subsubsection{Component Ablation}
\label{sec:app_component_ablation}

We ablate the verification, scoring, and reranking stages of Algorithm~\ref{alg:mpcr}, keeping the proposal stage in every variant since it defines the candidate interface. Table~\ref{tab:app_component_ablation} reports success rate and average skill calls, averaged over the six conditions.

\begin{table}[htbp]
\centering
\small
\setlength{\tabcolsep}{4pt}
\renewcommand{\arraystretch}{1.2}
\begin{tabular}{@{}lcccrr@{}}
\toprule
\textbf{Variant} & \textbf{Ver.} & \textbf{Sco.} & \textbf{Rer.} & \textbf{SR (\%)} & \textbf{ASC} \\
\midrule
Propose only            & \xmark & \xmark & \xmark          & 32.73 & 0.60 \\
Verify + Score          & \cmark & \cmark & \xmark          & 27.93 & 0.15 \\
Rerank only (blind)     & \xmark & \xmark & \cmark          & 37.04 & 0.68 \\
Verify + Score + Random & \cmark & \cmark & rand.           & \phantom{0}8.09 & 0.31 \\
\midrule
\textbf{MPCR (full)}    & \cmark & \cmark & \cmark          & \textbf{39.18} & 0.24 \\
\bottomrule
\end{tabular}
\caption{Component ablation of MPCR on Magento, averaged over the six evaluation conditions. ``Ver.'', ``Sco.'', and ``Rer.'' denote the verification, scoring, and reranking stages. ``Rerank only'' applies the pairwise comparator to the raw proposal order without feasibility evidence, and therefore issues the same number of reranking calls as full MPCR. ``Verify + Score + Random'' keeps the verified shortlist but replaces the comparator with a fixed-seed random choice, and therefore issues no reranking call at all.}
\label{tab:app_component_ablation}
\end{table}

\paragraph{Additional deliberation is not what produces the gain.}
The randomized control is decisive on this point. It shares the pipeline, the feasibility annotations, and the shortlist with full MPCR, and differs only in that the final comparison is a fixed-seed coin flip that emits no reranking call. Success collapses from $39.18\%$ to $8.09\%$, far below the $32.73\%$ obtained by taking the base executor's first proposal and doing nothing else. Since the scaffolding is held constant and the compute is strictly lower, the gain cannot be attributed to the pipeline structure or to spending an additional model call per step. What matters is the informational content of the comparison. The same control also shows that a bounded shortlist is not self-improving: selecting arbitrarily within it is much worse than not shortlisting at all.

\paragraph{The gain decomposes across stages.}
Adding blind reranking to the proposal prior raises success from $32.73\%$ to $37.04\%$, and adding feasibility evidence on top of it raises success further to $39.18\%$. The second increment, $2.14$ points, is smaller than the first but consistent, holding in five of the six conditions. Neither stage subsumes the other: full MPCR exceeds both ``Rerank only,'' which has the comparator but no feasibility evidence, and ``Verify + Score,'' which has feasibility evidence but no comparator.

\paragraph{Feasibility evidence mainly buys reliable skill use.}
Reading success alone understates what the verification and scoring stages contribute. Full MPCR attains its higher success while issuing $0.24$ skill calls per task against $0.68$ for blind reranking, a reduction of roughly threefold. The effect concentrates exactly where it should: under Execution Shift, full MPCR reduces skill invocation to $0.04$ calls per task, that is, it correctly stops calling a skill that the runtime blocker has made unexecutable, whereas blind reranking continues to invoke and fail it. The feasibility mask is therefore better described as a mechanism for avoiding infeasible skill execution than as a primary driver of success.

\paragraph{Why feasibility evidence alone underperforms.}
``Verify + Score'' reaches only $27.93\%$, below the $32.73\%$ of the unmodified proposal prior. We read this as a property of the design rather than as a defect in the evidence. Feasibility annotations are constructed to be consumed by the comparator, which weighs them against task context; taking a hard top-1 by heuristic score discards that contextual judgment and commits to whichever candidate the fixed coefficients happen to favor. The conservative scoring rule of Section~\ref{sec:mpcr_score} is deliberately soft for this reason. That the evidence helps when it informs the comparator and hurts when it replaces it is direct evidence that the two stages are complementary rather than separable.

\subsubsection{Candidate-Budget Selection}
\label{sec:app_budget_selection}

MPCR is evaluated with $N_{\mathrm{cand}}=2$, where the candidate count equals the shortlist size, $M=N_{\mathrm{cand}}$. We sweep $N_{\mathrm{cand}} \in \{2,3,4\}$ to establish that this default is an empirical choice rather than an under-resourced configuration. Table~\ref{tab:app_budget_percondition} reports success rate and average skill calls under each of the six conditions, and Table~\ref{tab:app_budget_pooled} pools the same results by shift type.

\newsavebox{\budgetleftbox}
\begin{table}[htbp]
\centering
\sbox{\budgetleftbox}{%
\begin{minipage}[b]{0.5\linewidth}
\centering
\small
\setlength{\tabcolsep}{3.2pt}
\renewcommand{\arraystretch}{1.2}
\begin{tabular}{@{}lrrrrrr@{}}
\toprule
\multirow{2}{*}[-0.9ex]{\textbf{Condition}}
& \multicolumn{2}{c}{$N_{\mathrm{cand}}{=}2$}
& \multicolumn{2}{c}{$N_{\mathrm{cand}}{=}3$}
& \multicolumn{2}{c}{$N_{\mathrm{cand}}{=}4$} \\
\cmidrule(lr){2-3}\cmidrule(lr){4-5}\cmidrule(lr){6-7}
& \textbf{SR} & \textbf{ASC} & \textbf{SR} & \textbf{ASC} & \textbf{SR} & \textbf{ASC} \\
\midrule
\texttt{Src}      & 51.75 & 0.35 & 44.23 & 0.08 & 48.08 & 0.01 \\
\texttt{Src-Perc} & 56.14 & 0.26 & 40.38 & 0.12 & 50.96 & 0.07 \\
\texttt{Src-Exec} & 30.70 & 0.39 & 42.31 & 0.05 & 41.35 & 0.03 \\
\texttt{Tgt}      & 40.35 & 0.18 & 36.84 & 0.04 & 32.46 & 0.00 \\
\texttt{Tgt-Perc} & 37.72 & 0.23 & 31.58 & 0.01 & 30.70 & 0.00 \\
\texttt{Tgt-Exec} & 18.42 & 0.04 & 20.18 & 0.02 & 22.81 & 0.01 \\
\midrule
\textbf{Mean}     & \textbf{39.18} & 0.24 & 35.92 & 0.05 & 37.73 & 0.02 \\
\bottomrule
\end{tabular}
\caption{Per-condition success rate (\%) and average skill calls under candidate budgets $N_{\mathrm{cand}} \in \{2,3,4\}$ on Magento.}
\label{tab:app_budget_percondition}
\end{minipage}}%
\usebox{\budgetleftbox}\hfill
\raisebox{-\dp\budgetleftbox}{%
\begin{minipage}[b][\dimexpr\ht\budgetleftbox+\dp\budgetleftbox\relax][s]{0.46\linewidth}
\centering
\small
\setlength{\tabcolsep}{2.6pt}
\renewcommand{\arraystretch}{1.2}
\vfill
\begin{tabular}{@{}lrrr@{}}
\toprule
\textbf{Pooled view} & $N_{\mathrm{cand}}{=}2$ & $N_{\mathrm{cand}}{=}3$ & $N_{\mathrm{cand}}{=}4$ \\
\midrule
Non-execution & \textbf{46.49} & 38.26 & 40.55 \\
Execution     & 24.56 & 31.25 & \textbf{32.08} \\
\midrule
Overall       & \textbf{39.18} & 35.92 & 37.73 \\
\bottomrule
\end{tabular}
\vfill
\caption{Success rate (\%) pooled by shift type. ``Non-execution'' pools \texttt{Src}, \texttt{Src-Perc}, \texttt{Tgt}, and \texttt{Tgt-Perc}; ``Execution'' pools \texttt{Src-Exec} and \texttt{Tgt-Exec}.}
\label{tab:app_budget_pooled}
\end{minipage}}
\end{table}

\paragraph{Aggregate success is non-monotone and peaks at the default.}
Overall success is $39.18\%$, $35.92\%$, and $37.73\%$ for $N_{\mathrm{cand}}=2,3,4$. Enlarging the candidate set does not improve transfer on aggregate, which establishes the default as an empirically favorable lightweight operating point rather than a budget-constrained compromise. Given a single run per budget, we do not draw conclusions from the exact ordering of $N_{\mathrm{cand}}=3$ against $N_{\mathrm{cand}}=4$.

\paragraph{Budget interacts with shift type.}
The aggregate conceals the more informative pattern, shown in Table~\ref{tab:app_budget_pooled}. Under Execution Shift, larger budgets help: pooled success rises from $24.56\%$ to $31.25\%$ to $32.08\%$, and \texttt{Tgt-Exec} increases monotonically from $18.42\%$ to $22.81\%$. Under clean and perceptual conditions, larger budgets hurt: pooled success falls from $46.49\%$ to $40.55\%$. The interaction runs in the same direction on the source and target sides independently, which makes it the most defensible conclusion of this sweep. It is also the reading our framing predicts. When local executability is fragile, additional candidates give the feasibility-aware ranker more room to recover; when the learned prior is already reliable, additional candidates mostly introduce distractors.

\paragraph{Enlarging the candidate set dilutes the skill prior.}
Skill invocation collapses monotonically with the budget, from $0.24$ to $0.05$ to $0.02$ calls per task. Since the library and the proposal interface are unchanged, this indicates that a wider candidate set reduces the effective weight of the skill-library preference, so the agent progressively stops invoking skills it has already learned. This is the empirical counterpart of the KL regularization toward $\pi_{\mathrm{ref}}$ in SWOP and of the correspondence stated in Table~\ref{tab:app_swop_mpcr_map}. It also explains the interaction above: on clean and perceptual conditions the diluted prior was worth keeping, so dilution costs success, whereas under Execution Shift the prior is unreliable in the first place and the trade is favorable. Since the favorable direction of this trade-off flips with shift type, a fixed budget cannot be optimal for both regimes; a shift-adaptive candidate budget, widened under detected execution pressure and kept narrow otherwise, is a promising extension we leave to future work.

\section{Experimental Details and Supplementary Analyses}
\label{sec:appendix-d:1}

This appendix provides supplementary details for the experiments in Section~\ref{sec:experiments}. We first describe the compared learned-skill methods and their implementation protocols, then report descriptive skill-library statistics after learning, and provide backbone-level diagnostic results showing that the observed transfer patterns are not tied to a single LLM backbone. The remaining subsections report supplementary analyses around the main comparison: transfer of MPCR to a second skill-learning method, stability across random seeds, robustness to more aggressive Execution Shift schedules, and a comparison against relearning the skill library in the target environment. These supplementary studies are run on Magento, whereas the main tables pool GitLab, Magento, and WordPress; absolute values are therefore not comparable across the two, and each subsection states its scope.

\subsection{Compared-Method Cards}\label{sec:appendix_method_cards}

The compared published skill-learning baselines are AWM, ASI, SkillWeaver, and WALT~\cite{wang2025agentworkflowmemory,wang2025inducingprogrammaticskills,zheng2025skillweaver,prabhu2026walt}. These four systems provide representative coverage of design choices in skill representation, integration mechanism, learning signal source, and verification rigor. Concretely, they cover textual, executable, API-function, and tool-based skill forms; prompt injection, action-space extension, and tool-call integration; and episode-driven, exploration-driven, and function-discovery learning signals.

The four published baselines also differ in their underlying agent frameworks and action-selection strategies~\cite{wang2025agentworkflowmemory,wang2025inducingprogrammaticskills,zheng2025skillweaver,prabhu2026walt}. They observe page state through DOM, accessibility-tree, or snapshot representations, but differ in execution stack and decision mechanism. Such heterogeneity is an inherent consequence of comparing published systems: each method is tightly coupled to its native execution stack, and re-implementing all methods within a single framework would risk distorting their original design intent. Our aim is therefore to evaluate representative end-to-end skill-learning systems under shared tasks and perturbation conditions. The disturbance layer is applied consistently across backends, although this does not remove all execution-stack confounds. We provide detailed method cards below, focusing on each method's skill representation, acquisition signal, verification mechanism, execution interface, and relevance to the fixed-library transfer comparison.

\textbf{AWM} (Agent Workflow Memory) represents learned procedures as \emph{textual workflows} stored in agent memory~\cite{wang2025agentworkflowmemory}. Each workflow consists of a high-level natural-language description and a sequence of steps, where each step records the environment state, the agent's reasoning, and the executed action. Workflows are induced by prompting the LLM to extract common sub-routines from successful episodes, abstracting concrete values into variables (e.g., replacing a specific product name with \texttt{\{product\_name\}}). At inference time, relevant workflows are retrieved and injected into the LLM prompt as contextual guidance. Because workflows are consumed as text rather than executed as code, AWM does not extend the agent's action space. It therefore serves as a prompt-level workflow-reuse baseline rather than an executable-skill baseline.

\textbf{ASI} (Inducing Programmatic Skills for Agentic Tasks) represents skills as \emph{executable Python programs}. Given a successful episode, a skill induction module generates one or more Python functions (e.g., \texttt{search\_product(name)}) that encapsulate reusable sub-procedures~\cite{wang2025inducingprogrammaticskills}. Each induced skill undergoes a three-stage verification: (1)~\emph{correctness}---the task is still completed when the original trajectory is rewritten to invoke the new skill; (2)~\emph{skill usage}---the rewritten trajectory actually calls the skill; and (3)~\emph{skill validity}---every skill invocation causes an observable environment change~\cite{wang2025inducingprogrammaticskills}. Only skills that pass all three checks are added to the skill library. During evaluation, induced skills are available as callable high-level actions alongside the primitive action set.

\textbf{SkillWeaver} acquires skills through a \emph{self-driven curriculum}: the agent autonomously proposes candidate skills by observing the website, practices them through active exploration, distills successful execution traces into reusable Playwright API functions, and iteratively debugs these functions through automated test cases~\cite{zheng2025skillweaver}. Unlike ASI and AWM, which learn only from task-solving episodes, SkillWeaver generates learning opportunities through environment exploration independent of the benchmark task set. The resulting skills are Python \texttt{async} functions that operate on Playwright \texttt{Page} objects and are composed hierarchically, with higher-level skills calling lower-level ones~\cite{zheng2025skillweaver}. SkillWeaver therefore represents an exploration-driven API-function approach to skill acquisition.

\textbf{WALT} (Web Agents that Learn Tools) takes a fundamentally different approach by \emph{reverse-engineering website-native functionality} into structured, callable tools. WALT first discovers candidate website functions through systematic exploration, then applies a \emph{demonstrate--generate--validate} cycle: a browser agent demonstrates the function end-to-end, a tool-generation agent converts the execution trace into a structured tool definition with typed input parameters and step sequences, and a test agent validates the tool against pre-selected inputs~\cite{prabhu2026walt}. When possible, WALT further optimizes tools by replacing UI interaction sequences with direct URL-level or API-level operations~\cite{prabhu2026walt}. The resulting tools are deterministic action scripts with formal schemas~\cite{prabhu2026walt}.

\subsection{Skill-Library Supplements}
\label{sec:appendix_learning_budgets}

\begin{table*}
\centering
\small
\setlength{\tabcolsep}{4pt}
\renewcommand{\arraystretch}{1.3}
\begin{tabularx}{\linewidth}{@{} l *{9}{>{\centering\arraybackslash}X} @{}}
\toprule
& \multicolumn{3}{c}{\textbf{DeepSeek-V3.2}} & \multicolumn{3}{c}{\textbf{GPT-5 mini}} & \multicolumn{3}{c}{\textbf{Claude Haiku 4.5}} \\
\cmidrule(lr){2-4} \cmidrule(lr){5-7} \cmidrule(lr){8-10}
\multirow{2}{*}[3.7ex]{\textbf{Method}} & GitLab & Magento & WordPress & GitLab & Magento & WordPress & GitLab & Magento & WordPress \\
\midrule
AWM         & 35 & 23 &  5 & 27 & 15 &  6 & 10 &  6 &  3 \\
ASI         & 14 & 19 & 12 &  9 & 13 &  4 & 11 & 13 & 10 \\
SkillWeaver & 28 & 18 & 23 &  9 & 26 & 15 & 12 & 17 & 14 \\
WALT        &  8 &  8 &  6 &  6 &  7 &  5 &  5 &  6 &  3 \\
\bottomrule
\end{tabularx}
\caption{Skill library size after learning. Counts are descriptive rather than directly comparable across methods because the retained units have different semantics.}
\label{tab:skill_library_appendix}
\end{table*}

Table~\ref{tab:skill_library_appendix} reports the number of skills (or workflows/tools) retained in each method's library after the learning phase, broken down by LLM backbone and website. This table serves as a descriptive diagnostic of the learning regimes rather than a larger-is-better comparison. The counted units are not semantically identical across methods: an AWM item is a textual workflow, an ASI item is an executable Python skill, a SkillWeaver item is a Playwright-style API function, and a WALT item is a structured tool~\cite{wang2025agentworkflowmemory,wang2025inducingprogrammaticskills,zheng2025skillweaver,prabhu2026walt}. Therefore, the table is not used to normalize methods by raw library size. Instead, it shows how different skill-acquisition mechanisms expose different amounts and types of reusable structure before transfer evaluation.

Several patterns are worth noting. First, library size varies substantially by method. Averaged over the nine website--backbone combinations, SkillWeaver produces the largest libraries (18.0 items), followed by AWM (14.4), ASI (11.7), and WALT (6.0). This ordering reflects the methods' acquisition mechanisms. SkillWeaver actively explores the environment and can therefore accumulate many candidate API functions. AWM extracts reusable textual routines from successful trajectories, which also permits relatively large libraries when successful episodes contain repeated sub-procedures. In contrast, ASI and WALT apply stronger executable or tool-level validation, yielding smaller but more constrained libraries. WALT is especially compact, ranging only from 3 to 8 tools across all settings, because its demonstrate--generate--validate pipeline filters candidate tools before evaluation.

Second, the table shows that library scale is strongly affected by both the backbone and the website. DeepSeek-V3.2 generally produces more retained items than GPT-5 mini or Claude Haiku 4.5, especially for AWM and SkillWeaver. Across all methods and websites, the total number of retained items is 199 for DeepSeek-V3.2, 142 for GPT-5 mini, and 110 for Claude Haiku 4.5. This suggests that acquisition scale and transfer robustness should be distinguished. 
Library size measures how many reusable items are retained after learning, whereas transfer performance also depends on executor-side selection, grounding, and execution under changed environment conditions.

Third, website structure also affects how much reusable behavior can be extracted. Aggregating over methods and backbones, GitLab and Magento yield substantially larger libraries than WordPress, with 174, 171, and 106 retained items respectively. This is consistent with the nature of the environments: developer-platform and e-commerce-administration tasks expose more repeated navigation, filtering, editing, and object-management patterns, whereas the WordPress task set induces a smaller set of reusable operations. This site-level variation is useful for SkillShift-Bench because it prevents the benchmark from being reduced to a single library-density regime. The same transfer protocol is evaluated under websites that differ in how much reusable structure they naturally expose.

Most importantly, Table~\ref{tab:skill_library_appendix} helps rule out a simple ``more skills imply better robustness'' explanation. SkillWeaver often produces the largest libraries, while WALT uses the smallest libraries because its tools are more heavily validated. ASI occupies an intermediate regime: it learns fewer items than AWM or SkillWeaver in many settings, but retains executable skills that are directly usable during interaction. These patterns show that library size mainly reflects acquisition scale and method design, rather than transfer robustness itself. A larger library may provide more reusable candidates, but successful transfer still depends on whether the agent can select, ground, and execute the retained items under changed environment conditions.

\subsection{Backbone Analyses}\label{sec:appendix_site_backbone}

\begin{table*}[t]
\centering
\small
\setlength{\tabcolsep}{5.7pt}
\renewcommand{\arraystretch}{1.35}
\begin{tabular}{ccccccccccccc}
\toprule
\multirow{2}{*}[-1.6ex]{\textbf{Metric}}
& \multirow{2}{*}[-1.6ex]{\textbf{LLM}}
& \multicolumn{4}{c}{\textbf{Method}}
& \multicolumn{6}{c}{\textbf{Environment}}
& \multirow{2}{*}[-1.6ex]{\textbf{O/A}} \\
\cmidrule(lr){3-6}
\cmidrule(lr){7-12}
&
& \textbf{AWM}
& \textbf{ASI}
& \makecell{\textbf{Skill-}\\\textbf{Weaver}}
& \textbf{WALT}
& \textbf{Src}
& \makecell{\textbf{Src-}\\\textbf{Perc}}
& \makecell{\textbf{Src-}\\\textbf{Exec}}
& \textbf{Tgt}
& \makecell{\textbf{Tgt-}\\\textbf{Perc}}
& \makecell{\textbf{Tgt-}\\\textbf{Exec}}
& \\
\midrule

\multirow{3}{*}{\makecell{SR \\(\%)}}
& \cellcolor{ds} DS
& \cellcolor{ds} 35.30 & \cellcolor{ds} \textbf{38.32} & \cellcolor{ds} 17.12 & \cellcolor{ds} 36.44
& \cellcolor{ds} \textbf{47.89} & \cellcolor{ds} 37.32 & \cellcolor{ds} 23.98 & \cellcolor{ds} 35.81 & \cellcolor{ds} 29.76 & \cellcolor{ds} 16.01
& \cellcolor{ds} 31.80 \\
& \cellcolor{gpt} GPT
& \cellcolor{gpt} \textbf{37.56} & \cellcolor{gpt} 37.90 & \cellcolor{gpt} \textbf{24.99} & \cellcolor{gpt} \textbf{40.46}
& \cellcolor{gpt} 46.30 & \cellcolor{gpt} \textbf{45.41} & \cellcolor{gpt} \textbf{27.14} & \cellcolor{gpt} \textbf{38.72} & \cellcolor{gpt} \textbf{33.58} & \cellcolor{gpt} \textbf{20.23}
& \cellcolor{gpt} \textbf{35.23} \\
& \cellcolor{cld} Cld
& \cellcolor{cld} 33.00 & \cellcolor{cld} 32.64 & \cellcolor{cld} 18.32 & \cellcolor{cld} 27.50
& \cellcolor{cld} 45.07 & \cellcolor{cld} 40.60 & \cellcolor{cld} 14.87 & \cellcolor{cld} 32.03 & \cellcolor{cld} 24.73 & \cellcolor{cld} 9.89
& \cellcolor{cld} 27.86 \\

\midrule

\multirow{3}{*}{\makecell{AS}}
& \cellcolor{ds} DS
& \cellcolor{ds} 7.35 & \cellcolor{ds} 6.94 & \cellcolor{ds} 17.53 & \cellcolor{ds} 15.90
& \cellcolor{ds} 8.65 & \cellcolor{ds} 10.45 & \cellcolor{ds} 14.35 & \cellcolor{ds} 11.25 & \cellcolor{ds} 12.03 & \cellcolor{ds} 14.84
& \cellcolor{ds} 11.93 \\
& \cellcolor{gpt} GPT
& \cellcolor{gpt} \textbf{6.20} & \cellcolor{gpt} \textbf{6.21} & \cellcolor{gpt} \textbf{12.80} & \cellcolor{gpt} 9.87
& \cellcolor{gpt} \textbf{7.25} & \cellcolor{gpt} \textbf{7.50} & \cellcolor{gpt} 11.31 & \cellcolor{gpt} \textbf{7.51} & \cellcolor{gpt} \textbf{8.56} & \cellcolor{gpt} 10.49
& \cellcolor{gpt} \textbf{8.77} \\
& \cellcolor{cld} Cld
& \cellcolor{cld} 7.47 & \cellcolor{cld} 7.51 & \cellcolor{cld} 15.05 & \cellcolor{cld} \textbf{6.62}
& \cellcolor{cld} 8.12 & \cellcolor{cld} 8.81 & \cellcolor{cld} \textbf{10.67} & \cellcolor{cld} 8.15 & \cellcolor{cld} 8.96 & \cellcolor{cld} \textbf{10.25}
& \cellcolor{cld} 9.16 \\

\midrule

\multirow{3}{*}{\makecell{ASC}}
& \cellcolor{ds} DS
& \cellcolor{ds} -- & \cellcolor{ds} 0.32 & \cellcolor{ds} 1.89 & \cellcolor{ds} 0.03
& \cellcolor{ds} 0.55 & \cellcolor{ds} 0.69 & \cellcolor{ds} 0.76 & \cellcolor{ds} 0.80 & \cellcolor{ds} 0.73 & \cellcolor{ds} 0.94
& \cellcolor{ds} 0.74 \\
& \cellcolor{gpt} GPT
& \cellcolor{gpt} -- & \cellcolor{gpt} 0.06 & \cellcolor{gpt} 2.16 & \cellcolor{gpt} 0.18
& \cellcolor{gpt} 0.74 & \cellcolor{gpt} 0.72 & \cellcolor{gpt} 0.80 & \cellcolor{gpt} 0.77 & \cellcolor{gpt} 0.85 & \cellcolor{gpt} 0.92
& \cellcolor{gpt} 0.80 \\
& \cellcolor{cld} Cld
& \cellcolor{cld} -- & \cellcolor{cld} 0.11 & \cellcolor{cld} 0.00 & \cellcolor{cld} 0.05
& \cellcolor{cld} 0.07 & \cellcolor{cld} 0.06 & \cellcolor{cld} 0.07 & \cellcolor{cld} 0.05 & \cellcolor{cld} 0.05 & \cellcolor{cld} 0.03
& \cellcolor{cld} 0.05 \\

\midrule

\multirow{3}{*}{\makecell{SFR \\(\%)}}
& \cellcolor{ds} DS
& \cellcolor{ds} -- & \cellcolor{ds} 39.51 & \cellcolor{ds} 0.21 & \cellcolor{ds} \textbf{19.25}
& \cellcolor{ds} 5.09 & \cellcolor{ds} 19.71 & \cellcolor{ds} 33.38 & \cellcolor{ds} \textbf{7.10} & \cellcolor{ds} 18.89 & \cellcolor{ds} 33.78
& \cellcolor{ds} 19.66 \\
& \cellcolor{gpt} GPT
& \cellcolor{gpt} -- & \cellcolor{gpt} \textbf{13.86} & \cellcolor{gpt} 5.71 & \cellcolor{gpt} 20.98
& \cellcolor{gpt} \textbf{4.15} & \cellcolor{gpt} \textbf{7.46} & \cellcolor{gpt} \textbf{21.26} & \cellcolor{gpt} 11.77 & \cellcolor{gpt} 16.68 & \cellcolor{gpt} \textbf{19.77}
& \cellcolor{gpt} \textbf{13.52} \\
& \cellcolor{cld} Cld
& \cellcolor{cld} -- & \cellcolor{cld} 30.71 & \cellcolor{cld} \textbf{0.00} & \cellcolor{cld} 32.15
& \cellcolor{cld} 10.89 & \cellcolor{cld} 21.39 & \cellcolor{cld} 32.05 & \cellcolor{cld} 19.96 & \cellcolor{cld} \textbf{16.08} & \cellcolor{cld} 25.36
& \cellcolor{cld} 20.95 \\

\bottomrule
\end{tabular}
\caption{
Expanded transfer-oriented diagnostic results across methods and environment settings. 
O/A denotes the overall average across the corresponding settings. 
LLM abbreviations are as follows: DS = DeepSeek-V3.2, GPT = GPT-5 mini, and Cld = Claude Haiku 4.5~\cite{deepseek2025v32,openai2025gpt5mini,anthropic2025haiku45}.
Bold values mark the best SR, AS, and SFR entries.
ASC is reported as a non-directional diagnostic.
``--'' denotes non-applicable metrics: AWM uses textual workflows, so executable-skill metrics are undefined.
}
\label{tab:transfer_aggregate_expanded_llm}
\end{table*}

Table~\ref{tab:transfer_aggregate_expanded_llm} provides a backbone-level decomposition of the transfer results and shows that the benchmark stress is not an artifact of a single LLM choice. GPT-5 mini obtains the highest overall success (SR = 35.23\%), the lowest skill failure rate (SFR = 13.52\%), and the lowest trajectory cost (AS = 8.77). DeepSeek-V3.2 remains competitive in overall success (SR = 31.80\%), but requires substantially longer trajectories (AS = 11.93), suggesting stronger exploration or recovery at a higher interaction cost. Claude Haiku 4.5 has lower overall success (SR = 27.86\%) and more frequent skill failures (SFR = 20.95\%), indicating less stable transfer under shifted execution conditions.

The environment columns show a consistent degradation pattern across all three backbones. Clean source-context performance is highest or near-highest for each model, but success decreases under target-context transfer and drops most sharply under execution perturbations. For example, GPT-5 mini decreases from 46.30\% on \texttt{Src} to 38.72\% on \texttt{Tgt}, and further to 27.14\% on \texttt{Src-Exec} and 20.23\% on \texttt{Tgt-Exec}. DeepSeek-V3.2 and Claude Haiku 4.5 follow the same qualitative trend, with \texttt{Tgt-Exec} reaching only 16.01\% and 9.89\%, respectively. Thus, stronger backbones improve the absolute level of transfer, but do not remove the difficulty ordering induced by Contextual Shift, Perceptual Shift, and especially Execution Shift.

The trajectory-cost and skill-failure rows further separate two failure channels. Execution-shift settings increase the interaction burden for all backbones, with AS on \texttt{Tgt-Exec} reaching 14.84 for DeepSeek-V3.2, 10.49 for GPT-5 mini, and 10.25 for Claude Haiku 4.5. Skill failures also become more frequent in the same setting, with SFR reaching 33.78\%, 19.77\%, and 25.36\%, respectively. This indicates that Execution Shift does not only make tasks longer. It also makes selected skills or tool actions less locally executable. GPT-5 mini is comparatively more robust because it combines higher success with lower SFR, rather than relying on longer trajectories.

The method columns reveal additional interactions between backbone capability and skill format. SkillWeaver remains the weakest method under all three backbones, although GPT-5 mini improves its success to 24.99\%, compared with 17.12\% for DeepSeek-V3.2 and 18.32\% for Claude Haiku 4.5. WALT benefits strongly from GPT-5 mini and reaches 40.46\% success, but its SFR remains nontrivial at 20.98\%, showing that structured tools can still face local executability failures. ASI is competitive with both DeepSeek-V3.2 and GPT-5 mini, but its skill failures are much less frequent with GPT-5 mini (SFR = 13.86\%) than with DeepSeek-V3.2 (39.51\%) or Claude Haiku 4.5 (30.71\%). This suggests that the same executable-skill framework can behave differently depending on the backbone's ability to select, ground, and recover from skill executions.

Finally, skill-call frequency should be interpreted together with success and SFR, rather than as a standalone performance metric. GPT-5 mini and DeepSeek-V3.2 invoke executable skills at similar overall frequencies (ASC = 0.80 and 0.74), but GPT-5 mini achieves higher success and lower SFR, indicating more reliable skill use. Claude Haiku 4.5 has much lower skill-call frequency (ASC = 0.05), but this does not imply stronger robustness. Together with its lower success, it suggests limited effective use of the learned executable-skill interface. Overall, Table~\ref{tab:transfer_aggregate_expanded_llm} supports the main conclusion that stronger LLMs improve absolute transfer performance, while the benchmark's degradation pattern remains stable across backbones.

\subsection{Generalization to a Second Skill-Learning Method}
\label{sec:app_awm_generalization}

MPCR is evaluated in the main paper over ASI's executable skill library. To test whether its benefit depends on that particular skill format, we apply it to AWM, which reuses prompted textual workflows rather than executable skill invocations and for which the executable-skill diagnostics in Table~\ref{tab:transfer_aggregate_expanded} are undefined. The two methods therefore differ in what a ``candidate'' is: for ASI a skill candidate is a callable action in the extended action space, whereas for AWM the retrieved workflow enters the prompt as guidance and the executable candidates are the primitive actions it recommends. MPCR is attached at the same point in both cases, namely the proposal interface of Section~\ref{sec:mpcr_propose}: the proposal stage enumerates the primitive actions suggested by the retrieved workflow together with alternatives from the base executor, and verification, scoring, and reranking then operate unchanged. The workflow library itself remains frozen, exactly as the skill library does for ASI.

\begin{table}[htbp]
\centering
\small
\setlength{\tabcolsep}{8.5pt}
\renewcommand{\arraystretch}{1.2}
\begin{tabular}{@{}lccr@{}}
\toprule
\textbf{Condition} & \textbf{AWM} & \textbf{AWM + MPCR} & \textbf{Rel.} \\
\midrule
\texttt{Src}      & 32.46 & 41.23 & +27.0\% \\
\texttt{Src-Perc} & 31.00 & 40.35 & +30.2\% \\
\texttt{Src-Exec} & 13.27 & 20.35 & +53.4\% \\
\texttt{Tgt}      & 49.11 & 57.02 & +16.1\% \\
\texttt{Tgt-Perc} & 32.46 & 39.47 & +21.6\% \\
\texttt{Tgt-Exec} & 10.00 & 18.42 & +84.2\% \\
\midrule
\textbf{Average}  & \textbf{28.05} & \textbf{36.14} & \textbf{+28.8\%} \\
\bottomrule
\end{tabular}
\caption{Success rate (\%) of AWM with and without MPCR on Magento. ``Rel.'' is the relative gain over the AWM baseline.}
\label{tab:app_awm_mpcr}
\end{table}

Table~\ref{tab:app_awm_mpcr} reports the result on Magento. Two observations follow. First, the gain holds in all six conditions, with no condition in which the reranker degrades the base method. Because AWM's integration mechanism is architecturally different from ASI's, this indicates that feasibility-aware test-time selection is not tied to the executable-skill format. Second, and more informatively, the pressure-dependent pattern reported in Section~\ref{sec:RQ2} reappears independently: relative gains are largest under Execution Shift, at $+53.4\%$ on \texttt{Src-Exec} and $+84.2\%$ on \texttt{Tgt-Exec}, well above the clean and perceptual conditions. The correspondence in Table~\ref{tab:app_swop_mpcr_map} therefore predicts \emph{where} the gain concentrates on a second method whose library it was not derived from, which is stronger evidence for the underlying mechanism than the aggregate improvement alone.

\subsection{Stability across Random Seeds}
\label{sec:app_seed_stability}

To characterize how much of the MPCR gain is attributable to run-to-run variation, we evaluate MPCR and the fixed-library ASI baseline over three seeds on Magento across all six conditions. The three seeds comprise the run reported in the main paper plus two additional seeds; this is a variance characterization of the reported configuration rather than an independent replication. Since the main tables pool three sites, the absolute values in Table~\ref{tab:app_seed_stability} are not comparable to Table~\ref{tab:mpcr_gpt5mini_asi}, and we compare only within this subset.

\begin{table}[htbp]
\centering
\small
\setlength{\tabcolsep}{10pt}
\renewcommand{\arraystretch}{1.25}
\begin{tabular}{@{}lccr@{}}
\toprule
\textbf{Condition} & \textbf{MPCR} & \textbf{ASI} & \textbf{Rel.} \\
\midrule
\texttt{Src}      & 49.12\,$\pm$\,2.32 & 43.27\,$\pm$\,1.34 & +13.5\% \\
\texttt{Src-Perc} & 53.80\,$\pm$\,2.03 & 38.30\,$\pm$\,1.34 & +40.5\% \\
\texttt{Src-Exec} & 29.24\,$\pm$\,1.34 & 17.54\,$\pm$\,0.88 & +66.7\% \\
\texttt{Tgt}      & 41.52\,$\pm$\,1.34 & 40.06\,$\pm$\,0.51 & \phantom{0}+3.6\% \\
\texttt{Tgt-Perc} & 36.84\,$\pm$\,0.88 & 36.55\,$\pm$\,1.34 & \phantom{0}+0.8\% \\
\texttt{Tgt-Exec} & 19.01\,$\pm$\,0.51 & 17.25\,$\pm$\,1.01 & +10.2\% \\
\midrule
\textbf{Mean}     & \textbf{38.26} & \textbf{32.16} & \textbf{+18.9\%} \\
\bottomrule
\end{tabular}
\caption{Success rate (\%), mean\,$\pm$\,standard deviation over three seeds on Magento. ``Rel.'' is the relative gain of MPCR over the ASI baseline.}
\label{tab:app_seed_stability}
\end{table}

Cross-seed variation is small for both methods, with per-condition standard deviations at or below $2.32$ points, and the mean MPCR success exceeds the mean ASI success in every condition. Pooled over Magento, MPCR improves success by $18.9\%$ relative to ASI. The gain is therefore stable rather than seed-driven, and the direction and magnitude are consistent with the three-site result reported in the main paper.

The distribution of the gain is as informative as its size. The three largest improvements are all on the source side, and \texttt{Src-Exec} shows the largest relative improvement at $+66.7\%$, matching the ordering predicted by our local-executability account. The two near-zero cells, \texttt{Tgt} and \texttt{Tgt-Perc}, are equally consistent with it: in those conditions the fixed library already transfers well, so there is little executability mismatch left for test-time selection to recover. Finally, MPCR is not worse than ASI in any condition, so feasibility-aware selection helps where the learned prior loses feasibility and is harmless where the prior remains aligned.

\subsection{Robustness to Repeated and Randomized Interruption}
\label{sec:app_reinjection}

The Execution Shift conditions in the main paper use the \texttt{once} injection regime of Appendix~\ref{sec:app_injection_regimes}, under which a dismissed blocker is not reinstated. This subsection asks whether the fixed-library reranking advantage survives more aggressive schedules. We evaluate MPCR and ASI under all three regimes on Magento, in both the source and the target context.

\begin{table}[htbp]
\centering
\small
\setlength{\tabcolsep}{8.5pt}
\renewcommand{\arraystretch}{1.2}
\begin{tabular}{@{}lcrrr@{}}
\toprule
\textbf{Regime} & \textbf{Context} & \textbf{MPCR} & \textbf{ASI} & \textbf{Rel.} \\
\midrule
\multirow{2}{*}{\texttt{once}}     & \texttt{Src} & 30.70 & 16.96 & +81.0\% \\
                                   & \texttt{Tgt} & 18.42 & 16.67 & +10.5\% \\
\midrule
\multirow{2}{*}{\texttt{multiple}} & \texttt{Src} & 20.35 & 17.95 & +13.4\% \\
                                   & \texttt{Tgt} & \phantom{0}8.77 & \phantom{0}7.86 & +11.6\% \\
\midrule
\multirow{2}{*}{\texttt{random}}   & \texttt{Src} & 30.97 & 26.95 & +14.9\% \\
                                   & \texttt{Tgt} & 20.18 & 18.18 & +11.0\% \\
\bottomrule
\end{tabular}
\caption{Success rate (\%) under the three Execution Shift injection regimes on Magento. ``Rel.'' is the relative gain of MPCR over the ASI baseline.}
\label{tab:app_reinjection}
\end{table}

The comparative conclusion is robust to the schedule. MPCR exceeds ASI in all six regime--context cells, so the advantage is not an artifact of the single-dismissal design: it persists when the blocker returns after every navigation and when its timing is randomized.

Absolute success, by contrast, is clearly sensitive to injection frequency, and in a direction that reinforces the benchmark's framing. The \texttt{multiple} regime is uniformly the hardest, reducing MPCR from $30.70\%$ to $20.35\%$ in the source context and from $18.42\%$ to $8.77\%$ in the target context, while \texttt{random} is the mildest because probabilistic injection sometimes produces no interruption at all. The \texttt{once} setting used in the main results is thus a conservative instantiation of Execution Shift, and more aggressive schedules strengthen rather than weaken the finding that Execution Shift exerts the strongest pressure among the three modes.

The margin does narrow under the most aggressive schedule, from $+81.0\%$ to $+13.4\%$ in the source context. We read this as a limit of the mechanism rather than as an anomaly. When the blocker is reinstated after every clearance, the environment variation that SWOP measures as $V_K$ is large, and the variation-dependent guarantee of Theorem~\ref{thm:swop-dynreg} degrades accordingly: transfer-time action selection alone has limited headroom against an obstruction that reappears faster than it can be recovered from. Complementary recovery mechanisms, which act on the interruption rather than only on the choice of action, are a natural direction for future work.

\subsection{Comparison with Target-Environment Re-Learning}
\label{sec:app_relearning}

A natural alternative to test-time adaptation is to relearn the skill library directly in the evolved environment. We evaluate this on Magento by running ASI's induction procedure in \texttt{Tgt}, so that the induced library is matched to the target interface, and then testing that target-native library under \texttt{Tgt-Perc} and \texttt{Tgt-Exec}, against the fixed source library used by ASI and MPCR.

\begin{table}[htbp]
\centering
\small
\setlength{\tabcolsep}{8.5pt}
\renewcommand{\arraystretch}{1.2}
\begin{tabular}{@{}lcc@{}}
\toprule
\textbf{Method} & \texttt{Tgt-Perc} & \texttt{Tgt-Exec} \\
\midrule
ASI (fixed \texttt{Src} library)  & 36.84 & 16.67 \\
Re-learned in \texttt{Tgt}        & 35.09 & 14.91 \\
\midrule
MPCR (same fixed library)         & \textbf{37.72} & \textbf{18.42} \\
\bottomrule
\end{tabular}
\caption{Success rate (\%) of target-environment re-learning against fixed-library transfer under the two shifted target conditions on Magento.}
\label{tab:app_relearning}
\end{table}

Re-learning a static library does not address runtime executability mismatch. Under Perceptual Shift the target-native library is the weakest of the three methods ($35.09\%$), and under Execution Shift it falls to $14.91\%$, below not only MPCR ($18.42\%$) but also the fixed source library it was meant to replace ($16.67\%$). MPCR exceeds it by $7.5\%$ relative under Perceptual Shift and by $23.5\%$ relative under Execution Shift, and the margin widens exactly where local executability is the binding constraint. A modal blocker injected at deployment is a per-episode runtime event, and no library induced before deployment, however well matched to the target interface, can encode a procedure for it.

The comparison is not confounded by the adaptation budget. Re-learning pays its cost once, during induction, and leaves inference unmodified thereafter, so the target-native library is not a compute-starved configuration. Its collapse under Execution Shift therefore reflects the limits of the static-library paradigm rather than an insufficient re-learning budget.

Re-learning in the target environment therefore does not remove the need for test-time adaptation. It requires target rollouts, and the library it yields is still fixed at deployment, so it inherits the same runtime feasibility gap that it was introduced to close. MPCR requires no target training data and recovers exactly the execution-shift loss that a re-learned library leaves open.

\subsection{Compute Budget and Infrastructure}
\label{app:compute_budget}

All experiments were conducted in self-hosted browser environments, with LLM inference accessed through hosted API services. We evaluate DeepSeek-V3.2, GPT-5 mini, and Claude Haiku 4.5. DeepSeek-V3.2 is an open-weight MoE model with 685B total parameters, while the exact parameter counts of GPT-5 mini and Claude Haiku 4.5 are not publicly disclosed by their providers. Since inference was performed through APIs, we did not use local GPU training or fine-tuning. The local infrastructure was used for browser orchestration, logging, and automatic grading. MPCR used a fixed lightweight reranking configuration with at most 2 candidate actions per step and a top-2 shortlist, without hyperparameter search beyond the candidate-budget sweep reported in Appendix~\ref{sec:app_budget_selection}.

\end{document}